	\newtheoremstyle{myplain}
	  {}
	  {}
	  {\itshape}
	  {}
	  {\bfseries}
	  {}
	  {5pt plus 1pt minus 1pt}
	  {}
	\newtheoremstyle{mydefinition}
	  {}
	  {}
	  {\normalfont}
	  {}
	  {\bfseries}
	  {}
	  {5pt plus 1pt minus 1pt}
	  {}
	\theoremstyle{myplain}
	\newtheorem{problem}{Problem}
	\newtheorem{property}{Property}
	\newtheorem{theorem}{Theorem}
	\newtheorem{lemma}{Lemma}
	\newtheorem{proposition}{Proposition}
	\theoremstyle{mydefinition}
	\newtheorem{definition}{Definition}
	\renewenvironment{proof}[1][\textit{Proof}]{\begin{trivlist}
		\item[\hskip \labelsep {\bfseries #1}]}{\end{trivlist}}
    \newcommand{\prl}[1] 		{\left(#1\right)}
    \newcommand{\brl}[1] 		{\left[ #1 \right]}
    \newcommand{\crl}[1] 		{\left\{#1\right\}}
    \newcommand{\R}             {\mathbb{R}} 
 	\newcommand{\N}         		{\mathbb{N}} 
 	\newcommand{\Sp}        		{\mathbb{S}} 
 	\newcommand{\C}      		{\mathbb{C}} 
  	\newcommand{\card}[1]		{\left| #1 \right|} 
	\newcommand{\norm}[1] 		{\left \| #1 \right \|}
    \newcommand{\vectprod}[2]    { \tr{#1} #2}
    \newcommand{\tr}[1]         {{#1}^\mathrm{T}}
	\newcommand{\argmax}{\operatornamewithlimits{arg\ max}} 
	\newcommand{\mat}[1] 		{\mathrm{\mathbf{#1}}}
	\newcommand{\vect}[1] 		{\mathrm{#1}}
	\newcommand{\vectbf}[1]		{\mathrm{\mathbf{#1}}}
    \newcommand{\threevecT} [3] { \left[ \begin{array}{@{}c@{}@{}c@{}@{}c@{}} 
          #1, & #2, & #3 \end{array} \right] }
    \newcommand{\refeqn}[1]			{(\ref{#1})}
    \newcommand{\reffig}[1]			{Fig. \ref{#1}}
    \newcommand{\refsec}[1]			{Section \ref{#1}}
    \newcommand{\refapp}[1]			{Appendix \ref{#1}}
	\newcommand{\reftab}[1]			{Table \ref{#1}}
	\newcommand{\refthm}[1]			{Theorem \ref{#1}}
	\newcommand{\refprop}[1]		{Proposition \ref{#1}}
	\newcommand{\reflem}[1]			{Lemma \ref{#1}}
	\newcommand{\refdef}[1]			{Definition \ref{#1}}
	\newcommand{\refprob}[1]			{Problem \ref{#1}}
	\newcommand{\refP}[1]            {Property \ref{#1}}
    \newcommand{\ldf}				{  \: {\mathbf :\! = }  \: }
    \newcommand{\sqz}[1]            {\! #1 \!}
\newcommand{\bigO}[1] 			{\mathrm{O}\prl{#1}}
\newcommand{\indexset}			{J} 
\newcommand{\radius}            {r}
\newcommand{\radiusCL}[1]       {\radius_{#1}} 
\newcommand{\radiusDD}[1]       {\radius_{#1}} 
\newcommand{\confspace}[1]       {\mathtt{Conf}\!\prl{#1}}
\newcommand{\cRdJr}  			{\confspace{\R^d\!,\indexset,\mathbf{\radius}}}
\newcommand{\cRdJ} 				{\confspace{\R^d,\indexset}}
\newcommand{\state}             {x}
\newcommand{\stateA}            {x}
\newcommand{\stateB}            {y}
\newcommand{\stateC}            {z}
\newcommand{\stateD}            {w}
\newcommand{\bintreespace}     		{\mathcal{BT}}
\newcommand{\bintreetopspace}     	{\bintreespace}
\newcommand{\tree}        			{\tau}
\newcommand{\treeA}              	{\sigma}
\newcommand{\treeB}             		{\tau}
\newcommand{\treeC}             		{\gamma}
\newcommand{\HC}                		{\mathtt{HC}} 
\newcommand{\HCkmeans}               {\HC_{\text{2-means}}} 
\newcommand{\cluster}[1] 			{\mathcal{C}\prl{#1}} 
\newcommand{\parentCL}[1]			{\mathrm{Pr}\prl{ #1}} 
\newcommand{\childCL}[1]		    		{\mathrm{Ch}\prl{#1}} 
\newcommand{\ancestorCL}[1]     		{\mathrm{Anc}\prl{#1}} 
\newcommand{\descendantCL}[1]   		{\mathrm{Des}\prl{#1}} 
\newcommand{\complementLCL}[2]   	{{#1}^{- #2}} 
\newcommand{\stratum}[1] 		{\mathfrak{S}\!\prl{#1} } 
\newcommand{\stratumI}[1]       	{\mathring{\mathfrak{S}}\!\prl{#1}} 
\newcommand{\ctrd}[1]			{\mathrm{c}\prl{#1}}
\newcommand{\ctrdsym}			{\mathrm{c}}
\newcommand{\ctrdmid}[2]			{\mathrm{m}_{#1}\prl{#2}}
\newcommand{\ctrdsep}[2]			{\mathrm{s}_{#1}\prl{#2}}
\newcommand{\sepmag}[2]		    {\eta_{#1}\prl{#2}}
\newcommand{\sepmagsym}[1]		{\eta_{#1}}
\newcommand{\subpolicyset}[2]      	{\mathcal{SP}_{#1}\prl{#2}}
\newcommand{\subpreparesgraph}      	{\mathcal{PG}}
\newcommand{\subpreparesgraphS}     	{\widehat{\mathcal{PG}}}
\newcommand{\subpreparesedgeset}   	{\mathcal{E}_{\mathcal{PG}}}
\newcommand{\subpreparesedgesetS}   	{\widehat{\mathcal{E}}_{\mathcal{PG}}}
\newcommand{\f}                  {f_{\tree,\vectbf{\stateB}}}
\newcommand{\fhat}               {\hat{f}_{\tree,\vectbf{\stateB}}}
\newcommand{\fA}                 {f_A}
\newcommand{\fS}                 {f_S}
\newcommand{\fH}                 {f_H}
\newcommand{\V}                  {V}
\newcommand{\falpha}[2]          {\alpha_{#1}\prl{#2}}
\newcommand{\fphi}[2]            {\phi_{#1}\prl{#2}}
\newcommand{\fpsi}[2]			{\psi_{#1}\prl{#2}}
\newcommand{\fbeta}[2]            {\beta_{#1}\prl{#2}}
\newcommand{\domain}				{\mathcal{D}}
\newcommand{\setA}               {\mathcal{D}_A}
\newcommand{\setAhat}            {\widehat{\mathcal{D}}_A}
\newcommand{\setAhatI}           {\mathring{\widehat{\mathcal{D}}}_A}
\newcommand{\setH}               {\mathcal{D}_H}
\newcommand{\setHI}              {\mathring{\mathcal{D}}_H}
\newcommand{\setB}               {\mathcal{D}_B}
\newcommand{\priority}			{\mathtt{priority}}
\newcommand{\purt}    			{\mathcal{\indexset}} 
\newcommand{\purtset} 			{\mathcal{P}} 
\newcommand{\visitedcluster}     {\mathcal{V}_{\tree}\prl{\purt}} 
\newcommand{\h}[1]               {h_{#1}}
\newcommand{\hhat}[1]            {\hat{h}_{#1}}
\newcommand{\p}                  {p}
\newcommand{\phat}               {\hat{p}}
\newcommand{\setG}               {\mathcal{G}}
\newcommand{\portal}            		{\mathtt{Portal}}
\newcommand{\portalconf}[1]     		{{\mathtt{Port}}_{#1}}
\newcommand{\portalcenter}[1]   		{{\mathtt{Ctr}}_{#1}}
\newcommand{\portalscale}[1]    		{{\mathtt{Scl}}_{#1}}
\newcommand{\scaleconst}        		{\zeta}
\newcommand{\portalmerge}[1]    		{{\mathtt{Mrg}}_{#1}}
\newcommand{\portalseparate}[1]    	{{\mathtt{Sep}}_{#1}}
\newcommand{\NT}        				{\mathtt{NT}} 
\newcommand{\NToff}[1]  				{\mathtt{Noff}_{#1}}  
\newcommand{\adjgraph}[1]      	{\mathcal{A}_{#1}}
\newcommand{\adjedgeset}       	{\mathcal{E}_{\mathcal{A}}}
\newcommand{\NNI}              	{\text{NNI}}
\newcommand{\NNIgraph}      	   	{\mathcal{N}}
\newcommand{\NNIedgeset}       	{\mathcal{E}_{\mathcal{N}}}
\newcommand{\flow}              	{\varphi}
\newcommand{\lie}				{\mathcal{L}}
\begin{document}
%
\title{Coordinated Robot Navigation \\ via Hierarchical Clustering}
%
%
%

\author{Omur Arslan, Dan P. Guralnik  and Daniel E. Koditschek,
\thanks{The authors are with the Department of Electrical and Systems Engineering, University of Pennsylvania, Philadelphia, PA 19104. 
E-mail: \{omur, guralnik, kod\}@seas.upenn.edu}%
\thanks{This work was supported in part by AFOSR under the CHASE MURI FA9550-10-1-0567 and in part by ONR under the HUNT MURI N00014–07–0829.}%
\thanks{A preliminary version of this paper is presented in the conference paper \cite{arslan_guralnik_kod_WAFR2014} for point particles and a certain choice of hierarchical clustering. 
In this paper, we propose a general hierarchical navigation framework for a broad class of clustering methods and disk-shaped robots.}
}

\markboth{ESE Technical Report --- \today}{}
%
%



\maketitle

\begin{abstract}
We introduce the use of hierarchical clustering for relaxed, deterministic coordination and control of multiple robots. 
Traditionally an unsupervised learning method, hierarchical clustering offers a formalism for  identifying and representing spatially cohesive and segregated robot groups at different resolutions by relating the continuous space of configurations to the combinatorial space of trees. 
We formalize and exploit this relation, developing computationally effective reactive algorithms for navigating through the combinatorial space in concert with geometric realizations for a particular choice of hierarchical clustering method.  
These constructions yield computationally effective  vector field planners for both hierarchically invariant as well as transitional navigation in the configuration space.
We apply these methods to the centralized coordination and control of $n$ perfectly sensed and actuated Euclidean spheres in a $d$-dimensional ambient space (for arbitrary $n$ and $d$). 
Given a desired configuration supporting a desired hierarchy, we construct a hybrid controller  which is quadratic in $n$ and algebraic  in $d$ and prove that its execution brings all but a measure zero set of  initial configurations to the desired goal with the guarantee of no collisions along the way.
\end{abstract}

\begin{IEEEkeywords}
multi-agent systems, navigation functions, formation control, swarm robots, configuration space, coordinated motion planning, hierarchical clustering, cohesion, segregation.
\end{IEEEkeywords}

%
\IEEEpeerreviewmaketitle

\section{Introduction}
\label{sec.Introduction}

%
%
%
%

\IEEEPARstart{C}{}ooperative, coordinated action and sensing can promote efficiency,
robustness, and flexibility in achieving  complex tasks such
as search and rescue, area exploration, surveillance and reconnaissance,
and warehouse management \cite{parker_SHR2008}.  
Despite significant progress in the analysis of how local rules can yield such global spatiotemporal patterns \cite{jadbabaie_lin_morse_TAC2003,nabet_leonard_couzin_levin_JNS2009,chazelle_JACM2014}, there has been strikingly less work on their specification. 
With few exceptions, the engineering literature on multirobot systems relies on task representations expressed in terms of rigidly imposed configurations --- either by absolutely targeted positions, or relative distances --- missing the intuitively substantial benefit of ignoring fine details of individual positioning, to focus control effort instead on the presumably far coarser properties of the collective pattern that matter.   
We seek a more relaxed means of specification that is sensitive to spatial distribution at multiple scales (as influencing the intensity of interactions among individuals and with their environment \cite{okubo_levin_2001}) and the identities of neighbors (as determining the capabilities of heterogeneous teams \cite{anderson_franks_BE2001}) while affording, nevertheless,  a well-formed deterministic characterization of pattern.

\IEEEpubidadjcol

\begin{figure}[tb]
\centering
\includegraphics[width=0.455\textwidth]{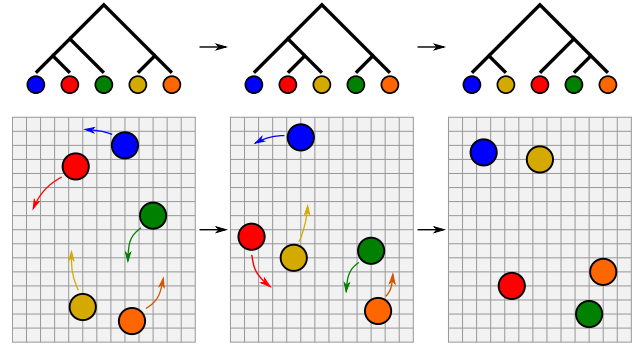} 
\vspace{-2mm}
\caption{Moving from one spatial distribution to another is generally carried through rearrangements of robot groups (clusters) at different resolution corresponding to transitions between different cluster structures (hierarchies).}
\label{fig.GroupTransitions}
\vspace{-2mm}
\end{figure}

We are led to the notion of hierarchical clustering. 
We reinterpret this classical method for unsupervised learning \cite{jain_dubes_1988} as a formalism for the specification and reactive implementation of collective mobility tasks expressed with respect to successively refined partitions of the agent set in a manner depicted in \reffig{fig.GroupTransitions}. 
There, we display three different configurations of five planar disks whose relative positions are specified by three distinct trees that represent differently nested clusters of relative proximity. 
The first configuration exhibits three distinct clusters at a  resolution in the neighborhood of  2  units of distance:  the red and the blue disks;  the yellow and the orange disks; and the solitary green disk. 
At a coarser  resolution, in the neighborhood of  4 units of distance,  the green disk has merged into the subgroup including the red and the blue disks to comprise one of only two clusters discernible at this scale, the other formed by the orange and the yellow disks.  
It is intuitively clear that this hierarchical arrangement of subgroupings will persist under significant variations in the position of each individual disk. 
It is similarly clear that the second and third configurations (and significant variations in the positions of the individual disks of both) support the very differently nested clusters represented by the second and third trees, respectively. 
In this paper, we introduce a provably correct and computationally effective machinery for specifying, controlling  invariantly to, and passing between such hierarchical clusterings at will.

As an illustration of its utility, we use this formalism to solve a specific instance of the reactive motion planning problem suggesting how the new ``relaxed'' hierarchy-sensitive layer of control can be merged with a task entailing a traditional rigidly specified goal pattern. 
Namely, for a collection of $n$  disk robots in  $\R^d$ we presume that a target hierarchy has been specified along with a goal configuration that supports it, and that the robot group is controlled by a centralized source of perfect, instantaneous information about each agent's position that can command exact instantaneous velocities for each disk. 
We present an algorithm resulting in a purely reactive hybrid dynamical system \cite{back_guckenheimer_myers_HS1993} guaranteed to bring the disk robots to both the hierarchical pattern as well as the rigidly specified instance from (almost) arbitrary initial conditions with no collisions of the disks along the way. 
Stated formally  in \reftab{tab.HNCAlgorithm}, the correctness of this algorithm  is guaranteed by \refthm{thm.HierNavAlg} whose proof appeals to the resolution of various constituent problems summarized in \reftab{tab.ProblemSolution}.
The construction is computationally effective: the number of discrete transitions grows in the worst case with the square of the number of robots, $n$; each successive discrete transition can be computed reactively (i.e., as a function of the present configuration) in time that grows linearly with the number of robots; and the formulae that define each successive vector field and guard condition are rational functions (defined by quotients of polynomials over the ambient space of degree less than $3$) entailing terms whose number grows quadratically with the number of robots.

\begin{table}[tb]
\caption{Constituent Problems of Hierarchical Robot Navigation}
\centering
\vspace{-2mm}
\begin{tabular}{|@{\hspace{1mm}}c@{\hspace{1mm}}|@{\hspace{1mm}}c@{\hspace{1mm}}|@{\hspace{1mm}}c@{\hspace{1mm}}|@{\hspace{1mm}}l@{\hspace{1mm}}|}
\hline
Problem & Solution & Theorem & Description \\
\hline
\ref{prob.HierarchyInvariant} &  \reftab{tab.HierInvNav} &  \ref{thm.HierInvNav} &   Hierarchy invariant vector field planner\\
\hline
\ref{prob.Transition} &  \reftab{tab.NNIControl}  &  \ref{thm.Transition} & Reactive navigation across  hierarchies \\
\hline
\ref{prob.Portal} &  Eqn.\refeqn{eq.portalconf} &   \ref{thm.Portal} & Cross-hierarchy geometric realization \\
\hline
\end{tabular}
\label{tab.ProblemSolution}
\end{table}

This paper is organized as follows. 
We review in the next section the  relevant background  literature: first  on reactive multirobot motion planning to relate the difficulty and importance of our sample problem to the state of the art in this field; next on the role of hierarchy in configuration spaces as explored both in biology and engineering.  
Because the notion of hierarchical clustering is a new abstraction for motion planning  we devote \refsec{sec.Abstraction} to a presentation of the key background technical ideas: first we review the relevant topological properties of configuration spaces; next the relevant topological properties of tree spaces; and, finally, prior work establishing properties of certain functions and relations between them. 
Because we feel that the specific motion planning problem we pose and solve represents a mere illustration of the larger value of this abstraction for multirobot systems we devote \refsec{sec.HierNav} to a presentation of some of the more generic tools from which our particular construction is built: first we introduce the notion of hierarchy invariant navigation; next we discuss the combinatorial  problem of hierarchy rearrangement as a graph navigation problem; and finally we interpret a subgraph of that combinatorial space as a ``prepares'' graph \cite{Burridge_Rizzi_Koditschek_1999} for  the hierarchy-invariant cover of configuration space. 
In \refsec{sec.HierNavSphere} we pose and solve the specific motion planning problem using the concepts introduced in \refsec{sec.Abstraction} and the tools introduced in \refsec{sec.HierNav}. 
\refsec{sec.Simulation} offers some numerical studies of the resulting algorithm.  
We conclude in \refsec{sec.Conclusion} with a summary of the major technical results that yield the specific contribution followed by some speculative remarks bearing on the likelihood that recent extensions of these ideas presently in progress \cite{arslan_koditschek_2014} might afford a distributed reformulation, thus addressing the first (and better explored) remarkable biological inspiration for multirobot systems.

\section{Related Literature}
\label{sec.Literature}

\subsection{Multirobot Motion Planning}

\subsubsection{Complexity}

The intrinsic complexity of multibody configurations impedes computationally effective generalizations of single-robot motion planners \cite{lavalle_PlanningAlgorithms2006,choset_etal_PrinciplesOfRobotMotion2005}. 
Coordinated motion planning of thick bodies in a compact space is computationally
hard. 
For example, moving planar rectangular objects within a rectangular box is PSPACE-hard \cite{hopcroft_schwartz_sharir_IJRR1984} and motion planning for finite planar disks in a polygonal environment is strongly NP-hard \cite{spirakis_yap_IPL1984}.  
Even determining when and how the configuration space of noncolliding spheres in a unit box is connected entails an encounter with the ancient sphere packing problem \cite{baryshnikov_bubenik_kahle_IMRN2013}.
Within the domain of reactive or vector field motion planning, it has proven deceptively hard to determine exactly this line of intractability.
Consequently, this intrinsic complexity for coordinated vector field planners is generally mitigated by either assuming objects move in an unbounded (or sufficiently large) space \cite{Tanner_Boddu_TRO2012,Dimarogonas_Loizou_Kyriakopoulos_Zavlanos_2006}, as we do in \refsec{sec.HierNavSphere}, or simply assuming conditions sufficient to guarantee connectivity between initial and goal configurations \cite{karagoz_bozma_kod_TRO2014,loizou_RSS2014}.  
On the other hand, more relaxed versions entailing (perhaps partially) homogeneous (unlabeled) specifications for interchangeable individuals have yielded computationally efficient planners in the recent literature \cite{solovey_halperin_IJRR2014,adlerEtAl_EuroCG2014,turpin_michael_kumar_ICRA2013,turpin_michael_kumar_AFR2013},
and we suspect that the cluster hierarchy abstraction may be usefully applicable to such partially labeled settings.

\subsubsection{Reactive Multirobot Motion Planning}

Since the problem of reactively navigating groups of disks was first introduced to robotics  \cite{Whitcomb_Koditschek_1991,Whitcomb_Koditschek_Cabrera_ICRA1992}, most research into vector field planners has embraced the navigation function paradigm \cite{rimon_koditschek_TRO1992}. 
A recent review of this two decade old literature is provided in \cite{Tanner_Boddu_TRO2012},  where a combination of intuitive and analytical results yields a nonsmooth centralized planner for achieving goal configurations specified up to rigid transformation.  
As noted in \cite{Tanner_Boddu_TRO2012}, the multirobot generalization of a single-agent navigation function is challenged by the violation of certain assumptions inherited from the original formulation \cite{rimon_koditschek_TRO1992}.
One such assumption is that obstacles are ``isolated'' ( i.e. nonintersecting).   
In the multirobot case, every robot encounters others as mobile obstacles, and any collision between more than two robots breaks down the isolated obstacle assumption \cite{Tanner_Boddu_TRO2012}.
In some approaches, the departure from isolated interaction has been addressed by encoding all possible collision scenarios,  yielding controllers with terms growing super-exponentially in the number of robots,  even when the workspace is not compact \cite{Dimarogonas_Loizou_Kyriakopoulos_Zavlanos_2006}.  
In contrast, our recourse to the hierarchical representation of configurations affords a computational burden growing merely quadratically in the number of agents. 
In  \cite{karagoz_bozma_kod_TRO2014}, the problem is circumvented by allowing critical points on the boundary (with no damage to the obstacle avoidance and convergence guarantees), but, as mentioned above, very conservative assumptions about the degree of separation  between agents at the goal state are required.  
In contrast, our recourse to hierarchy allows us to handle arbitrary (non-intersecting) goal configurations,  albeit our reliance upon the homotopy type of the underlying space presently precludes the consideration of a compact configuration space as formally allowed in \cite{karagoz_bozma_kod_TRO2014}.\footnote{ We conjecture  that a compact  configuration space with a free-space goal point satisfying the conditions of \cite{karagoz_bozma_kod_TRO2014} has the same homotopy type as the unbounded case we treat here.}

Another limitation of navigation function approaches is the requirement of proper parameter tuning to eliminate local minima. 
Some effort has been given to automatic adaptation of this parameter \cite{loizou_RSS2014}, and,  in principle, the original results of \cite{rimon_koditschek_TRO1992} guarantee that any monotone increasing scheme must  eventually resolve the issue of local minima,  however, this is numerically unfavorable (the Hessian of the resulting field becomes stiffer) and incurs substantial performance costs (transients must slow as the tuning parameter increases).\footnote{It bears mention in passing that partial differential equations (e.g.,  harmonic potentials \cite{loizou_CDC2011}) yield self-tuning navigation functions but these are intrinsically numerical constructions that forfeit the reactive nature of the closed form vector field planners under discussion here. 
}
In contrast, our recourse to hierarchy removes the need for any comparable tuning parameter. 

Many of the concepts and some  of the technical constructions we develop here were presented in preliminary form in the conference paper  \cite{arslan_guralnik_kod_WAFR2014}, building on the initial  results of the conference paper \cite{arslanEtAl_Allerton2012}.  
This presentation gives a unified view of  the detailed results (with some tutorial background) and contributes a major new extension by generalizing the construction of \cite{arslan_guralnik_kod_WAFR2014} from  point particles  to thickened disks of non-zero radius (necessitating a more involved version of the hierarchy invariant fields in \refsec{sec.HierInvNav}).
  
\subsection{The Use of Hierarchies as Organizational Models}

\subsubsection{Hierarchy in Configuration Space}

That a hierarchy of proximities might play a key role in computationally efficient coordinated motion planning had already been hinted at in early work on this problem \cite{Liu_Kuroda_Naniwa_Noborio_Arimoto_1989,Liu_Arimoto_Noborio_1991,peasgood_clark_mcphee_TRO2008}.
Partial hierarchies that limit the combinatorial growth of complexity have been explicitly applied algorithmically to organize and simplify the systematic enumeration of cluster adjacencies in the configuration space \cite{ayanian_kumar_koditschek_RR2011}. 
Moreover, hierarchical discrete abstraction methods are successfully applied for scalable steering of a large number of robots as a group all together by controlling the group shape \cite{belta_kumar_TRO2004}, and  also find applications for congestion avoidance  in swarm navigation \cite{santos_chaimowicz_IROS2011}. 
While the utility of hierarchies and expressions for manipulating them are by no means new to this problem domain, we believe that the explicit formal connection \cite{Baryshnikov_Guralnik} we exploit between the topology of configuration space \cite{Fadell_Husseini_2001} and the topology of tree space \cite{felsenstein2004} through the hierarchical clustering relation \cite{jain_dubes_1988} is entirely new.

\subsubsection{Hierarchy in Biology and Engineering}

Biology offers spectacularly diverse examples of animal spatial organization ranging from self-sorting in cells \cite{steinberg_Science1963}, tissues and organs \cite{turing_PTBS1952,gierer_meinhardt_K1972}, and groups of individuals \cite{deneubourg_etal_SAB1991,ame_rivault_deneubourg_AB2004,halverson_skelly_caccone_MEC2009} to more patterned teams \cite{obrien_JEMBE1989,kareiva_odell_AN1987,anderson_franks_BE2001,parrish_edelsteinkeshet_Science1999}, all the way through strategic group formations in vertebrates \cite{bednarz_Science1988,strubin_steinegger_bshary_E2011}, mammals \cite{gazda_etal_RSBS2005,scheel_packer_AB1991,stander_BES1992,nituch_schaefer_maxwell_E2008}, and primates
\cite{watts_mitani_IJP2002,crofoot_gilby_PNAS2012} hypothesized to increase
efficacy in foraging \cite{obrien_JEMBE1989,kareiva_odell_AN1987}, hunting \cite{watts_mitani_IJP2002,bednarz_Science1988,gazda_etal_RSBS2005,scheel_packer_AB1991}, logistics and construction \cite{anderson_franks_BE2001,parrish_edelsteinkeshet_Science1999}, 
predator avoidance \cite{tien_levin_rubenstein_EER2004,hammer_hammer_CJFAS2000}, and even to stabilize whole ecologies \cite{fryxell_mosser_sinclair_packer_Nature2007} --- all consequent upon the collective ability to target, track, and transform geometrically structured patterns of mutual location in response to environmental stimulus. 
These formations are remarkable for at least two reasons. 
First, their global structure seems to arise from local signaling and response amongst proximal individuals coupled to specific physical environments \cite{flierl_etal_JTB1999}, in a manner that might be posited as a  paradigm for  generalized emergent intelligence \cite{couzin_Nature2007}.  
Second, these formations  appear to resist familiar rigid prescriptions governing absolute or relative location, instead giving wide latitude for individual autonomy and detailed positioning  
(intuitively, a necessity for negotiating fraught, highly dynamic  interactions  such as arise in, say, hunting  \cite{gazda_etal_RSBS2005,stander_BES1992}), while, nevertheless, supporting the underlying coarse, deterministic ``deep structure'' as a dynamical invariant.
It is this second remarkable attribute of
biological swarms that inspires the present paper.

This profusion of pattern formation in biology has inspired a commensurate interest in robotics, yielding a growing literature on group coordination behaviors \cite{kumar_garg_kumar_ACC2008,santos_pimenta_chaimowicz_ICRA2014,santos_chaimowicz_R2014,huang_etal_ICRA2014} motivated by the intuition that the heterogeneous action and sensing abilities of a group of robots might enable a comparably diverse range of  complex tasks beyond the capabilities of a single individual. 
For example, group coordination via splitting and merging behaviours creates effective strategies for obstacle avoidance \cite{ogren_ICRA2004}, congestion control \cite{santos_chaimowicz_IROS2011}, shepherding \cite{chaimowicz_kumar_DARS2007}, area exploration \cite{chaimowicz_kumar_DARS2007,dames_schwager_kumar_rus_CDC2012}, and maintaining persistent and coherent groups while adapting to the environment \cite{huang_etal_ICRA2014}. 
In almost all of the robotics work in this area, formation tasks are given based upon rigid specifications taking either the form of explicit formation or relative distance graphs, with few exceptions including the ``shape'' abstraction of \cite{belta_kumar_TRO2004} or applications in  unknown environments such as area coverage and exploration \cite{schwager_rus_slotine_IJRR2009}.
Alternatively, hierarchical clustering offers an interesting means of ensemble task encoding and control; and  it seems likely that the ability to specify organizational structure in the precise but flexible terms that hierarchy permits will add a useful tool to the robot motion planner's toolkit.

\begin{table}[h]
\centering
\caption{Principal symbols used throughout this paper}
\vspace{0mm}
\begin{tabular}{@{\hspace{1mm}}l@{\hspace{3mm}}l@{\hspace{1mm}}}
\hline
$\indexset$, $\vectbf{\radius}$ & Sets of labels and disk radii [\ref{sec.ConfSpace}]\\
$\cRdJr$ &   The conf. space of labelled, noncolliding disks \refeqn{eq.confspace}\\
$\bintreetopspace_{\indexset}$  & The space of binary trees   [\ref{sec.Hierarchy}]   \\
$\HC$  & Hierarchical clustering  [\ref{sec.ConfHierarchy}]\\
$\HCkmeans$  & Iterative 2-means clustering  [\ref{sec.HierNavSphere}]\\
$\stratum{\tree}$ & The stratum of a tree, $\tree \in \bintreetopspace_{\indexset}$, \refeqn{eq.HierarchicalStratum}  \\
$\portal\prl{\treeA,\treeB}$ & Portal configurations of a pair, $\prl{\treeA,\treeB}$, of trees \refeqn{eq.portal} \\
$\portalconf{\treeA,\treeB}$ & Portal map [\ref{sec.HierPortal}] \\
$\adjgraph{\indexset} = \prl{\bintreetopspace_{\indexset}, \adjedgeset}$ & The adjacency graph of trees [\ref{sec.TreeGraph}] \\
$\NNIgraph_{\indexset} = \prl{\bintreetopspace_{\indexset}, \NNIedgeset}$ & The NNI-graph of trees [\ref{sec.TreeGraph}] \\
\hline
\end{tabular}
\vspace{-2mm}
\end{table}

\section{Hierarchical Abstraction}
\label{sec.Abstraction}

This section describes how we relate multirobot configurations to abstract cluster trees via hierarchical clustering methods and how we define connectivity in tree space. 

\subsection{Configuration Space}
\label{sec.ConfSpace}

For ease of exposing fundamental technical concepts, we restrict our attention to groups of Euclidean spheres in a $d$-dimensional ambient space, but many concepts introduced herein can be generalized to any metric space.

Given an index set, $ \indexset=[n]\ldf \crl{1, \ldots, n} \subset \N$, a \emph{heterogeneous multirobot configuration}, $\vectbf{\state}= \prl{ \vect{\state}_j }_{j \in \indexset}$, is a labeled nonintersecting placement of $\card{\indexset}=n$ distinct Euclidean spheres,\footnote{Here, $\card{A}$ denotes the cardinality of set $A$.} where $i$th sphere is centered at $\vect{x}_i \in \R^d$ and has radius $r_i \geq 0$. 
We find it convenient to identify the \emph{configuration space} \cite{Fadell_Husseini_2001} with the set of distinct labelings, i.e., the injective mappings of $\indexset$ into $\R^d$,
and, given a vector of nonnegative radii, $\mathbf{r} \ldf \prl{r_j}_{j \in \indexset} \in \prl{\R_{\geq 0}}^{\indexset}$,  we will find it convenient to denote our ``thickened'' subset of this configuration space as\footnote{Here, $\R$ and $\R_{\geq 0}$ denote the set of real numbers and its subset of nonnegative real numbers, respectively; and $\R^d$ is the $d$-dimensional Euclidean space.}
\begin{align}\label{eq.confspace}
\hspace{-2.5mm}\cRdJr \sqz{\sqz{\ldf}} \crl{ \hspace{-0.4mm}  \vectbf{\state} \sqz{\in} {(\R^d)}^{\!\indexset}  \Big | \! \norm{\vect{\state}_i \sqz{-} \vect{\state}_j} \sqz{>} r_i \sqz{+} r_j,  \forall i \sqz{\neq} j \sqz{\in} \indexset \!} \!\!,\!\!\!
\end{align}
where $\norm{.}$ denotes the standard Euclidean norm on $\R^d$.

\subsection{Cluster Hierarchies}
\label{sec.Hierarchy}

\begin{figure}[tb]

 \centering
 \begin{tabular}{@{}c@{}@{}c@{}}
 \begin{tabular}{c} 
 \includegraphics[width=0.21\textwidth]{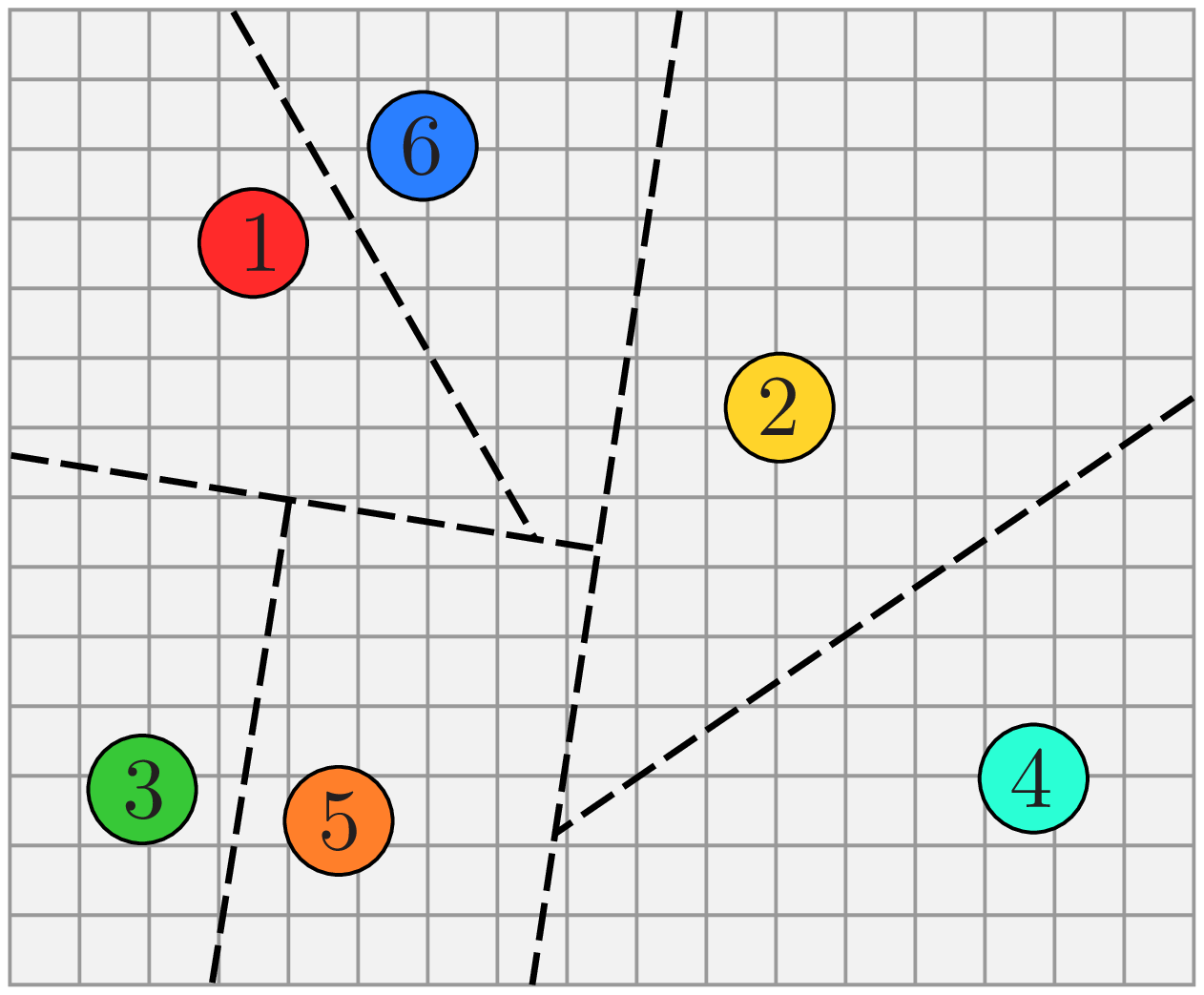} 
 \end{tabular}&
 \begin{tabular}{c}
 \includegraphics[width=0.21\textwidth]{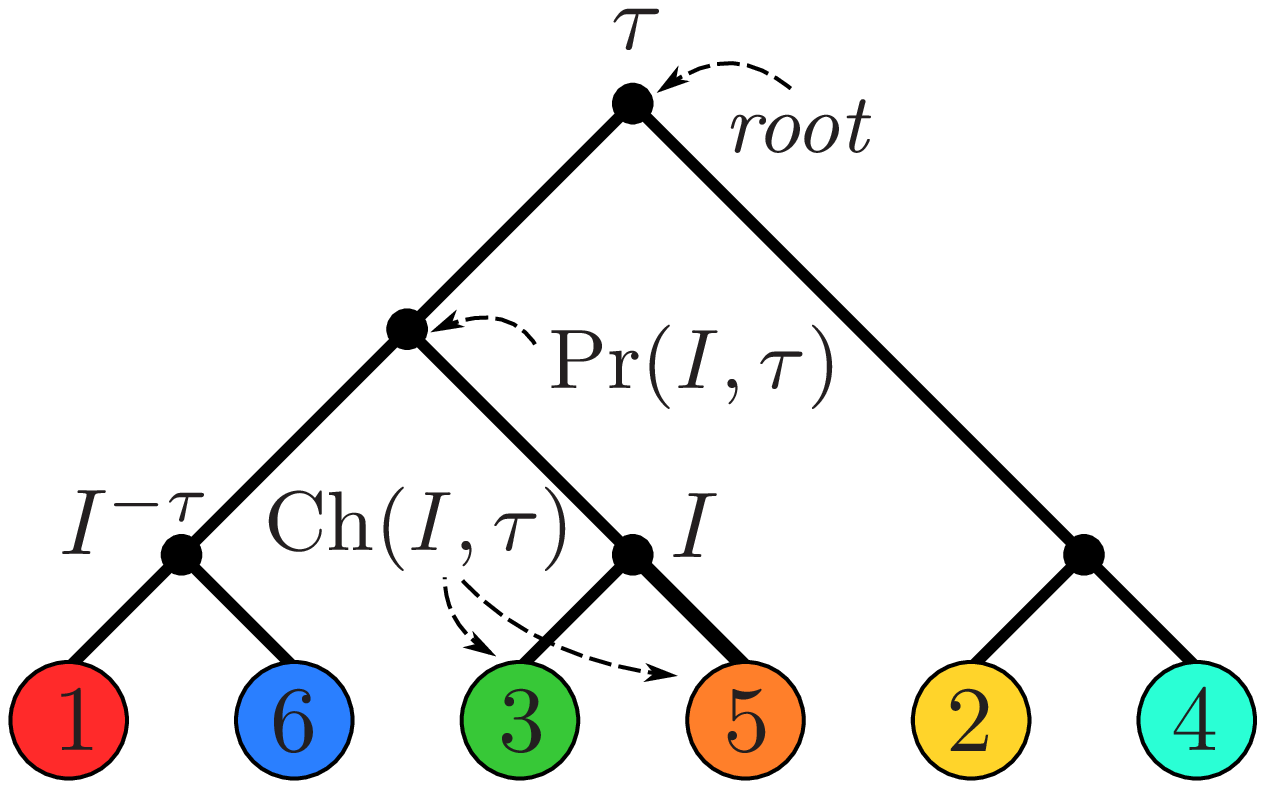}
 \end{tabular} \\[-1mm] 
 \scalebox{0.7}{(a)} & \scalebox{0.7}{(b)}
 \end{tabular}
 \vspace{-1mm}
\caption{An illustration of (a) a heteregeneous configuration of unit disks in $\confspace{\R^2,\brl{6}, \vectbf{1}}$ and (b) its iterative 2-mean clustering \cite{savaresi_boley_ICDM2001} hierarchy  $\tree$ in $\bintreetopspace_{\brl{6}}$, where the dashed lines in (a) depict the separating hyperplanes  between clusters, and (b) illustrates hierarchical cluster relations: parent - $\parentCL{I,\tree}$, children - $\childCL{I,\tree}$, and  local complement (sibling) -  $\complementLCL{I}{\tree}$ of cluster $I$ of the rooted binary  tree, $\tree \in \bintreetopspace_{\brl{6}}$. 
An interior node is referred by its cluster, the list of leaves below it; for example, $I = \crl{3,5}$. 
Accordingly the cluster set of $\tree$ is  $\cluster{\tree} = \crl{\big. \!\crl{1}\!, \crl{2}\!, \ldots, \crl{6}\!, \crl{1,6}\!,\crl{3,5}\!, \crl{2,4}\!, \crl{1,3,5,6}\!, \crl{1,2,3,4,5,6}\!}$.
  }
\label{fig.ConfigurationHierarchy}
\vspace{-1mm} 
 \end{figure}

A rooted semi-labelled tree $\tree$ over a fixed finite index set $\indexset$, illustrated in \reffig{fig.ConfigurationHierarchy}, is  a directed acyclic graph $G_{\tree} = \prl{V_{\tree}, E_{\tree} }$,
whose leaves, vertices of degree one, are bijectively labeled by $\indexset$ and interior vertices all have out-degree at least two; and all of whose edges in $E_{\tree}$ are directed away from a vertex designated to be the \emph{root} \cite{billera_holmes_vogtmann_aap2001}. 
A rooted tree with all interior vertices of out-degree two is said to be { \em binary } or, equivalently, { \em non-degenerate}, and  all other trees are
said to be \emph{degenerate}.  
In this paper $\bintreetopspace_{\indexset}$ denotes the set of rooted nondegenerate trees over leaf set $\indexset$.

A rooted semi-labelled tree $\tree$ uniquely determines (and henceforth will be interchangeably termed) a \emph{cluster hierarchy} \cite{mirkin_1996}. 
By definition, all vertices of $\tree$  can be reached from the root through a directed path in $\tree$. 
The \emph{cluster} of a vertex $v \in V_{\tree}$ is defined to be the set of leaves reachable from $v$ by a directed path in $\tree$. 
Let $\cluster{\tree}$ denote the set of all vertex clusters of $\tree$.

For every cluster $I \in \cluster{\tree}$ we recall the standard notion of parent (cluster) $\parentCL{I,\tree}$ and lists of children $\childCL{I,\tree}$,  ancestors $\ancestorCL{I,\tree}$ and descendants $\descendantCL{I,\tree}$ of $I$ in $\tree$ --- see \cite{arslanEtAl_Allerton2012} for explicit definitions of cluster relations.
Additionally, we find it useful to define the \emph{local complement (sibling)} of cluster $I \in \cluster{\tree}$ as $\complementLCL{I}{\tree} \ldf \parentCL{I,\tree} \setminus I$.

\subsection{Configuration Hierarchies}
\label{sec.ConfHierarchy}

A \emph{hierarchical clustering}\footnote{Although clustering algorithms generating degenerate hierarchies  are available, many  standard hierarchical clustering methods return binary clustering trees as a default, thereby  avoiding  commitment to some ``optimal" number of clusters \cite{jain_dubes_1988,witten_frank_hall_DM2011}.} $\HC \subset \cRdJr \times \bintreetopspace_{\indexset}$ is a relation from the configuration space $\cRdJr$ to the abstract space of binary hierarchies $\bintreetopspace_{\indexset}$ \cite{jain_dubes_1988}, an example depicted in \reffig{fig.ConfigurationHierarchy}. 
In this paper we will only be interested in clustering
methods that can classify all possible configurations (i.e. for which $\HC$ assigns some tree to every configuration), and so we need:
\begin{property}\label{P.HCmultifunction}
$\HC$ is a multi-function.
\end{property}

\noindent Most standard divisive and agglomerative hierarchical clustering methods exhibit this property, but generally fail to be functions because choices may be required between different but equally valid cluster splitting or merging decisions \cite{jain_dubes_1988}.

\begin{figure}[t]
 \centering
 \includegraphics[width=0.45\textwidth]{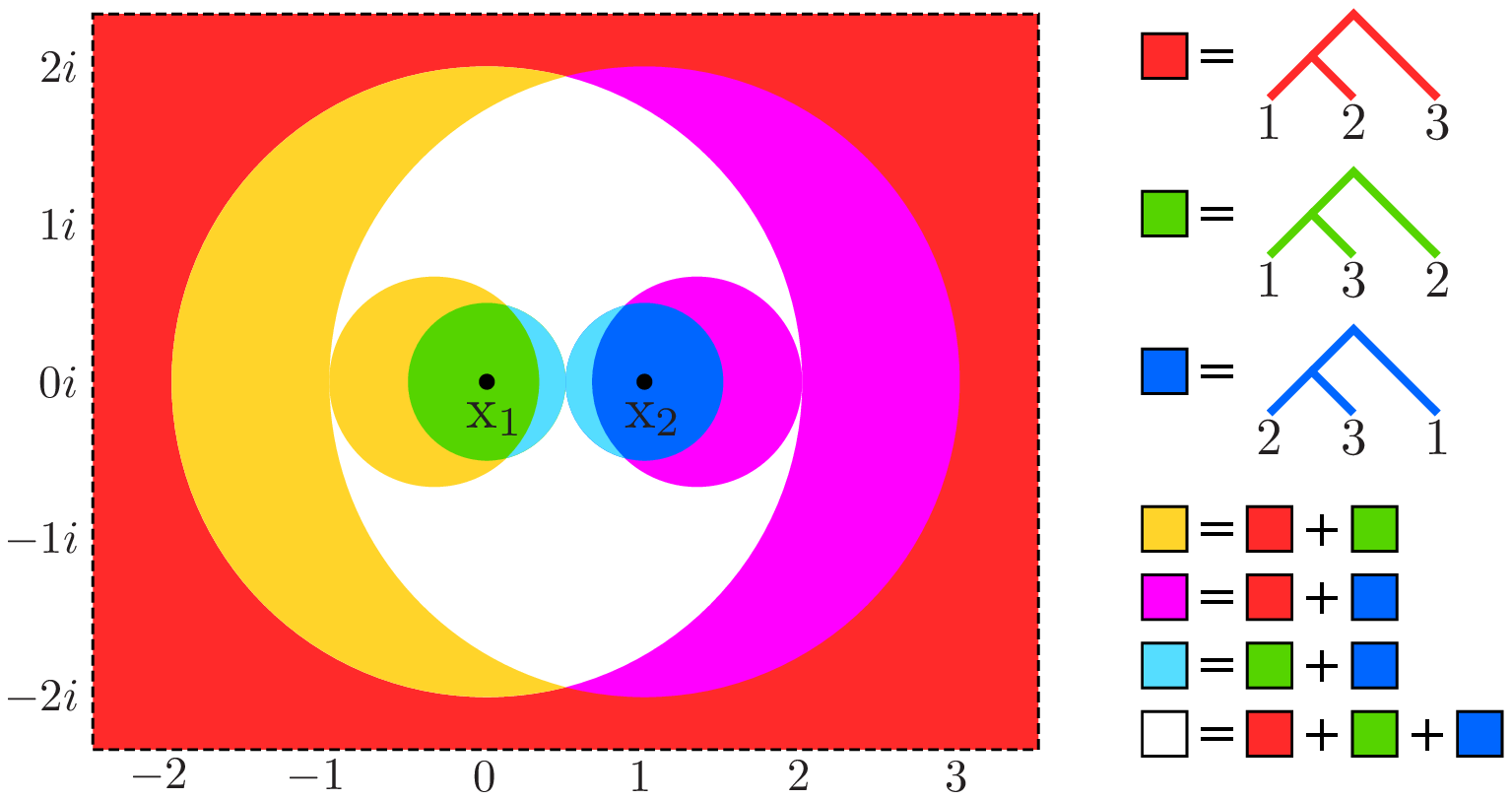} 
 \vspace{-1mm}
 \caption{The Quotient Space  $\confspace{\C,\brl{3}, \vectbf{0}}/\sim$, where for any $\vectbf{\stateA}, \vectbf{\stateB} \in \confspace{\C,\brl{3}, \vectbf{0}}$,  $\vectbf{\stateA} \sim \vectbf{\stateB} \Longleftrightarrow \frac{\vect{\stateA}_3 - \vect{\stateA}_1}{\vect{\stateA}_2 - \vect{\stateA}_1} = \frac{\vect{\stateB}_3 - \vect{\stateB}_1}{\vect{\stateB}_2 - \vect{\stateB}_1}$.
Here, point particle configurations are quotiented out by translation, scale and rotation, and so $\vect{\stateA}_1 = 0 + 0i$, $\vectbf{\stateA}_2 = 1 + 0i$ and $\vect{\stateA}_3 \in \C \setminus \crl{\vect{\stateA}_1 ,\vect{\stateA}_{2}}$.
 Regions are colored according the associated cluster hierarchies results from their iterative 2-mean clustering \cite{savaresi_boley_ICDM2001}.
 For instance, any configuration in the white region supports all  hierarchies in $\bintreetopspace_{\brl{3}}$. }
\label{fig.QuotientSpace} 
\vspace{-1mm}
\end{figure}

Given such an $\HC$, for any $\vectbf{\state} \in \cRdJr$ and $\tree \in \bintreetopspace_{\indexset}$, we say $\vectbf{x}$ \emph{supports} $\tree$ if and only if $\prl{\vectbf{\state},\tree} \in \HC$.
The \emph{stratum} associated with a binary hierarchy $\tree \in \bintreetopspace_{\indexset}$, denoted by $\stratum{\tree} \subset \cRdJr$, 
is  the set of all configurations $\vectbf{x} \in \cRdJr$ supporting the same tree $\tree$ \cite{arslanEtAl_Allerton2012}, 
\begin{align} \label{eq.HierarchicalStratum}
\stratum{\tree} \ldf \crl{\vectbf{x} \in \cRdJr \Big| \prl{\vectbf{x},\tree} \in \HC},
\end{align}
and this yields a tree-indexed cover of the configuration space.
For purposes of illustration, we depict in \reffig{fig.QuotientSpace} the strata of $\confspace{\C,\brl{3}, \vectbf{0}}$ --- a space that represents a group of three point particles on the complex plane.\footnote{Here, $\vectbf{0}$ and $\vectbf{1}$ are, respectively, vectors of all zeros and ones with the appropriate sizes.}

The restriction to binary trees precludes  combinatorial tree degeneracy \cite{billera_holmes_vogtmann_aap2001} and we will  avoid configuration degeneracy by imposing:
\begin{property}\label{P.HCopeninterior}
Each stratum of $\HC$ includes an open subset of configurations, i.e. for every $\tree \in \bintreetopspace_{\indexset}$, $\stratumI{\tree} \neq \emptyset$.\footnote{Here, $\mathring{A}$ denotes the interior of set $A$.}  
 \end{property}
\noindent Once again, most standard hierarchical clustering methods respect this assumption: they generally all agree (i.e. return the same result) and are robust to small perturbations of a configuration whenever all its clusters are compact and well separated \cite{witten_frank_hall_DM2011}.


Given any two multirobot configurations supporting the same cluster hierarchy, moving between them while maintaining the shared cluster hierarchy (introduced later as \refprob{prob.HierarchyInvariant}) requires:
\begin{property}\label{P.HCconnected}
Each stratum of $\HC$ is connected. 
\end{property}
\noindent For an arbitrary clustering method this requirement is generally not trivial to show, but when configuration clusters of $\HC$ are linearly separable, one can characterize the topological shape of each stratum to verify this requirement, as we do in \refsec{sec.HierStrata}.

\begin{figure}[tb]
\centering
\includegraphics[width=0.45\textwidth]{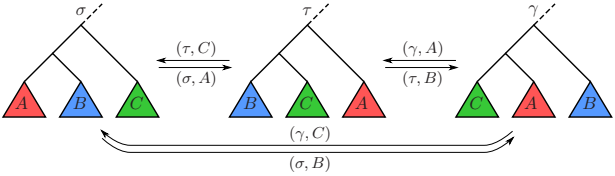}
\vspace{-2mm} 
\caption{An illustration of NNI moves between binary trees: each arrow is labeled by a source tree and an associated cluster defining the move.
}
\label{fig.NNIMove}
\vspace{0mm}
\end{figure}

\subsection{Graphs On Trees}
\label{sec.TreeGraph}

After establishing the relation between multirobot configurations and cluster hierarchies, the final step of our proposed abstraction is to determine the connectivity of tree space.

\smallskip

Define the \emph{adjacency graph } $\adjgraph{\indexset} = \prl{\bintreetopspace_{\indexset}, \adjedgeset}$ to be the 1-skeleton of the nerve \cite{Hatcher_2002} of the 
$\cRdJr$-cover induced by $\HC$. 
That is to say, a pair of hierarchies, $\treeA, \treeB \in \bintreetopspace_{\indexset}$, is connected with an edge in $\adjedgeset$ if and only if their strata intersect,  $\stratum{\treeA} \cap \stratum{\treeB} \neq \emptyset$.   
To enable navigation between structurally different multirobot configurations later (\refprob{prob.Transition}), we need: 
\begin{property}
The adjacency graph  is connected.
\end{property}
\noindent Although the adjacency graph is a critical building block of our abstraction, as \reffig{fig.QuotientSpace} anticipates, $\HC$ strata generally have complicated shapes, making it usually hard to compute the complete adjacency graph.

Fortunately, the computational biology literature \cite{felsenstein2004} offers an alternative notion of adjacency that turns out to be both   feasible and nicely compatible with our needs, yielding a computationally effective, connected subgraph of the adjacency graph, $\adjgraph{\indexset}$, as follows.

The \emph{Nearest Neighbor Interchange (NNI)} move at a cluster $A \in \cluster{\treeA}$ on a binary hierarchy $\treeA \in \bintreetopspace_{\indexset}$,  as illustrated in \reffig{fig.NNIMove}, swaps cluster $A$ with its parent's sibling $C = \complementLCL{\parentCL{A,\treeA}}{\treeA}$ to yield another binary hierarchy $\treeB \in \bintreetopspace_{\indexset}$ \cite{robinson_jct1971,moore_goodman_barnabas_jtb1973}. 
Say that $\treeA,\treeB \in \bintreetopspace_{\indexset}$ are \emph{NNI-adjacent} if and only if one can be obtained from the other by a single NNI move.
Moreover, define the \emph{NNI-graph} $\NNIgraph_{\indexset} = \prl{\bintreetopspace_{\indexset}, \NNIedgeset}$ to have vertex set $\bintreetopspace_{\indexset}$, with two trees connected by an edge in $\NNIedgeset$ if and only if they are NNI-adjacent, see \reffig{fig.NNIGraph}.
An important contribution of this paper will be to show how the NNI-graph  yields a computationally effective subgraph of the adjacency graph (\refthm{thm.Portal}). 

\begin{figure}[tb]
\centering
\includegraphics[width=0.43\textwidth]{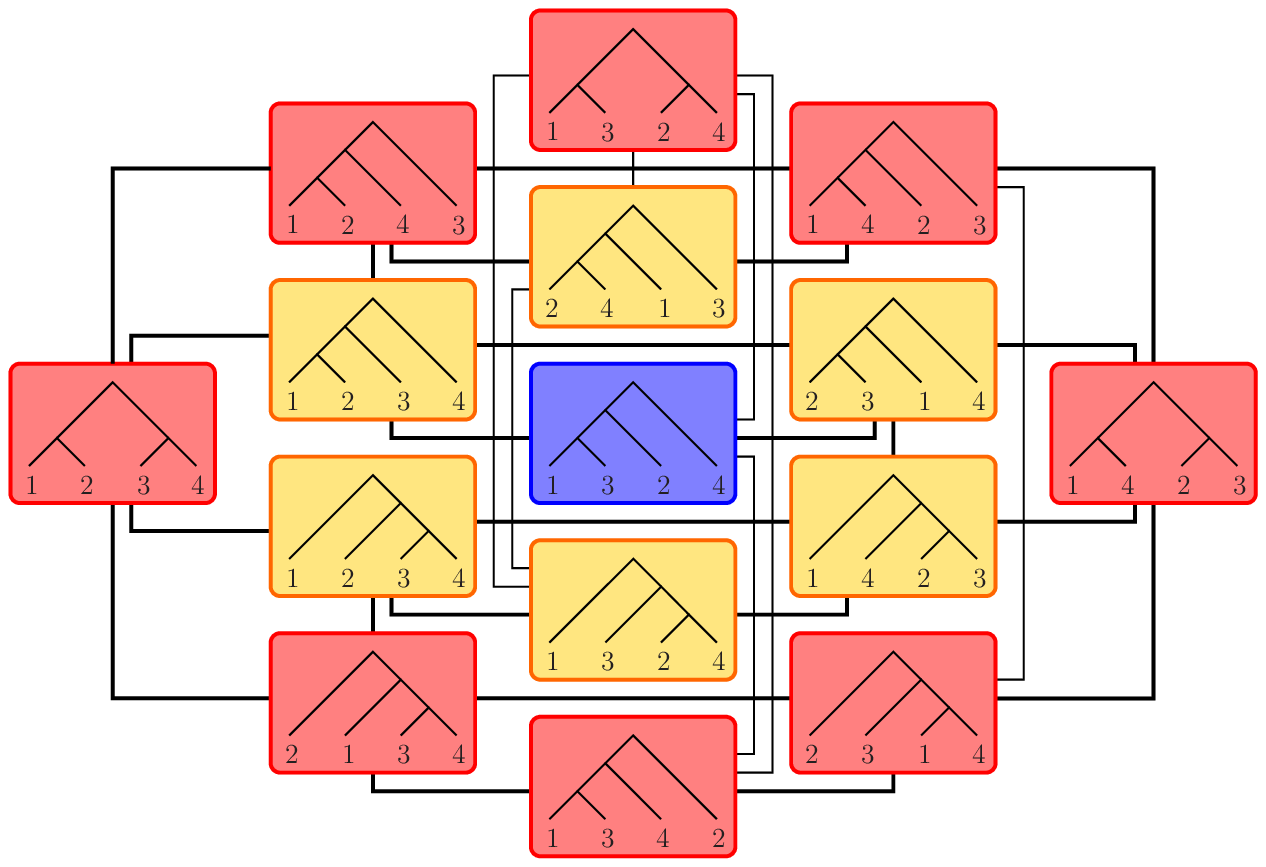} 
\vspace{-2mm}
\caption{The NNI Graph: a graphical representation  of the space of rooted binary trees, $\bintreetopspace_{\indexset}$, with NNI connectivity, where $\indexset = \brl{4} = \crl{1,2,3,4}$.
}
\label{fig.NNIGraph}
\end{figure}

\section{Hierarchical Navigation Framework}
\label{sec.HierNav}

Hierarchical abstraction introduced in \refsec{sec.Abstraction} intrinsically suggests a two-level navigation strategy for coordinated motion design: (i) at the low-level perform finer adjustments on configurations using hierarchy preserving vector fields, (ii) and at the high-level resolve structural conflicts between configurations using  a discrete transition policy in tree space; and the connection between these two levels are established through ``portals'' --- open sets of configurations supporting two adjacent hierarchies.
In this section we abstractly describe the generic components of our navigation framework and we show how they are put together.

\subsection{Generic Components of Hierarchical Navigation}

\subsubsection{Hierarchy Preserving Navigation}

For ease of exposition we restrict attention to  first order (completely actuated single integrator) robot dynamics, and we will be interested in smooth closed loop feedback laws (or hybrid controllers composed from them) that result in complete flows,\footnotemark 
{\small
\begin{align}\label{eq.SystemModel}
\dot{\vectbf{x}} = f \prl{\vectbf{x}},
\end{align}
}%
where $f:\cRdJr \rightarrow \prl{\R^d}^\indexset$ is a vector field over $\cRdJr$ \refeqn{eq.confspace}.

\noindent
\footnotetext{A long prior robotics literature motivates the utility of this fully actuated ``generalized damper'' dynamical model \cite{Lozano-Perez_Mason_Taylor_1984}, and provides methods for ``lifts'' to controllers for second order plants \cite{Koditschek_1987,Koditschek_JDSMC1991a} as well.}

Denote by $\flow^t$ the \emph{flow} \cite{Arnold_1973} on $\cRdJr$ induced by the vector field $f$.
For a choice of hierarchical clustering $\HC$, the class of hierarchy-invariant vector fields maintaining the robot group in a specified hierarchical arrangement of clusters, $\tree \in \bintreetopspace_{\indexset}$, is defined as \cite{arslanEtAl_Allerton2012},
{\small
\begin{align}
\hspace{-1mm}\mathcal{F}_{\HC}(\tree) \sqz{\ldf} \! \crl{ \! f\sqz{:} \cRdJr \!\sqz{\rightarrow} \prl{\R^d}^{\!\indexset} \! \Big |
\flow^t\!\prl{\big. \stratum{\tree}\!} \!\sqz{\subset} \stratum{\tree}\!, t\sqz{>} 0 \!}\!\!.\! \!\!
\end{align}
}%
Hierarchy preserving navigation, the low-level component of our framework, uses the vector fields of $\mathcal{F}_{\HC}\prl{\tree}$ to invariantly retract almost all of a stratum onto any designated goal configuration.\footnote{It is important to remark that, instead of a single goal configuration, a more general family of problems can be parametrized by a set of goal configurations sharing a certain homotopy model comprising a set of appropriately nested spheres; and for such a general case one can still construct an exact retraction within our framework.} 
Thus, we require the availability of such a construction, summarized as:

\begin{problem}\label{prob.HierarchyInvariant}
For any $\tree \in \bintreetopspace_{\indexset}$ and $\vectbf{\stateB} \in \stratum{\tree}$ associated with $\HC$ construct  a control policy, $f_{\tree, \vectbf{\stateB}}$, using the hierarchy invariant vector fields of $\mathcal{F}_{\HC}\prl{\tree}$ whose closed loop asymptotically results in a retraction, $R_{\tree, \vectbf{\stateB}}$, of $\stratum{\tree}$, possibly excluding a set of measure zero\footnote{Recall from \cite{Farber_2003} that a continuous motion planner in a configuration space $X$ exists if and only if $X$ is contractible. Hence, if a hierarchical stratum is  non-contractible (\refthm{thm.StrataHomotopy}), the domain of such a vector field planner described in \refprob{prob.HierarchyInvariant} must exclude at least a set of measure zero.},  onto $\crl{\vectbf{\stateB}}$. 
\end{problem}
\noindent Key for purposes of the present application is  the observation that any hierarchy-invariant field $f \in \mathcal{F}_{\HC}\prl{\tree}$ must leave $\cRdJr$ invariant as well, and thus avoids any self-collisions of the agents along the way. 

\medskip

\subsubsection{Navigation in the Space of Binary Trees}
\label{sec.TreeNav}

Whereas the controlled deformation retraction, $R_{\tree, \vectbf{\stateB}}$, above generates paths ``through" the strata, we will also want to navigate ``across" them along the adjacency graph (which will be later in \refsec{sec.HierNavSphere} replaced with the NNI-graph --- a computationally efficient, connected subgraph). 
Thus, we further require a construction of a discrete feedback policy in $\bintreetopspace_{\indexset}$ that recursively generates paths in the adjacency graph toward any specified destination tree from all other trees in $\bintreetopspace_{\indexset}$ by reducing a ``discrete Lyapunov function'' relative to that destination, which we  summarize as follows:
\begin{problem}\label{prob.Transition}
 Given any $\tree \in \bintreetopspace_{\indexset}$ construct recursively a closed loop  discrete dynamical system  in the  adjacency graph, taking the form of a deterministic discrete transition rule, $g_{\tree}$, with global attractor at $\tree$  endowed with a discrete Lyapunov function relative to the attractor $\tree$.  
\end{problem}
\noindent Such a recursively generated choice of next hierarchy will play the role of a discrete feedback policy used to define the reset map of our hybrid dynamical system. 

\medskip

\subsubsection{Hierarchical Portals}
\label{sec.HierPortal} 

Here, we relate the (combinatorial) topology of hierarchical clusters to the (continuous) topology of configurations by defining ``portals'' --- open sets of configurations supporting two adjacent hierarchies. 
\begin{definition} \label{def.portal}
The \emph{portal},  $\portal\prl{\treeA,\treeB}$, of a pair of  hierarchies, $\treeA,\treeB \in \bintreetopspace_{\indexset}$,  is the set of all configurations supporting interior strata of both trees,
\begin{align}
\portal\prl{\treeA,\treeB} \ldf \stratumI{\treeA} \cap  \stratumI{\treeB}. \label{eq.portal}
\end{align}
\end{definition}

\noindent Namely, portals are geometric realizations in the configuration space of the edges of the adjacency graph on trees, see \reffig{fig.QuotientSpace}. 
To realize discrete transitions in tree space via hierarchy preserving navigation in the configuration space, we need a portal map that takes an edge of the adjacency graph, and returns a target configuration in the associated portal, summarized as: 
\begin{problem} \label{prob.Portal}
Given an edge $\prl{\treeA, \treeB} \in \adjedgeset$ of the adjacency graph $\adjgraph{\indexset} = \prl{\bintreetopspace_{\indexset}, \adjedgeset}$, construct a geometric realization map $\portalconf{\prl{\treeA,\treeB}}: \stratum{\treeA} \rightarrow \portal\prl{\treeA, \treeB}$  that takes a configuration supporting $\treeA$, and  returns a target configuration supporting both trees $\treeA$ and $\treeB$. 
\end{problem}
\noindent A portal map will serve the role of a dynamically computed ``prepares graph'' \cite{Burridge_Rizzi_Koditschek_1999} for the sequentially composed local controllers whose correct recruitment solves the reactive coordinated motion planning problem (\refthm{thm.HierNavAlg}).

\subsection{Specification and Correctness of the Hierarchical Navigation Control (HNC) Algorithm}

Assume the selection of a goal configuration $\vectbf{\stateB} \in \stratum{\treeB}$ and a hierarchy $\treeB \in \bintreetopspace_{\indexset}$ that $\vectbf{\stateB}$ supports. 
Now, given (almost) any initial configuration $\vectbf{\stateA} \in \stratum{\treeA}$ for some hierarchy $\treeA\in \bintreetopspace_{\indexset}$ that $\vectbf{\stateA}$ supports, 
\reftab{tab.HNCAlgorithm} presents the HNC algorithm. 

\begin{table}[htb]
\vspace{-2mm}
\caption{The HNC Algorithm}
\label{tab.HNCAlgorithm}
\centering 
\vspace{-2mm}
\begin{tabular}{|p{0.45\textwidth}@{\hspace{2mm}}|}
\hline
\vspace{-2mm}

For (almost) any initial  $\vectbf{\stateA} \in \stratum{\treeA}$ and $\treeA \in \bintreetopspace_{\indexset}$, and  desired $\vectbf{\stateB} \in \stratum{\treeB}$ and $\treeB \in \bintreetopspace_{\indexset}$,
\begin{enumerate}
\item (Hybrid Base Case) if  $\vectbf{\stateA} \in \stratum{\treeB}$ then apply stratum-invariant dynamics, $f_{\treeB, \vectbf{\stateB}}$ (\refprob{prob.HierarchyInvariant}).

\item (Hybrid Recursive Step) else, 

\begin{enumerate}
\item invoke the discrete transition rule $g_{\treeB}$ (\refprob{prob.Transition}) to propose  an adjacent tree, $\treeC \in \bintreetopspace_{\indexset}$, with lowered discrete Lyapunov value.

\item Choose local configuration goal, $\vectbf{\stateC} := \portalconf{\prl{\treeA,\treeC}} \prl{\vectbf{\stateA}}$ (\refprob{prob.Portal}).

\item Apply the stratum-invariant continuous controller $f_{\treeA,\vectbf{\stateC}}$ (\refprob{prob.HierarchyInvariant}).

\item If the trajectory  enters $\stratum{\treeB}$ then go to step 1; else, the trajectory must enter $\stratum{\treeC}$ in finite time in which case terminate $f_{\treeA, \vectbf{\stateC}}$,  reassign $\treeA \leftarrow \treeC$, and go to step 2a).
\end{enumerate}
\vspace{-3.5mm}
\end{enumerate}
\\
\hline

\end{tabular}

\end{table}

\begin{figure}[h]
\centering
\includegraphics[width=0.4\textwidth]{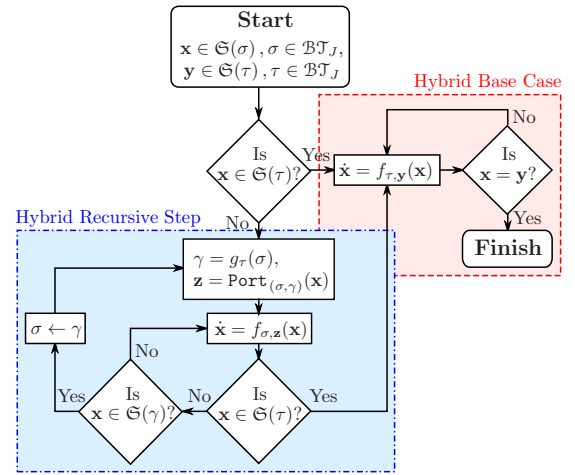} 
\vspace{-2mm}
\caption{Flowchart of the hybrid vector field planner.
}
\label{fig:NNIHierNavFlowChart}
\end{figure}

\begin{theorem} \label{thm.HierNavAlg}
The HNC Algorithm in \reftab{tab.HNCAlgorithm} defines a hybrid dynamical system whose execution brings almost every initial configuration, $\vectbf{\stateA} \in \cRdJ$, in finite time to an arbitrarily small neighborhood of $\vectbf{\stateB} \in \stratum{\treeB}$ with the guarantee of no collisions along the way. 
\end{theorem}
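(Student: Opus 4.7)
The plan is to verify, in order, four properties of the hybrid execution of \reftab{tab.HNCAlgorithm}: (i) collision freeness on every continuous branch, (ii) finite-time termination of each continuous phase with the correct triggering event, (iii) termination of the cascade of discrete transitions at $\treeB$ after finitely many switches, and (iv) finite-time entry into any prescribed $\epsilon$-neighborhood of $\vectbf{\stateB}$ from the complement of a measure-zero subset of $\cRdJr$.

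For (i), every continuous branch evolves under a field $f_{\treeD,\vectbf{\stateD}}$ supplied by \refprob{prob.HierarchyInvariant}, which by construction lies in $\mathcal{F}_{\HC}\prl{\treeD}$ and hence keeps $\stratum{\treeD}$ forward invariant; since $\stratum{\treeD}\subset\cRdJr$ by definition of the stratum, the state remains in the noncolliding configuration space throughout every continuous phase, and the discrete events are pure relabelings of the active tree and active goal that leave the state unchanged. For (ii), the local goal $\vectbf{\stateC}:=\portalconf{\prl{\treeA,\treeC}}\prl{\vectbf{\stateA}}$ lies in $\portal\prl{\treeA,\treeC}=\stratumI{\treeA}\cap\stratumI{\treeC}$ by \refdef{def.portal} and \refprob{prob.Portal}, so \refprob{prob.HierarchyInvariant} guarantees that the closed loop of $f_{\treeA,\vectbf{\stateC}}$ retracts $\stratum{\treeA}$ onto $\crl{\vectbf{\stateC}}$ off a measure-zero set $Z\prl{\treeA,\vectbf{\stateC}}\subset\stratum{\treeA}$; outside $Z\prl{\treeA,\vectbf{\stateC}}$ the trajectory enters the open neighborhood $\stratumI{\treeC}$ of $\vectbf{\stateC}$ (or $\stratum{\treeB}$, whichever is met first) in finite time, terminating the phase. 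For (iii), \refprob{prob.Transition} supplies a discrete Lyapunov function for $g_{\treeB}$ strictly decreasing away from the unique fixed point $\treeB$; a strictly decreasing orbit in the finite set $\bintreetopspace_{\indexset}$ must reach $\treeB$ in at most $\card{\bintreetopspace_{\indexset}}-1$ switches. Once the base case engages, a last application of \refprob{prob.HierarchyInvariant} to $f_{\treeB,\vectbf{\stateB}}$ yields finite-time entry into any $\epsilon$-ball about $\vectbf{\stateB}$ in $\stratum{\treeB}$.

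The remaining issue is the global accounting of exceptional initial conditions for (iv). Across the at-most $\card{\bintreetopspace_{\indexset}}$ continuous phases, the retraction may fail on a measure-zero set $Z_k\subset\stratum{\tree_k}$. The phase's exit map, sending an initial condition to the first crossing into $\stratumI{\tree_{k+1}}$, is the composition of a smooth flow with the first-entry time into an open set; on the complement of $Z_k$ this is a diffeomorphism onto an open subset of $\stratum{\tree_{k+1}}$, so pulling $Z_{k+1}$ back through it yields a measure-zero subset of $\stratum{\tree_k}$. Iterating backward through the bounded chain of phases and taking a finite union keeps the exceptional set in $\stratum{\treeA}$ measure zero; summing over the finite tree-indexed cover of $\cRdJr$ guaranteed by \refP{P.HCmultifunction} and \refP{P.HCopeninterior} produces a measure-zero subset of $\cRdJr$ off which the execution reaches the target neighborhood in finite time without collisions.

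The main obstacle I anticipate is precisely this last bookkeeping step: showing that the guard-crossing maps between strata are regular enough for preimages of measure-zero sets to remain measure zero. This hinges on the smoothness of the hierarchy-invariant fields from \refprob{prob.HierarchyInvariant} together with the openness of the image stratum $\stratumI{\tree_{k+1}}$, which together allow the first-entry time to be treated as a smooth function on the (open) complement of $Z_k$ wherever finite exit is guaranteed; the usual concern about pathological crossings is avoided because the guard is an open set rather than a smooth hypersurface.
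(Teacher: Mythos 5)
Your proposal follows the same recursive structure as the paper's proof: forward invariance of each stratum for collision freedom, finite-time entry into the portal (hence the next stratum) from almost all initial conditions by the hierarchy-invariant retraction, and termination of the discrete cascade by strict decrease of the Lyapunov function on the finite tree space. The one place you go beyond the paper is in step (iv), where you attempt to make precise the propagation of the per-phase measure-zero exceptional sets backward through the exit maps; the paper simply asserts ``from almost all initial configurations'' without this bookkeeping, and you are right to flag this as the delicate point, though your claim that the exit map is a diffeomorphism is not established (first-entry times into an open guard are only upper semicontinuous, and the local goal $\vectbf{\stateC}$ itself varies with the switching state through $\portalconf{}$, so a more careful argument about null-set preimages under the composed hybrid map would be needed to fully close that step).
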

\begin{proof}
In the base case, 1) the conclusion follows directly from the construction of \refprob{prob.HierarchyInvariant}: the flow $f_{\treeB, \vectbf{\stateB}}$  keeps the state in $\stratum{\treeB}$, approaches a neighborhood of $\vectbf{\stateB}$ (which is an asymptotically stable equilibrium state for that flow) in finite time.

In the inductive step, a) The NNI transition rule $g_{\tree}$  guarantees a decrement in the Lyapunov function after  a transition from   $\treeA$ to  $\treeC$ (\refprob{prob.Transition}), and  a new local policy $f_{\treeA,\vectbf{\stateC}}$ is automatically deployed with a local goal configuration $\vectbf{\stateC} \in \portal\prl{\treeA,\treeC}$ found in b).   
Next, the  flow $f_{\treeA,\vectbf{\stateC}}$ in c) is guaranteed to keep the state in $\stratum{\treeA}$ and approach $\vectbf{\stateC}\in \portal\prl{\treeA,\treeC}$ asymptotically from almost all initial configurations. 
If the base case is not triggered in d), then the state enters arbitrarily small neighborhoods of $\vectbf{\stateC}$ and, hence, must eventually reach $\portal\prl{\treeA,\treeC} \subset \stratum{\treeC}$ in finite time, triggering a return to 2a). 
Because the dynamical transitions $g_{\tree}$ initiated from any hierarchy in $ \bintreetopspace_{\indexset}$ reaches $\tree$ in finite steps (\refprob{prob.Transition}), it must eventually  trigger the base case.  \qed
\end{proof}

\section{Hierarchical Navigation of Euclidean Spheres via Bisecting K-means Clustering}
\label{sec.HierNavSphere}

We now confine our attention to 2-means divisive hierarchical clustering \cite{savaresi_boley_ICDM2001}, $\HCkmeans$, and 
demonstrate a construction of our hierarchical navigation framework for coordinated navigation of Euclidean spheres via $\HCkmeans$.

\subsection{Hierarchical Strata  of \emph{$\HCkmeans$}}
\label{sec.HierStrata}

Iterative 2-means clustering, $\HCkmeans$, is a divisive method that recursively constructs a cluster hierarchy of a configuration in a top-down fashion \cite{savaresi_boley_ICDM2001}.
Briefly, this method splits each successive (partial) configuration by applying 2-means clustering, and successively continues with each subsplit until reaching singletons.
By construction, complementary configuration clusters of $\HCkmeans$ are linearly separable by a hyperplane defined by the associated cluster centroids\footnote{In the context of self-sorting in heterogeneous swarms \cite{kumar_garg_kumar_ACC2008}, two groups of robot swarms  are said to be \emph{segregated} if their configurations are linearly separable;  and in this regard configuration hierarchies of $\HCkmeans$ represent spatially cohesive and segregated swarms groups at different resolutions.}, as illustrated in \reffig{fig.ConfigurationHierarchy}; and the stratum of $\HCkmeans$ associated with a binary hierarchy $\tree \in \bintreetopspace_{\indexset}$ can be characterized  by the intersection inverse images,
\begin{align}
\label{eq.ClosedStrata}
\stratum{\tree} &=   \bigcap_{I \in \cluster{\tree} \setminus \crl{\indexset}} \bigcap_{i \in I}\sepmagsym{i,I,\tree}^{-1}[0,\infty),  
\end{align}
of the scalar valued  ``separation" function, $\sepmagsym{i, I, \tree} : \cRdJr \rightarrow \R$  \cite{arslanEtAl_Allerton2012} returning the distance of agent $i$ in cluster $I \in \cluster{\tree} \setminus \crl{\indexset}$ to the perpendicular bisector of the centroids of complementary clusters $I$ and $\complementLCL{I}{\tree}$: \footnote{Here,  $\tr{\mat{A}}$ denotes the transpose of $\mat{A}$.} 
%
%
\begin{align}
\sepmag{i,I,\tree}{\vectbf{\state}} &\ldf
\vectprod{ \prl{\big. \vect{\state}_i - \ctrdmid{I,\tree}{\vectbf{\state}}}}{\frac{\ctrdsep{I,\tree}{\vectbf{\state}}}{\norm{\ctrdsep{I,\tree}{\vectbf{\state}}}}}, \label{eq.SepMag} 
\end{align}
where the associated ``cluster functions'' of a partial configuration, $\vectbf{x}|I = \prl{\vect{x}_i}_{i \in I}$, are defined as  
\begin{align} 
\ctrd{\vectbf{\state}|I} & \ldf \frac{1}{|I|}  \sum_{i \in I} \vect{\state}_i, \label{eq.centroid} 
\\
\ctrdsep{I,\tree}{\vectbf{\state}}  &\ldf \ctrd{\vectbf{\state}| I}  - \ctrd{\vectbf{\state} | \complementLCL{I}{\tree}} , \label{eq.centroidseparation}
\\
\ctrdmid{I,\tree}{\vectbf{\state}}  & \ldf \frac{\ctrd{\vectbf{\state} |I} + \ctrd{\vectbf{\state} |\complementLCL{I}{\tree}}}{2}.  \label{eq.centroidmidpoint}
\end{align}

\smallskip 

We now follow \cite{Baryshnikov_Guralnik} in defining terminology and expresssions leading to the characterization of the homotopy type of the stratum, $\stratum{\tree},$ associated with a nondegenerate hierarchy.  
The proofs of our formal statements all follow the same pattern as established in \cite{Baryshnikov_Guralnik}, and we omit them to save space here.
\begin{definition} \label{def.NarrowStandardConf}
A configuration $\vectbf{x} \in \cRdJr$ is \emph{narrow} relative to
the  split,  $\crl{I, J \setminus I}$, if
\begin{equation}
\max_{A \in \crl{I, J \setminus I}} \radiusCL{}\prl{\vectbf{x}|A} \; <  \;  \frac{1}{2}\norm{\big. \ctrd{\vectbf{x}| I} - \ctrd{\vectbf{x}| J \setminus I} },
\end{equation}
where the radius of a cluster, $A \subset J$,  is defined to be\footnote{Recall from \refsec{sec.ConfSpace} that $r_i$ denotes the radius of $i$th sphere for any $i \in \indexset$. }
\begin{equation} \label{eq.ConfRadius} 
\radiusCL{}\prl{\vectbf{x}|A} := \max_{a \in A} \prl{\big.\norm{\vect{x}_a - \ctrd{\vectbf{x}|A}} + r_a}.
\end{equation} 
Say that $\vectbf{x} \in \stratum{\tree}$ is a \emph{standard} configuration relative to the nondegenerate hierarchy, $\tree \in \bintreetopspace_{\indexset}$, if
it is narrow relative to each local split, $\childCL{I, \tree}$, of
every cluster,  $I \in \cluster{\tree}$. 
\end{definition}

\begin{figure}[b] 
\centering
\begin{tabular}{@{}c@{\hspace{2mm}}c@{}}
\includegraphics[width = 0.24\textwidth]{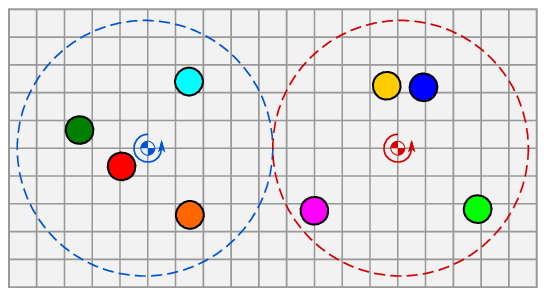}
 &
\includegraphics[width = 0.24\textwidth]{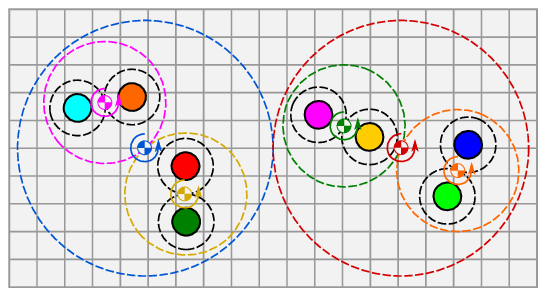}
\end{tabular} 
\vspace{-3mm}
\caption{An illustration of (left) narrow and (right) standard disk configurations, where arrows and dashed circles indicate clusters that can be rigidly rotated around their centroids while preserving their clustering structures.}
\label{fig.NarrowStandard}
\end{figure}

\begin{proposition} \label{prop.RotationAroundStandard}
If $\vectbf{x} \in \stratum{\tree}$ is a standard configuration then for each cluster, $I\in \cluster{\tree}$, any rigid rotation of the partial configuration, $\vectbf{x}|I$, around its centroid, $\ctrd{\vectbf{x}|I}$, as illustrated in \reffig{fig.NarrowStandard}, preserves the supported hierarchy $\tree$.
\end{proposition}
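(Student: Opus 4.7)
The plan is to check that the rotated configuration $\vectbf{x}'$, obtained from $\vectbf{x}$ by setting $\vect{x}'_a := R(\vect{x}_a - c) + c$ for $a \in I$ and $\vect{x}'_a := \vect{x}_a$ otherwise (with $c := \ctrd{\vectbf{x}|I}$ and $R$ the prescribed rotation matrix), still belongs to $\stratum{\tree}$ by verifying the separation inequalities $\sepmag{a,K,\tree}{\vectbf{x}'} \geq 0$ of \refeqn{eq.ClosedStrata} for every $K \in \cluster{\tree}\setminus\crl{\indexset}$ and every $a \in K$. I will organize the verification by the position of $K$ relative to $I$ in $\tree$, exploiting the rigidity of $R$ together with the narrowness conditions of \refdef{def.NarrowStandardConf}.

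The starting observation is that rotation about the centroid preserves the sum $\sum_{a\in I}\vect{x}_a = |I|c$. Consequently, $\ctrd{\vectbf{x}'|K} = \ctrd{\vectbf{x}|K}$ for every cluster $K \supseteq I$, so the perpendicular bisector entering \refeqn{eq.SepMag} at every ancestor split stays fixed. Dually, for every $K \subseteq I$ the rotation acts as a common isometry about $c$ on both $\vectbf{x}|K$ and $\vectbf{x}|\complementLCL{K}{\tree}$, rotating their centroids and the associated bisector jointly.

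These two facts handle the easy cases. When $K \subseteq I$, $\sepmag{i,K,\tree}{\vectbf{x}'}$ is the signed distance from a rotated point to a hyperplane rotated about the same center, hence equals $\sepmag{i,K,\tree}{\vectbf{x}} \geq 0$. When $K \supsetneq I$, let $A$ denote the child of $K$ containing $I$; for $a \in K \setminus I$ both point and bisector are unchanged, while for $a \in I$ I decompose
\begin{equation*}
\sepmag{a,A,\tree}{\vectbf{x}'} \;=\; \tr{v}R(\vect{x}_a - c) \;+\; \tr{v}(c - h),
\end{equation*}
with $v$ and $h$ the unchanged unit normal and midpoint of the ancestor bisector; the first summand is bounded below by $-\norm{\vect{x}_a - c}$, and it remains to control the second.

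The main obstacle is therefore to establish, for every ancestor $K \supsetneq I$, the lower bound $\tr{v}(c - h) \geq \norm{\vect{x}_a - c}$ uniformly over $a \in I$, which geometrically amounts to showing that the closed ball of radius $\radiusCL{}\prl{\vectbf{x}|I}$ around $c$ --- confining the rotation orbit of $\vectbf{x}|I$ --- lies on the $A$-side of the bisector at $K$. I plan to prove this inductively along the ancestor chain $I = I_0 \subsetneq I_1 \subsetneq \cdots \subsetneq K$: the base case at $I_1 = \parentCL{I,\tree}$ is supplied directly by narrowness via $\tr{v}(c - h_1) = \frac{1}{2}\norm{\ctrdsep{I,\tree}{\vectbf{x}}} > \radiusCL{}\prl{\vectbf{x}|I}$, while the inductive step telescopes $c - h_m = \prl{c - \ctrd{\vectbf{x}|I_{m-1}}} + \prl{\ctrd{\vectbf{x}|I_{m-1}} - h_m}$ and exploits at each intermediate level the slack $\frac{1}{2}\norm{\ctrdsep{I_{m-1},\tree}{\vectbf{x}}} - \radiusCL{}\prl{\vectbf{x}|I_{m-1}} > 0$ afforded by narrowness, together with the fact that every intermediate centroid $\ctrd{\vectbf{x}|I_\ell}$ is a convex combination of points on the correct side of the corresponding bisector. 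Compounding these margins along the chain is the delicate core of the argument; once completed, $\sepmag{a,A,\tree}{\vectbf{x}'} \geq 0$ follows immediately, and $\vectbf{x}' \in \stratum{\tree}$.
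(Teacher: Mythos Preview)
The paper omits this proof entirely, deferring to the pattern of \cite{Baryshnikov_Guralnik}, so there is no in-paper argument to compare against. Your case organization and the reduction to the ancestor chain are the right moves, and your base case $A=I$ is correct: narrowness at $\parentCL{I,\tree}$ gives exactly $\tr{v}(c-h_1)=\tfrac{1}{2}\norm{\ctrdsep{I,\tree}{\vectbf{x}}}>\radiusCL{}(\vectbf{x}|I)$. (Two small omissions --- the clusters $K$ disjoint from $I$, and the check that $\vectbf{x}'\in\cRdJr$ --- are easy to fill in along the same lines.) The genuine gap is your inductive step: narrowness at the split of $I_m$ tells you that the ball $B\bigl(\ctrd{\vectbf{x}|I_{m-1}},\,\radiusCL{}(\vectbf{x}|I_{m-1})\bigr)$ lies on the correct side of the level-$m$ bisector, but what you need is that the generally different ball $B\bigl(c,\,\radiusCL{}(\vectbf{x}|I)\bigr)$ lies there. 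These balls need not be nested, and the ``compounding of margins'' you describe cannot be carried out from the stated hypotheses.

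In fact the proposition, read literally against \refdef{def.NarrowStandardConf}, appears to be false. Take point particles in $\R^2$ with $\vect{x}_1=(-1,0)$, $\vect{x}_2=(1,0)$, $\vect{x}_3=(0,3)$, $\vect{x}_4=(0,3.5)$, $\vect{x}_5=(0,-3)$ and $\tree$ given by $I=\{1,2\}$, $I'=\{3,4\}$, $I_1=I\cup I'$, $\indexset=I_1\cup\{5\}$. One checks directly that $\vectbf{x}\in\stratum{\tree}$ and that $\vectbf{x}$ is standard (at the top split, $\radiusCL{}(\vectbf{x}|I_1)=\sqrt{1+1.625^2}\approx 1.908<2.3125=\tfrac{1}{2}\norm{\ctrdsep{I_1,\tree}{\vectbf{x}}}$). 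Rotating $\vectbf{x}|I$ by $90^\circ$ about $c=(0,0)$ sends $\vect{x}_1$ to $(0,-1)$, which lies below the top-level bisector $y=-0.6875$, so $\sepmag{1,I_1,\tree}{\vectbf{x}'}<0$ and $\vectbf{x}'\notin\stratum{\tree}$. The underlying issue is that the paper's narrowness condition controls sibling separation at each level but not the \emph{nesting} of cluster balls across levels. Your argument does go through cleanly under the stronger hypothesis $B\bigl(\ctrd{\vectbf{x}|K},\radiusCL{}(\vectbf{x}|K)\bigr)\subseteq B\bigl(\ctrd{\vectbf{x}|\parentCL{K,\tree}},\radiusCL{}(\vectbf{x}|\parentCL{K,\tree})\bigr)$ for every non-root $K$, which is presumably closer to what \cite{Baryshnikov_Guralnik} actually assumes; with that nesting in hand your telescoping estimate becomes immediate.
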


\begin{proposition}\label{prop.StrongDeformationRetraction}
For any finite label set $ \indexset\subset \N$  and non-degenerate tree $\tree \in \bintreetopspace_{\indexset}$, there exists a strong deformation retraction\footnotemark   
\begin{equation}
R_{\tree}: \stratum{\tree} \times [0, 1] \rightarrow \stratum{\tree}
\end{equation}
of $\stratum{\tree}$ onto the subset of standard configurations of $\stratum{\tree}$.
\end{proposition}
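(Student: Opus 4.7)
The plan is to construct $R_\tree$ as a carefully scheduled composition of centroid-preserving moves applied at every interior cluster of $\tree$. For each interior cluster $I \in \cluster{\tree}$ with $\childCL{I, \tree} = \crl{A, B}$, introduce two basic moves: the \emph{radial shrinking} $S_{I, \lambda}$ that scales $\vectbf{x}|I$ toward $\ctrd{\vectbf{x}|I}$ by a factor $\lambda \in (0, 1]$, and the \emph{sibling separation} $T_{I, \gamma}$ that rigidly translates $\vectbf{x}|A$ by $\frac{\card{B}}{\card{I}} \gamma \hat{n}_I$ and $\vectbf{x}|B$ by $-\frac{\card{A}}{\card{I}} \gamma \hat{n}_I$ along $\hat{n}_I \ldf \prl{\ctrd{\vectbf{x}|A} - \ctrd{\vectbf{x}|B}} / \norm{\ctrd{\vectbf{x}|A} - \ctrd{\vectbf{x}|B}}$. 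A direct computation shows that both operations fix every centroid of a cluster containing $I$ or disjoint from $I$, preserve the sub-split geometry of every descendant of $I$ (as uniform rescaling or rigid translation inside the children), and strictly decrease the narrowness deficit $\delta_I(\vectbf{x}) \ldf 2 \max\prl{\radiusCL{}\prl{\vectbf{x}|A}, \radiusCL{}\prl{\vectbf{x}|B}} - \norm{\ctrd{\vectbf{x}|A} - \ctrd{\vectbf{x}|B}}$ at the split at $I$.

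Next, I would couple the two moves at each $I$ into a single per-cluster ``standardization'' $U_{I, s}$ parameterized by $s \geq 0$, with $U_{I, 0} = \mathrm{id}$ and combining $T_{I, \gamma(s)}$ with $S_{I, \lambda(s)}$ for appropriate $(\gamma, \lambda)$. Choose continuous schedules $s_I \colon \stratum{\tree} \to \R_{\geq 0}$ that vanish precisely when $\delta_I(\vectbf{x}) \leq 0$ and otherwise strictly exceed what is needed to force $\delta_I < 0$ after the move; an explicit choice is $s_I(\vectbf{x}) = h\prl{\delta_I(\vectbf{x})}$ for a fixed continuous $h \colon \R \to \R_{\geq 0}$ vanishing on $(-\infty, 0]$ and satisfying $h(y) > y$ on $y > 0$. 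Ordering the interior clusters top-down as $I_1, \ldots, I_k$ (root first) and setting
\begin{equation}
R_\tree(\vectbf{x}, t) \ldf \prl{U_{I_k,\, t s_{I_k}(\vectbf{x})} \circ \cdots \circ U_{I_1,\, t s_{I_1}(\vectbf{x})}}(\vectbf{x}), \quad t \in [0, 1],
\end{equation}
produces a continuous $R_\tree$ with $R_\tree(\cdot, 0) = \mathrm{id}$, $R_\tree(\vectbf{x}, t) = \vectbf{x}$ on already-standard configurations (since then every $s_I$ vanishes), and $R_\tree(\cdot, 1)$ landing in the standard subset.

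The delicate point is keeping $R_\tree(\vectbf{x}, t) \in \stratum{\tree}$ throughout the homotopy. A bare sibling separation $T_{I, \gamma}$ can push members of $A$ (or $B$) across the bisector associated with a coarser split at some ancestor $I' \supsetneq I$, violating the sign condition $\sepmagsym{i, I', \tree} \geq 0$ that defines $\stratum{\tree}$ via \refeqn{eq.ClosedStrata}. Following the point-particle construction of \cite{Baryshnikov_Guralnik}, the fix is to couple $T_{I, \gamma}$ with a simultaneous contraction $S_{I, \lambda}$ inside $I$, contracting the cluster radius $\radiusCL{}\prl{\vectbf{x}|I}$ enough that every ancestor separation function \refeqn{eq.SepMag} stays non-negative through the move. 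Both operations fix every ancestor centroid, so their coupled action is a centroid-preserving deformation of $\vectbf{x}|I$; a telescoping inductive argument over the clusters of $\tree$ then shows that appropriate $(\gamma(s), \lambda(s))$ preserve every $\sepmagsym{i, I', \tree} \geq 0$ while driving every deficit $\delta_{I''}$ strictly negative at $t = 1$. Extending to thickened disks requires only replacing the bare spread $\max_{a \in A}\norm{\vect{x}_a - \ctrd{\vectbf{x}|A}}$ by the disk-aware cluster radius \refeqn{eq.ConfRadius}, which interacts with the shrinking through an additive disk-radius floor that does not affect the structure of the argument.
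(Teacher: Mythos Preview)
Your proposal has a genuine gap in the passage to thickened disks. The radial shrinking $S_{I,\lambda}$ with $\lambda\in(0,1)$ scales disk \emph{centers} toward $\ctrd{\vectbf{x}|I}$, so pairwise center distances inside $I$ become $\lambda\norm{\vect{x}_i-\vect{x}_j}$ while the radii $r_i,r_j$ are fixed. For $\lambda$ small enough this violates $\norm{\vect{x}_i-\vect{x}_j}>r_i+r_j$, and the homotopy leaves $\cRdJr$ --- hence leaves $\stratum{\tree}$ --- before reaching the standard subset. Your final remark that the disk radii contribute only ``an additive disk-radius floor that does not affect the structure of the argument'' is exactly where this fails: that floor is what makes shrinking unavailable as a tool, because it obstructs both the collision-freeness of $S_{I,\lambda}$ and the ability to drive $\radiusCL{}(\vectbf{x}|A)$ below any prescribed threshold. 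Since your mechanism for protecting the ancestor separation conditions $\sepmagsym{i,I',\tree}\geq 0$ during a sibling move $T_{I,\gamma}$ relies on coupling with $S_{I,\lambda}$, the whole stratum-invariance argument collapses in the disk case.

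The paper does not give a proof but its footnote states the correct replacement: for thickened spheres one obtains the retraction by \emph{expanding} cluster configurations rather than shrinking them. Concretely, one uses only separation-type moves (your $T_{I,\gamma}$), scheduled so that sibling centroids are pushed far enough apart to dominate the (unshrinkable) child radii in the narrowness inequality of \refdef{def.NarrowStandardConf}. The delicate point you identified --- that moving points inside $I$ can disturb ancestor-level separation functions --- is then handled not by contracting $I$ but by making the ancestor-level separations large enough (or by processing clusters in an order, e.g.\ bottom-up, for which each $T_{I,\gamma}$ only rigidly translates already-standardized subconfigurations and increases, rather than threatens, the relevant $\sepmagsym{}$ values). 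Reworking your construction along these expansion-only lines would close the gap.
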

\footnotetext{In \cite{Baryshnikov_Guralnik} authors study point particle configurations, and they construct a strong deformation retraction onto standard configurations by shrinking clusters around cluster centroids; and one can obtain similar result for thickened spheres by properly expanding  cluster configurations  instead of shrinking.}
%
These two observations now yield the key insight reported in 
\cite{Baryshnikov_Guralnik}.

\begin{theorem} \label{thm.StrataHomotopy}
The set of configurations $\vectbf{x} \in \cRdJr$ supporting  a non-degenerate tree  has the homotopy type of $(\Sp^{d-1})^{\card{J}-1}$.  
\end{theorem}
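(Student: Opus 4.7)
The plan is to adapt to thickened disks the argument of \cite{Baryshnikov_Guralnik} for point particles, in two stages. First, I would invoke \refprop{prop.StrongDeformationRetraction} to reduce the computation of the homotopy type of $\stratum{\tree}$ to that of its strong deformation retract, the subspace $\mathcal{A}_{\tree} \subset \stratum{\tree}$ of standard configurations. It then suffices to exhibit a homotopy equivalence $\mathcal{A}_{\tree} \simeq (\Sp^{d-1})^{\card{J}-1}$.

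Second, I would induct on $n \ldf \card{J}$. The base case $n = 1$ is immediate: a single disk in $\R^d$ gives $\mathcal{A}_{\tree}$ contractible, matching $(\Sp^{d-1})^0 = \{*\}$. For the inductive step, let the root split of $\tree$ be into sibling clusters $I$ and $I' \ldf J \setminus I$, with associated subtrees $\tree_I$ and $\tree_{I'}$. I would parametrize $\vectbf{x} \in \mathcal{A}_{\tree}$ by the following tuple of data: the global centroid $\ctrd{\vectbf{x}|J} \in \R^d$; the separation magnitude $\rho \ldf \norm{\ctrdsep{I,\tree}{\vectbf{x}}} \in \R_{>0}$, subject only to a strict lower bound from the root-level narrowness inequality of \refdef{def.NarrowStandardConf}; the unit direction $v \ldf \ctrdsep{I,\tree}{\vectbf{x}}/\rho \in \Sp^{d-1}$; and the two centered partial standard configurations $\vectbf{x}|I - \ctrd{\vectbf{x}|I} \in \mathcal{A}_{\tree_I}$ and $\vectbf{x}|I' - \ctrd{\vectbf{x}|I'} \in \mathcal{A}_{\tree_{I'}}$. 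This yields a continuous bijection
\begin{equation*}
\Phi \colon \mathcal{A}_{\tree} \longrightarrow \R^d \times \R_{>0} \times \Sp^{d-1} \times \mathcal{A}_{\tree_I} \times \mathcal{A}_{\tree_{I'}}
\end{equation*}
whose first two factors are contractible, whose third factor contributes a single $\Sp^{d-1}$, and whose last two inductively contribute $(\Sp^{d-1})^{\card{I}-1}$ and $(\Sp^{d-1})^{\card{I'}-1}$. Multiplying out gives $(\Sp^{d-1})^{1 + (\card{I}-1) + (\card{I'}-1)} = (\Sp^{d-1})^{n-1}$, as required.

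The main obstacle is to verify that $\Phi$ is indeed a homotopy equivalence onto the product, i.e., that the choice of root direction $v$ does not constrain the internal degrees of freedom of the two sub-configurations. This is where the narrowness condition at the root earns its keep: because the cluster radii $\radiusCL{}\prl{\vectbf{x}|I}$ and $\radiusCL{}\prl{\vectbf{x}|I'}$ (which, per \refeqn{eq.ConfRadius}, already incorporate the disk thickenings $r_a$) are strictly bounded above by $\rho/2$, arbitrary independent rigid rotations of the two sub-configurations about their respective centroids --- a freedom granted by \refprop{prop.RotationAroundStandard} --- can never push any disk across the perpendicular bisector of $\ctrd{\vectbf{x}|I}$ and $\ctrd{\vectbf{x}|I'}$, so the hierarchy $\tree$ is preserved independently of $v$. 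The only substantive generalization beyond the point-particle setting of \cite{Baryshnikov_Guralnik} is in the retraction of \refprop{prop.StrongDeformationRetraction}: one must expand rather than shrink sub-clusters when driving a configuration to standard form, so as not to violate the pairwise disk-radius constraints $\norm{\vect{x}_i - \vect{x}_j} > r_i + r_j$ along the retraction path. Once this modified retraction is in hand, the remainder of the homotopy-equivalence argument transfers verbatim.
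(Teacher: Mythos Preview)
Your proposal is correct and follows essentially the approach the paper adopts: the paper omits the proof, deferring to \cite{Baryshnikov_Guralnik}, but sets up precisely the scaffolding you use --- \refprop{prop.StrongDeformationRetraction} to retract onto standard configurations, \refprop{prop.RotationAroundStandard} to justify the independent sphere factors at each internal node, and the footnote remark that for thickened disks one expands rather than shrinks clusters in constructing the retraction. One small imprecision worth tightening: your $\Phi$ is not a bijection onto the full product, since the root-level narrowness inequality couples $\rho$ to the radii of the two sub-configurations; the image is the open subset $\{\rho > 2\max(\radiusCL{}(\vectbf{x}|I),\radiusCL{}(\vectbf{x}|I'))\}$, but this has contractible (half-line) fibers over $\Sp^{d-1}\times\mathcal{A}_{\tree_I}\times\mathcal{A}_{\tree_{I'}}$, so the homotopy equivalence still goes through.
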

%
To gain an intuitive appreciation, one can restate this result as follows: two configurations in $\stratum{\tree}$ are topologically equivalent  if and only if the corresponding  separating hyperplane normals of configuration clusters are the same.\footnote{Note that a binary hierarchy over the leaf set $\indexset$ has $\card{\indexset} -1$ interior nodes, i.e. nonsingleton clusters \cite{robinson_jct1971}.}
Hence, navigation in a hierarchical stratum is carried out by aligning separating hyperplane normals, illustrated in \reffig{fig.HomotopyIntution}; and using this geometric intuition,  we construct in \cite{arslanEtAl_Allerton2012} a family of hierarchy preserving  control policies for point particle configurations, and in the following  we extend that construction to thickened disk configurations.

\begin{figure}[h]
\centering
\includegraphics[width = 0.46\textwidth]{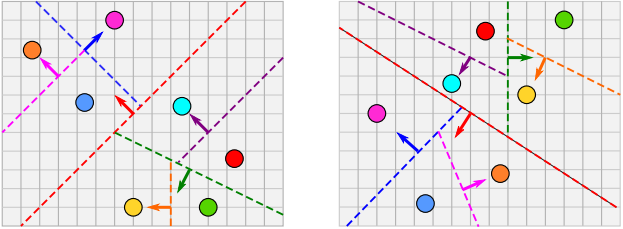}
\vspace{-3mm}
\caption{The topological shape of a hierarchical stratum intuitively suggests that global navigation in a hierarchical stratum is accomplished by aligning separating hyperplanes of configurations.}
\label{fig.HomotopyIntution} 
\end{figure}

\begin{theorem}\label{thm.HC2meansProperties}
Iterative 2-means clustering \emph{$\HCkmeans$} is a multi-function, and  each of its stratum, $\stratum{\tree}$ associated with $\tree \in \bintreetopspace_{\indexset}$, is connected and has an open interior.
\end{theorem}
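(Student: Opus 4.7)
The plan is to verify the three assertions of the theorem—that $\HCkmeans$ is a multi-function, that each stratum $\stratum{\tree}$ has a nonempty open interior, and that each stratum is connected—separately, leveraging the closed-form characterization \refeqn{eq.ClosedStrata} and the homotopy-type computation \refthm{thm.StrataHomotopy} already in hand. For the multi-function property, I would argue combinatorially: for any $\vectbf{x} \in \cRdJr$, the 2-means objective over bipartitions of $\indexset$ is a function on the finite set of nonempty bipartitions, hence attains at least one minimizer. Recursing into each of the two resulting sub-clusters and terminating at singletons yields at least one nondegenerate hierarchy $\tree \in \bintreetopspace_{\indexset}$ supported by $\vectbf{x}$, so $\prl{\vectbf{x},\tree} \in \HCkmeans$ and the domain of $\HCkmeans$ covers all of $\cRdJr$.

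For the open interior, I would invoke \refeqn{eq.ClosedStrata}: since each separation map $\sepmagsym{i,I,\tree}$ is a continuous (in fact rational) function of $\vectbf{x}$, the subset of $\stratum{\tree}$ on which all listed inequalities are strict is open in $\cRdJr$, and by an elementary differential argument it exhausts the interior $\stratumI{\tree}$ (any on-boundary point has some $\sepmagsym{i,I,\tree}(\vectbf{x}) = 0$, and a perturbation of $\vect{x}_i$ moves it across the bisector faster than the bisector itself moves). To show this set is nonempty, I would construct a standard configuration in the sense of \refdef{def.NarrowStandardConf} realizing $\tree$: pick an arbitrary direction per interior node and recursively place cluster centroids along those directions, starting at the leaves and choosing each inter-sibling separation large enough to dominate twice the sum of the sibling cluster radii (a doubling-from-the-leaves-upward recipe suffices). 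The narrowness condition then holds at every split by construction, forcing $\sepmagsym{i,I,\tree}(\vectbf{x}) > 0$ everywhere required and producing an interior point.

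For connectedness, I would read it off \refthm{thm.StrataHomotopy}: the stratum is homotopy equivalent to $\prl{\Sp^{d-1}}^{\card{\indexset}-1}$, which (for the case of interest $d \geq 2$) is path-connected as a product of path-connected spaces, and homotopy equivalence preserves the number of path components. The principal obstacle of the overall theorem is therefore concealed inside \refthm{thm.StrataHomotopy}: that calculation rests on \refprop{prop.StrongDeformationRetraction}, whose thickened-disk adaptation (per the footnote to that proposition) requires expanding—rather than shrinking, as in the point-particle case of \cite{Baryshnikov_Guralnik}—each cluster configuration around its centroid while certifying that neither hierarchical support nor pairwise non-collision fails along the deformation. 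Since \refthm{thm.StrataHomotopy} is stated as available, the connectedness step of the present theorem reduces to a one-line invocation, and the three claims combine to give the result.
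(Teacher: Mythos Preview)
Your proposal is correct and follows essentially the same three-part structure as the paper's proof: the multi-function claim is dispatched by appealing to the fact that 2-means always returns at least one bipartition, the open-interior claim is handled via standard configurations (the paper invokes \refdef{def.NarrowStandardConf} and \refprop{prop.StrongDeformationRetraction} rather than your explicit bottom-up construction, but either route works), and connectedness is read off \refthm{thm.StrataHomotopy}. Your explicit remark that connectedness of $\prl{\Sp^{d-1}}^{\card{\indexset}-1}$ requires $d\geq 2$ is a useful caveat the paper leaves implicit, and your aside about the strict-inequality set \emph{exhausting} $\stratumI{\tree}$ is unnecessary for the theorem (nonemptiness suffices).
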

\begin{proof}
It is well known that $k$-means clustering is a multi-function generally yielding different $k$-partitions of any given data, and so is $\HCkmeans$ (\refP{P.HCmultifunction})\cite{jain_dubes_1988, witten_frank_hall_DM2011}.
Further, it follows from \refdef{def.NarrowStandardConf} and \refprop{prop.StrongDeformationRetraction} that standard configurations in $\stratum{\tree}$ is open (\refP{P.HCopeninterior}), and \refthm{thm.StrataHomotopy} guarantees the connectedness  of $\stratum{\tree}$ (\refP{P.HCconnected}). \qed  
\end{proof}

\subsection{Hierarchy Preserving Navigation}
\label{sec.HierInvNav}

We now introduce a recursively defined vector field  for navigation in a hierarchical stratum and list its invariance and stability properties.

Suppose that some desired configuration, $\vectbf{\stateB} \in \stratum{\tree}$ has been selected,  supporting some desired nondegenerate tree, $\tree \in \bintreetopspace_{\indexset}$.
Our dynamical planner takes the form of a centralized hybrid controller,  $\f : \stratum{\tree} \rightarrow \prl{\R^d}^{\card{\indexset}}$, defining a hierarchy-invariant  vector  field whose flow  in $\stratum{\tree}$ yields the desired goal configuration, $\vectbf{\stateB}$,  recursively defined according to logic presented in \reftab{tab.HierInvNav}. 
Throughout this section, the tree $\tree$ and the goal configuration $\vectbf{\stateB}$ are fixed, and we therefore suppress all mention of these terms wherever convenient, in order to compress the notation. 
For example, for any $\vectbf{x} \in \stratum{\tree}$, $I \in \cluster{\tree}$  and $i \in I$ we use the shorthand  $\sepmag{i,I}{\vectbf{x}} = \sepmag{i,I,\tree}{\vectbf{x}}$ \refeqn{eq.SepMag}, $\ctrdsep{I}{\vectbf{x}} = \ctrdsep{I,\tree}{\vectbf{x}}$ \refeqn{eq.centroidseparation}, $\ctrdmid{I}{\vectbf{x}} = \ctrdmid{I,\tree}{\vectbf{x}}$ \refeqn{eq.centroidmidpoint} and so on. 

\begin{table}[t]
\caption{The Hierarchy-Preserving Navigation Vector Field}
\label{tab.HierInvNav}
\vspace{-1mm}
\centering 
\begin{tabular}{|p{0.45\textwidth}@{\hspace{2mm}}|}
\hline
\vspace{-1mm}
For any initial  $\vectbf{\stateA} \in \stratum{\tree}$ and desired $\vectbf{\stateB} \in \stratum{\tree}$, supporting $\tree \in \bintreetopspace_{\indexset}$, the hierarchy preserving vector field, $\f: \stratum{\tree} \rightarrow \prl{\R^d}^{\indexset}$,
\begin{align}
\f\prl{\vectbf{x}} \ldf \fhat\prl{\vectbf{x}, \vectbf{0}, \indexset}, \nonumber
\end{align}
is recursively computed using the post-order traversal\footnotemark \; of $\tree$ starting at the root cluster $\indexset$ with the zero control input $\vectbf{0} \in \prl{\R^d}^{\indexset}$
as follows: 
for any $\vectbf{u} \in \prl{\R^d}^{\indexset}$ and $I \in \cluster{\tree}$,
\vspace{-2mm}

\begin{tabular}{@{\hspace{-1mm}}c@{\hspace{3mm}}c@{\hspace{2mm}}c}
\begin{tabular}{@{}p{0.001\textwidth}@{}}
\\
$\rotatebox{90}{\hspace{-6mm}Base Cases}\left \{ \begin{array}{c}
\\[0.5mm]
\\[0.5mm]
\\[0.5mm]
\\[0.5mm]
\end{array} \right.$ 
\\[0.5mm]
$\rotatebox{90}{\hspace{-4.5mm}Recursion}\left \{ \begin{array}{c}
\\[0.5mm]
\\[0.5mm]
\\[0.5mm]
\\[0.5mm]
\\[0.5mm]
\\[0.5mm]
\end{array} \right.$
\end{tabular}
&
\begin{tabular}{@{}p{0.25\textwidth}@{}}

\begin{enumerate}[itemsep=0.5mm]
 \item \textbf{function} $\hat{\vectbf{u}} = \fhat\prl{\vectbf{x}, \vectbf{u}, I}$  
 \item \quad \textbf{if} $\vectbf{x} \in \setA\prl{I}$ \refeqn{eq.SetA},
 \item \quad \quad $\hat{\vectbf{u}} \leftarrow \fA\prl{\vectbf{x}, \vectbf{u}, I}$ \refeqn{eq.AttractiveField},
 \item \quad \textbf{else if} $\vectbf{x} \not \in \setH\prl{I}$ \refeqn{eq.SetR},
 \item \quad \quad $\hat{\vectbf{u}} \leftarrow  \fS\prl{\vectbf{x}, \vectbf{u}, I} $   \refeqn{eq.SeparationField},
 \item \quad \textbf{else} 
 \item \quad  \quad $\crl{I_L, I_R} \leftarrow \childCL{I,\tree}$,
 \item \quad \quad $\hat{\vectbf{u}}_L \leftarrow \fhat\prl{\vectbf{x}, \vectbf{u}, I_L}$,
 \item \quad \quad $\hat{\vectbf{u}}_R \leftarrow \fhat\prl{\vectbf{x}, \hat{\vectbf{u}}_L, I_R}$,
 \item \quad \quad $\hat{\vectbf{u}} \leftarrow  \fH\prl{\vectbf{x}, \hat{\vectbf{u}}_R, I} $ \refeqn{eq.SplitPreservingField},
 \item \quad \textbf{end}
 \item \textbf{return} $\hat{\vectbf{u}}$
 \vspace{-3mm}
\end{enumerate}
\end{tabular}
&
\begin{tabular}{@{}l@{}}
\\[0.6mm]
\\[0.6mm]
\% Attracting Field \\[0.6mm]
\\[0.6mm]
\% Split Separation Field \\[0.6mm]
\\[0.6mm]
\\[0.6mm]
\% Recursion for Left Child\\[0.6mm]
\% Recursion for Right Child\\[0.6mm]
\% Split Preserving Field\\[0.6mm]
\\
\end{tabular}
\end{tabular}

\\
\hline
\end{tabular}
\end{table}
\footnotetext{ The recursion step at any nonsingleton cluster $I \in \cluster{\tree}$ in \reftab{tab.HierInvNav}.8)-\ref{tab.HierInvNav}.10) updates the vector field $\f$  in a bottom-up fashion, first for the children clusters of $I$ and then for cluster $I$ itself, yielding a specific order in which the clusters of $\tree$ are visited; and such a tree traversal is formally referred to as the post-order tree traversal of $\tree$ \cite{Cormen_2009}.
}

In brief, the hierarchy invariant vector field $\f$ recursively detects partial configurations  whose separating hyperplanes are ``sufficiently aligned'' with the desired ones, as specified in \refeqn{eq.SetA} and illustrated in \reffig{fig.SufficientlyAligned}, and that can be directly moved towards the desired configurations, using a family of attracting fields $\fA$ \refeqn{eq.AttractiveField}, with no collisions along the way.
Once the partial configurations associated with sibling clusters $I$ and $\complementLCL{I}{\tree}$ of $\tree$ are in the domains of their associated attracting fields, $\f$  rotates these
partial configurations while preserving the hierarchy so that
their separating hyperplane is also asymptotically aligned.
Hence, $\f$  asymptotically aligns the separating hyperplanes of clusters of $\tree$ in a bottom-up fashion; and once the separating hyperplanes of all clusters of $\tree$ are ``sufficiently aligned'', $\f$ drives asymptotically  each disk directly towards its desired location.  
We now present and motivate  its constituent formulae as follows. 

\begin{figure}[h]
\vspace{2mm}
\centering
\includegraphics[width=0.47\textwidth]{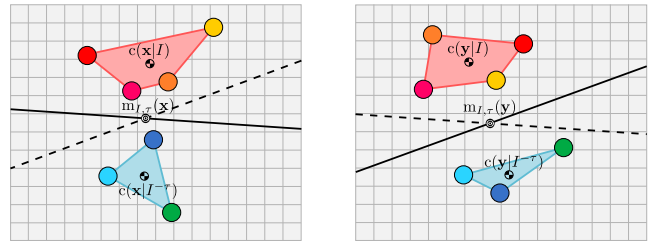}
\vspace{0mm} 
\caption{An illustration of ``sufficiently aligned'' separating hyperplanes of complementary clusters $I$ and $\complementLCL{I}{\tree}$ of $\tree$. 
Both of the current (left) and desired (right) partial configurations are linearly separable by each other’s separating hyperplane, and such an alignment condition needs to be satisfied at each level of the subtrees rooted at $I$ and $\complementLCL{I}{\tree}$ so that the partial configurations $\vectbf{x}|I$ and $\vectbf{x}|\complementLCL{I}{\tree}$ are steered by the associated attracting fields.}
\label{fig.SufficientlyAligned}
\end{figure}

\smallskip

The hierarchy-invariant vector field, $\f$, in \reftab{tab.HierInvNav}.2) \& \ref{tab.HierInvNav}.3) recursively detects partial configurations, $\vectbf{x} | I$ associated with cluster $I \in \cluster{\tree}$, that can be safely driven toward the goal formation in $\stratum{\tree}$  using a family of attracting controllers, $\fA : \stratum{\tree} \times \prl{\R^d}^{\indexset} \times \cluster{\tree} \rightarrow \prl{\R^d}^{\indexset}$,  defined in terms of the negated gradient field of $\V\prl{\vectbf{x}} \ldf \frac{1}{2}\norm{\vectbf{x} - \vectbf{\stateB}}_2^2$: for any $j \in \indexset$,
\begin{align}\label{eq.AttractiveField}
\fA\prl{\vectbf{x}, \vectbf{u}, I}_{j} 
\sqz{\ldf} \left \{ 
\begin{array}{@{}c@{}l}
 - (\vect{x}_j \sqz{-} \vect{\stateB}_j) & \text{, if } j \in I, \\
\vect{u}_j & \text{, else,}  
\end{array}
\right .  \!\!\!\!
\end{align}
where $\vectbf{u} \in \prl{\R^d}^{\indexset}$ is a desired (velocity) control input  specifying the motion of complementary cluster $\indexset \setminus I$.

To avoid intra-cluster collisions along the way  and preserve (local) clustering hierarchy, for any $I \in \cluster{\tree}$ the set of configurations in the domain  of the attracting field, $\fA$, is restricted to
\begin{align} \label{eq.SetA}
\hspace{-2mm}\setA\!\prl{I} \!\sqz{\ldf}\! \left\{\Big. \right. \!\! \vectbf{x} \sqz{\in} \stratum{\tree} \! \Big | \,  & \lie_{\overrightarrow{\vectbf{y}}}  \scalebox{1}{$\frac{1}{2}$}\norm{\vect{x}_i - \vect{x}_j}^2 \geq \prl{r_i \sqz{+} r_j}^2\!\!, \, \forall i \sqz{\neq}\! j \sqz{\in} I \!, \nonumber \\
& \hspace{-26mm} \lie_{\overrightarrow{\vectbf{y}}} \vectprod{\prl{\vect{x}_k \sqz{-} \ctrdmid{K\!}{\vectbf{x}}\!}\!}{\ctrdsep{K\!}{\vectbf{x}}} \sqz{\geq} 0, \forall k \sqz{\in} K\!, K  \sqz{\in} \descendantCL{I,\tree} \!\!  \left . \Big. \right \} \! \!,  \!  \! \! 
\end{align}
where $\descendantCL{I,\tree}$ is the set of descendants of $I$ in $\tree$. 
Here,  $\lie_{\overrightarrow{\vectbf{y}}}f$ denotes the Lie derivative of a scalar-valued function $f$ along a constant vector field $\overrightarrow{\vectbf{y}}$ which assigns the same vector $\vectbf{y}$ to every point in its domain,  and  one can simply verify that
\begin{align}
\lie_{\overrightarrow{\vectbf{y}}} \scalebox{1}{$\frac{1}{2}$}\norm{\vect{x}_i - \vect{x}_j}^2 &= \vectprod{\prl{\vect{x}_i - \vect{x}_j}}{\prl{\vect{y}_i - \vect{y}_j}}, \label{eq.SetACond1}
\\
\lie_{\overrightarrow{\vectbf{y}}} \vectprod{\prl{\vect{x}_k \sqz{-} \ctrdmid{K\!}{\vectbf{x}}\!}\!}{\ctrdsep{K\!}{\vectbf{x}}}   & = \vectprod{\prl{\vect{y}_k \sqz{-} \ctrdmid{K\!}{\vectbf{y}}\!}\!}{\ctrdsep{K\!}{\vectbf{x}}} \nonumber 
\\
& \hspace{9mm} + \vectprod{\prl{\vect{x}_k \sqz{-} \ctrdmid{K\!}{\vectbf{x}} \!}\!}{\ctrdsep{K\!}{\vectbf{y}}} \!. \!\! \label{eq.SetACond2}
\end{align}
Note that \refeqn{eq.SetACond1} quantifies the safety of a resulting trajectory of $\fA$, and to avoid  collision between any pair of disks, $i$ and $j$, \refeqn{eq.SetACond1} should be no less than the square of sum of their radii, $\prl{\radius_i + \radius_j}^2$, as required in \refeqn{eq.SetA}; and \refeqn{eq.SetACond2} quantifies the preservation of (local) clustering hierarchy and  should be nonnegative for hierarchy invariance.  
Also observe that since a singleton cluster  contains no pair of distinct indices, and has an empty set of descendants, the predicate in \refeqn{eq.SetA} is always true for these ``leaf'' node cases and we have $\setA\prl{I} = \stratum{\tree}$ for any singleton cluster $I \in \cluster{\tree}$. 
Further, one can simply verify that  $\vectbf{\stateB} \in \setA\prl{I}$ for any $I \in \cluster{\tree}$.

\smallskip

If a partial configuration,  $\vectbf{x}|I$, is not contained in the domain of the associated attracting field, i.e.  $\vectbf{x} \not \in \setA\prl{I}$, to avoid  inter-cluster collisions the failure of the condition in  \reftab{tab.HierInvNav}.4) ensures sibling clusters, $\childCL{I,\tree}$, will be separated by a certain distance, specified as:   
{\small
\begin{align} \label{eq.SetR}
\hspace{-2mm}\setH\!\prl{I} \sqz{\sqz{\ldf}} \crl{  \vectbf{x} \sqz{\in} \stratum{\tree} \! \Big |    \sepmag{k,K\!}{\vectbf{x}} \sqz{\geq} r_k \!\sqz{+} \alpha,  \forall k \sqz{\in} K, K \sqz{\in}  \childCL{I,\tree}  \! }\!,  \! \!\!
\end{align}
}%
where $\sepmag{k, K\!}{\vectbf{x}}$ \refeqn{eq.SepMag} returns the perpendicular distance of $k$th agent to the separating hyperplane of cluster $K \in \cluster{\tree}$, and  $\alpha > 0$ is a safety margin guaranteeing that the clearance between any pair of disks in complementary clusters, $\childCL{I,\tree}$, is at least $2\alpha$ units.
Observe that   $\setH\prl{I} = \stratum{\tree}$ for any singleton cluster $I \in \cluster{\tree}$ because such leaf clusters of a
binary tree have no children, i.e. $\childCL{I,\tree} = \emptyset$.   

\smallskip

While the disks move in $\setH\prl{I}$ based on a desired control (velocity) input $\vectbf{u} \in \prl{\R^d}^{\indexset}$, \reftab{tab.HierInvNav}.10) guarantees the maintenance of the safety margin between children clusters $\childCL{I,\tree}$ by employing an additive repulsive field, $\fH : \stratum{\tree} \times \prl{\R^d}^{\indexset} \times \cluster{\tree} \rightarrow \prl{\R^d}^{\indexset}$, as follows:    
\begin{align} \label{eq.SplitPreservingField}
\fH\prl{\vectbf{x}, \vectbf{u}, I}_{j} 
\ldf \vect{u}_j + 2 \falpha{I}{\vectbf{x}, \vectbf{u}}  \frac{\card{\complementLCL{K}{\tree}}}{\card{I}}  \frac{\ctrdsep{K\!}{\vectbf{x}}}{\norm{\ctrdsep{K\!}{\vectbf{x}}}} ,\!\!
\end{align}
for all $j \sqz{\in} K$ and $K \sqz{\in} \childCL{I, \tree}$; otherwise, $\fH\prl{\vectbf{x}, \vectbf{u}, I}_{j} \!\ldf \vect{u}_j $, 
where $\falpha{I}{\vectbf{x}, \vectbf{u}}$ is a scalar valued function describing the strength of the repulsive field, 
\begin{align}\label{eq.Alpha}
\falpha{I}{\vectbf{x}, \vectbf{u}} \sqz{\ldf} \max_{\substack{k \in K \\ K \in \childCL{I,\tree}}} \fphi{k,K\!}{\vectbf{x}} \cdot \fpsi{k,K\!}{\vectbf{x}, \vectbf{u}}.
\end{align}
Here, for each individual $k$ in cluster $K \in \childCL{I,\tree}$, $\fphi{k,K}{\vectbf{x}}$ is  exponential damping on the repulsion strength $\fpsi{k,K}{\vectbf{x}, \vectbf{y}}$, in which the amplitude envelop exponentially decays to zero after a certain safety margin $\beta > \alpha$,
\begin{align}
\hspace{-2mm}\fphi{k,K\!}{\vectbf{x}} & \sqz{\ldf} \max \prl{\frac{{e^{-\prl{\sepmag{k,K}{\vectbf{x}}  - r_k - \alpha}} \sqz{-} e^{-\prl{\beta - \alpha}}}}{{1 - e^{-\prl{\beta - \alpha}}}}, 0\!}\!\!,\\
\hspace{-2mm}\fpsi{k,K\!}{\vectbf{x}, \vectbf{u}} & \sqz{\ldf} \max  \prl{\!\! \!\Big. - \!\!\prl{\big.\sepmag{k,K\!\!}{\vectbf{x}}  \sqz{-} r_k \sqz{-} \alpha} \!\sqz{-} \lie_{\overrightarrow{\vectbf{u}}}\sepmag{k,K\!\!}{\vectbf{x}} \!, 0 \!}\!\!, \!\!\! \!
\end{align}
where
\begin{align}\label{eq.sepmagLie}
\lie_{\overrightarrow{\vectbf{u}}}\sepmag{k,K\!}{\vectbf{x}} &= \frac{\vectprod{\prl{\vect{u}_k \sqz{-} \ctrdmid{K\!}{\vectbf{u}}\!}\!}{\ctrdsep{K\!}{\vectbf{x}}} \sqz{+} \vectprod{\prl{\vect{x}_k \sqz{-} \ctrdmid{K\!}{\vectbf{x}} \!}\!}{\ctrdsep{K\!}{\vectbf{u}}}}{\norm{\ctrdsep{K\!}{\vectbf{x}}}} \nonumber \\
& \hspace{15mm}- \sepmag{k,K\!}{\vectbf{x}} \frac{\vectprod{\ctrdsep{K\!}{\vectbf{x}}\!}{\ctrdsep{K\!}{\vectbf{u}}}}{\norm{\ctrdsep{K\!}{\vectbf{x}}}^2}. 
\end{align}
Note that  $\fH\prl{\vectbf{x}, \vectbf{u}, I} $ is well defined for any singleton cluster $I \in \cluster{\tree}$ and  is equal to the identity map, i.e. $\fH\prl{\vectbf{x}, \vectbf{u}, I}  = \vectbf{u}$, since $\childCL{I,\tree} = \emptyset$; and also observe that $\fH\prl{\vectbf{x}, \vectbf{u}, I}  = \vectbf{u}$ for any $I \in \cluster{\tree}$ if the complementary clusters $\childCL{I,\tree}$ are well-separated, i.e. $\sepmag{k,K}{\vectbf{x}}\geq r_k + \beta$ for all $k \in K$ and $K \in \childCL{I,\tree}$.
The latter is important to avoid  the  ``finite escape time" phenomenon\footnote{ A trajectory of a dynamical system is said to have a finite escape time  if it  escapes to infinity at a finite time \cite{khalil_NonlinearSystems_2001}.} (\refprop{prop.HierInvNavExistenceUniqueness}).  

\smallskip

Finally, \reftab{tab.HierInvNav}.5) guarantees that if a partial configuration is neither  in the domain of the attracting field nor are its children clusters, $\childCL{I,\tree}$, properly separated, i.e. $\vectbf{x} \not \in  \setA\prl{I} \cup \setH\prl{I}$, then the complementary clusters are driven apart using another repulsive field, $\fS: \stratum{\tree} \times \prl{\R^d}^{\indexset} \times \cluster{\tree} \rightarrow \prl{\R^d}^{\indexset}$, until asymptotically establishing a certain safety margin $\beta > \alpha$: 
{
\begin{align} \label{eq.SeparationField}
\hspace{-1mm} \fS\prl{\vectbf{x}, \vectbf{u}, I}_{j} 
\sqz{\ldf} -\ctrd{\vectbf{x} \sqz{-} \vectbf{y}|I} +  2\fbeta{I\!}{\vectbf{x}} \!\frac{\card{\complementLCL{K}{\tree}}}{\card{I}}  \frac{\ctrdsep{K\!}{\vectbf{x}}}{\norm{\ctrdsep{K\!}{\vectbf{x}}}}, \!\! \! 
\end{align}
}%
for all $j \sqz{\in} K$ and $K \sqz{\in} \childCL{I, \tree}$; otherwise, $\fS\prl{\vectbf{x}, \vectbf{u}, I}_{j} \!\ldf \vect{u}_j $, where the magnitude, $\fbeta{I}{\vectbf{x}}$, of repulsion between complementary clusters $\childCL{I,\tree}$ is given by
\begin{align}\label{eq.Beta}
\fbeta{I}{\vectbf{x}} \ldf \max _{\substack{k \in K \\ K \in \childCL{I,\tree}}} \!\! \max \prl{\big.\!-\prl{\big.\sepmag{k,K}{\vectbf{x}} \sqz{-} r_k \sqz{-} \beta}\!, 0} \! . \!
\end{align}
For completeness, we set $\fS\prl{\vectbf{x}, \vectbf{u}, I} = \fA\prl{\vectbf{x}, \vectbf{u}, I}$ for any singleton cluster $I \in \cluster{\tree}$.

\smallskip

We summarize the properties of this construction as follows:\footnote{This construction indeed solves \refprob{prob.HierarchyInvariant} since a flow is a retraction of its basin into the attractor \cite{bhatia_szego_1967}.}
\begin{theorem}\label{thm.HierInvNav}
The recursion of \reftab{tab.HierInvNav} results in a well-defined function $\f :\stratum{\tree} \rightarrow \prl{\R^d}^{\indexset}$ that can be computed in $\bigO{\card{\indexset}^2}$ time for any $\vectbf{x} \in\stratum{\tree}$. 
For all  $\tree \in \bintreetopspace_{\indexset}$, the stratum, $\stratum{\tree}$ is positive invariant and any $\vectbf{y} \in \stratum{\tree}$ is an asymptotically stable equilibrium point of a continuous piecewise smooth  flow arising from $\f$ whose basin of attraction includes all of $\stratum{\tree}$ with the exception of an empty interior set.
\end{theorem}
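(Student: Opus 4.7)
The plan is to establish, in order: (i) that the recursion terminates and yields a well-defined field computable in the claimed time; (ii) positive invariance of $\stratum{\tree}$ under the induced flow; and (iii) asymptotic stability of $\vectbf{\stateB}$ with the claimed basin. Well-definedness follows from structural induction on $\tree$: the post-order traversal visits each of the $2\card{\indexset}-1$ clusters exactly once, and the three-way branch on $\setA\prl{I}$ and $\setH\prl{I}$ is exhaustive (both sets equal $\stratum{\tree}$ at singletons, so leaves always return via the attracting branch). Piecewise smoothness inside each region is immediate from the smoothness of the centroid and separation operators and the fact that the $\max$ operators in $\falpha{I}{\cdot}$ \refeqn{eq.Alpha} and $\fbeta{I}{\cdot}$ \refeqn{eq.Beta} are continuous. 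For complexity, each call performs $\bigO{\card{I}}$ work dominated by the centroid and Lie-derivative evaluations in \refeqn{eq.centroid}--\refeqn{eq.centroidmidpoint} and \refeqn{eq.sepmagLie}; summing $\card{I}$ over all clusters of a binary tree on $\card{\indexset}$ leaves yields $\bigO{\card{\indexset}^2}$ in the worst case.

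For positive invariance of $\stratum{\tree}$, I would proceed by structural induction on the subtree rooted at $I$, showing that the output $\fhat\prl{\vectbf{x},\vectbf{u},I}$ satisfies two Nagumo-type tangency conditions along the boundary of $\stratum{\tree}$: no pair of agents within $I$ collides, and no agent crosses the separating hyperplane of any descendant $K \in \descendantCL{I,\tree}$. In the attracting branch these tangency conditions are \emph{precisely} the inequalities defining $\setA\prl{I}$ in \refeqn{eq.SetA}, hence they hold throughout that region. In the pure separating branch $\fS$ strictly increases every deficient margin $\sepmag{k,K\!}{\cdot}$, so no split of $I$ is crossed. In the recursive branch, the additive correction $\fH$ perturbs the incoming velocity by a vector parallel to $\ctrdsep{K\!}{\vectbf{x}}$ whose magnitude is calibrated through $\fpsi{k,K\!}{\cdot}$ so that the Lie derivative of $\sepmag{k,K\!}{\cdot}$ along the corrected velocity dominates $-\prl{\sepmag{k,K\!}{\vectbf{x}} - r_k - \alpha}$ at the tight-margin boundary --- the key analytic inequality to verify --- and hence $\sepmag{k,K\!}{\cdot}$ cannot drop to $r_k$. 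Combining these cluster-level invariants via the intersection characterization \refeqn{eq.ClosedStrata} gives invariance of $\stratum{\tree}$, and in particular no inter-agent collisions along the way.

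For asymptotic stability I would use the Lyapunov candidate $\V\prl{\vectbf{x}} = \frac{1}{2}\norm{\vectbf{x} - \vectbf{\stateB}}^2$. When the state enters $\setA\prl{\indexset}$ the field $\f$ collapses to the pure negative gradient $-\nabla \V$, so $\vectbf{\stateB}$ is exponentially stable with strictly decreasing $\V$ on that subset. The harder task is to show that almost every trajectory eventually enters $\setA\prl{\indexset}$, which I would argue bottom-up on $\tree$: whenever $\vectbf{x} \notin \setH\prl{I}$, the field $\fS$ strictly increases each deficient $\sepmag{k,K\!}{\cdot}$ and forces entry into $\setH\prl{I}$ in finite time, after which the recursive $\fH$-correction maintains the margin; once every descendant split is ``sufficiently aligned'' in the sense of \reffig{fig.SufficientlyAligned}, the attracting subflow becomes a contraction on the residual angular degrees of freedom. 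Invoking \refthm{thm.StrataHomotopy}, this residual motion is effectively a product of gradient-like flows on $\prl{\Sp^{d-1}}^{\card{\indexset}-1}$ driven by the $\ctrdsep{K\!}{\vectbf{x}}$-directions, so the non-convergent set is the union of pre-images of antipodal sphere-factor configurations --- a finite union of closed sets of codimension at least one in $\stratum{\tree}$, hence of empty interior.

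The main obstacle is the coupling between levels: the additive correction $\fH$ at a cluster $I$ perturbs the attracting dynamics simultaneously deployed on its descendants, and one must verify that the damping schedule $\fphi{k,K\!}{\cdot}\cdot\fpsi{k,K\!}{\cdot}$ of \refeqn{eq.Alpha} is simultaneously large enough to enforce the safety tangency at the boundary $\sepmag{k,K\!}{\vectbf{x}} = r_k + \alpha$ and small enough to vanish beyond the margin $\beta$, so that no spurious equilibria appear inside $\setH\prl{I}$. Establishing this, together with ruling out finite escape time in the composed closed-loop ODE --- for which the exponential envelope $\fphi{k,K\!}{\cdot}$ and the boundedness of the $\fA$ contribution are essential --- is the crux of the argument and what justifies the ``continuous piecewise smooth flow'' qualifier in the statement.
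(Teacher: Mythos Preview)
Your proposal takes a genuinely different route from the paper and, as written, has a real gap.

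The paper does \emph{not} analyze $\f$ directly. Instead it introduces an equivalent reformulation: a finite family of \emph{continuous} substratum policies $\h{\purt,\vectbf{b}}$ indexed by partitions $\purt\in\purtset_{\indexset}(\tree)$ and binary vectors $\vectbf{b}\in\{-1,+1\}^{\purt}$, each defined on a closed domain $\domain(\purt,\vectbf{b})$ (\refeqn{eq.LocalPolicyDomain}). It then shows (\refprop{prop.SystemEquivalence}) that $\f(\vectbf{x})=\h{\p(\vectbf{x})}(\vectbf{x})$ for a policy-selection map $\p$, and proves invariance, flow, and stability for each $\h{\purt,\vectbf{b}}$ separately. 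Convergence is obtained not from $\V$ but from an integer-valued $\priority(\purt,\vectbf{b})=\sum_{I\in\purt}\vect{b}_I\card{I}^2$ that strictly increases along a finite prepares graph (\refprop{prop.PreparesGraph}, \refprop{prop.PreparesRelation}), forcing finite-time arrival at the goal policy $\h{\{\indexset\},+1}=-\nabla\V$.

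The gap in your approach is precisely what motivates that detour. The recursion of \reftab{tab.HierInvNav} branches on the \emph{set-membership predicates} $\vectbf{x}\in\setA(I)$ and $\vectbf{x}\notin\setH(I)$, not on smooth switching surfaces; as the paper states explicitly in \refapp{app.HierInvNav}, this ``yields a discontinuous vector field complicating the qualitative (existence, uniqueness, invariance and stability) analysis.'' Your claim that ``piecewise smoothness inside each region is immediate'' is correct, but it does not give you a flow: you never address what solution concept you are using across the discontinuities, why solutions exist and are unique there, or why Zeno behavior is excluded. The paper resolves this by proving each $\h{\purt,\vectbf{b}}$ is continuous and locally Lipschitz on a positively invariant domain (Propositions~\ref{prop.LocalPolicyProperty}, \ref{prop.DomainInvariance}, \ref{prop.LocalPolicyExistenceUniqueness}) and that the selection $\p$ dwells for nonzero time (\refprop{prop.NonZeroExecutiontime}); the global flow is then assembled by concatenation.

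Your stability argument also breaks at the same place. The candidate $\V(\vectbf{x})=\tfrac12\norm{\vectbf{x}-\vectbf{y}}^2$ is \emph{not} nonincreasing along $\f$ outside $\setA(\indexset)$: under $\fS$ \refeqn{eq.SeparationField} the repulsion term $2\fbeta{I}(\vectbf{x})\tfrac{\card{\complementLCL{K}{\tree}}}{\card{I}}\tfrac{\ctrdsep{K}{\vectbf{x}}}{\norm{\ctrdsep{K}{\vectbf{x}}}}$ can push agents \emph{away} from their targets to establish the $\beta$-margin, so $\dot\V$ need not have a sign. This is why the paper's Lyapunov argument is hybrid --- discrete $\priority$ across policy switches, then $\V$ only after the final switch into $\domain(\{\indexset\},+1)$. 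Your informal ``bottom-up'' narrative (first $\fS$ forces $\setH(I)$, then alignment, then $\setA(\indexset)$) is exactly the content of Lemmas~\ref{lem.PreparesRelationSingleton}--\ref{lem.PreparesRelationPositive}, but making it rigorous requires the partition/priority bookkeeping you omit; in particular, the coupling you flag in your last paragraph is what the prepares-graph decomposition is designed to untangle.
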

\begin{proof}
These results are proven in \refapp{app.HierInvNav} according to the following plan. 
\refprop{prop.HierInvNavProperty} establishes that the recursion in \reftab{tab.HierInvNav} indeed results in a function computable in quadratic time.  
The invariance, stability, and continuous flow generating properties of $\f$ are shown using an equivalent system model within the sequential composition framework \cite{Burridge_Rizzi_Koditschek_1999}, as follows.
\reftab{tab.LocalPolicy} defines a new recursion shown  in \refprop{prop.LocalPolicyProperty} to result in a family of continuous and piecewise smooth vector fields.
\refprop{prop.DomainInclusion} asserts  that the family of domains associated with these fields \refeqn{eq.LocalPolicyDomain} defines a (finite) open cover of  $\stratum{\tree}$ relative to which a selection function (\reftab{tab.PolicySelection}) induces a partition of that stratum. 
\refprop{prop.SystemEquivalence} demonstrates that the composition of the covering vector field family with the output of  this partitioning function yields a new function that coincides exactly with the  original control field defined in  \reftab{tab.HierInvNav}. 
Finally, \refprop{prop.HierInvNavExistenceUniqueness}, \refprop{prop.HierInvNavInvariance} and \refprop{prop.HierInvNavStability} demonstrate, respectively, the flow, positive invariance and stability properties of $\f$,  which are inherited from the flow, invariance and stability properties (\refprop{prop.LocalPolicyExistenceUniqueness}, \refprop{prop.DomainInvariance} and \refprop{prop.PreparesRelation}, respectively) of substratum policies executed over a strictly decreasing finite prepares graph (\refprop{prop.PreparesGraph})  via their nondegenerately, real-time  executed (\refprop{prop.NonZeroExecutiontime})  sequential composition.   \qed
\end{proof}

\subsection{Navigation in the Space of Binary Trees}
\label{sec.NNINav}

In principle,  navigation in the adjacency graph of trees (\refprob{prob.Transition}) is a trivial matter since the number of trees over a finite set of leaves is finite. 
However, in practice, the cardinality of trees grows super exponentially  \cite{billera_holmes_vogtmann_aap2001},
{
\begin{align}\label{eq.NumTree}
\card{\bintreetopspace_{\indexset}} &= (2\card{\indexset} - 3)!! = (2\card{\indexset}-3)(2\card{\indexset}-5) \ldots 3, 
\end{align}  
}%
for $\card{\indexset} \geq 2$. 
Hence standard graph search algorithms, like the A* or Dijkstra's algorithm \cite{Cormen_2009}, become rapidly impracticable. 
In particular, computing the shortest path (geodesic) in the NNI-graph, a regular subgraph of the adjacency graph (\refthm{thm.Portal}), is NP-complete \cite{dasguptaEtAl_SDM1997}.

Alternatively, we have recently developed in \cite{ArslanEtAl_NNITechReport2013}  an efficient recursive procedure for navigating in the NNI graph $\NNIgraph_{\indexset} = \prl{\bintreetopspace_{\indexset}, \NNIedgeset}$ towards any given binary tree  $\tree \in \bintreetopspace_{\indexset}$, taking the form of an abstract discrete dynamical system as follows:
\begin{subequations} \label{eq:SystemModel}%
\begin{align}
\treeA^{k+1} &= \NNI\prl{\big. \treeA^k, G^k},  \label{eq:NNITransition}\\
 G^k &= \vectbf{u}_{\tree}(\treeA^k), \label{eq:NNIControl} 
\end{align}
\end{subequations}
where $\NNI\prl{\treeA^k, G^k} $ denotes the NNI move\footnote{Here, note that the NNI move at the empty cluster  corresponds to the  identity map in $\bintreetopspace_{\indexset}$, i.e. $\treeA = \NNI\prl{\treeA, \emptyset}$ for all $\treeA \in \bintreetopspace_{\indexset}$.
Therefore, the notion of identity map in $\bintreetopspace_{\indexset}$ slightly extends  the NNI graph by adding  self-loops at every vertex, which  is necessary for a discrete-time  dynamical system in $\bintreetopspace_{\indexset}$ to have  fixed points.
} on $\treeA^k$ at  cluster   $G^k \in \cluster{\tree}$, illustrated in \reffig{fig.NNIMove}, and $\vectbf{u}_{\tree}$ is our NNI control policy returning an NNI move as summarised in \reftab{tab.NNIControl}.
Abusing notation, we shall denote the closed-loop dynamical system as
\begin{align} \label{eq:CLSystemModel}
\treeA^{k+1} = g_{\tree}\prl{\treeA^k} \ldf \prl{\NNI \circ \vectbf{u}}\prl{\treeA^k}.
\end{align}

\begin{table}[h]
\caption{The NNI Control Law}
\label{tab.NNIControl}
\centering 
\vspace{-2mm}
\begin{tabular}{|p{0.45\textwidth}@{\hspace{2mm}}|}
\hline
\vspace{-1mm}

To navigate from an arbitrary hierarchy $\treeA \in \bintreetopspace_{\indexset}$ towards any selected desired hierarchy $\treeB \in \bintreetopspace_{\indexset}$ in the NNI-graph,  the NNI control policy $\vectbf{u}_{\treeB}$ returns an NNI move on $\treeA$ at a cluster $G \in \cluster{\treeA}$, as follows: 

\begin{enumerate}
\item If $\treeA = \treeB$, then just return the identity move,  $G = \emptyset$. 

\item Otherwise, 

\begin{enumerate}

\item Select a common cluster $K \in \cluster{\treeA} \cap \cluster{\treeB}$ with $\childCL{K,\treeA} \neq \childCL{K,\treeB}$, and let $\crl{K_L, K_R} = \childCL{K,\treeB}$.
\item Find a nonsingleton cluster $I \in \cluster{\treeA}$  with children  $\crl{I_L, I_R} = \childCL{I,\treeA}$ satisfying  $I_L \subseteq K_L$ and $I_R \subseteq K_R$. 
\item Return a proper NNI navigation move  on $\treeA$ at  grandchild $G  \in \childCL{I, \treeA}$  selected as follows: 

\begin{enumerate}
\item If $ \complementLCL{I}{\treeA} \subset K_L$ , 
then return $G  = I_R$.
\item Else if $ \complementLCL{I}{\treeA} \subset K_R$ , then return $G  = I_L$.
\item Otherwise , return an arbitrary NNI move at a child of $I$ in $\treeA$; for example, $G = I_L$.
\end{enumerate}
\end{enumerate}
\vspace{-3mm}
\end{enumerate}  

\\
\hline

\end{tabular}

\end{table}

The NNI control law endows the NNI-graph with a  directed edge structure whose paths all lead to  $\tree$, and whose longest path (from the furthest possible initial hierarchy, $\treeA \in \bintreetopspace_{\indexset}$) is tightly bounded by $\frac{1}{2}\prl{\card{\indexset} -1}\prl{\card{\indexset} -2}$ for $\card{\indexset} \geq 2$. 
Given such a goal we show in \cite{ArslanEtAl_NNITechReport2013}
that the cost of computing an appropriate NNI move from any other $\treeA\in \bintreetopspace_{\indexset}$ toward an adjacent tree at a lower value of a ``discrete
Lyapunov function" relative to that destination is $\bigO{\card{\indexset}}$.  
We summarize such important properties of our NNI navigation algorithm as:
\begin{theorem}[\hspace{-0.06mm}\cite{ArslanEtAl_NNITechReport2013} \hspace{-1.5mm}]\label{thm.Transition}
 The NNI control law $\vect{u}_{\tree}$ (\reftab{tab.NNIControl}) recursively defines a closed loop  discrete dynamical system \refeqn{eq:CLSystemModel}  in the NNI-graph, taking the form of a discrete transition rule, $g_{\tree}$, with global attractor at $\tree$ and longest trajectory of length $\bigO{\card{\indexset}^2\!}$ endowed with a discrete Lyapunov function relative to which computing a descent direction from any $\treeA \in \bintreetopspace_{\indexset}$ requires a computation of time $\bigO{\card{\indexset}}$.  
\end{theorem}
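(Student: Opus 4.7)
The plan is to verify separately the four claims bundled into the statement: well-definedness of the rule in \reftab{tab.NNIControl}, existence of a discrete Lyapunov function with global attractor $\tree$, the $\bigO{\card{\indexset}^2}$ bound on trajectory length, and the $\bigO{\card{\indexset}}$ per-step descent complexity.

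First I would check that the control law is well-defined whenever $\treeA \neq \treeB$. Because the two trees share the root $\indexset$, at least one common cluster $K$ must exhibit $\childCL{K,\treeA} \neq \childCL{K,\treeB}$, so step 2a succeeds. For step 2b, descending in $\treeA$ from $K$ along the unique child whose leaf-set meets both $K_L$ and $K_R$ non-trivially must terminate at a nonsingleton cluster $I$ both of whose children satisfy $I_L \subseteq K_L$ and $I_R \subseteq K_R$ (otherwise the descent would continue); this guarantees step 2c always emits a valid NNI move.

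Next I would propose the primary Lyapunov function $V(\treeA) \ldf \card{\cluster{\treeB} \setminus \cluster{\treeA}}$, a half of the Robinson--Foulds distance. In cases 2c-i and 2c-ii the NNI move at $I_R$ or $I_L$ respectively merges $\complementLCL{I}{\treeA}$ with the appropriate child of $I$, producing a new cluster entirely contained in $K_L$ or $K_R$; this cluster can be shown to coincide with a previously missing element of $\cluster{\treeB}$, so $V$ strictly decreases by at least one. In the balanced case 2c-iii, where $\complementLCL{I}{\treeA}$ straddles the desired split, $V$ may not immediately decrease, so I would introduce a lexicographic secondary potential, for example $W(\treeA) \ldf \min\prl{\card{\complementLCL{I}{\treeA} \cap K_L}, \card{\complementLCL{I}{\treeA} \cap K_R}}$, and verify that each 2c-iii move strictly shrinks $W$ while preserving $V$.

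The main obstacle will be the quadratic trajectory bound, since case 2c-iii is precisely where $V$ can stall. My plan is to charge each stall to the secondary potential: at most $\card{\indexset}-1$ new common clusters must be installed, and between successive installations the number of consecutive 2c-iii stalls is bounded by the maximal $W$, itself $\bigO{\card{\indexset}}$. Multiplying yields $\bigO{\card{\indexset}^2}$; sharpening to the tight constant $\frac{1}{2}\prl{\card{\indexset}-1}\prl{\card{\indexset}-2}$ would require a telescoping observation that already-installed clusters reduce the maximal $W$ available for later recoveries. Finally, for the per-step cost, I would note that step 2a amounts to a simultaneous post-order traversal of $\treeA$ and $\treeB$ comparing child sets, step 2b is a single root-to-leaf descent in $\treeA$ using cached leaf-set intersections with $K_L$ and $K_R$, and step 2c inspects only the children of $I$ and its sibling; each phase is $\bigO{\card{\indexset}}$, delivering the claimed linear descent complexity.
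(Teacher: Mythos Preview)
The paper does not actually prove this theorem: \refthm{thm.Transition} is stated with a citation to the external technical report \cite{ArslanEtAl_NNITechReport2013} and no argument is given in the present paper, so there is no in-paper proof to compare against.

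That said, your proposal has a genuine gap in the Lyapunov step. You claim that in cases 2c-i and 2c-ii the new cluster created by the NNI move ``can be shown to coincide with a previously missing element of $\cluster{\treeB}$'', forcing $V(\treeA) = \card{\cluster{\treeB} \setminus \cluster{\treeA}}$ to drop. This is false. Take $\indexset = \crl{1,\ldots,6}$, let $\treeB$ have root split $K_L = \crl{1,2,3}$ (internally $\crl{1}\mid\crl{2,3}$) versus $K_R = \crl{4,5,6}$ (internally $\crl{4}\mid\crl{5,6}$), and let $\treeA$ have root split $\crl{1,3,4}\mid\crl{2,5,6}$ with $\crl{1,3,4}$ split as $\crl{1,4}\mid\crl{3}$ and $\crl{2,5,6}$ as $\crl{2}\mid\crl{5,6}$. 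The unique admissible $K$ is $\indexset$; choosing $I = \crl{1,4}$ gives $I_L = \crl{1} \subseteq K_L$, $I_R = \crl{4} \subseteq K_R$, and $\complementLCL{I}{\treeA} = \crl{3} \subset K_L$, so case 2c-i fires with $G = \crl{4}$. The move replaces the cluster $\crl{1,4}$ by $\crl{1,3}$, which lies in $K_L$ but is \emph{not} a cluster of $\treeB$; one checks directly that $\cluster{\treeB}\setminus\cluster{\treeA'} = \crl{\crl{1,2,3},\crl{2,3},\crl{4,5,6}}$ both before and after, so $V$ is unchanged. Thus cases 2c-i and 2c-ii can stall $V$ just as 2c-iii does, and your lexicographic pair $(V,W)$ --- with $W$ invoked only for 2c-iii --- does not yield a strict decrease. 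You need either a different primary potential or a secondary potential that is defined for, and strictly decreases in, all three cases; the half Robinson--Foulds count alone is too coarse for this control law.
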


\subsection{Portal Transformations}
\label{sec.PortalTransformation}

We now turn attention to construction of the crucial portal map  that effects the geometric realization of the NNI-graph as required for \refprob{prob.Portal}; and herein we extend our recent construction of the realization function, $\portalconf{}$, in \cite{arslan_guralnik_kod_WAFR2014} for point particle configurations to thickened disk configurations. 

Throughout this section, the trees $\treeA,\treeB \in \bintreetopspace_{\indexset}$ are NNI-adjacent (as defined in \refsec{sec.TreeGraph}) and fixed, 
and we therefore take the liberty of suppressing all mention of these trees wherever convenient, for the sake of simplifying the presentation of our equations.

Since the trees $\treeA, \treeB$ are NNI-adjacent, we may apply Lemma 1 from \cite{ArslanEtAl_NNITechReport2013} to find common disjoint clusters $A,B,C$ such that $\crl{A \cup B} = \cluster{\treeA}\setminus\cluster{\treeB}$ and $\crl{B \cup C} = \cluster{\treeB} \setminus \cluster{\treeA}$. 
Note that the triplet $\crl{A,B,C}$ of the pair $\;\prl{\treeA,\treeB}$ is unique. 
We call $\crl{A,B,C}$ the \emph{NNI-triplet} of the pair $\prl{\treeA,\treeB}$.
Since $\treeA$ and $\treeB$ are fixed throughout this section, so will be $A,B,C$ and $P:=A\cup B\cup C$.

In the construction of the portal map, $\portalconf{}$ \refeqn{eq.portalconf}, we restrict our attention to the portal configurations with a certain  symmetry property, defined as:
\begin{definition} [\!\!\cite{arslan_guralnik_kod_WAFR2014}]
We call  $\vectbf{\state} \in \prl{\R^d}^{J}$ a \emph{symmetric} configuration associated with $\prl{\treeA,\treeB}$ if centroids of partial configurations $\vectbf{\state}|A$, $\vectbf{\state}|B$ and $\vectbf{\state}|C$ form an equilateral triangle, as illustrated in \reffig{fig.PortalConf}. 
The set of all symmetric configurations with respect to $\prl{\treeA,\treeB}$ is denoted $\mathtt{Sym}\prl{\treeA,\treeB}$.
\end{definition}

\begin{figure}[h]
\centering
\includegraphics[width=0.3\textwidth]{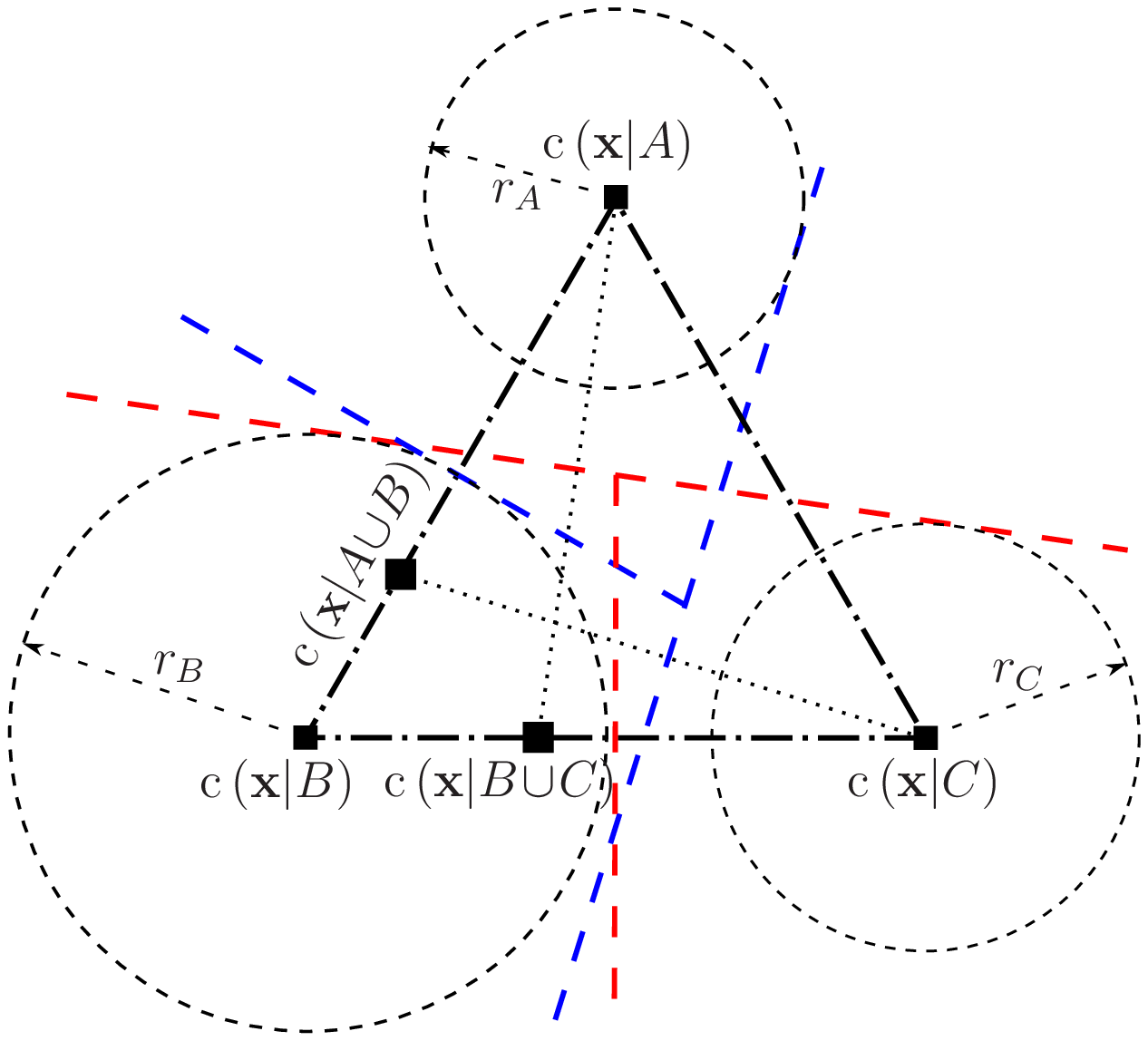} 
\vspace{-1mm}
\caption{An illustration of a symmetric  configuration $\vectbf{x} \in \mathtt{Sym}\prl{\treeA,\treeB}$,  where the consensus ball $B_{Q}\prl{\vectbf{x}}$ of partial configuration of cluster $Q \in \crl{A,B,C}$ has a positive radius. 
}
\label{fig.PortalConf}
\end{figure}

An important observation about the symmetric configurations is:
\begin{lemma}[\!\!\cite{arslan_guralnik_kod_WAFR2014}]
Let $\vectbf{x} \in \stratum{\treeA}$ be a symmetric configuration in $\mathtt{Sym}\prl{\treeA,\treeB}$. 
If each partial configuration $\vectbf{x} |Q$ of cluster $Q \in \crl{A,B,C}$ is  contained in the associated ``consensus'' ball $B_{Q}\prl{\vectbf{\state}}$ --- an open ball\footnote{In a metric space $\prl{X, d}$, the open ball $B\prl{\vect{x},r}$ centered at $\vect{x}$ with radius $r \in \R_{\geq0}$ is the set of points in $X$ whose distance to $\vect{x}$ is less than $r$, i.e $B\prl{\vect{x},r} = \crl{\vect{y} \in X \; | \; d\prl{\vect{x},\vect{y}} < r}$.
} centered at $\ctrd{\vectbf{x}|Q}$ with radius  
\begin{align} \label{eq.RadiusStandardConf}
\hspace{-1mm}
\radiusDD{Q}\prl{\vectbf{\state}} & \sqz{\ldf}  \hspace{-7mm} \min_{\substack{\treeC  \; \in  \; \prl{\treeA,\treeB} \\ D \in \crl{ Q, \parentCL{Q,\treeC} } \setminus\crl{ P}}} \hspace{-7mm} - \; \vectprod{\prl{\big. \ctrd{\vectbf{\state}|Q} \sqz{-} \ctrdmid{D,\treeC}{\vectbf{\state}}\!}\!}{  \frac{\ctrdsep{D,\treeC}{\vectbf{\state}}}{\norm{\ctrdsep{D,\treeC}{\vectbf{\state}}}} }, \!\!   
\end{align}
then $\vectbf{x}$ also supports $\treeB$, i.e. $\vectbf{x} \in \stratum{\treeB}$, and so $\vectbf{x}$ is a portal configuration, $\vectbf{x} \in \portal\prl{\treeA, \treeB}$.
\end{lemma}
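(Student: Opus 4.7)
The plan is to verify that $\vectbf{x}$ satisfies the linear-separability conditions of \refeqn{eq.ClosedStrata} for every cluster of $\treeB$, since by hypothesis it already does so for every cluster of $\treeA$. Because $\treeA$ and $\treeB$ are NNI-adjacent with NNI-triplet $\crl{A,B,C}$, the cluster sets and sibling structures of the two trees coincide outside the local swap: every proper ancestor of $P=A\cup B\cup C$ and every cluster strictly inside $A$, $B$, or $C$ induces the same separating hyperplane in both trees, so the associated constraints are inherited automatically from $\vectbf{x}\in\stratum{\treeA}$. Only two separating hyperplanes are genuinely new under $\treeB$: the perpendicular bisector of $\ctrd{\vectbf{x}|A}$ and $\ctrd{\vectbf{x}|B\cup C}$ (the $P$-split in $\treeB$), and the perpendicular bisector of $\ctrd{\vectbf{x}|B}$ and $\ctrd{\vectbf{x}|C}$ (the split of $B\cup C$).

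I would exploit the symmetry hypothesis --- that $\ctrd{\vectbf{x}|A},\ctrd{\vectbf{x}|B},\ctrd{\vectbf{x}|C}$ form an equilateral triangle --- to pin down the geometry of these two new hyperplanes and to verify that each centroid $\ctrd{\vectbf{x}|Q}$, $Q\in\crl{A,B,C}$, lies strictly on the expected side of every relevant hyperplane drawn from both trees (a short direct calculation using a concrete equilateral coordinatization confirms this even when $|B|\neq|C|$, since $\ctrd{\vectbf{x}|B\cup C}$ is the $|B|{:}|C|$-weighted point on the segment between $\ctrd{\vectbf{x}|B}$ and $\ctrd{\vectbf{x}|C}$). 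For each $i\in Q$ and each such hyperplane $H$ with unit normal $\hat{n}_H$ pointing to $Q$'s side, I would decompose the signed perpendicular distance as
\[
d\prl{\vect{x}_i,H} \;=\; d\prl{\ctrd{\vectbf{x}|Q},H} \;+\; \vectprod{\prl{\vect{x}_i - \ctrd{\vectbf{x}|Q}}}{\hat{n}_H}.
\]
Reading $\radiusDD{Q}\prl{\vectbf{x}}$ in \refeqn{eq.RadiusStandardConf} as the minimum, over all pairs $(\treeC,D)$, of the unsigned perpendicular distance from $\ctrd{\vectbf{x}|Q}$ to the hyperplane separating $D$ from $\complementLCL{D}{\treeC}$ (the exclusion $D=P$ is precisely the split on which $Q$ has no unambiguous side), the first summand is $\geq \radiusDD{Q}\prl{\vectbf{x}}$, while Cauchy--Schwarz together with the consensus-ball hypothesis bounds the magnitude of the second summand strictly below $\radiusDD{Q}\prl{\vectbf{x}}$. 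Hence $d\prl{\vect{x}_i,H}>0$ strictly for every such triple, so every stratum inequality of $\treeB$ holds strictly, placing $\vectbf{x}\in\stratumI{\treeB}$; restricting the same argument to the $\treeC=\treeA$ subcollection of pairs in \refeqn{eq.RadiusStandardConf} also gives $\vectbf{x}\in\stratumI{\treeA}$, so $\vectbf{x}\in\stratumI{\treeA}\cap\stratumI{\treeB}=\portal\prl{\treeA,\treeB}$ by \refdef{def.portal}.

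The main obstacle will be the sign and case analysis underlying \refeqn{eq.RadiusStandardConf}: for each pair $(\treeC,D)$ entering the minimum one must verify that the signed expression appearing there really does coincide with the unsigned perpendicular distance from $\ctrd{\vectbf{x}|Q}$ to the bisecting hyperplane between $D$ and $\complementLCL{D}{\treeC}$ (so the minimum is genuinely a positive number and the consensus balls are non-degenerate), and that the enumeration over $\prl{\treeC,D}\in\crl{\treeA,\treeB}\times\prl{\crl{Q,\parentCL{Q,\treeC}}\setminus\crl{P}}$ captures every hyperplane that can threaten stratum membership in either tree. The symmetry hypothesis is essential: although the weighted centroid $\ctrd{\vectbf{x}|B\cup C}$ always lies on the segment through $\ctrd{\vectbf{x}|B}$ and $\ctrd{\vectbf{x}|C}$, only under the equilateral-triangle assumption do the new $\treeB$-hyperplanes stand in the transparent metric relationship to the $\treeA$-hyperplanes that the minimization in \refeqn{eq.RadiusStandardConf} implicitly exploits.
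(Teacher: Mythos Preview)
The paper does not prove this lemma; it is imported verbatim from the conference precursor \cite{arslan_guralnik_kod_WAFR2014}, so there is no in-paper argument to compare your proposal against. That said, your strategy is the natural one and matches what the cited proof must do: reduce to the two genuinely new $\treeB$-hyperplanes (the bisector of $\ctrd{\vectbf{x}|A}$ versus $\ctrd{\vectbf{x}|B\cup C}$ and of $\ctrd{\vectbf{x}|B}$ versus $\ctrd{\vectbf{x}|C}$), then use the centroid-plus-deviation decomposition together with the consensus-ball bound to push every agent to the correct side.

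Your identification of the obstacle is exactly right, and worth sharpening. As written, \refeqn{eq.RadiusStandardConf} with the leading minus sign does \emph{not} literally return an unsigned distance for every pair $(\treeC,D)$ in the index set: when $D=Q$ one has $\ctrd{\vectbf{x}|Q}-\ctrdmid{Q,\treeC}{\vectbf{x}}=\tfrac{1}{2}\ctrdsep{Q,\treeC}{\vectbf{x}}$, and the displayed expression evaluates to $-\tfrac{1}{2}\norm{\ctrdsep{Q,\treeC}{\vectbf{x}}}<0$, contradicting the paper's own remark that $\radiusDD{Q}(\vectbf{x})>0$ for symmetric configurations. So either the minus sign or the precise index set in \refeqn{eq.RadiusStandardConf} is a transcription slip relative to \cite{arslan_guralnik_kod_WAFR2014}, and your ``read it as the unsigned perpendicular distance'' interpretation is the correct repair. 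If you want your write-up to be self-contained, state explicitly which sign convention you adopt for each $(\treeC,D)$ pair and verify positivity directly from the equilateral-triangle geometry; once that is in place, your Cauchy--Schwarz step goes through without further difficulty.
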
 

\noindent Note that for any symmetric configuration $\vectbf{x} \in \mathtt{Sym}\prl{\treeA, \treeB}$ the consensus ball of each partial configuration of cluster $Q \in \crl{A,B,C}$ always has a nonempty interior, i.e.  $\radiusDD{Q}\prl{\vectbf{\state}}> 0$ \cite{arslan_guralnik_kod_WAFR2014} --- see \reffig{fig.PortalConf}.

In the following, we first describe how we relate any given triangle to an equilateral triangle using   Napoleon transformations, and then define our portal map. 

\medskip

\subsubsection{Napoleon Triangles} 
\label{sec.Napoleon}

We recall a theorem of geometry describing how to create an equilateral triangle from an arbitrary triangle: 
construct, either all outer or all inner, equilateral triangles at the sides of a triangle in the plane containing the triangle, and so centroids of the constructed equilateral triangles form another equilateral triangle in the same plane, known as the ``\emph{Napoleon triangle}" \cite{coxeter1996geometry} --- see \reffig{fig.Napoleon}. 
We will refer to this construction as the Napoleon transformation, and we find it convenient to define the \emph{double outer Napoleon triangle} as the equilateral triangle resulting from two concatenated outer Napoleon transformations of a triangle. 
Let $\NT : \R^{3d} \rightarrow \R^{3d }$ denote the double outer Napolean transformation, see \cite{ArslanEtAl_Techreport2013} for an explicit form of $\NT$.     

\begin{figure}[h]
\centering
\includegraphics[width=0.23\textwidth]{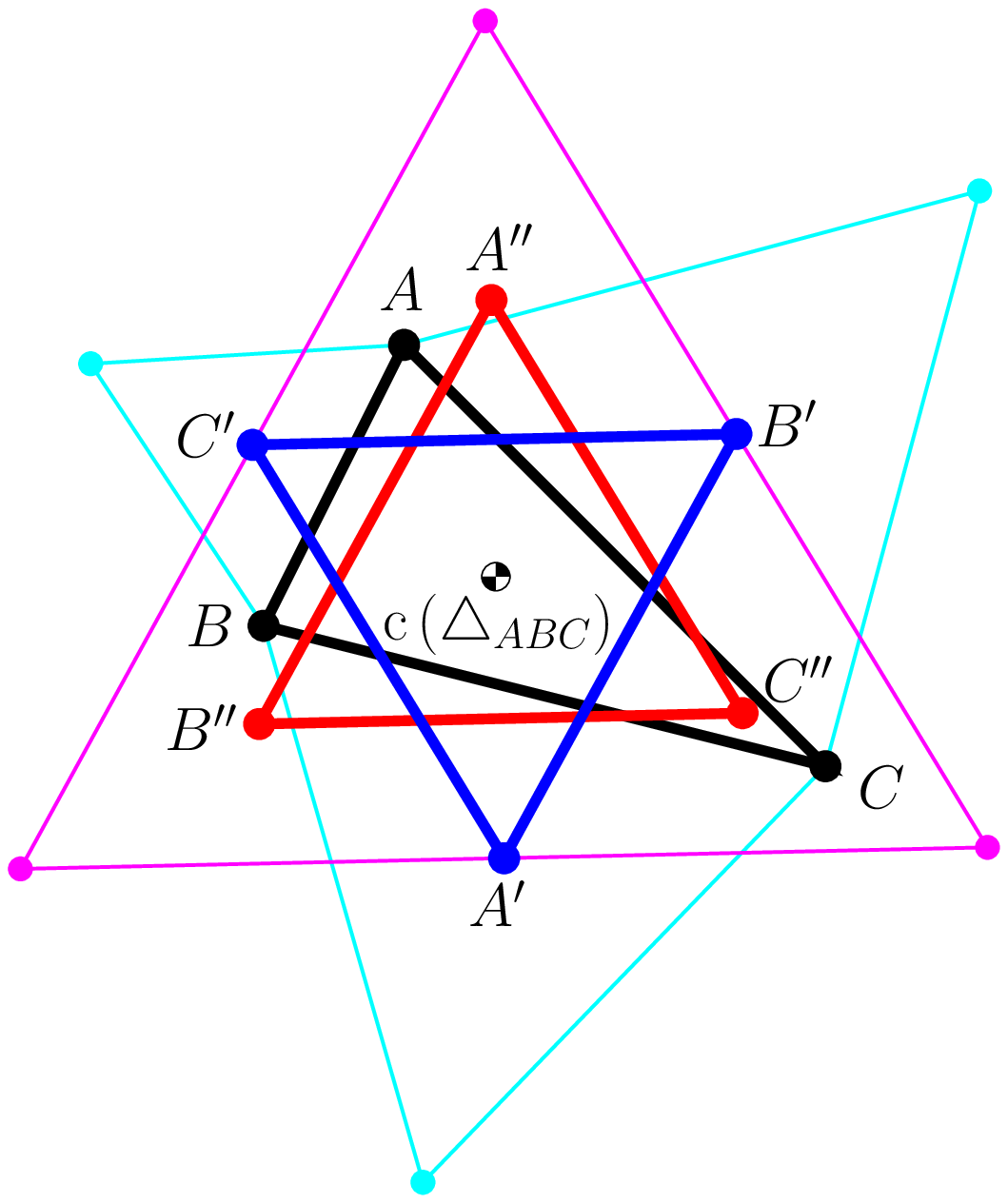} 
\vspace{-1mm}
\caption{Outer Napoleon Triangles  $\bigtriangleup_{A'B'C'}$ and  $\bigtriangleup_{A''B''C''}$ of $\bigtriangleup_{ABC}$ and  $\bigtriangleup_{A'B'C'}$, respectively, and  $\bigtriangleup_{A''B''C''}$  is referred to as the double outer triangle of $\bigtriangleup_{ABC}$. Note that centroids of all triangles coincides, i.e. $\ctrd{\bigtriangleup_{ABC}} = \ctrd{\bigtriangleup_{A'B'C'}} = \ctrd{\bigtriangleup_{A''B''C''}}$.
}
\label{fig.Napoleon}
\end{figure}

The NNI-triplet $\crl{A, B, C}$ defines  an associated triangle with distinct vertices for each configuration,  $ \bigtriangleup_{A, B, C} : \stratum{\treeA} \rightarrow \confspace{\R^d,\brl{3}, \vectbf{0}}$, 
\begin{align}
\bigtriangleup_{A, B, C}\prl{\vectbf{\state}} \ldf \tr{\threevecT{ \ctrd{\vectbf{\state} | A}}{\ctrd{\vectbf{\state}|B}}{\ctrd{\vectbf{\state}|C}}}.
\end{align}
The double outer Napolean tranformation of $\bigtriangleup_{A, B, C}\prl{\vectbf{\state}}$ returns  symmetric  target locations for  $\ctrd{\vectbf{\state}|A}$,  $\ctrd{\vectbf{\state}|B}$ and $\ctrd{\vectbf{\state}|C}$, and the corresponding displacement of $\ctrd{\vectbf{x}|P}$, denoted 
$\NToff{A,B,C} : \cRdJr \rightarrow \R^d$, is given by the formula\footnote{Here, $\mat{I}_{d}$ is the $d \times d$ identity matrix, and $\mat{1}_k$ is the $\R^k$ column vector of all ones. Also, $\otimes$ and $\cdot$ denote the Kronecker product and the standard array product, respectively.}
\begin{align} 
\NToff{ A,B,C }\prl{\vectbf{\state}} \ldf \ctrd{\vectbf{\state} | P}
-    \Gamma \cdot \NT \circ
\bigtriangleup_{A,B,C}\prl{\vectbf{\state}},  \label{eq.napoleonoffset}
\end{align} 
where $\Gamma := \frac{1}{ \card{P} } \threevecT{\card{A}}{\card{B}}{\card{C}}  \otimes \mat{I}_d \in \R^{d \times 3d}$, 
and the vertices of the associated equilateral triangle with compensated offset of $\ctrd{\vectbf{x}|P}$  are\addtocounter{footnote}{-1}\footnotemark
\begin{align} \label{eq.SymCentroids}
\tr{\threevecT{c_{A}}{c_{B}}{c_C} \hspace{-0.5mm}} \!\ldf  \NT \sqz{\circ} \bigtriangleup_{A,B,C} \prl{\vectbf{\state}} \sqz{+} \mat{1}_3 \sqz{\otimes} \NToff{A,B,C} \prl{\vectbf{\state}}\!. \!\!
\end{align}
%

\smallskip

\subsubsection{Portal Maps} 
\label{sec.Portal}

We now define a portal map, $\portalconf{}: \stratum{\treeA} \rightarrow \portal\prl{\treeA,\treeB}$, to be  
\begin{align}\label{eq.portalconf}
\hspace{-1mm}\portalconf{}\prl{\vectbf{\state}} \sqz{\sqz{\ldf}} 
\left \{
\begin{array}{@{}l@{}@{}l@{}}
\vectbf{\state} & \text{, if } \vectbf{\state} \sqz{\in} \portal\prl{\treeA,\treeB}\!,\\
 \prl{\portalmerge{} \sqz{\circ} \portalscale{} \sqz{\circ} \portalcenter{}} \prl{\vectbf{\state}}  & \text{, otherwise,}
\end{array}
\right. \!\! \!
\end{align} 
where  $\portalcenter{} : \stratum{\treeA} \rightarrow \mathtt{Sym}\prl{\treeA, \treeB}$
rigidly translates  the partial configurations, $\vectbf{x}|A$, $\vectbf{x}|B$ and $\vectbf{x}|C$, to the new centroid locations, $c_A$, $c_B$ and $c_C$   \refeqn{eq.SymCentroids}, respectively, yielding a symmetric configuration, 
\begin{align}
\hspace{-1mm}\portalcenter{}\prl{\vectbf{x}} \sqz{\ldf} \left \{
\begin{array}{@{}l@{}@{}l@{}}
\vect{\state}_i & \text{, if } i \sqz{\not \in}  P,\\
\vect{\state}_i \sqz{-} \ctrd{\vectbf{\state}|Q} \sqz{+} \vect{c}_{Q}  & \text{, if } i \hspace{-0.6mm}\in \hspace{-0.6mm} Q, Q \hspace{-0.6mm} \in \hspace{-0.6mm} \crl{A,B,C}\!,
\end{array}
\right. \!\!\!\!\! \label{eq.portalcenter}
\end{align}
It is important to observe that $\portalcenter{}$ keeps the barycenter of $\vectbf{\state}|P$ fixed, and so separating hyperplanes of the rest of clusters ascending and disjoint with  $P$  are kept unchanged.

\begin{figure*}[t] 
\centering
\begin{tabular}{@{}c@{}@{}c@{}@{}c@{}}

\begin{tabular}{@{}c@{}}
\includegraphics[width=0.3\textwidth]{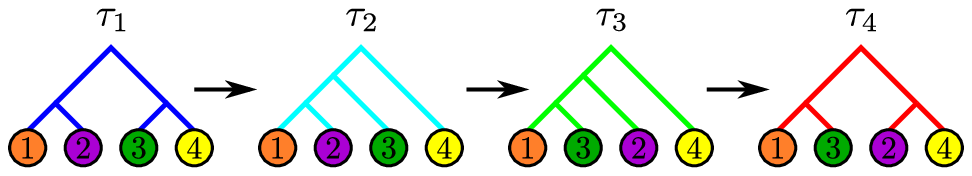} \\[-1mm]
\scalebox{0.7}{(a)}  \\
\includegraphics[width=0.34\textwidth]{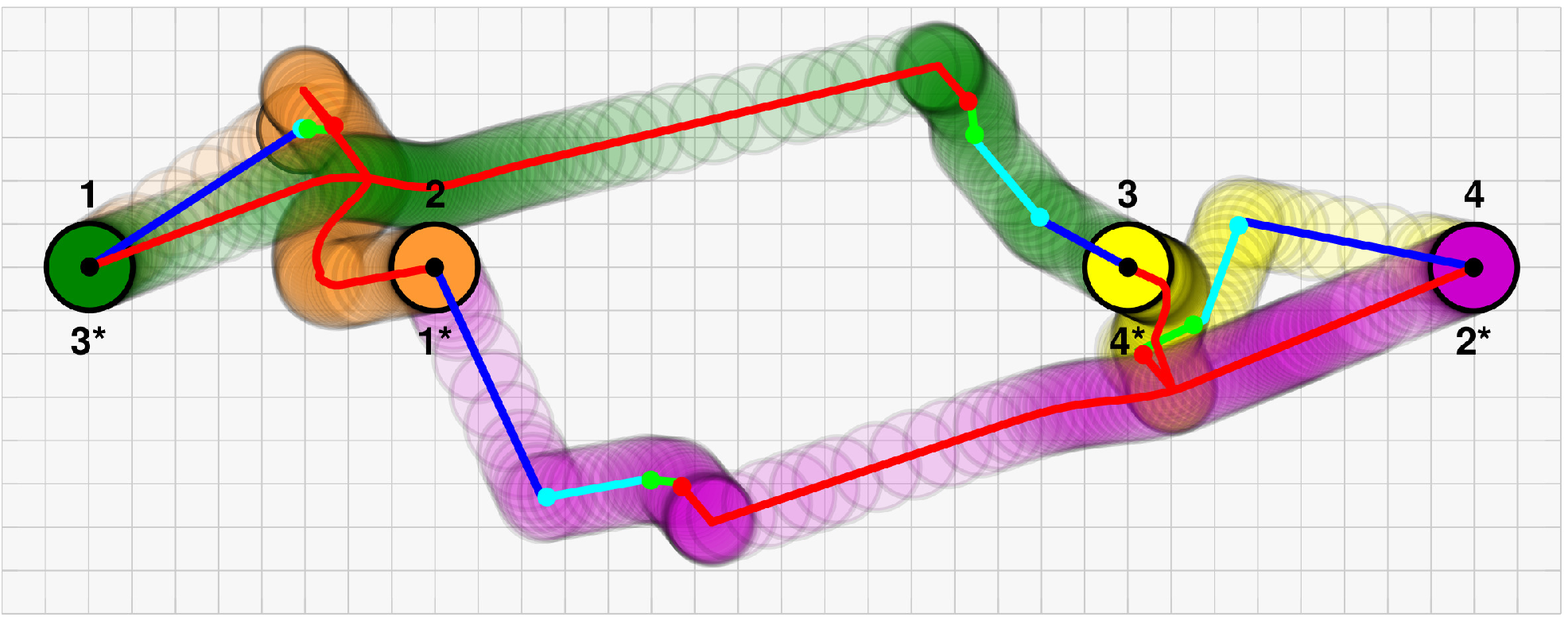} \\[-1mm]
\scalebox{0.7}{(b)}
\end{tabular}
&
\begin{tabular}{@{\hspace{1mm}}c@{}}
\includegraphics[width=0.25\textwidth]{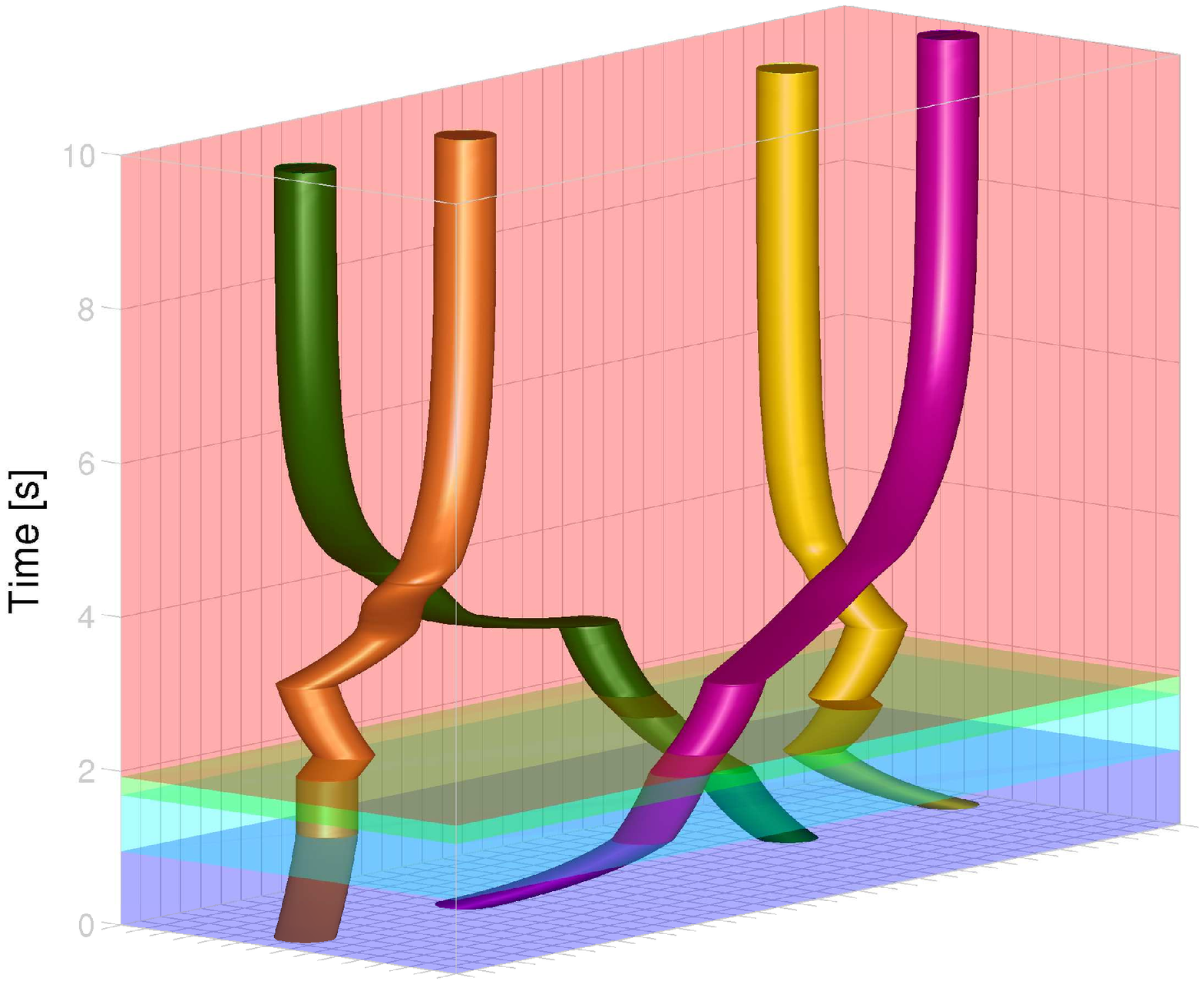}\\[-1mm]
\scalebox{0.7}{(c)} 
\end{tabular}
&
\begin{tabular}{@{}c@{}}
\includegraphics[width=0.4\textwidth]{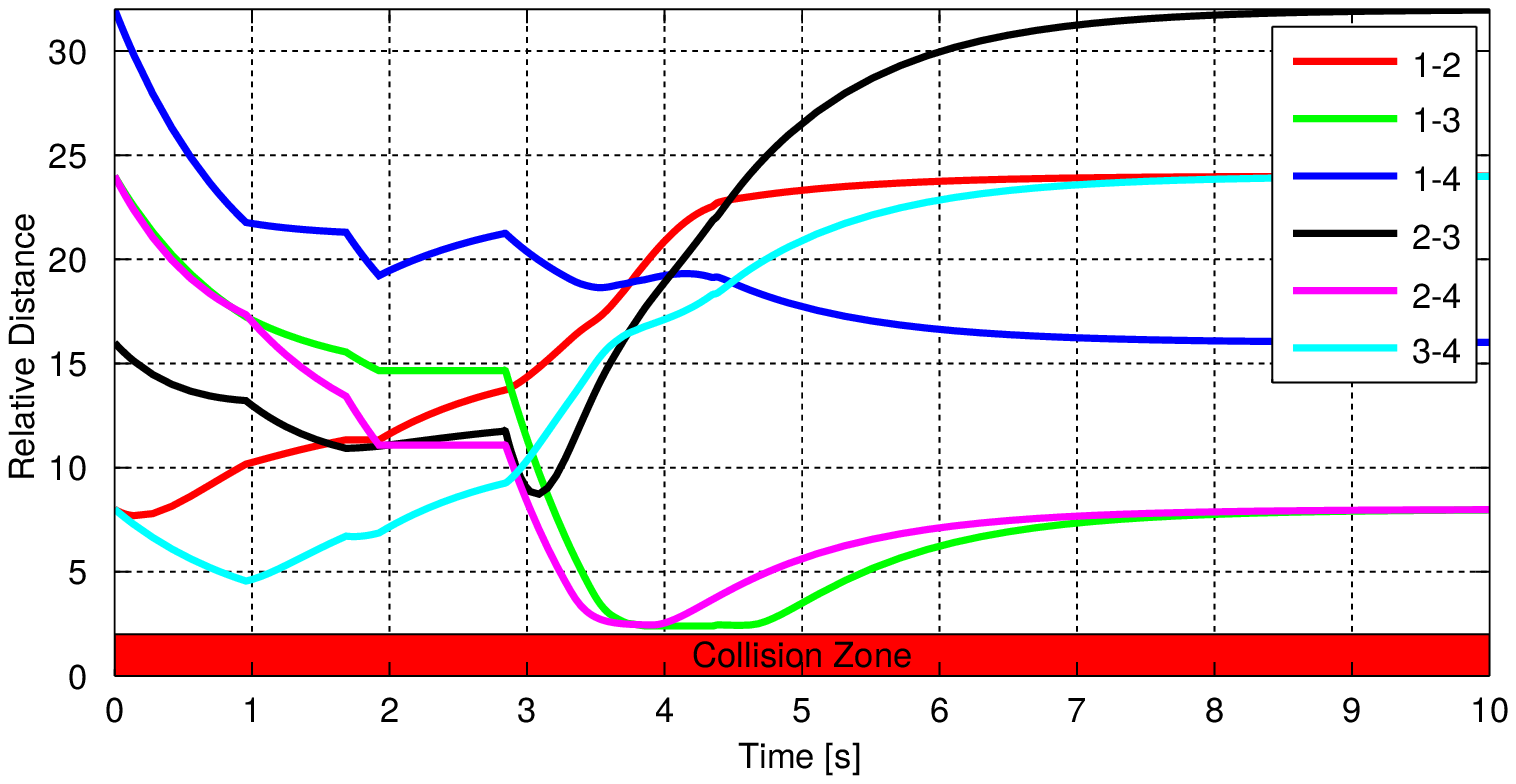} \\
\scalebox{0.7}{(d)}
\end{tabular}
\end{tabular}
\vspace{-3mm}
\caption{An illustrative navigation trajectory of the hybrid dynamics generated by the HNC algorithm for 4 disks in a planar ambient space. 
Disks are placed on the horizontal axis for both the initial and desired configurations in different orders, from left to right $\prl{1, 2 , 3 , 4}$ and  $\prl{3^*,1^*,4^*,2^*}$ at the start and goal, respectively. 
(a) The sequence of trees  associated with deployed local controllers during the execution of the hybrid navigation controller. 
(b) Centroidal trajectory of each disk colored according the active local controller, where $\vectbf{x}_c \in \stratum{\tree_1}\cap \stratum{\tree_2}$, $\vectbf{x}_g \in \stratum{\tree_2}  \cap \stratum{\tree_3} $ and $\vectbf{x}_r \in \stratum{\tree_3} \cap \stratum{\tree_4}$ shown by cyan, green and red dots, respectively, are portal configurations. (c) Space-time curve of disks (d) Pairwise  distances between disks. }
\label{fig.SampleTrajectory1}
\end{figure*}


After obtaining a symmetric configuration in $\mathtt{Sym}\prl{\treeA, \treeB}$, $\portalscale{}:\mathtt{Sym}\prl{\treeA,\treeB} \rightarrow \mathtt{Sym}\prl{\treeA,\treeB}$ rigidly translates each partial configuration, $\vectbf{\state}|A$, $\vectbf{\state}|B$ and $\vectbf{\state}|C$, to scale and fit into the corresponding consensus ball so that the new configuration simultaneously support both  subtrees of $\treeA$ and $\treeB$ rooted at $P$,
\begin{align}\label{eq.portalscale}
\hspace{-1mm}\portalscale{}\prl{\vectbf{\state}} &\sqz{\ldf} \!
\left \{
\begin{array}{@{}l@{}l}
\vect{x}_i, & \text{, if } i \not \in P 
\\
\vect{x}_i \sqz{+} \scaleconst \! \cdot \prl{\big.\ctrd{\vectbf{x}|Q} \sqz{-} \ctrd{\vectbf{x}|P}\!} & \text{, if } 
\begin{array}{@{}c@{}}
 i \sqz{\in} Q, \\ 
 \! Q \sqz{\in} \crl{A,B,C}\!,
 \end{array}
\end{array}
\right . \!\!\!\!\!\!\!\!
\end{align}
where $\scaleconst \in [0,\infty)$ is a scale parameter defined as
{\small
\begin{align}
\scaleconst \ldf \max_{Q \in \crl{A,B,C}} \max \prl{\frac{\radiusCL{}\prl{\vectbf{x}|Q} + \alpha}{\radiusDD{Q}\prl{\vectbf{\state}}}, 1} - 1.
\end{align}
}%
Here, $\alpha > 0$ is a safety margin as used in \refeqn{eq.Alpha}, and  $\radiusCL{}\prl{\vectbf{\state}|Q}$ \refeqn{eq.ConfRadius} denotes the centroidal radius of  partial configuration $\vectbf{\state}|Q$ and $\radiusDD{Q}\prl{\vectbf{\state}}$ \refeqn{eq.RadiusStandardConf} is the radius of its consensus ball. 
Note that $\portalscale{}$ preserves the symmetry of the configurations, i.e. centroids  $\ctrd{\vectbf{x}|A}$, $\ctrd{\vectbf{x}|B}$ and $\ctrd{\vectbf{x}|C}$ still form an equilateral triangle after the mapping, and lefts the barycenter of $\vectbf{\state}|P$ unchanged.


Finally, $\portalmerge{}:\mathtt{Sym}\prl{\treeA,\treeB} \rightarrow \mathtt{Sym}\prl{\treeA,\treeB}$ iteratively translates and merges partial configurations of common complementary clusters of $\treeA$ and $\treeB$, in a bottom-up fashion starting at $P$,  to simultaneously support both hierarchies $\treeA$ and $\treeB$, 
\begin{align}\label{eq.portalmerge}
\portalmerge{}\prl{\vectbf{x}} \ldf \portalmerge{P}\prl{\vectbf{x}},
\end{align}
where for any $I \in \crl{P} \cup \ancestorCL{P,\treeA}$
\begin{align}\label{eq.portalmergerecursion}
\portalmerge{I}\prl{\vectbf{x}} \sqz{\ldf}  \left \{
\begin{array}{@{}l@{}l}
\vectbf{x} & \text{, if }  I = \indexset, \\
(\portalmerge{\parentCL{I,\tree}} \circ \portalseparate{I})\prl{\vectbf{x}} & \text{, otherwise.}
\end{array} 
\right.
\end{align}
Here,  $\portalseparate{I}$ separates complementary clusters $I$ and $ \complementLCL{I}{\treeA}$ such that the clearance between every agent in $I \cup \complementLCL{I}{\treeA}$ and  the associated separating hyperplane is at least $\alpha$ units (i.e. if  $\hat{\vectbf{x}} = \portalseparate{I}\prl{\vectbf{x}}$ for some $\vectbf{x} \in \prl{\R^d}^{\indexset}$ with $\ctrdsep{I,\treeA}{\vectbf{x}} \neq 0$, then $\sepmag{k,K,\treeA}{\hat{\vectbf{x}}} \geq r_k + \alpha$ for any $k \in K$, $K \in \crl{I, \complementLCL{I}{\treeA}}$): for any $j \in \indexset$  
{
\begin{align}\label{eq.portalseparation}
\hspace{-2mm}\portalseparate{I}\prl{\vectbf{x}}_{\!j} \sqz{\ldf} \! \left \{
\begin{array}{@{}l@{}l}
\vectbf{x}_j & \text{, if } j \sqz{\not \in} \parentCL{I,\treeA}\!,\\
\vectbf{x}_j \sqz{+} 2 \lambda \frac{|\complementLCL{K}{\treeA}|}{\card{\parentCL{K,\treeA}}}\!\frac{\ctrdsep{K,\treeA}{\vectbf{x}}}{\norm{\ctrdsep{K,\treeA}{\vectbf{x}}}}  & \text{, if } 
\begin{array}{@{}c@{}}
j \in K, \\ 
\! K \sqz{\in} \!\crl{\!I, \complementLCL{I}{\treeA}\!}\!,
\end{array}
\end{array}
\right. \!\!\! \!\! \!\! \!
\end{align}
}%
where  the required amount of centroidal separation, $\lambda \in [0, \infty)$, is given by
{\small
\begin{align}
 \lambda \ldf \!\!  \max_{\substack{k \in K \\ K \in \crl{I, \complementLCL{I}{\treeA}}}} \!\! \max \prl{\big.\! -\prl{\sepmag{k,K,\treeA}{\vectbf{x}} \sqz{-} r_k \sqz{-} \alpha}, 0}.
\end{align}
}%
Note that since $\ctrd{\vectbf{x}|P} = \ctrd{\hat{\vectbf{x}}|P}$ for any $\vectbf{x} \in \stratum{\treeA}$ and $\hat{\vectbf{x}} = \prl{\portalscale{} \circ \portalcenter{}}\prl{\vectbf{x}}$, we always have  $\ctrdsep{I,\treeA}{\hat{\vectbf{x}}} \neq 0$ for any $I \in \crl{P} \cup \ancestorCL{P, \treeA}$, which is required for $\portalseparate{I}$ to be well defined. 
Further, using \refeqn{eq.portalseparation}, one can verify that $\ctrd{\vectbf{x}|\parentCL{I,\treeA}} = \ctrd{\hat{\vectbf{x}}|\parentCL{I,\treeA}}  = \ctrd{\tilde{\vectbf{x}}|\parentCL{I,\treeA}} $ for $\tilde{\vectbf{x}} = \portalseparate{I}\prl{\hat{\vectbf{x}}}$, and so $\ctrdsep{A, \treeA}{\tilde{\vectbf{x}}} \neq 0$ for any $A \in \ancestorCL{I,\treeA}$, which guarantees that recursive calls of $\portalseparate{I}$ in the computation of $\portalconf{}$  are always well-defined.

We find  it useful to summarize some critical properties of the portal map for the strata of $\HCkmeans$ as:

\begin{theorem} \label{thm.Portal}
The NNI-graph $\NNIgraph_{\indexset}=\prl{\bintreetopspace_{\indexset}, \NNIedgeset}$ is a subgraph of the \emph{$\HCkmeans$} adjacency graph $\adjgraph{\indexset} = \prl{\bintreetopspace_{\indexset}, \adjedgeset}$, i.e.
for any pair $\prl{\treeA, \treeB}$ of  NNI-adjacent trees in $\bintreetopspace_{\indexset}$, $\portal\prl{\treeA,\treeB} \neq \emptyset$.
Further, given an edge, $\prl{\treeA, \treeB} \in \NNIedgeset \subset \adjedgeset$, a geometric realization via the map $\portalconf{\prl{\treeA,\treeB}}: \stratum{\treeA} \rightarrow \portal\prl{\treeA, \treeB}$  
\refeqn{eq.portalconf} can be computed in quadratic, $\bigO{\card{\indexset}^2}$, time  with the number of leaves, $\card{\indexset}$. 
\end{theorem}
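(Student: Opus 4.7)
The plan is to prove the two assertions of \refthm{thm.Portal} in sequence. First I would establish non-emptiness of the portal by showing that the map $\portalconf{\prl{\treeA,\treeB}}$ defined in \refeqn{eq.portalconf} is well-defined and actually lands in $\portal\prl{\treeA,\treeB}$; this automatically yields the subgraph claim. Next I would perform a complexity accounting of the three constituent maps $\portalcenter{}$, $\portalscale{}$, and $\portalmerge{}$.

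For the non-emptiness claim, fix an NNI-adjacent pair $\prl{\treeA,\treeB}$ with NNI-triplet $\crl{A,B,C}$ and $P = A\cup B\cup C$, and let $\vectbf{\state}\in\stratum{\treeA}\setminus\portal\prl{\treeA,\treeB}$. I would verify the three stages of the construction in turn. (i) \emph{Centering}: Using the double outer Napoleon identities worked out in \refeqn{eq.napoleonoffset}--\refeqn{eq.SymCentroids}, the map $\portalcenter{}$ in \refeqn{eq.portalcenter} rigidly translates each partial configuration $\vectbf{\state}|Q$, $Q\in\crl{A,B,C}$, so the new cluster centroids form an equilateral triangle \emph{and} the barycenter of $\vectbf{\state}|P$ is preserved. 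Because the motion of $\vectbf{\state}|P$ is a common translation within each sibling cluster and leaves the barycenter of $P$ fixed, every separating hyperplane of $\treeA$ at a cluster either ascending over $P$ or disjoint from $P$ is unchanged; thus the output lies in $\stratum{\treeA}\cap\mathtt{Sym}\prl{\treeA,\treeB}$. (ii) \emph{Scaling}: The radial displacement in \refeqn{eq.portalscale} is chosen so that each partial configuration $\vectbf{\state}|Q$ is contained, with margin $\alpha$ around each disk, in the consensus ball $B_Q$ of \refeqn{eq.RadiusStandardConf}. Invoking the lemma of \cite{arslan_guralnik_kod_WAFR2014} cited above \refeqn{eq.RadiusStandardConf}, the resulting symmetric configuration now supports $\treeB$ locally as well, while still supporting $\treeA$ because $\portalscale{}$ again preserves centroids and ancestral separating hyperplanes. (iii) \emph{Merging}: Since $\treeA$ and $\treeB$ share all clusters ascending from $P$, it suffices to ensure the common splits above $P$ are respected. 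The recursion in \refeqn{eq.portalmerge}--\refeqn{eq.portalseparation} walks up $\ancestorCL{P,\treeA}$ and, at each ancestor $I$, translates $I$ and $\complementLCL{I}{\treeA}$ apart along $\ctrdsep{I,\treeA}{\vectbf{x}}$ (which is nonzero by the invariance of $\ctrd{\vectbf{x}|P}$ established in (i)--(ii)) by enough to achieve a clearance of at least $\alpha$. This preserves all strictly descendant separating hyperplanes (they are built from centroids that move uniformly) and achieves the required separating hyperplane condition for the ancestor itself. Putting (i)--(iii) together, the output supports both $\treeA$ and $\treeB$ with positive clearance at every split, hence lies in $\portal\prl{\treeA,\treeB}\neq\emptyset$.

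For the complexity claim, I would argue level by level. Computing the triangle $\bigtriangleup_{A,B,C}\prl{\vectbf{\state}}$, its double outer Napoleon image, and the offset \refeqn{eq.napoleonoffset} is $\bigO{d}$ after the centroids $\ctrd{\vectbf{\state}|A}$, $\ctrd{\vectbf{\state}|B}$, $\ctrd{\vectbf{\state}|C}$ have been obtained in $\bigO{\card{\indexset}d}$ time. The translations in $\portalcenter{}$ and $\portalscale{}$ each act independently on the $\card{\indexset}$ agents and require only the precomputed centroids and the consensus radii \refeqn{eq.RadiusStandardConf}; the latter involves an inner product against each of the $\bigO{\card{\indexset}}$ ancestors of $Q\in\crl{A,B,C}$ in $\treeA$ and $\treeB$, giving $\bigO{\card{\indexset}^2 d}$ in total. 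The recursion in $\portalmerge{}$ calls $\portalseparate{I}$ once for each $I\in\crl{P}\cup\ancestorCL{P,\treeA}$, that is $\bigO{\card{\indexset}}$ times, and each invocation requires centroids and a bulk translation of $\bigO{\card{\indexset}d}$ work; another $\bigO{\card{\indexset}^2 d}$. Summing gives the stated $\bigO{\card{\indexset}^2}$ bound (treating $d$ as constant).

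The main obstacle I anticipate is the careful bookkeeping in step (iii): I must confirm that the separator vector $\ctrdsep{I,\treeA}{\vectbf{x}}$ remains nonzero across successive applications of $\portalseparate{}$, and that each translation step does \emph{not} disturb splits already achieved lower in the tree or the symmetry of the $A,B,C$ triangle (which was needed so that the output still supports $\treeB$ inside the subtree rooted at $P$). The key observation to deploy is that $\portalseparate{I}$ is an opposite-direction common translation of the two sibling clusters at $I$ by amounts inversely proportional to their sizes, which preserves the parent's centroid and hence all splits above and all local separators inside either sibling. This is exactly the invariance already guaranteed by the cited lemma for the subtree rooted at $P$, and inductively it propagates up to the root, closing the argument.
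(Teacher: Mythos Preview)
Your proposal is correct and follows essentially the same three-stage decomposition ($\portalcenter{}\to\portalscale{}\to\portalmerge{}$) and complexity accounting as the paper's own proof in \refapp{app.Portal}. One small overstatement: the intermediate outputs of $\portalcenter{}$ and $\portalscale{}$ need not lie in $\stratum{\treeA}$ (the split $\{A\cup B,C\}$ internal to $P$ can be temporarily violated), so support for both subtrees rooted at $P$ should be attributed entirely to the consensus-ball lemma after scaling rather than to ``preservation'' of $\treeA$-membership --- but this does not affect the argument, since the lemma delivers support for both $\treeA$ and $\treeB$ simultaneously at that stage.
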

\begin{proof}
See \refapp{app.Portal}. \qed 
\end{proof}

\section{Numerical Simulations}
\label{sec.Simulation}

For the sake of clarity, we first illustrate the behavior of the  hybrid system defined in \refsec{sec.HierNavSphere} for the case  of  four disks moving in a two-dimensional ambient space.\footnote{For all simulations we consider unit disks moving in an ambient plane, i.e. $r_j = 1$ for all $j \in \indexset$, and we set  $\alpha = 0.2$ and $\beta = 1$; and all simulations are obtained through numerical integration of the hybrid dynamics generated by the HNC algorithm (\reftab{tab.HNCAlgorithm})  using the \texttt{ode45} function of MATLAB.}

In order to visualize in this simple setting the most complicated instance  of  collision-free navigation and observe maximal number of transitions between local controllers, we pick the initial, $\vectbf{x} \in \stratum{\tree_1}$, and desired configurations, $\vectbf{x}^* \in \stratum{\tree_4}$, where disks are placed on the horizontal axis and left-to-right ordering of their labels are $\prl{1,2,3,4}$ and $\prl{3^*,1^*,4^*,2^*}$, respectively, and their corresponding clustering trees are $\tree_1 \in \bintreetopspace_{\brl{4}}$ and $\tree_4 \in \bintreetopspace_{\brl{4}}$,  see \reffig{fig.SampleTrajectory1}.

The resultant trajectory of each disk following the hybrid navigation planner in \refsec{sec.HierNavSphere}, the relative distance between each pair of disks and the sequence of trees associated with visited hierarchical strata are shown in \reffig{fig.SampleTrajectory1}. 
Here, the disks start following the local controller associated with $\tree_1$
until they enter in finite time the domain of the following local controller associated with $\tree_2$ at $\vectbf{x}_c \in \stratum{\tree_1} \cap \stratum{\tree_2}$ --- shown by cyan dots in \reffig{fig.SampleTrajectory1}.
After a finite time navigating in $\stratum{\tree_2}$ and $\stratum{\tree_3}$, respectively, the group enters the domain of the goal controller $f_{\tree_4, \vectbf{x}^*}$ (\reftab{tab.HierInvNav}) at $\vectbf{x}_r \in \stratum{\tree_3} \cap \stratum{\tree_4}$ --- shown by red dots in \reffig{fig.SampleTrajectory1}, and $f_{\tree_4, \vectbf{x}^*}$  asymptotically steers the disks to the desired configuration $\vectbf{x}^* \in \stratum{\tree_4}$.       
Finally, note that the total number of binary trees over four leaves is 15; however, our hybrid navigation planner reactively deploys only 4 of them.

We now consider a similar, but slightly more complicated  setting: a group of six disks  in a plane where agents are initially  placed evenly on the horizontal axes and switch their positions at the destination as shown in \reffig{fig:SampleNNITrajectory6-8-16P}(a), which is also  used in \cite{Tanner_Boddu_TRO2012} as an example of  complicated multi-agent arrangements.
While steering the disks  towards the goal, the hybrid navigation planner automatically deploys only 6 local controllers out of the family of 945 local controllers.
The time evolution of the disk is illustrated in \reffig{fig:SampleNNITrajectory6-8-16P}(a).

\begin{figure*}[t]
\centering
\begin{tabular}{@{}c@{\hspace{8mm}}cc@{}}
\raisebox{12mm}{\includegraphics[width=0.33\textwidth]{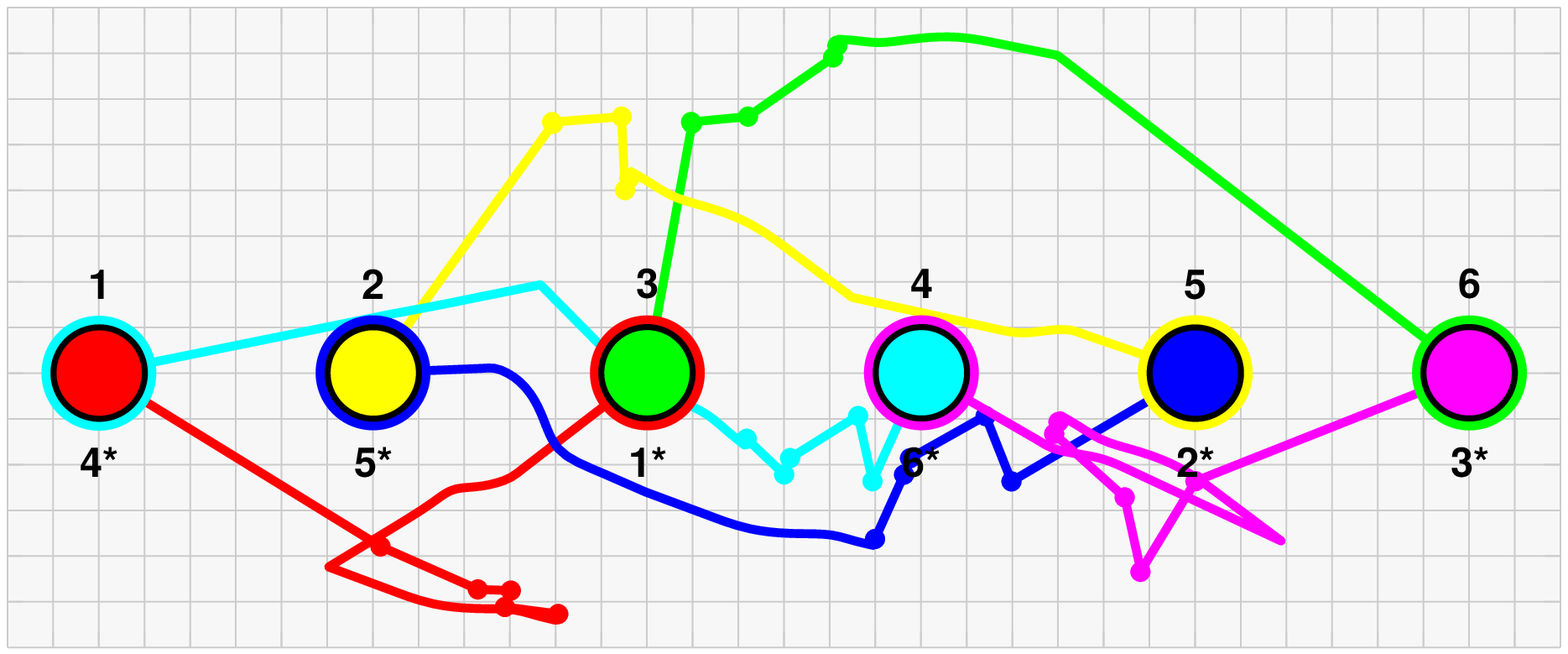}}
&
\raisebox{3mm}{\includegraphics[width=0.215 \textwidth]{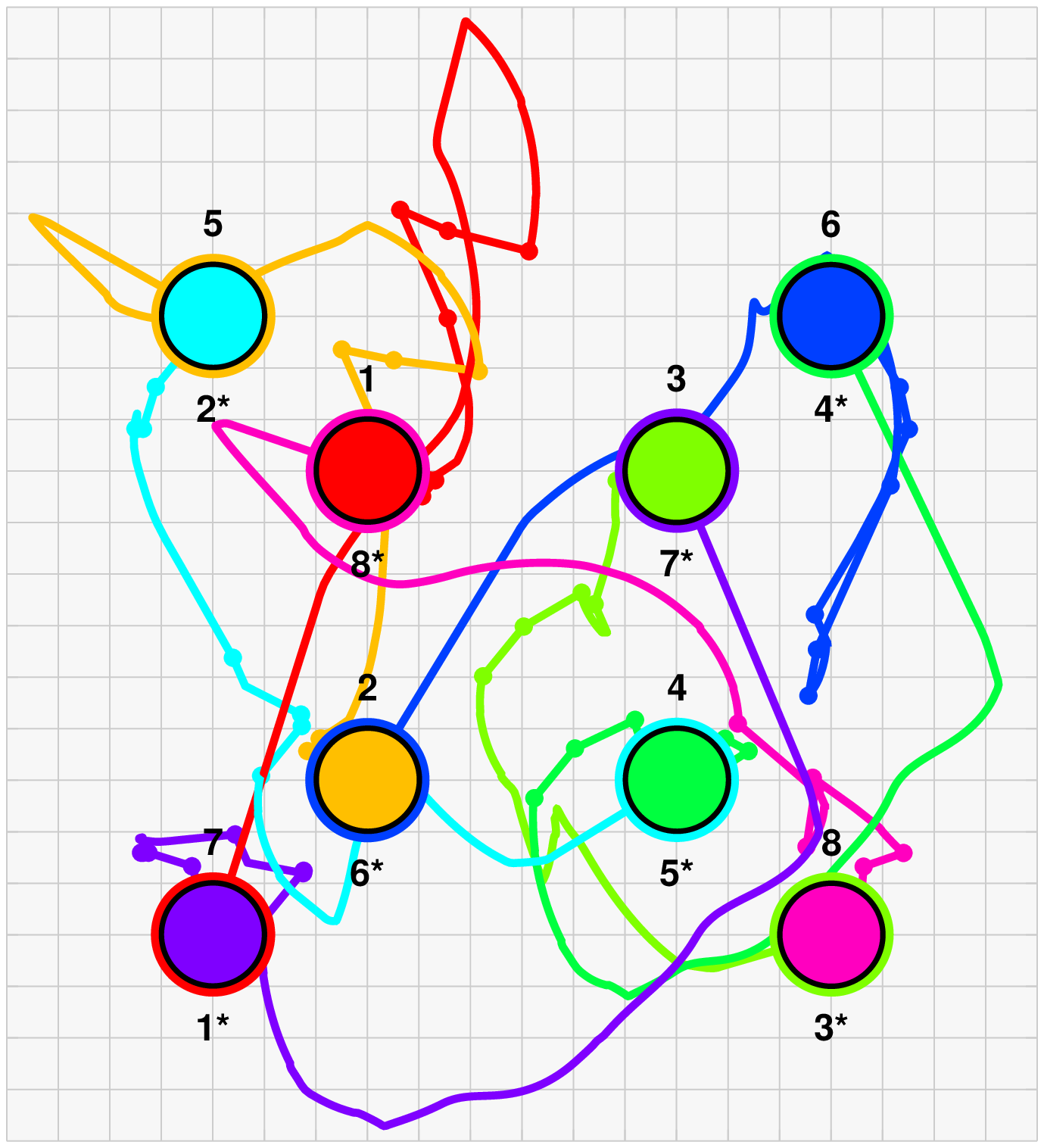}}
&
\includegraphics[width=0.23 \textwidth]{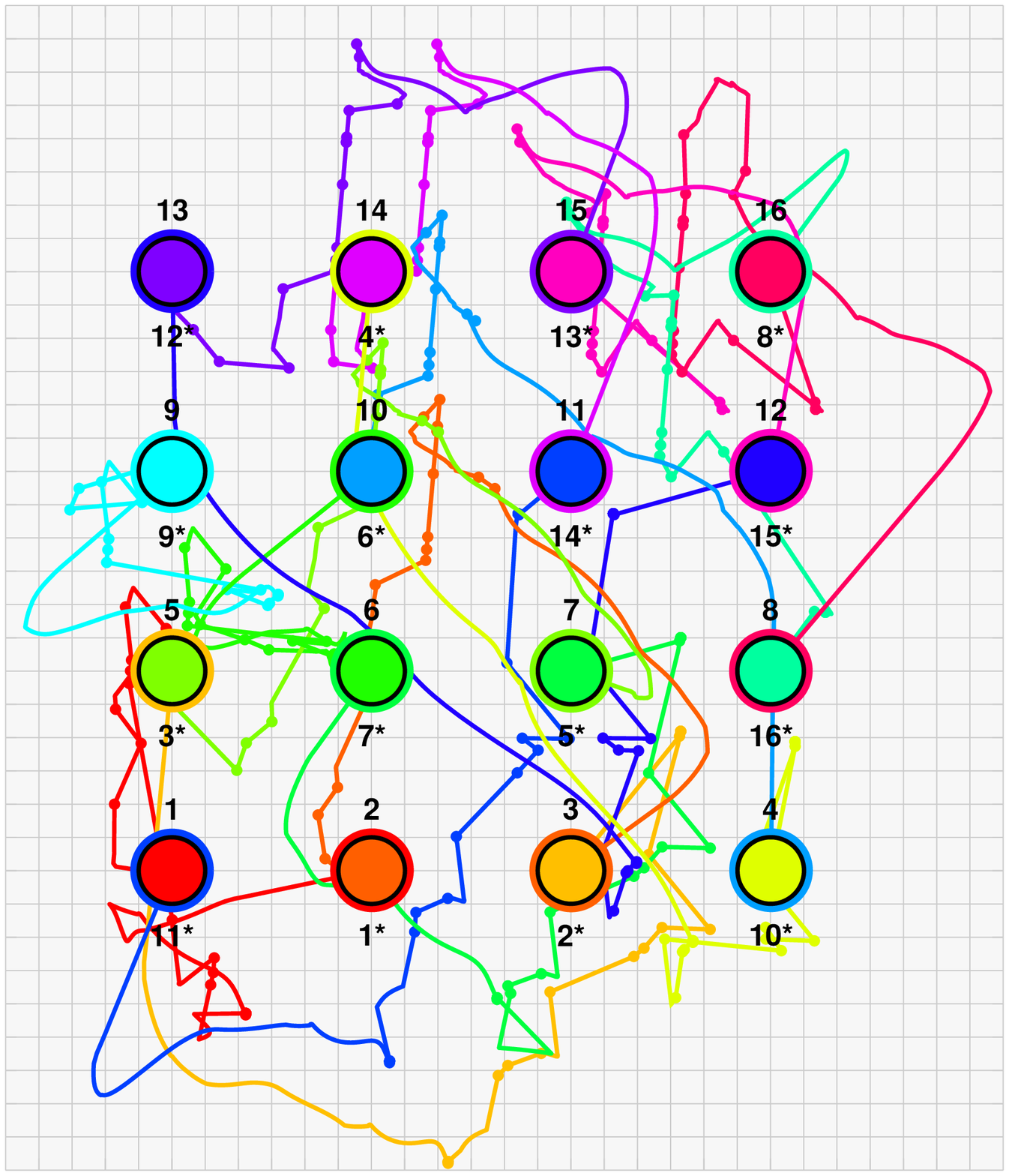}
\\[-2mm]
\raisebox{5mm}{\includegraphics[width=0.3\textwidth]{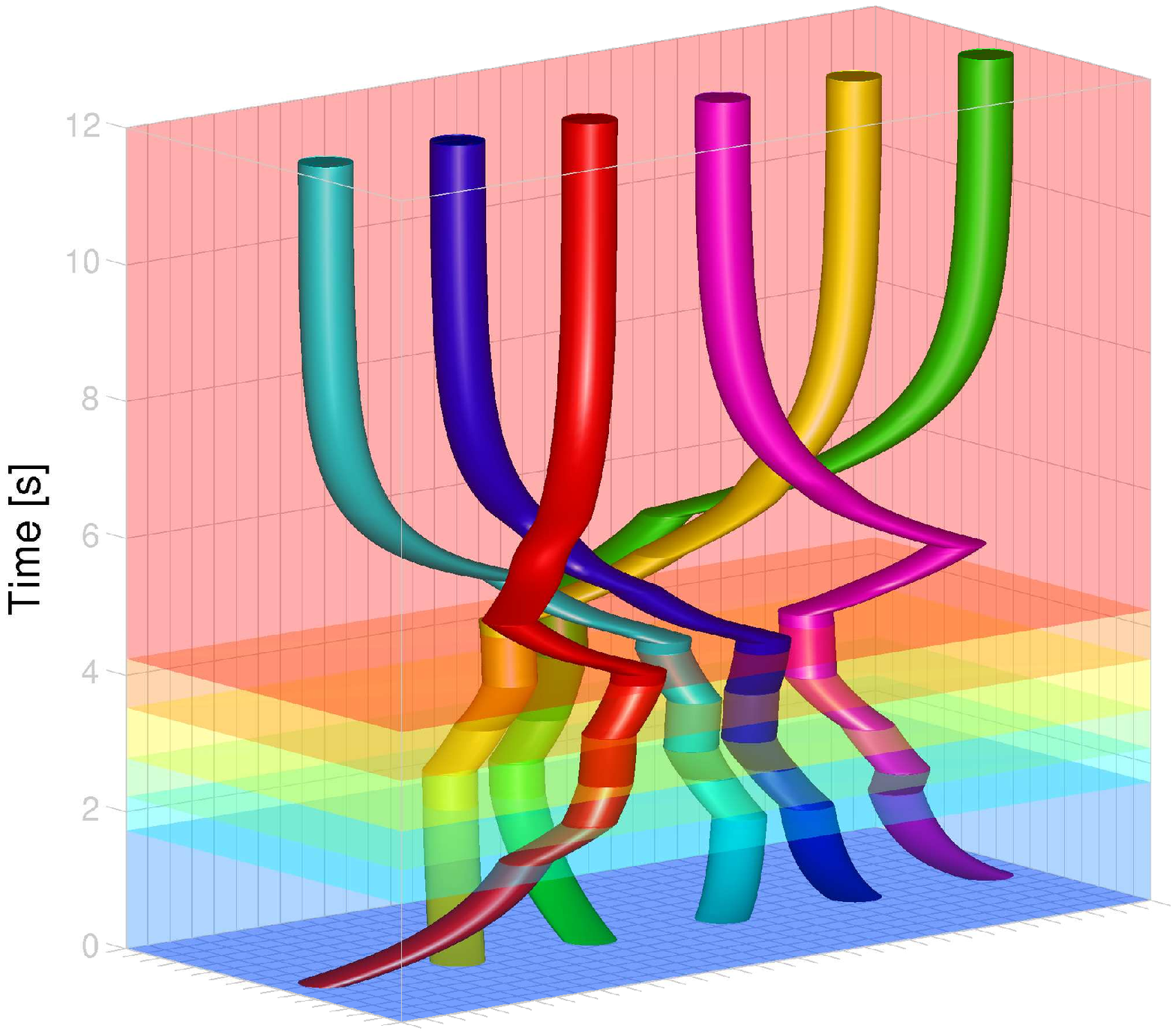}} 
&
\includegraphics[width=0.24 \textwidth]{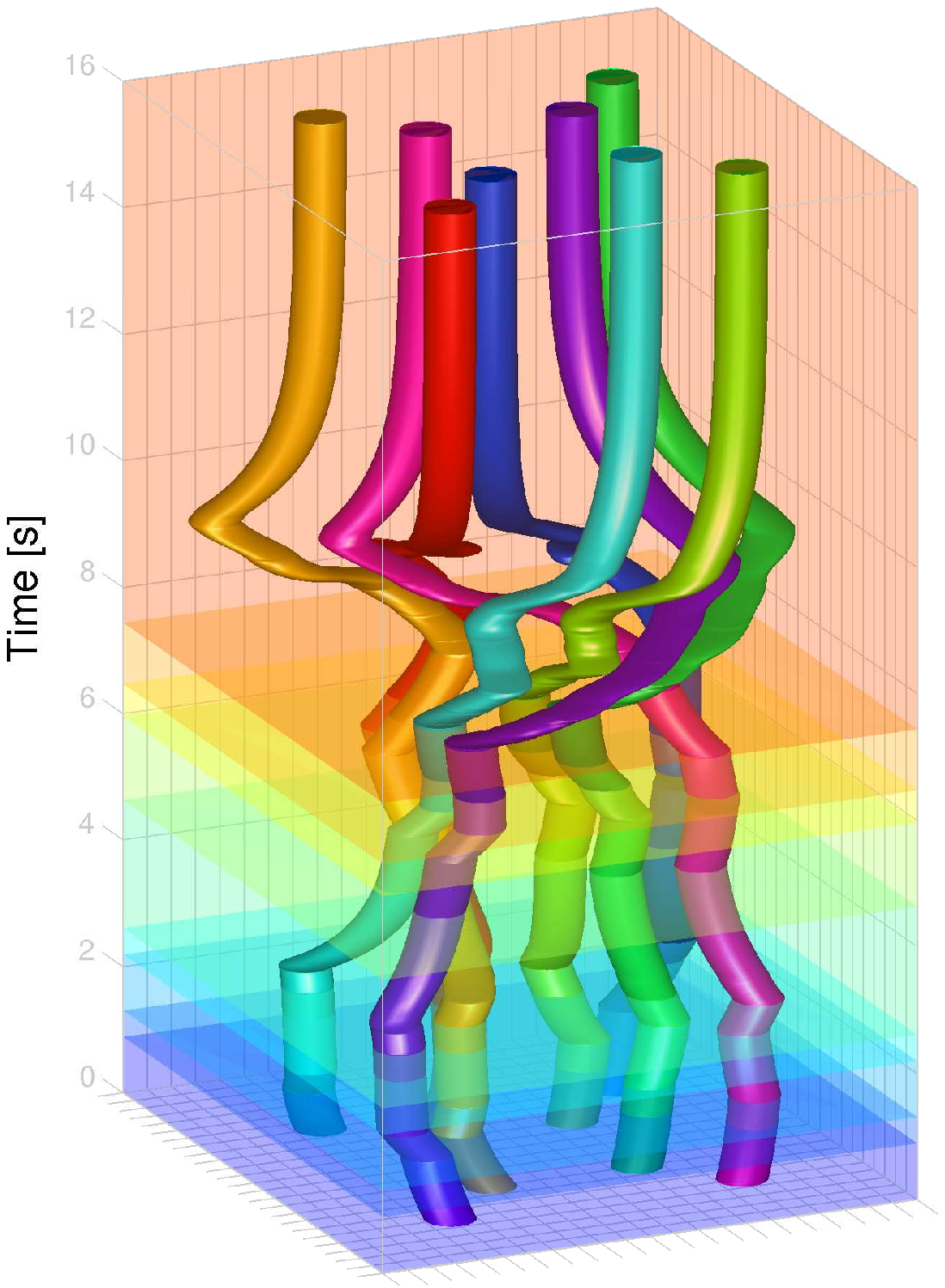} 
&
\raisebox{0mm}{\includegraphics[width=0.30\textwidth]{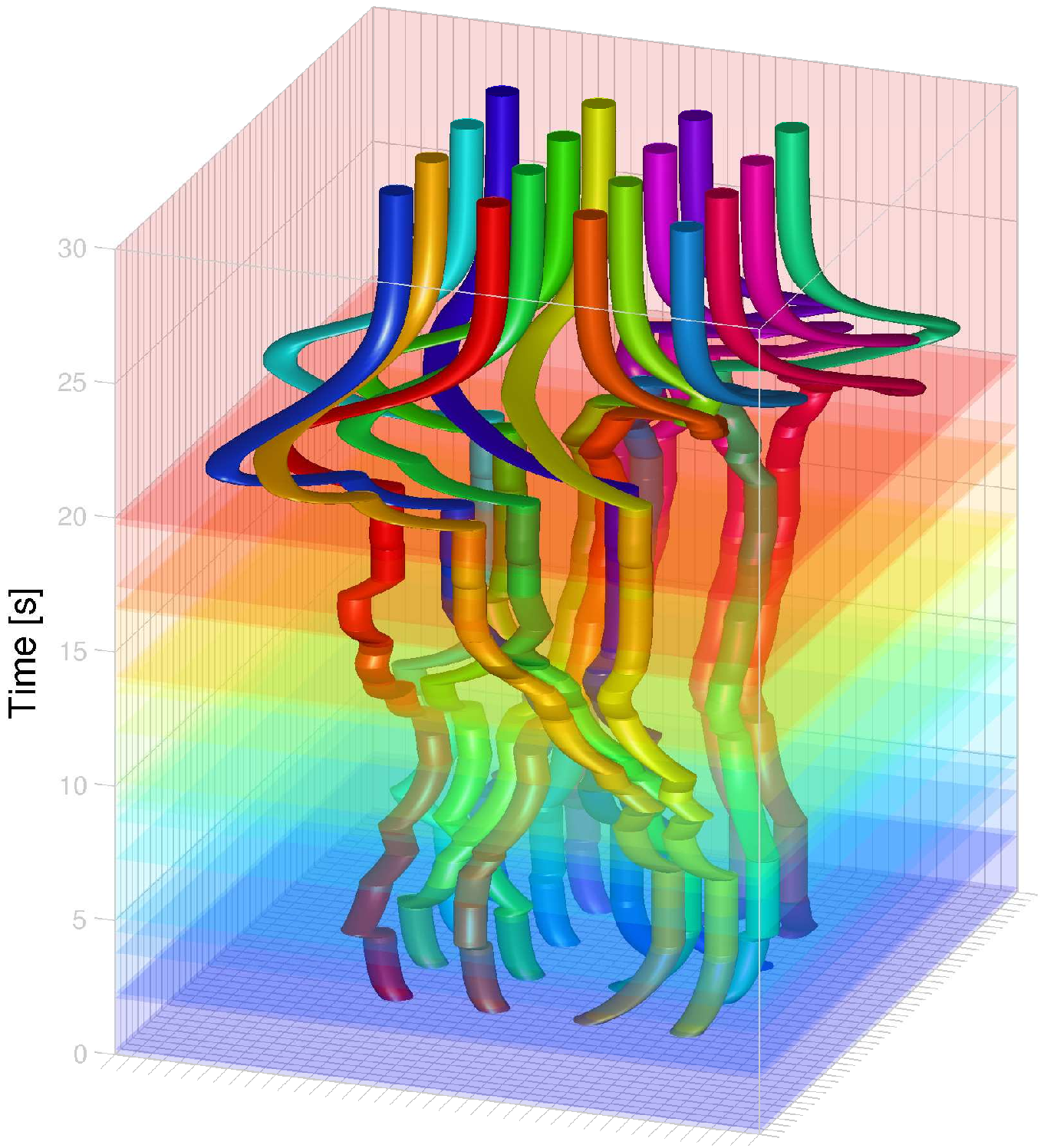}} \\[-1.5mm]
\scalebox{0.7}{(a)} & \scalebox{0.7}{(b)} & \scalebox{0.7}{(c)}
\end{tabular}
\vspace{-1mm}
\caption{Example trajectories of the hybrid vector field planner for  (a) 6,  (b) 8 and (c) 16 disks in a planar ambient space.  
(top) Trajectory  and (bottom) state-time curve of each disk.
Each colored time interval demonstrates the  execution duration of  an activated local controller.
Dots correspond to the portal configurations where transitions between local controllers occur at.
}
\label{fig:SampleNNITrajectory6-8-16P}
\vspace{-1mm}
\end{figure*}

Finally, to demonstrate the efficiency of the deployment policy of our hybrid planner, we separately consider groups of 8 and 16 disks in an ambient plane, illustrated in  \reffig{fig:SampleNNITrajectory6-8-16P}. 
The eight disks are initially located at the corner of two squares whose centroids coincide  and the perimeter of one is twice of the perimeter of the other. 
At the destination, disks switch their locations as illustrated in \reffig{fig:SampleNNITrajectory6-8-16P}(b).
For sixteen disk case, disks are initially placed at the vertices of a 4 by 4 grid, and  their task is to switch their location as illustrated in  \reffig{fig:SampleNNITrajectory6-8-16P}(c).
Although there are a large number of local controllers for the case of groups of 8 and 16 disks ($\card{\bintreetopspace_{\brl{8}}} > 10^5$ and $\card{\bintreetopspace_{\brl{16}}} > 6 \times 10^{15}$), our hybrid navigation planner only deploys 9 and 19 local controllers, respectively.

The number of potentially available local controllers for a group of $n$ disks \refeqn{eq.NumTree} grows super exponentially with $n$. 
On the other hand, if agents have perfect sensing and actuation modelled as in the present paper, the hybrid navigation planner automatically deploys at most
$\frac{1}{2}\prl{n-1}\prl{n-2}$ local controllers \cite{ArslanEtAl_NNITechReport2013}, illustrating the computational efficiency of our construction.

\section{Conclusion}
\label{sec.Conclusion}

In this paper, we introduce a novel application of clustering to the problem of coordinated robot navigation.
The notion of hierarchical clustering offers a natural abstraction for ensemble task encoding and control in terms of precise yet flexible organizational specifications at different resolutions.
Based on this new abstraction, we propose a provably correct generic hierarchical navigation framework for collision-free motion design towards any given destination via a sequence of hierarchy preserving controllers.
For the 2-means divisive hierarchical clustering \cite{savaresi_boley_ICDM2001}, based on
a topological characterization of the underlying space, we present a centralized online (completely reactive) and computationally efficient instance of our hierarchical navigation framework for disk-shaped robots, which generalizes to an arbitrary number of disks and ambient space dimension. 

Specifically, matching the component problem statements of \refsec{sec.HierNav} to their subsequent resolution:  
we address \refprob{prob.HierarchyInvariant} in  \refthm{thm.HierInvNav} (guaranteeing that the construction of \reftab{tab.HierInvNav} results in a hierarchy invariant vector field planner); 
we address \refprob{prob.Transition}  in \refthm{thm.Transition} (guaranteeing that the construction of \reftab{tab.NNIControl} results in a reactive strategy that finds, given any non-goal tree, an edge in the graph of all hierarchies  leading to a new tree that is closer to the desired goal hierarchy); 
and we address  \refprob{prob.Portal} in \refthm{thm.Portal} (providing a geometric realization in the configuration space of the combinatorial edge toward the physical goal). 
The efficacy  of this overarching strategy is guaranteed by \refthm{thm.HierNavAlg} (proving the correctness of these problems  steps and their resolutions as presented in \reftab{tab.HNCAlgorithm}).

Work now in progress targets  more practical settings in the field of robotics including navigating around obstacles in  compact spaces and a distributed implementation of our navigation framework.
We are also exploring a number of application settings for hierarchical formation specification and control including problems of perception, perceptual servoing, anomaly detection and automated exploration and various problems of multi-agent coordination. 

In the longer term, especially when the scalability and efficiency of hierarchical protocols in sensor networks for information routing and aggregation is of concern \cite{akkaya_younis_AHN2005},  these methods suggest a promising unifying framework to simultaneously  handle control, communication and information aggregation (fusion) in multi-agent systems.


%

\appendices

\section{Properties of The Hierarchy Invariant Vector Field}
\label{app.HierInvNav}

Although the recursive definition of the hierarchy preserving navigation policy, $\f$, in \reftab{tab.HierInvNav} expresses an efficient encoding of intra-cluster and inter-cluster interactions and dependencies of individuals, which we suspect will prove to have value for  distributed settings, it yields a discontinuous vector field complicating the qualitative (existence, uniqueness, invariance and stability) analysis, as anticipated from the proof structure of \refthm{thm.HierInvNav} in \reftab{tab.ProofStructure}.
We find it convenient to proceed instead by developing an alternative, equivalent representation of this vector field. 
Namely, we introduce a family of continuous and piecewise smooth covering vector fields whose application over a partition (derived from their covering domains) of the stratum yields a continuous piecewise smooth flow
(identical to  that generated by the original construction) which is considerably easier to analyze because it admits an interpretation as a sequential composition \cite{Burridge_Rizzi_Koditschek_1999} over the covering family.

We find it useful to first observe that the original construction yields a well defined and effectively computable function. 
\begin{proposition}\label{prop.HierInvNavProperty}
The recursion in \reftab{tab.HierInvNav} results in a well defined function, $\f: \stratum{\tree} \rightarrow \prl{\R^d}^{\indexset}$, that can be computed for each configuration $\vectbf{x} \in \stratum{\tree}$ in $\bigO{\card{\indexset}^2}$ time.
\end{proposition}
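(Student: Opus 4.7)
The plan is to separate two concerns: first, that the recursive procedure $\fhat$ returns a well-defined value for every input (so that $\f : \stratum{\tree} \rightarrow \prl{\R^d}^{\indexset}$ is genuinely a function), and second, that its running time is $\bigO{\card{\indexset}^2}$.

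For well-definedness, structural induction on $\tree$ along the post-order traversal is natural. In the base case of a singleton cluster $I$, the observations following \refeqn{eq.SetA} and \refeqn{eq.SetR} give $\setA\prl{I} = \setH\prl{I} = \stratum{\tree}$, so the first branch of the conditional always fires and returns the closed-form expression \refeqn{eq.AttractiveField}. At a nonsingleton $I$, the three branches partition $\stratum{\tree}$ into the disjoint sets $\setA\prl{I}$, $\stratum{\tree} \setminus \setH\prl{I}$, and $\setH\prl{I} \setminus \setA\prl{I}$, so exactly one branch is executed. The first two return the closed-form expressions \refeqn{eq.AttractiveField}, \refeqn{eq.SeparationField}, and the third relies on the inductive hypothesis applied to the two children to supply $\hat{\vectbf{u}}_L$ and $\hat{\vectbf{u}}_R$ before applying $\fH$ via \refeqn{eq.SplitPreservingField}. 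The denominators $\norm{\ctrdsep{K}{\vectbf{x}}}$ appearing in $\fH$ and in the predicates of \refeqn{eq.SetA} and \refeqn{eq.SetR} never vanish on a stratum of $\HCkmeans$, since sibling subclusters are linearly separated (\refsec{sec.HierStrata}) and hence have distinct centroids; this furnishes the regularity needed for each recursive expression to be well defined.

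For the complexity bound, the approach is to count arithmetic operations cluster by cluster and aggregate over the $2\card{\indexset} - 1$ clusters of $\tree$. Within a single cluster $I$, computing the centroid-like quantities in \refeqn{eq.centroid}-\refeqn{eq.centroidmidpoint}, evaluating $\fA$, $\fS$, $\fH$, and deciding membership in $\setH\prl{I}$ can each be accomplished in $\bigO{\card{I}}$ time (treating $d$ as a constant, in line with the rest of the paper), and $\sum_{I \in \cluster{\tree}} \card{I} = \bigO{\card{\indexset}^2}$ by a standard counting argument for binary trees, so these pieces contribute quadratic aggregate cost. The delicate piece is the predicate defining $\setA\prl{I}$ in \refeqn{eq.SetA}, which ranges both over unordered pairs $\crl{i,j} \subseteq I$ and over $\prl{k,K}$ with $k \in K$ and $K \in \descendantCL{I,\tree}$. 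The accounting that delivers the quadratic bound is to charge each such constraint either to the pair of agents it constrains or to the agent-ancestor pair it involves, and to observe that these charges telescope, across the whole tree, to $\bigO{\binom{\card{\indexset}}{2}}$ and $\bigO{\sum_{K \in \cluster{\tree}} \card{K}}$ respectively, both of which are $\bigO{\card{\indexset}^2}$.

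The main obstacle I anticipate is not well-definedness (which is essentially a structural check) but the amortization underlying the quadratic bound. A naive implementation that freshly re-evaluates the predicate \refeqn{eq.SetA} at every ancestor along a root-to-leaf path would incur $\bigO{\card{\indexset}^3}$ cost on a caterpillar tree, so the proof has to exhibit (or charge against) an implementation in which each pairwise constraint is inspected at only $\bigO{1}$ clusters. The natural route is to cache the centroid and separation quantities of \refeqn{eq.centroidseparation}-\refeqn{eq.centroidmidpoint} during the bottom-up pass so that they are computed once and reused, and to exploit the fact that a collision-avoidance constraint on a pair $\crl{i,j}$ is in effect binding only at its lowest common ancestor cluster in $\tree$, parent-level constraints following by composition with $\fH$ acting purely as a rigid displacement on each child subcluster.
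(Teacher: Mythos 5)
Your proposal is essentially the paper's proof: well-definedness by structural induction with termination guaranteed by the singleton base case, and the $\bigO{\card{\indexset}^2}$ bound by caching centroids in a bottom-up pass and amortizing the $\setA$-membership tests so that each pairwise constraint is evaluated freshly only once, at its lowest common ancestor. The one imprecision worth flagging is your \emph{justification} for that amortization --- that ``parent-level constraints follow by composition with $\fH$ acting purely as a rigid displacement on each child subcluster.'' That is a red herring: the predicates defining $\setA\prl{I}$ in \refeqn{eq.SetA} are Lie derivatives along the fixed field $\overrightarrow{\vectbf{y}}$, which have nothing to do with $\fH$, and the constraint for a pair $\crl{i,j}$ in fact appears verbatim in $\setA\prl{I}$ at \emph{every} ancestor $I$ of the pair's LCA, not only at the LCA itself. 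The amortization works purely because the membership test can be memoized; the paper makes this explicit via the set-theoretic recursion $\setA\prl{I} = \setA\prl{I_L} \cap \setA\prl{I_R} \cap \setAhat\prl{I_L, I_R}$ in \refeqn{eq.SetARecursion}, where $\setAhat\prl{I_L,I_R}$ defined in \refeqn{eq.SetAHat} collects only the $\bigO{\card{I_L}\card{I_R} + \card{I_L} + \card{I_R}}$ constraints new to cluster $I$, followed by a structural induction giving $\bigO{\card{I}^2}$ total cost for the subtree rooted at $I$. That recursion is the bookkeeping device that should replace your dynamical justification and your informal charging scheme; with it in hand, the rest of your accounting --- centroids in $\bigO{\card{\indexset}}$ total via the bottom-up pass, and $\setH$ tests and field evaluations in $\bigO{\card{\indexset}}$ each over the $2\card{\indexset}-1$ clusters --- matches the paper exactly.
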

\begin{proof}
See \refapp{app.HierInvNavProperty}. \qed
\end{proof}

\begin{table}[t]
\centering
\caption{The Proof Structure of \refthm{thm.HierInvNav} : Logical Dependencies}
\label{tab.ProofStructure}
\vspace{-2mm}
\begin{tabular}{|p{0.475\textwidth}@{\hspace{1mm}}|}
\hline

\vspace{-2mm}
\begin{itemize}[leftmargin=2mm]
\item \refprop{prop.HierInvNavProperty} (Quadratic Time Function) [\ref{app.HierInvNav}, p.\pageref{prop.HierInvNavProperty} $\Leftarrow$ \ref{app.HierInvNavProperty}, p.\pageref{app.HierInvNavProperty}]

\item \refprop{prop.LocalPolicyProperty} (Continuous \!\&\! Piecewise Smooth) 
[\ref{app.EquivalentSystemModel}, p.\pageref{prop.LocalPolicyProperty} $\Leftarrow$ \ref{app.LocalPolicyProperty},p.\pageref{app.LocalPolicyProperty}]

\begin{itemize}[leftmargin =2mm]
\item \reflem{lem.ClusterPartition} (Child Partition Block) [\ref{app.SubstratumObservation}, p.\pageref{lem.ClusterPartition}]
\end{itemize}

\item \refprop{prop.DomainInclusion}  (Domain Covering Induced Partition) \![\ref{app.EquivalentSystemModel}, \!p.\pageref{prop.DomainInclusion} \!$\Leftarrow$\! \ref{app.DomainInclusion}, \!p.\pageref{app.DomainInclusion}]

\item \refprop{prop.SystemEquivalence} (Equivalent Vector Field) [\ref{app.EquivalentSystemModel}, p.\pageref{prop.SystemEquivalence} $\Leftarrow$ \ref{app.SystemEquivalence}, p.\pageref{app.SystemEquivalence}]

\item \refprop{prop.HierInvNavInvariance} (Stratum Positive Invariance) [\ref{app.HierInvNavQualitative}, p.\pageref{prop.HierInvNavInvariance}]

\begin{itemize}[leftmargin=2mm]
\item Recalls \refprop{prop.DomainInclusion},  \refprop{prop.SystemEquivalence}

\item \refprop{prop.DomainInvariance} (Substratum Positive Invariance) [\ref{app.LocalPolicyQualitative}, p.\pageref{prop.DomainInvariance} $\Leftarrow$ \ref{app.DomainInvariance}, p.\pageref{app.DomainInvariance}]

\begin{itemize}[leftmargin=2mm]
\item \reflem{lem.DomainInvarianceBase1} (Invariance - Base Case 1) [\ref{app.DomainInvariance}, p.\pageref{lem.DomainInvarianceBase1} $\Leftarrow$ \ref{app.DomainInvarianceBase1}, p.\pageref{app.DomainInvarianceBase1}]

\item \reflem{lem.DomainInvarianceBase2} (Invariance - Base Case 2) [\ref{app.DomainInvariance}, p.\pageref{lem.DomainInvarianceBase2} $\Leftarrow$ \ref{app.DomainInvarianceBase2}, p.\pageref{app.DomainInvarianceBase2}]

\item \reflem{lem.DomainInvarianceRecursion} (Invariance - Recursion) [\ref{app.DomainInvariance}, p.\pageref{lem.DomainInvarianceRecursion} $\Leftarrow$ \ref{app.DomainInvarianceRecursion}, p.\pageref{app.DomainInvarianceRecursion}]
\end{itemize}
\end{itemize}

\item \refprop{prop.HierInvNavExistenceUniqueness} (Stratum Existence \& Uniqueness) [\ref{app.HierInvNavQualitative}, p.\pageref{prop.HierInvNavExistenceUniqueness}]
 
\begin{itemize}[leftmargin=2mm]
\item Recalls \refprop{prop.DomainInclusion},  \refprop{prop.SystemEquivalence}

\item \refprop{prop.LocalPolicyExistenceUniqueness} (Substratum \!Existence  \!Uniqueness) \![\ref{app.LocalPolicyQualitative}, \!p.\pageref{prop.LocalPolicyExistenceUniqueness} \!$\Leftarrow$\! \ref{app.LocalPolicyExistenceUniqueness}, \!p.\pageref{app.LocalPolicyExistenceUniqueness}]

\begin{itemize}[leftmargin=2mm]
\item Recalls \refprop{prop.LocalPolicyProperty}, \refprop{prop.DomainInvariance}.
\item \reflem{lem.CentroidDynamics} (Relative Centroidal Dynamics) [\ref{app.SubstratumObservation}, p.\pageref{lem.CentroidDynamics} $\Leftarrow$ \ref{app.CentroidDynamics},p.\pageref{app.CentroidDynamics}]
\item \reflem{lem.ConfRadiusBound} (Configuration Bound Radius) [\ref{app.SubstratumObservation}, p.\ref{lem.ConfRadiusBound} $\Leftarrow$ \ref{app.ConfRadiusBound}, p.\pageref{app.ConfRadiusBound}]
\begin{itemize}[leftmargin=*]
\item Recalls \reflem{lem.CentroidDynamics}.
\end{itemize}
\end{itemize}

\item \refprop{prop.HierInvNavStability} (Stratum Stability) [\ref{app.HierInvNavQualitative}, p.\pageref{prop.HierInvNavStability}]

\begin{itemize}[leftmargin=2mm]
\item Recalls \refprop{prop.SystemEquivalence}, \refprop{prop.DomainInvariance}.

\item \refprop{prop.PolicyPriority} (Substratum Policy Selection) [\ref{app.LocalPolicyQualitative}, p.\pageref{prop.PolicyPriority} $\Leftarrow$ \ref{app.PolicyPriority}, p.\pageref{app.PolicyPriority}]
\begin{itemize}[leftmargin=2mm]
\item Recalls \refprop{prop.DomainInclusion}.

\item \reflem{lem.FinerCoarserResolution} (Partition Refinement) [\ref{app.SubstratumObservation}, p.\pageref{lem.FinerCoarserResolution}]
\end{itemize}

\item  \refprop{prop.PreparesRelation} (Finite Time Prepares Relation) [\ref{app.LocalPolicyQualitative}, p.\pageref{prop.PreparesRelation} $\Leftarrow$ \ref{app.PreparesRelation}, p.\pageref{app.PreparesRelation}]

\begin{itemize}[leftmargin=2mm]
\item \reflem{lem.PreparesRelationSingleton} (Case \ref{def.subprepgraph1} in \refdef{def.SubPreparesGraph}) [\ref{app.PreparesRelation}, p.\pageref{lem.PreparesRelationSingleton} $\Leftarrow$ \ref{app.PreparesRelationSingleton}, p.\ref{app.PreparesRelationSingleton}] 

\item \reflem{lem.PreparesRelationNegative} (Case \ref{def.subprepgraph2} in \refdef{def.SubPreparesGraph}) [\ref{app.PreparesRelation}, p.\pageref{lem.PreparesRelationNegative} $\Leftarrow$ \ref{app.PreparesRelationNegative}, p.\ref{app.PreparesRelationNegative}]

\item \reflem{lem.PreparesRelationPositive} (Case \ref{def.subprepgraph3} in \refdef{def.SubPreparesGraph}) [\ref{app.PreparesRelation}, p.\pageref{lem.PreparesRelationPositive} $\Leftarrow$ \ref{app.PreparesRelationPositive}, p.\ref{app.PreparesRelationPositive}]
\end{itemize}
\end{itemize}
\end{itemize}

\item \refprop{prop.PreparesGraph} (Substratum Prepares Graph) [\ref{app.OnlineSequentialComposition}, p.\pageref{prop.PreparesRelation} $\Leftarrow$ \ref{app.PreparesGraph}, p.\pageref{app.PreparesGraph}]

\begin{itemize}[leftmargin=2mm]
\item Recalls \reflem{lem.ClusterPartition}.
\end{itemize}

\item \refprop{prop.NonZeroExecutiontime} (Nondegenerate Execution) [\ref{app.LocalPolicyQualitative}, p.\pageref{prop.NonZeroExecutiontime}]

\begin{itemize}[leftmargin =2mm]
\item Recalls \refprop{prop.PolicyPriority}, \refprop{prop.DomainInvariance}.
\item \reflem{lem.LocalPolicyDomain} (Closed Substratum Domain) [\ref{app.LocalPolicyQualitative}, p.\pageref{lem.LocalPolicyDomain}]
\end{itemize}

\vspace{-3.5mm}

\end{itemize}

\\

\hline
\end{tabular}
\vspace{-2mm}
\end{table}

\subsection{An Equivalent System Model}
\label{app.EquivalentSystemModel}

Key for understanding the hierarchy preserving navigation policy, $\f$, in  \reftab{tab.HierInvNav} is the observation that for any configuration $\vectbf{x} \in \stratum{\tree}$  the list of visited clusters of $\tree$  satisfying base conditions during the recursive computation of $\f$ defines a partition  $\purt$ of $\indexset$ compatible with $\tree$, i.e. $\purt \subset \cluster{\tree}$.\footnote{Note that the recursions in \reftab{tab.HierInvNav} and \reftab{tab.PolicySelection} have the same base and recursion conditions, and the recursion in \reftab{tab.PolicySelection} returns the list of clusters satisfying  base conditions, which defines a partition of $\indexset$ (\refprop{prop.LocalPolicyProperty}). 
Hence, using the relation between these recursions in \refprop{prop.SystemEquivalence}, one can conclude this  observation.}

\begin{table}[t]
\caption{Local Control Policies in a Hierarchical Stratum}
\label{tab.LocalPolicy}

\vspace{-2mm}
\centering 
\begin{tabular}{|p{0.45\textwidth}@{\hspace{2mm}}|}
\hline

\vspace{-1mm}
Let $\purt$ be a partition of $\indexset$ with $\purt \subset \cluster{\tree}$, and  $\vectbf{b} = \prl{\vect{b}_{I}}_{I \in \purt} \in \crl{-1, +1} ^{\purt}$.
For any desired $\vectbf{y} \in \stratum{\tree}$, supporting $\tree \in \bintreetopspace_{\indexset}$, and  any initial  $\vectbf{x} \in \domain\prl{\purt, \vectbf{b}}$ \refeqn{eq.LocalPolicyDomain}, the local control policy, $\h{\purt, \vectbf{b}} : \domain\prl{\purt, \vectbf{b}} \rightarrow \prl{\R^d}^{\indexset}$, 
\begin{align}
\h{\purt, \vectbf{b}}\prl{\vectbf{x}} \ldf \hhat{\purt, \vectbf{b}}\prl{\vectbf{x}, \vectbf{0}, \indexset}, \nonumber
\end{align}
is recursively computed using the post-order traversal of $\tree$ starting at the root cluster $\indexset$ with the zero control input $\vectbf{0} \in \prl{\R^d}^{\indexset}$ as follows: for any $\vectbf{u} \in \prl{\R^d}^{\indexset}$ and $I \in \visitedcluster$ \refeqn{eq.VisitedClusters},
\vspace{-2mm}

\begin{tabular}{@{\hspace{-1mm}}c@{\hspace{3mm}}c@{\hspace{2mm}}c}
\begin{tabular}{@{}p{0.001\textwidth}@{}}
\\
$\rotatebox{90}{\hspace{-6mm}Base Cases}\left \{ \begin{array}{c}
\\[0.5mm]
\\[0.5mm]
\\[0.5mm]
\\[0.5mm]
\\[0.5mm]
\\[0.5mm]
\end{array} \right.$ 
\\
$\rotatebox{90}{\hspace{-4.5mm}Recursion}\left \{ \begin{array}{c}
\\[0.6mm]
\\[0.6mm]
\\[0.6mm]
\\[0.6mm]
\\[0.6mm]
\\[0.6mm]
\end{array} \right.$
\end{tabular}
&
\begin{tabular}{@{}p{0.25\textwidth}@{}}

\begin{enumerate}[itemsep=0.5mm]
 \item \textbf{function} $\hat{\vectbf{u}} = \hhat{\purt, \vectbf{b}}\prl{\vectbf{x}, \vectbf{u}, I}$  
 \item \quad \textbf{if} $I \in \purt$,
 \item \quad \quad \textbf{if} $\vect{b}_I = +1$
 \item \quad \quad \quad $\hat{\vectbf{u}} \leftarrow \fA\prl{\vectbf{x}, \vectbf{u}, I}$ \refeqn{eq.AttractiveField},
 \item \quad \quad \textbf{else} 
 \item \quad \quad \quad  $\hat{\vectbf{u}} \leftarrow  \fS\prl{\vectbf{x}, \vectbf{u}, I} $   \refeqn{eq.SeparationField},
 \item \quad \quad \textbf{end}
 \item \quad \textbf{else} 
 \item \quad  \quad $\crl{I_L, I_R} \leftarrow \childCL{I,\tree}$,
 \item \quad \quad $\hat{\vectbf{u}}_L \leftarrow \hhat{\purt, \vectbf{b}}\prl{\vectbf{x}, \vectbf{u}, I_L}$,
 \item \quad \quad $\hat{\vectbf{u}}_R \leftarrow \hhat{\purt, \vectbf{b}}\prl{\vectbf{x}, \hat{\vectbf{u}}_L, I_R}$,
 \item \quad \quad $\hat{\vectbf{u}} \leftarrow  \fH\prl{\vectbf{x}, \hat{\vectbf{u}}_R, I} $ \refeqn{eq.SplitPreservingField},
 \item \quad \textbf{end}
 \item \textbf{return} $\hat{\vectbf{u}}$
 \vspace{-3mm}
\end{enumerate}

\end{tabular}

&
\begin{tabular}{@{}l@{}}
  \\[0.5mm]
  \\[0.5mm]
  \\[1.5mm]
\% Attracting Field \\[0.5mm]
\\[0.6mm]
\% Split Separation Field \\[0.5mm]
\\[0.6mm]
\\[0.6mm]
\\[1.0mm]
\% Recursion for Left Child\\[0.8mm]
\% Recursion for Right Child\\[0.8mm]
\% Split Preserving Field\\[0.7mm]
\\[0.6mm]
\vspace{-3mm}
\end{tabular}

\end{tabular}

\\
\hline
\end{tabular}
\end{table}

Now observe, depending on which base condition holds (\reftab{tab.HierInvNav}.2) or \reftab{tab.HierInvNav}.4)), every block $I$ of partition $\purt$, associated with any fixed configuration $\vectbf{x} \in \setA\prl{I} \cup \,\prl{\stratum{\tree} \setminus \setH\prl{I}}$, can be associated with a binary scalar $\hat{\vect{b}}_I \prl{\vectbf{x}} \in \crl{-1, +1}$ such that\footnote{ Observe from \reftab{tab.HierInvNav} that any configuration $\vectbf{x} \in \stratum{\tree}$ satisfies a base condition (\reftab{tab.HierInvNav}.2) or \reftab{tab.HierInvNav}.4)) at cluster $I \in \cluster{\tree}$ if $\vectbf{x} \in \setA\prl{I} \cup \prl{\stratum{\tree} \setminus \setH\prl{I}}$.
Also note that we have $\setA\prl{I} \cup \prl{\stratum{\tree} \setminus \setH\prl{I}} = \setA\prl{I} \cup \prl{\big.\stratum{\tree} \setminus \prl{\setA\prl{I} \cup \setH\prl{I}}}$, and  $\setA\prl{I}$ and  $\stratum{\tree} \setminus \prl{\setA\prl{I} \cup \setH\prl{I}}$ are disjoint.
}
\begin{align}\label{eq.BaseCondition}
\hat{\vect{b}}_I \prl{\vectbf{x}}  = \left \{ 
\begin{array}{ll}
 + 1 & \text{, if } \vectbf{x} \in \setA\prl{I}, \\
 - 1 & \text{, if } \vectbf{x} \not \in \setA\prl{I} \cup \setH\prl{I},
\end{array}
\right .
\end{align}
where $\setA\prl{I}$ and $\setH\prl{I}$ are defined as in \refeqn{eq.SetA} and \refeqn{eq.SetR}, respectively.
%
We will use this configuration space labeling scheme to recast  the hierarchy preserving control policy $\f$ as an online sequential composition of a family of continuous and piecewise smooth local controllers indexed by partitions of $\indexset$ compatible with $\tree$ and associated binary vectors as follows.

\smallskip

A partition $\purt$ of $\indexset$ is said to be compatible with $ \tree \in \bintreetopspace_{\indexset}$ if and only if  $\purt \subset \cluster{\tree}$, and denote by $\purtset_{\indexset}\prl{\tree}$ the set of partitions of $\indexset$ compatible with $\tree$.   
Accordingly, define $\subpolicyset{\indexset}{\tree}$ to be the set of substratum policy indices,
\begin{align} \label{eq.SubPolicySet}
\subpolicyset{\indexset}{\tree} \ldf \crl{ \prl{\purt, \vectbf{b}} \Big| \, \purt \in \purtset_{\indexset}\prl{\tree}, \vectbf{b} \in \crl{-1, +1}^{\purt}}.
\end{align}
For any partition $\purt \in \purtset_{\indexset}\prl{\tree}$ of $\indexset$  and $\vectbf{b} \ldf \prl{\vect{b}_{I}}_{I \in \purt} \in \crl{-1, +1} ^ {\purt}$, the domain $\domain\prl{\purt, \vectbf{b}}$ of a local control policy $\h{\purt, \vectbf{b}}$, presented in \reftab{tab.LocalPolicy}, is defined to be
\begin{align}\label{eq.LocalPolicyDomain}
\domain\prl{\purt, \vectbf{b}} \ldf \bigcap _{I \in \purt} \left ( \Big. \right. \!\! \setB\prl{I, \vect{b}_I} \, \scalebox{1.3}{$\cap$} \!\! \bigcap_{K \in \mathrm{Anc}\; \prl{I,\tree}} \!\!\!\! \setH\prl{K} \!\!\left . \Big. \right )\!,  
\end{align}
where the set of configurations satisfying the base condition  associated with cluster $I$ of $\purt$ and  binary scalar $\vect{b}_I$ is given by
\begin{align}\label{eq.SetB}
\setB\prl{I, \vect{b}_I} \ldf \left \{ 
\begin{array}{@{}c@{}@{}l}
\setA\prl{I} & \text{, if } \vect{b}_I = + 1, \\
\stratum{\tree} & \text{, if } \vect{b}_I = - 1, 
\end{array}
\right .
\end{align}
and all ancestors $K \in \ancestorCL{I,\tree}$ of $I$ in $\tree$ satisfy the recursion condition of having properly separated children clusters described by $\setH\prl{K}$ \refeqn{eq.SetR}. 
Accordingly, let  $\visitedcluster$ denote the set of clusters of $\tree$ visited during the recursive computation of $\h{\purt, \vectbf{b}}$  in \reftab{tab.LocalPolicy},
\begin{align}\label{eq.VisitedClusters}
\visitedcluster \ldf \crl{K \in \cluster{\tree} \big| K \supseteq I, I \in \purt }.
\end{align}
Note that $\indexset \in \visitedcluster$ since $\purt$ is a partition of the root cluster $\indexset$ and any block $I \in \purt$ satisfies $I \subseteq \indexset$.

Observe that each local control policy $\h{\purt, \vectbf{b}}$ is a recursive composition of continuous functions of $\vectbf{x}$, so it is continuous:
\begin{proposition}\label{prop.LocalPolicyProperty}
The recursion in \reftab{tab.LocalPolicy} results in a continuous and piecewise smooth function\footnote{Note that if $f: U \rightarrow \R^m$ is continuous and piecewise smooth on an open set $U \subset \R^n$, then it is locally Lipschitz on $U$ \cite{kuntz_scholtes_JMAA1994}.}, $\h{\purt, \vectbf{b}} : \stratum{\tree} \rightarrow \prl{\R^d}^{\indexset}$.
\end{proposition}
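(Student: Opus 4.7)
The plan is to induct over the post-order traversal of $\tree$, showing that at each cluster $I\in\visitedcluster$ the intermediate map $\hhat{\purt,\vectbf{b}}(\cdot,\cdot,I)$ returned by the recursion of \reftab{tab.LocalPolicy} is continuous and piecewise smooth, jointly in the configuration argument $\vectbf{x}\in\domain(\purt,\vectbf{b})$ and in the incoming control argument $\vectbf{u}\in(\R^d)^\indexset$. Because \reftab{tab.LocalPolicy} builds its output as a finite composition/combination of the three atomic operations $\fA$ \refeqn{eq.AttractiveField}, $\fS$ \refeqn{eq.SeparationField}, and $\fH$ \refeqn{eq.SplitPreservingField}, the whole argument reduces to (i) checking each of these atoms separately, and (ii) invoking the fact that the class of continuous, piecewise smooth maps on an open subset of Euclidean space is closed under finite composition, finite sums/products, and pointwise maxima of finitely many members.

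First I would dispatch the three atoms. The attracting field $\fA$ in \refeqn{eq.AttractiveField} is a piecewise-affine selection between $-(\vect{x}_j-\vect{\stateB}_j)$ and $\vect{u}_j$ keyed by cluster membership, hence smooth in both arguments. The split-separating field $\fS$ in \refeqn{eq.SeparationField} augments its output by a vector proportional to $\ctrdsep{K}{\vectbf{x}}/\norm{\ctrdsep{K}{\vectbf{x}}}$ scaled by $\fbeta{I}{\vectbf{x}}$ in \refeqn{eq.Beta}, and the latter is a finite maximum of $\max(\,\cdot\,,0)$ clampings of $C^1$ expressions in $\vectbf{x}$ --- hence continuous and piecewise smooth. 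Similarly $\fH$ in \refeqn{eq.SplitPreservingField} augments by $\falpha{I}{\vectbf{x},\vectbf{u}}\,\ctrdsep{K}{\vectbf{x}}/\norm{\ctrdsep{K}{\vectbf{x}}}$, where $\falpha{I}{\cdot}$ in \refeqn{eq.Alpha} is a finite maximum of products $\fphi{k,K}{\vectbf{x}}\cdot\fpsi{k,K}{\vectbf{x},\vectbf{u}}$. The factor $\fphi{k,K}{\vectbf{x}}$ is the $\max(\,\cdot\,,0)$ clamp of an exponential envelope (smooth in $\vectbf{x}$), while $\fpsi{k,K}{\vectbf{x},\vectbf{u}}$ is the $\max(\,\cdot\,,0)$ clamp of an expression affine in $\vectbf{u}$ whose $\vectbf{x}$-dependence is rational through the Lie-derivative formula \refeqn{eq.sepmagLie}; both are continuous and piecewise smooth in $(\vectbf{x},\vectbf{u})$.

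The induction itself is then immediate. At any base block $I\in\purt$ the recursion outputs either $\fA(\vectbf{x},\vectbf{u},I)$ or $\fS(\vectbf{x},\vectbf{u},I)$, which inherit the claimed regularity from the previous paragraph. At a non-base cluster $I\in\visitedcluster\setminus\purt$, the inductive hypothesis gives that $\hat{\vectbf{u}}_L$ and $\hat{\vectbf{u}}_R$ are continuous and piecewise smooth in $(\vectbf{x},\vectbf{u})$; composing with $\fH(\cdot,\hat{\vectbf{u}}_R,I)$ and invoking the closure properties preserves the regularity. The induction terminates at the root $\indexset\in\visitedcluster$, and specializing to $\vectbf{u}=\vectbf{0}$ gives the conclusion for $\h{\purt,\vectbf{b}}(\vectbf{x})=\hhat{\purt,\vectbf{b}}(\vectbf{x},\vectbf{0},\indexset)$.

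The only delicate point --- and the main (if mild) obstacle --- is that $\fS$, $\fH$, and $\sepmagsym{k,K}$ all divide by $\norm{\ctrdsep{K}{\vectbf{x}}}$. I would handle this by observing that $\domain(\purt,\vectbf{b})\subseteq\stratum{\tree}$ forces, for every ancestor $K\in\ancestorCL{I,\tree}$ of any base block $I\in\purt$, linear separability of the children clusters $\childCL{K,\tree}$ by the bisecting hyperplane of their centroids (see the stratum characterization \refeqn{eq.ClosedStrata}), and in particular $\ctrdsep{K}{\vectbf{x}}\neq 0$ there. Hence the normalizations are smooth on the relevant open subset of $\stratum{\tree}$, and the piecewise-smooth structure is cut out only by the finitely many argmax loci of the $\max(\,\cdot\,,0)$ and finite-$\max$ operations above.
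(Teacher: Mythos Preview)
Your proposal is correct and follows essentially the same route as the paper's proof: structural induction over $\visitedcluster$ showing that $\hhat{\purt,\vectbf{b}}(\cdot,\cdot,I)$ is continuous and piecewise smooth jointly in $(\vectbf{x},\vectbf{u})$, with the base cases handled by the regularity of $\fA$ and $\fS$ and the inductive step by closure of the piecewise-smooth class under composition with $\fH$. Your explicit treatment of the $\norm{\ctrdsep{K}{\vectbf{x}}}$ denominator is a welcome addition that the paper leaves implicit, though note that the proposition asserts regularity on all of $\stratum{\tree}$ (not just $\domain(\purt,\vectbf{b})$), and your nonvanishing argument indeed extends there since $\vectbf{x}\in\stratum{\tree}$ already forces $\ctrdsep{K}{\vectbf{x}}\neq 0$ for every $K\in\cluster{\tree}\setminus\{\indexset\}$.
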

\begin{proof}
See \refapp{app.LocalPolicyProperty}. \qed
\end{proof}

\smallskip

To conclude our introduction of the family of   covering fields in \reftab{tab.LocalPolicy}, we now observe that the vector field $\f$ in \reftab{tab.HierInvNav} is an online concatenation of continuous local controllers, $\h{\purt, \vectbf{b}}$, of \reftab{tab.LocalPolicy} using a policy selection method described in \reftab{tab.PolicySelection}, summarized as: 

\begin{table}[t]
\caption{Policy Selection Algorithm}
\label{tab.PolicySelection}

\vspace{-2mm}
\centering 
\begin{tabular}{|p{0.45\textwidth}@{\hspace{2mm}}|}
\hline

\vspace{-1mm}
For any initial  $\vectbf{x} \in \stratum{\tree}$ and desired $\vectbf{y} \in \stratum{\tree}$, supporting $\tree \in \bintreetopspace_{\indexset}$, the policy selection algorithm, $\p : \stratum{\tree} \rightarrow \subpolicyset{\indexset}{\tree}$,
\begin{align}
\p\prl{\vectbf{x}} \ldf \phat\prl{\vectbf{x}, \indexset}, \nonumber
\end{align}
recursively generates a local policy index in $\subpolicyset{\indexset}{\tree}$ \refeqn{eq.SubPolicySet} using the post-order traversal of $\tree$ starting at the root cluster $\indexset$ as follows: for any  $I \in \cluster{\tree}$,
\vspace{-2mm}

\begin{tabular}{@{\hspace{0mm}}c@{\hspace{3mm}}c@{\hspace{2mm}}}
\begin{tabular}{@{}p{0.001\textwidth}@{}}
\\
$\rotatebox{90}{\hspace{-6mm}Base Cases}\left \{ \begin{array}{c}
\\[0.6mm]
\\[0.6mm]
\\[0.6mm]
\\[0.6mm]
\\[0.6mm]
\\[0.6mm]
\end{array} \right.$ 
\\
$\rotatebox{90}{\hspace{-4.5mm}Recursion}\left \{ \begin{array}{c}
\\[0.5mm]
\\[0.5mm]
\\[0.5mm]
\\[0.5mm]
\\[0.5mm]
\\[0.5mm]
\\[0.5mm]
\end{array} \right.$
\\
\end{tabular}
&
\begin{tabular}{@{}p{0.42\textwidth}@{}}

\begin{enumerate}[itemsep=0.5mm]
 \item \textbf{function} $(\hat{\mathcal{I}}, \hat{\vectbf{b}}) = \phat\prl{\vectbf{x}, I}$  
 \item \quad \textbf{if} $\vectbf{x} \in \setA\prl{I}$ \refeqn{eq.SetA},
 \item \quad \quad $\hat{\mathcal{I}} \leftarrow  \crl{I}$,
 \item \quad \quad $\hat{\vectbf{b}} \leftarrow   +1$,
 \item \quad \textbf{else if} $\vectbf{x} \not \in \setH\prl{I}$ \refeqn{eq.SetR},
 \item \quad \quad $\hat{\mathcal{I}} \leftarrow  \crl{I}$,
 \item \quad \quad $\hat{\vectbf{b}} \leftarrow   -1$,
 \item \quad \textbf{else} 
 \item \quad  \quad $\crl{I_L, I_R} \leftarrow \childCL{I,\tree}$,
 \item \quad \quad $(\hat{\mathcal{I}}_L, \hat{\vectbf{b}}_L) \leftarrow \phat\prl{\vectbf{x}, I_L}$,
 \item \quad \quad $(\hat{\mathcal{I}}_R, \hat{\vectbf{b}}_R) \leftarrow \phat\prl{\vectbf{x}, I_R}$,
 \item \quad \quad $\hat{\mathcal{I}} \leftarrow \hat{\mathcal{I}}_L \cup \hat{\mathcal{I}}_R$,
 \item \quad \quad $\hat{\vectbf{b}} \leftarrow \hat{\vectbf{b}}_L \| \hat{\vectbf{b}}_R$, \footnotemark
 \item \quad \textbf{end}
 \item \textbf{return} $(\hat{\mathcal{I}}, \hat{\vectbf{b}})$
 \vspace{-3mm}
\end{enumerate}

\end{tabular}

\end{tabular}

\\
\hline
\end{tabular}
\end{table}

\footnotetext{Here, $\vectbf{p} \| \vectbf{q}$ denotes the concatenation of vectors $\vectbf{p}$ and $\vectbf{q}$. 
That is to say, let $X, Y$ be two sets and $A,B$ be two finite sets of coordinate indices, then for any  $\vectbf{p} \in X^A$ and $\vectbf{q} \in Y^B$ we say $\vectbf{r} \in X^A \times Y^B$ is the concatenation of $\vectbf{p}$ and $\vectbf{q}$, denoted by $\vectbf{r} = \vectbf{p} \| \vectbf{q}$, if and only if $\vect{r}_a = \vect{p}_a$ and $\vect{r}_b = \vect{q}_b$ for all $a \in A$ and $b \in B$.  }

\begin{proposition}\label{prop.DomainInclusion}
For any given configuration $\vectbf{x} \in \stratum{\tree}$ the policy selection algorithm $\p$ in \reftab{tab.PolicySelection} always returns a valid policy index, $\prl{\mathcal{\indexset, \vectbf{b}}} = \p\prl{\vectbf{x}}$, in $\subpolicyset{\indexset}{\tree}$ \refeqn{eq.SubPolicySet} such that the domain $\domain\prl{\purt, \vectbf{b}}$ \refeqn{eq.LocalPolicyDomain} of the associated local control policy  $\h{\purt, \vectbf{b}}$ (\reftab{tab.LocalPolicy})  contains $\vectbf{x}$, i.e.
\begin{align}
\vectbf{x} \in \prl{\domain \circ \p}\prl{\vectbf{x}}.
\end{align}
\end{proposition}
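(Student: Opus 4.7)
The plan is to prove the statement by strong structural induction on the subtree rooted at the cluster currently passed to $\phat$, via a single invariant strong enough to deliver both properties of $\p\prl{\vectbf{x}}$ simultaneously. Specifically, I would show that for every $I \in \cluster{\tree}$ and every $\vectbf{x} \in \stratum{\tree}$ the call $\phat\prl{\vectbf{x},I}$ returns a pair $\prl{\hat{\purt},\hat{\vectbf{b}}}$ satisfying (i) $\hat{\purt}$ is a partition of $I$ contained in $\cluster{\tree}$, (ii) $\hat{\vectbf{b}} \in \crl{-1,+1}^{\hat{\purt}}$, (iii) $\vectbf{x} \in \setB\prl{J,\hat{\vect{b}}_J}$ for every $J \in \hat{\purt}$, and (iv) $\vectbf{x} \in \setH\prl{K}$ for every $J \in \hat{\purt}$ and every $K \in \ancestorCL{J,\tree}$ with $K \subseteq I$. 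Applying this at $I = \indexset$ yields both the desired membership in $\subpolicyset{\indexset}{\tree}$ \refeqn{eq.SubPolicySet} and, by unpacking \refeqn{eq.LocalPolicyDomain}, the inclusion $\vectbf{x} \in \domain\prl{\purt,\vectbf{b}}$.

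For the base case, when $I$ is a singleton leaf, the remark following \refeqn{eq.SetA} gives $\setA\prl{I} = \stratum{\tree}$, so the first branch of \reftab{tab.PolicySelection} fires and $\phat$ returns $\prl{\crl{I},+1}$; properties (i)--(iii) are immediate from \refeqn{eq.SetB}, and (iv) is vacuous since $I$ has no proper ancestor contained in itself. This simultaneously ensures termination, since every downward path in $\tree$ reaches a leaf in finitely many steps.

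For the inductive step, consider a non-singleton $I$ at which one of the three branches of \reftab{tab.PolicySelection} is taken. In the first (respectively second) branch $\phat$ returns $\prl{\crl{I},+1}$ with $\vectbf{x} \in \setA\prl{I}$ (respectively $\prl{\crl{I},-1}$ with $\setB\prl{I,-1} = \stratum{\tree}$), so (i)--(iii) hold directly and (iv) is again vacuous. In the recursive branch the failure of both base-case guards implies $\vectbf{x} \in \setH\prl{I}$. Writing $\crl{I_L,I_R} = \childCL{I,\tree}$, the inductive hypothesis applied to each child supplies partitions $\hat{\purt}_L,\hat{\purt}_R \subset \cluster{\tree}$ of $I_L,I_R$ together with labellings $\hat{\vectbf{b}}_L,\hat{\vectbf{b}}_R$. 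Since $I_L,I_R$ are disjoint with union $I$, the union $\hat{\purt} = \hat{\purt}_L \cup \hat{\purt}_R$ is a partition of $I$ and the concatenation $\hat{\vectbf{b}} = \hat{\vectbf{b}}_L \| \hat{\vectbf{b}}_R$ is well-defined on $\hat{\purt}$, giving (i) and (ii); (iii) is inherited. For (iv), the laminar structure of $\cluster{\tree}$ forces any $K \in \ancestorCL{J,\tree}$ with $J \in \hat{\purt}_L$ and $K \subseteq I$ to satisfy either $K \subseteq I_L$, handled by the inductive hypothesis at $I_L$, or $K = I$, handled by the recursion guard $\vectbf{x} \in \setH\prl{I}$; the argument for $\hat{\purt}_R$ is symmetric.

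The main obstacle is really the formulation of the invariant: stating (iv) with the side constraint $K \subseteq I$ is what makes the induction work, by leaving exactly one new ancestor, $I$ itself, to be discharged at each recursive step, and observing that the entry condition of the recursive branch supplies precisely that fact. Once this is in place, everything else reduces to coordinate-wise checking of the definitions \refeqn{eq.SetB}, \refeqn{eq.LocalPolicyDomain} and \refeqn{eq.SubPolicySet}, together with the elementary fact that the union of partitions of disjoint sets is a partition of their union.
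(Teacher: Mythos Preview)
Your proposal is correct and follows essentially the same approach as the paper: structural induction on the subtree rooted at the current cluster, using that singletons satisfy $\setA\prl{I} = \stratum{\tree}$, that the recursive branch is entered only when $\vectbf{x} \in \setH\prl{I}$, and that the union of partitions of the disjoint children is a partition of $I$. The only difference is packaging: the paper proves the partition property by one induction and then argues the domain inclusion separately by tracing which ancestors were visited, whereas you fold everything into a single invariant via the clause $K \subseteq I$ in (iv), which is slightly cleaner but not substantively different.
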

\begin{proof}
See \refapp{app.DomainInclusion}.\qed
\end{proof}

\begin{proposition}\label{prop.SystemEquivalence}
For any given configuration  $\vectbf{x} \in \stratum{\tree}$, the vector field $f_{\tree, \vectbf{x}}$ (\reftab{tab.HierInvNav}) and the local control policy $\h{\purt, \vectbf{b}}\prl{\vectbf{x}}$ (\reftab{tab.LocalPolicy}) selected as $\prl{\mathcal{\indexset, \vectbf{b}}} = \p\prl{\vectbf{x}}$ (\reftab{tab.PolicySelection})   generate the same control (velocity) inputs, i.e.  
\begin{align}
\f\prl{\vectbf{x}} = \h{\p\prl{\vectbf{x}}}\prl{\vectbf{x}}.
\end{align}
\end{proposition}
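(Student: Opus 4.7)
The plan is to proceed by structural induction on the post-order traversal of $\tree$, exploiting the observation that the three algorithms in \reftab{tab.HierInvNav}, \reftab{tab.LocalPolicy}, and \reftab{tab.PolicySelection} share identical tree-traversal skeletons and identical branching tests at every cluster visited.  The selection routine $\phat$ merely \emph{records} which branch each cluster takes (base case via $\setA\prl{I}$, base case via failure of $\setH\prl{I}$, or recursion); $\fhat$ applies the corresponding primitive field directly; while $\hhat{\prl{\purt,\vectbf{b}}}$ applies the primitive prescribed by the recorded partition-and-sign data.  Well-definedness of the right-hand side is not at issue, since \refprop{prop.DomainInclusion} already places $\vectbf{x}\in\domain\prl{\p\prl{\vectbf{x}}}$.

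The first step is to establish a locality lemma for $\hhat{\prl{\purt,\vectbf{b}}}$: if two substratum-policy indices $\prl{\purt,\vectbf{b}}$ and $\prl{\purt',\vectbf{b}'}$ in $\subpolicyset{\indexset}{\tree}$ \refeqn{eq.SubPolicySet} agree on all clusters contained in some $I\in\cluster{\tree}$, then
\begin{align}
\hhat{\prl{\purt,\vectbf{b}}}\prl{\vectbf{x},\vectbf{u},I} \;=\; \hhat{\prl{\purt',\vectbf{b}'}}\prl{\vectbf{x},\vectbf{u},I}
\end{align}
for every $\vectbf{u}$.  This is immediate from the recursion in \reftab{tab.LocalPolicy}: at $I$ the routine consults only the membership test ``$I\in\purt$'' and the scalar $\vect{b}_I$ before recursing on $\crl{I_L,I_R}=\childCL{I,\tree}$, and by induction on the height of $I$ the two child calls depend solely on data indexed by descendants of $I_L$ and $I_R$.

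With this lemma in hand, the main induction establishes the strengthened claim
\begin{align}
\fhat\prl{\vectbf{x},\vectbf{u},I} \;=\; \hhat{\phat\prl{\vectbf{x},I}}\prl{\vectbf{x},\vectbf{u},I}, \quad \forall\, I\in\cluster{\tree},\;\vectbf{u}\in\prl{\R^d}^{\indexset}.
\end{align}
If $\vectbf{x}\in\setA\prl{I}$, both sides equal $\fA\prl{\vectbf{x},\vectbf{u},I}$ (directly on the left; on the right because $\phat$ returns $\prl{\crl{I},+1}$ which fires the $\fA$ clause of $\hhat$); the case $\vectbf{x}\notin\setA\prl{I}\cup\setH\prl{I}$ is symmetric, producing $\fS$ on both sides.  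In the recursion case $\vectbf{x}\in\setH\prl{I}\setminus\setA\prl{I}$, both $\fhat$ and $\phat$ descend into $\crl{I_L,I_R}$; writing $\prl{\mathcal{I}_L,\vectbf{b}_L}=\phat\prl{\vectbf{x},I_L}$ and $\prl{\mathcal{I}_R,\vectbf{b}_R}=\phat\prl{\vectbf{x},I_R}$, the inductive hypothesis identifies the two child subcalls of $\fhat$ with $\hhat{\prl{\mathcal{I}_L,\vectbf{b}_L}}\prl{\vectbf{x},\vectbf{u},I_L}$ and $\hhat{\prl{\mathcal{I}_R,\vectbf{b}_R}}\prl{\vectbf{x},\hat{\vectbf{u}}_L,I_R}$, after which the locality lemma lets me replace both indices by the concatenated $\phat\prl{\vectbf{x},I}=\prl{\mathcal{I}_L\cup\mathcal{I}_R,\vectbf{b}_L\|\vectbf{b}_R}$, matching exactly the two recursive calls of $\hhat{\phat\prl{\vectbf{x},I}}$ at $I$.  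The terminal application of $\fH$ to identical inputs then closes the inductive step.

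Instantiating this equality at $I=\indexset$ and $\vectbf{u}=\vectbf{0}$ yields $\f\prl{\vectbf{x}}=\h{\p\prl{\vectbf{x}}}\prl{\vectbf{x}}$, the required identity.  The only real obstacle is careful bookkeeping: verifying that the concatenation $\prl{\mathcal{I}_L\cup\mathcal{I}_R,\;\vectbf{b}_L\|\vectbf{b}_R}$ produced by $\phat$ at $I$ aligns correctly with the block-wise data queried by $\hhat$ at descendants of $I_L$ and $I_R$, so that the locality lemma applies cleanly at each recursive step.  No new analytic ideas beyond the structural parallelism of the three recursions are required.
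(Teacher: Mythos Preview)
Your proposal is correct and follows essentially the same approach as the paper: both arguments exploit that the three recursions in \reftab{tab.HierInvNav}, \reftab{tab.LocalPolicy}, and \reftab{tab.PolicySelection} share identical branching tests and hence identical tree-traversal patterns, so the same primitive field ($\fA$, $\fS$, or $\fH$) is applied at every visited cluster. Your version is more explicit---carrying out a formal structural induction on $I\in\cluster{\tree}$ and isolating the locality lemma that lets you swap $\phat\prl{\vectbf{x},I_L}$ for $\phat\prl{\vectbf{x},I}$ in the child call---whereas the paper argues this ``by inspection,'' simply noting that the shared traversal and the correspondence between $\vect{b}_I$ and the base conditions force the recursions to coincide step by step.
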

\begin{proof}
See \refapp{app.SystemEquivalence}. \qed
\end{proof}
Since the hierarchy invariant field $\f$  is defined for entire $\stratum{\tree}$, it is useful to remark that the domains, $\domain\prl{\purt, \vectbf{b}}$ \refeqn{eq.LocalPolicyDomain}, of local control policies, $\h{\purt, \vectbf{b}}$, define a cover of $\stratum{\tree}$ indexed by partitions of $\indexset$ compatible with $\tree$ and associated binary vectors.

\subsection{Online Sequential Composition of Substratum Policies}
\label{app.OnlineSequentialComposition}

We  now briefly describe the logic behind online  sequential composition \cite{Burridge_Rizzi_Koditschek_1999}  of substratum policies.

To characterize our policy selection strategy, we first define a priority  measure\footnote{In the general past literature, such a priority assignment of local controllers is done using backchaining of the prepares graph in an offline manner\cite{Burridge_Rizzi_Koditschek_1999}.} for each local controller $\h{\purt, \vectbf{b}}$ associated with a partition $\purt \in \purtset_{\indexset}\prl{\tree}$ of $\indexset$ and a binary vector $\vectbf{b} \in \crl{-1,+1}^{\purt}$ to be
\begin{align}\label{eq.Priority}
\priority\prl{\purt, \vectbf{b}} \ldf \sum_{I \in \purt} \vect{b}_{I} \card{I}^2.
\end{align}
Note that the maximum and minimum of the priority measure is attained at the coarsest  partition $\crl{\indexset}$ of $\indexset$, and $\vect{b}_{\indexset} = +1$ and $\vect{b}_{\indexset} = -1$, respectively,
\begin{subequations} \label{eq.PriorityRange}
\begin{align}
\priority\prl{\crl{\indexset}, +1} &= \card{\indexset}^2, \\
\priority\prl{\crl{\indexset}, -1} &= -\card{\indexset}^2.
\end{align} 
\end{subequations}
Accordingly, we shall refer to the local control policy with index $\prl{\crl{\indexset}, +1}$ as the goal policy since it  has the highest priority and asymptotically steers all configurations in its domain $\domain\prl{\crl{\indexset}, +1}$\refeqn{eq.LocalPolicyDomain} to $\vectbf{y}$ following the negated gradient of $\V\prl{\vectbf{x}} = \frac{1}{2}\norm{\vectbf{x} - \vectbf{y}}$, i.e. for any $\vectbf{x} \in \domain\prl{\crl{\indexset}, +1}$
\begin{align}
\h{\crl{\indexset}, + 1} \prl{\vectbf{x}} = - \nabla \V\prl{\vectbf{x}} = - (\vectbf{x} - \vectbf{y}).
\end{align} 
Note that since the root cluster $\indexset$ has no ancestor, i.e. $\ancestorCL{\indexset,\tree} = \emptyset$, by definition \refeqn{eq.LocalPolicyDomain}, $\domain\prl{\crl{\indexset}, +1} = \setA\prl{\indexset}$,
and $\setA\prl{\indexset}$ \refeqn{eq.SetA} contains the goal configuration $\vectbf{y}$.

We now introduce an abstract connection between local policies for high-level planning:
\begin{definition}\label{def.PreparesGraph}
Let $\prl{\purt, \vectbf{b}},(\purt', \vectbf{b}') \in \subpolicyset{\indexset}{\tree}$ be  two distinct substratum policy indices. 
Then  $\h{\purt, \vectbf{b}}$ is said to \emph{prepare} $\h{\purt', \vectbf{b}'}$ if and only if all trajectories of $\h{\purt, \vectbf{b}}$ starting in its domain $\domain\prl{\purt, \vectbf{b}}$, possibly excluding a set of measure zero, reach $\domain(\purt', \vectbf{b}')$ in finite time.\footnote{ Here, we slightly relax the original definition of the prepares relation  in  \cite{Burridge_Rizzi_Koditschek_1999}  by not requiring the knowledge of goal sets, globally asymptotically stable states, of local control policies in advance.} 

Accordingly,  define the \emph{prepares graph} $\subpreparesgraph = \prl{\subpolicyset{\indexset}{\tree}, \subpreparesedgeset}$ to have vertex set $\subpolicyset{\indexset}{\tree}$\refeqn{eq.SubPolicySet}  with a policy index $\prl{\purt, \vectbf{b}} \sqz{\in} \subpolicyset{\indexset}{\tree}$ connected to another policy index $(\purt', \vectbf{b}')$ by a directed edge in $\subpreparesedgeset$ if and only if $\h{\purt, \vectbf{b}}$ prepares $\h{\purt', \vectbf{b}'}$. 
\end{definition}

\noindent Although, the prepares graph $\subpreparesgraph$ is the most critical component of the sequential composition framework \cite{Burridge_Rizzi_Koditschek_1999} defining a discrete abstraction of continuous control policies, the exponentially growing cardinality of substratum policies, discussed in \refapp{app.SubstratumCardinality}, and the lack of an explicit characterization of globally asymptotically stable configurations of substratum policies make it usually difficult to compute the complete prepares graph. 

Alternatively, we introduce a computationally efficient and recursively constructed graph of substratum policies that is nicely compatible with our needs, yielding a subgraph of the prepares graph, where every policy index is connected to the goal policy index $\prl{\crl{\indexset}, +1}$ through a directed path, as follows.   

\begin{definition} \label{def.SubPreparesGraph}
Let $\subpreparesgraphS = (\subpolicyset{\indexset}{\tree}, \subpreparesedgesetS)$ be a graph with vertex list $\subpolicyset{\indexset}{\tree}$, and a policy index $\prl{\purt, \vectbf{b}} \in \subpolicyset{\indexset}{\tree}$ that is connected to another policy index $(\purt', \vectbf{b}') \in \subpolicyset{\indexset}{\tree}$ by a directed edge in $\subpreparesedgesetS$ if and only if at least one of the following properties holds:\footnote{One may think of these conditions as restructuring operations of policy indices by merging/splitting of partition blocks and/or alternating binary index values, like NNI moves of trees in \refsec{sec.TreeGraph}.}
\begin{enumerate}[label=(\roman*)]
\item \label{def.subprepgraph1} (Complement) There exists a singleton  cluster $I \in \purt$ such that $\vect{b}_I = -1$, and  $\purt' = \purt$ and $\vectbf{b}' \in \crl{-1, +1}^{\purt'}$ with  $\vect{b}'_I = +1$ and $\vect{b}'_D = \vect{b}_D$ for all $D \in \purt \setminus \crl{I}$.

\item \label{def.subprepgraph2} (Split) There exists a nonsingleton cluster $I \in \purt$ such that $\vect{b}_I = -1$, and $\purt' = \purt \setminus \crl{I} \cup \childCL{I,\tree}$  and 
$\vectbf{b}' \in \crl{-1, +1}^{\purt'}$ with  $\vect{b}'_K = -1$ for all $K \in \childCL{I,\tree}$ and $\vect{b}'_D = \vect{b}_D$ for all $D \in \purt \setminus \childCL{I,\tree}$.

\item \label{def.subprepgraph3} (Merge) There exists a nonsingleton cluster $I \sqz{\in} \cluster{\tree}$ such that $\childCL{I,\tree} \sqz{\subset} \purt$  and $\vect{b}_K \sqz{=} +1$ for all $K \sqz{\in} \childCL{I,\tree}$, and $\purt' = \purt \setminus \childCL{I,\tree} \cup \crl{I}$ and $\vectbf{b}' \in \crl{-1, +1}^{\purt'}$ with $\vect{b}'_I = +1$ and $\vect{b}'_D = \vect{b}_D$ for all $D \in \purt \setminus \childCL{I,\tree}$.
\end{enumerate}
\end{definition}

\noindent Note that, since $\purt$ is compatible with $\tree$, i.e. $\purt \subset \cluster{\tree}$, if $\card{\purt}> 1$, then there exists  a cluster $I \in \cluster{\tree}$ such that $\childCL{I,\tree} \subset \purt$ (\reflem{lem.ClusterPartition}).
Hence, for any policy index $\prl{\purt, \vectbf{b}} \neq \prl{\crl{\indexset}, +1}$ there always exists a policy index $(\purt', \vectbf{b}') \neq \prl{\purt, \vectbf{b}}$ satisfying one of these conditions,  \ref{def.subprepgraph1}-\ref{def.subprepgraph3} above. 
Thus, the out-degree of a policy index $\prl{\purt, \vectbf{b}} \neq \prl{\crl{\indexset}, +1}$ in  $\subpreparesgraphS$ is at least one, whereas the goal policy index $\prl{\crl{\indexset}, +1}$ in  $\subpreparesgraphS$ has an out-degree of zero.    
We summarize some important properties of  $\subpreparesgraphS$ as follows:

\begin{proposition}\label{prop.PreparesGraph}
The graph $\subpreparesgraphS = (\subpolicyset{\indexset}{\tree}, \subpreparesedgesetS)$, as defined in \refdef{def.SubPreparesGraph}, is an acyclic subgraph of the prepares graph $\subpreparesgraph = (\subpolicyset{\indexset}{\tree}, \subpreparesedgeset)$ (\refdef{def.PreparesGraph}) such that all policy indices in $\subpolicyset{\indexset}{\tree}$ are connected to the goal policy index $\prl{\crl{\indexset}, +1}$ through directed paths in $\subpreparesedgesetS$, of length at most $\bigO{\card{\indexset}^2}$ hops, along which $\priority$ \refeqn{eq.Priority} is strictly increasing, i.e. for any $\left ( \big. \right. \!\prl{\purt, \vectbf{b}}\!, (\purt', \vectbf{b}') \!\! \left. \big. \right) \in \subpreparesedgesetS$
\begin{align}
\priority(\purt', \vectbf{b}') > \priority\prl{\purt, \vectbf{b}}.
\end{align}
\end{proposition}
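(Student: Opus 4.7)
The plan is to establish the four claims -- subgraph inclusion, acyclicity, reachability of the goal from every vertex, and the $\bigO{\card{\indexset}^2}$ path-length bound -- by reducing all but the first to a simple calculation with $\priority$ \refeqn{eq.Priority} together with the combinatorial structure of partitions compatible with $\tree$.

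The cornerstone is the observation that $\priority$ strictly increases along every edge of $\subpreparesedgesetS$, which I verify by direct computation in each of the three cases of \refdef{def.SubPreparesGraph}. For Case \ref{def.subprepgraph1} (Complement), only the sign of $\vect{b}_I$ on a singleton $I$ flips from $-1$ to $+1$, so the priority increment is $2\card{I}^2 = 2$. For Case \ref{def.subprepgraph2} (Split), the block $I$ with $\vect{b}_I = -1$ is replaced by its children $K_L, K_R \in \childCL{I,\tree}$, each receiving $\vect{b}'_{K_L} = \vect{b}'_{K_R} = -1$; since $\crl{K_L, K_R}$ partitions $I$ and therefore $\card{I} = \card{K_L} + \card{K_R}$, the net change equals $\card{I}^2 - \card{K_L}^2 - \card{K_R}^2 = 2\card{K_L}\card{K_R} \geq 2$. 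Case \ref{def.subprepgraph3} (Merge) is the mirror image and yields the same strictly positive increment $2\card{K_L}\card{K_R} \geq 2$. Acyclicity of $\subpreparesgraphS$ is then immediate, since any cycle would force $\priority$ to return to its initial value.

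To prove reachability, I show that every non-goal vertex has an outgoing edge in $\subpreparesedgesetS$. Fix $\prl{\purt, \vectbf{b}} \neq \prl{\crl{\indexset}, +1}$. If some block $I \in \purt$ carries $\vect{b}_I = -1$, then Case \ref{def.subprepgraph1} or \ref{def.subprepgraph2} applies depending on whether $I$ is a singleton. Otherwise $\vect{b}_D = +1$ for every $D \in \purt$ while $\purt \neq \crl{\indexset}$, forcing $\card{\purt} \geq 2$; \reflem{lem.ClusterPartition} then supplies a cluster $I \in \cluster{\tree}$ with $\childCL{I, \tree} \subset \purt$, enabling Case \ref{def.subprepgraph3}. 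Iterating from any starting vertex produces a directed path in $\subpreparesedgesetS$ along which $\priority$ strictly increases by integer steps; by finiteness of $\subpolicyset{\indexset}{\tree}$, the path must terminate at the unique priority-maximizer $\prl{\crl{\indexset}, +1}$ of \refeqn{eq.PriorityRange}. Because each increment is at least $2$ and $\priority$ takes values in $\brl{-\card{\indexset}^2, \card{\indexset}^2}$, this path has length at most $\card{\indexset}^2$, giving the claimed $\bigO{\card{\indexset}^2}$ bound.

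The remaining claim, $\subpreparesedgesetS \subset \subpreparesedgeset$, reduces to checking that each of the three edge constructions realizes the prepares relation of \refdef{def.PreparesGraph}. For this I would appeal to \refprop{prop.PreparesRelation}, whose lemmas \reflem{lem.PreparesRelationSingleton}, \reflem{lem.PreparesRelationNegative}, and \reflem{lem.PreparesRelationPositive} treat precisely Cases \ref{def.subprepgraph1}, \ref{def.subprepgraph2}, and \ref{def.subprepgraph3} respectively, guaranteeing that almost all trajectories of the source substratum field enter the target domain in finite time. The genuine technical obstacle is therefore not in \refprop{prop.PreparesGraph} itself but in \refprop{prop.PreparesRelation}; the work within the scope of the present proposition is almost entirely combinatorial, with $\priority$ absorbing all the bookkeeping.
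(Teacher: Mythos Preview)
Your proposal is correct and follows essentially the same approach as the paper's proof: both establish the strict $\priority$ increase along every edge of $\subpreparesedgesetS$, derive acyclicity immediately, invoke \reflem{lem.ClusterPartition} to guarantee that every non-goal vertex has an outgoing edge, and bound path length via the integer range $\brl{-\card{\indexset}^2, \card{\indexset}^2}$ of $\priority$, deferring the subgraph inclusion $\subpreparesedgesetS \subset \subpreparesedgeset$ to Lemmas~\ref{lem.PreparesRelationSingleton}--\ref{lem.PreparesRelationPositive}. The only cosmetic difference is that you compute the priority increments $2$, $2\card{K_L}\card{K_R}$, $2\card{K_L}\card{K_R}$ explicitly (and thereby note the sharper constant in the hop bound), whereas the paper simply cites the same three lemmas for the inequality $\priority(\purt',\vectbf{b}') > \priority(\purt,\vectbf{b})$.
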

\begin{proof}
See \refapp{app.PreparesGraph}. \qed
\end{proof}

Although a given local policy can prepare more than one potential successor (i.e. higher priority), our policy selection method chooses the one with the strictly highest priority:
\begin{proposition}\label{prop.PolicyPriority}
For any given $\vectbf{x} \in \stratum{\tree}$ the policy selection method, $\p$, in \reftab{tab.PolicySelection} always returns the index of a local controller with the maximum priority among all local controllers whose domain contains $\vectbf{x}$, 
\begin{align}
 \p\prl{\vectbf{x}} = 
\argmax_{
\substack{
(\purt', \vectbf{b}') \in \subpolicyset{\indexset}{\tree} \\
\vectbf{x} \in \domain\prl{\purt', \vectbf{b}'}
}
} 
\priority (\purt', \vectbf{b}').
\end{align}
 and all the other available local controllers have strictly lower priorities.
\end{proposition}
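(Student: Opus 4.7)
The plan is to proceed by structural induction on the subtree of $\tree$ rooted at a cluster $I \in \cluster{\tree}$, establishing the stronger localized claim that $\phat\prl{\vectbf{x}, I}$ returns a sub-policy index $\prl{\purt_I, \vectbf{b}_I}$ over the subtree at $I$ whose sub-domain contains $\vectbf{x}$ and whose priority strictly dominates every other such valid sub-policy index; setting $I = \indexset$ then yields the proposition. The recursive structure of \reftab{tab.PolicySelection} mirrors exactly the three-branch structure we shall analyze below, with the priority function \refeqn{eq.Priority} being additive over partition blocks so that priorities decompose along the tree.

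The base cases correspond to the two non-recursive branches. If $\vectbf{x} \in \setA\prl{I}$ (branch \reftab{tab.PolicySelection}.2) then $\phat$ returns $\prl{\crl{I}, +1}$ with priority $+\card{I}^2$; any alternative sub-policy over the subtree at $I$ is either $\prl{\crl{I}, -1}$ (priority $-\card{I}^2$) or splits $I$ into a finer partition $\purt_I$ whose priority is bounded above by $\sum_{B \in \purt_I} \card{B}^2$, and this sum is strictly less than $\card{I}^2$ by the elementary strict inequality $\card{A}^2 + \card{B}^2 < \prl{\card{A}+\card{B}}^2$ iterated over the non-trivial partition. If instead $\vectbf{x} \notin \setA\prl{I}$ but $\vectbf{x} \notin \setH\prl{I}$ (branch \reftab{tab.PolicySelection}.5) then $\phat$ returns $\prl{\crl{I}, -1}$, and we observe that $\prl{\crl{I}, +1}$ is excluded because its domain requires $\vectbf{x} \in \setA\prl{I}$ \refeqn{eq.SetB}, while every split sub-policy is excluded because the sub-domain \refeqn{eq.LocalPolicyDomain} of any policy whose partition strictly refines $\crl{I}$ requires $\vectbf{x} \in \setH\prl{I}$ (since $I$ becomes a strict ancestor of every sub-partition block); hence $\prl{\crl{I}, -1}$ is vacuously the unique valid sub-policy and thus trivially maximal.

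For the recursive branch ($\vectbf{x} \notin \setA\prl{I}$ and $\vectbf{x} \in \setH\prl{I}$), let $\crl{I_L, I_R} = \childCL{I,\tree}$. Any valid sub-policy over the subtree at $I$ with partition strictly refining $\crl{I}$ decomposes uniquely into a sub-policy over the subtree at $I_L$ together with one over the subtree at $I_R$, and its sub-domain contains $\vectbf{x}$ if and only if both sub-domains contain $\vectbf{x}$ and $\vectbf{x} \in \setH\prl{I}$; the latter holds by hypothesis. Since $\priority$ is additive across these two sub-partitions, the inductive hypothesis applied to $I_L$ and $I_R$ (via the calls in \reftab{tab.PolicySelection}.10 and \reftab{tab.PolicySelection}.11) guarantees that the concatenation $(\hat{\mathcal{I}}_L \cup \hat{\mathcal{I}}_R, \hat{\vectbf{b}}_L \| \hat{\vectbf{b}}_R)$ assembled in \reftab{tab.PolicySelection}.12--\ref{tab.PolicySelection}.13 maximizes priority among all splitting sub-policies. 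The only remaining competitor with valid sub-domain is $\prl{\crl{I}, -1}$ of priority $-\card{I}^2$, and applying the inductive bound together with the observation that the minimum possible priority of a child sub-policy is $-\card{I_L}^2$ (respectively $-\card{I_R}^2$), we get that the concatenated priority is at least $-\card{I_L}^2 - \card{I_R}^2$, which strictly exceeds $-\card{I}^2$ by the same strict inequality used in the first base case; thus $\phat$'s output strictly dominates all alternatives.

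The main subtlety (and arguably the only non-routine step) lies in the bookkeeping for the recursive branch: one must verify that the sub-domain of the concatenated policy coincides exactly with the intersection of the two child sub-domains together with the single additional constraint $\vectbf{x} \in \setH\prl{I}$, so that the branching test in \reftab{tab.PolicySelection}.8 precisely captures the validity condition that distinguishes splitting options from the lump $\prl{\crl{I}, -1}$. This reduces to unfolding \refeqn{eq.LocalPolicyDomain} and observing that the ancestor chain of any block $B \subset I_L$ within the subtree at $I$ is the ancestor chain within the subtree at $I_L$ augmented by $\crl{I}$ (and symmetrically for $I_R$). With this, taking $I = \indexset$ gives the proposition, and strict dominance yields the final sentence of the statement. \qed
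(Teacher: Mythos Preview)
Your proof is correct and takes a genuinely different route from the paper's. The paper argues by contradiction: assuming some $(\purt', \vectbf{b}') \neq \p(\vectbf{x})$ with $\vectbf{x} \in \domain(\purt', \vectbf{b}')$ achieves maximum priority, it invokes a partial-refinement lemma (\reflem{lem.FinerCoarserResolution}) to compare the two partitions $\purt$ and $\purt'$, and in each of three cases ($\purt$ partially refines $\purt'$, $\purt'$ partially refines $\purt$, or $\purt = \purt'$ with $\vectbf{b} \neq \vectbf{b}'$) constructs a third policy index of strictly higher priority whose domain still contains $\vectbf{x}$, yielding the contradiction. Your approach instead follows the recursive structure of $\phat$ directly via structural induction on the subtree rooted at $I$, exploiting the additivity of $\priority$ \refeqn{eq.Priority} over partition blocks to decompose the optimization along the tree. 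Your argument is self-contained (it does not need \reflem{lem.FinerCoarserResolution}) and mirrors the algorithm more transparently; the paper's argument is more global, reasoning about arbitrary pairs of compatible partitions rather than tracing the recursion. Both ultimately rest on the same elementary strict inequality $(a+b)^2 > a^2 + b^2$ for positive $a,b$.
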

\begin{proof}
See \refapp{app.PolicyPriority}. \qed
\end{proof}

\subsection{Qualitative Properties of Substratum Policies}
\label{app.LocalPolicyQualitative}

We now list important qualitative (existence, uniqueness, invariance and stability) properties of the substratum control policies of \reftab{tab.LocalPolicy}.
Let $\purt$ be a partition of $\indexset$ compatible with $\tree$, i.e. $\purt \subset \cluster{\tree}$, and $\vectbf{b}$ is a binary vector in $\crl{-1,1}^{\purt}$.

\begin{proposition} \label{prop.DomainInvariance}
The domain,  $\domain\prl{\purt, \vectbf{b}}$ \refeqn{eq.LocalPolicyDomain}, of a substratum policy, $\h{\purt, \vectbf{b}}$ (\reftab{tab.LocalPolicy}), is positive invariant.
\end{proposition}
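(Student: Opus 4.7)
The plan is to decompose the invariance claim into separate invariance statements for each constituent constraint defining $\domain\prl{\purt, \vectbf{b}}$, and then prove each by induction on the post-order recursion structure of \reftab{tab.LocalPolicy}. The domain is an intersection of sets of two types: constraints $\setA\prl{I}$ (whenever $\vect{b}_I = +1$) that encode intra-cluster collision avoidance and local hierarchy preservation via nonnegative Lie derivative inequalities \refeqn{eq.SetA}, and separation constraints $\setH\prl{K}$ \refeqn{eq.SetR} imposed at every ancestor $K$ of each block $I \in \purt$. Positive invariance of $\domain\prl{\purt, \vectbf{b}}$ reduces to showing that each such inequality is preserved by the flow of $\h{\purt, \vectbf{b}}$, which naturally mirrors the three lemmas \reflem{lem.DomainInvarianceBase1}, \reflem{lem.DomainInvarianceBase2}, \reflem{lem.DomainInvarianceRecursion} anticipated in \reftab{tab.ProofStructure}.

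For the leaves of the recursion, where $I \in \purt$, I would treat $\vect{b}_I = +1$ and $\vect{b}_I = -1$ separately. When $\vect{b}_I = +1$, the restriction of $\h{\purt, \vectbf{b}}$ to agents in $I$ equals $\fA\prl{\vectbf{x}, \vectbf{u}, I}$, plus purely translational contributions injected along the upward pass by $\fH$ at each strict ancestor of $I$. The critical structural observation is that $\fH$ \refeqn{eq.SplitPreservingField} adds the \emph{same} vector to every agent in a given child cluster $K \in \childCL{I,\tree}$, so: (i) for $i \neq j$ in the same leaf of this partition, $\vect{u}_i - \vect{u}_j$ equals the corresponding attracting-field difference, making the right-hand side of \refeqn{eq.SetACond1} coincide with that of $\fA$ alone; (ii) for any descendant $K$, each translation added by an ancestral $\fH$ is common to $K$ and to $\complementLCL{K}{\tree}$, whence $\ctrdsep{K\!}{\vectbf{y}} = 0$ in \refeqn{eq.SetACond2} and the Lie derivative reduces once more to that of the attracting field alone. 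Because $\vectbf{x} \in \setA\prl{I}$ by hypothesis, both Lie derivatives are nonnegative, so the defining inequalities of $\setA\prl{I}$ cannot be violated. When $\vect{b}_I = -1$, the separation field $\fS$ \refeqn{eq.SeparationField} translates each child cluster of $I$ in a direction that monotonically increases $\sepmag{k,K}{\vectbf{x}}$ until the $\beta$-margin is attained, while once again leaving every descendant-level Lie derivative unaffected.

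For the inductive step, I would verify that for every ancestor $K$ of some $I \in \purt$, the separation condition $\setH\prl{K}$ is positive invariant. This is the entire purpose of the split-preserving field $\fH$: at configurations approaching the boundary $\sepmag{k,K}{\vectbf{x}} = r_k + \alpha$, the envelope $\fphi{k,K}{\vectbf{x}}$ activates and the term $\fpsi{k,K}{\vectbf{x}, \vectbf{u}}$ in \refeqn{eq.Alpha} is designed to dominate any negative tendency of $\lie_{\overrightarrow{\vectbf{u}}} \sepmag{k,K}{\vectbf{x}}$ computed via \refeqn{eq.sepmagLie}, thereby producing a nonnegative Lie derivative of $\sepmag{k,K}{\vectbf{x}} - r_k - \alpha$ on that boundary. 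The ``same translation per cluster'' property again ensures these corrective velocities do not disturb the already-established $\setA$ or $\setH$ invariants at lower or unrelated parts of the tree.

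The main obstacle will be bookkeeping the recursive accumulation of velocity contributions: an agent in a leaf block $I \in \purt$ receives $\fA$ or $\fS$ from its own block, plus one $\fH$ correction from every strict ancestor of $I$ on the path to the root. The proof therefore comes down to isolating, for each inequality under scrutiny, precisely which of these contributions actually enters its Lie derivative and verifying the correct sign. All of this will hinge on a few clean algebraic identities of the form ``vectors common to a cluster drop out of relative-position and centroid-midpoint expressions,'' established once at the start and then invoked throughout the induction; the formal inductive hypothesis, propagated cluster-by-cluster in the post-order traversal, will be that every inequality associated with descendants of the currently visited node is already preserved by the partially built field.
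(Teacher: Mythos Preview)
Your decomposition is the paper's: three lemmas (leaf blocks with $\vect{b}_I=+1$, leaf blocks with $\vect{b}_I=-1$, ancestral $\setH$ constraints) plus assembly, and your key structural observation --- that every $\fH$ correction injected at an ancestor acts as a rigid translation of the whole subordinate cluster and therefore drops out of every intra-cluster relative quantity --- is exactly what the paper exploits by writing $\dot{\vect{x}}_i = -(\vect{x}_i - \vect{y}_i) + \vect{v}_I$ with a single $\vect{v}_I$ common to all $i\in I$. Your treatment of the $\vect{b}_I=-1$ case and of the ancestral $\setH$ margins is correct and matches the paper's \reflem{lem.DomainInvarianceBase2} and \reflem{lem.DomainInvarianceRecursion}.

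There is, however, a genuine gap in your $\vect{b}_I=+1$ case. You assert that because the relative velocities coincide with those of $\fA$ alone and $\vectbf{x}\in\setA\prl{I}$ initially, ``both Lie derivatives are nonnegative, so the defining inequalities of $\setA\prl{I}$ cannot be violated.'' But the inequalities defining $\setA\prl{I}$ are Lie derivatives along the \emph{constant} direction $\overrightarrow{\vectbf{y}}$, not along the flow; the right-hand side of \refeqn{eq.SetACond1} does not involve $\vectbf{u}$ at all, so your cancellation observation says nothing directly about it, and knowing these quantities start above their thresholds says nothing about how they evolve. The paper closes this by differentiating each defining quantity along the flow --- your cancellation observation is what makes this clean --- obtaining scalar linear ODEs such as
\[
\frac{d}{dt}\,\lie_{\overrightarrow{\vectbf{y}}}\tfrac{1}{2}\norm{\vect{x}_i-\vect{x}_j}^2 \;=\; -\,\lie_{\overrightarrow{\vectbf{y}}}\tfrac{1}{2}\norm{\vect{x}_i-\vect{x}_j}^2 \;+\; \norm{\vect{y}_i-\vect{y}_j}^2,
\]
and then using $\norm{\vect{y}_i-\vect{y}_j}^2>(r_i+r_j)^2$ (since $\vectbf{y}\in\stratum{\tree}$) together with the initial bound from $\setA\prl{I}$ and a comparison argument to conclude the threshold is never crossed. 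The same linear-ODE-plus-comparison pattern handles the second family of $\setA\prl{I}$ inequalities and the separate claims (collision avoidance, local hierarchy preservation) needed to keep the trajectory in $\stratum{\tree}$ itself. Without this step the $+1$ branch of your argument is not actually closed.
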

\begin{proof}
See \refapp{app.DomainInvariance}. \qed
\end{proof}

\begin{proposition} \label{prop.LocalPolicyExistenceUniqueness}
(Substratum Existence and Uniqueness) The vector field $\h{\purt, \vectbf{b}}$ (\reftab{tab.LocalPolicy}) is locally Lipschitz in $\stratum{\tree}$; and
for any initial $\vectbf{x} \in \domain\prl{\purt, \vectbf{b}} \subset \stratum{\tree}$ there always exists a compact (bounded and closed) subset $W$ of $ \domain\prl{\purt, \vectbf{b}}$ \refeqn{eq.LocalPolicyDomain} such that all trajectories of $\h{\purt, \vectbf{b}}$ starting at $\vectbf{x}$ remain in $W$ for all future time. 

Therefore, there is a unique continuous and piecewise smooth flow of $\h{\purt, \vectbf{b}}$  in $\domain\prl{\purt, \vectbf{b}}$ that is defined for all future time.
\end{proposition}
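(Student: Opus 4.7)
The plan is to establish the three claims — local Lipschitz continuity, forward confinement in a compact subset of the domain, and the resulting existence and uniqueness of a piecewise smooth flow — by directly invoking the earlier structural results and filling in a boundedness argument based on the centroidal decomposition of the recursion in \reftab{tab.LocalPolicy}.

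First, the local Lipschitz property follows immediately from \refprop{prop.LocalPolicyProperty}: since $\h{\purt,\vectbf{b}}$ is continuous and piecewise smooth on the open set $\stratum{\tree}$, the standard result of \cite{kuntz_scholtes_JMAA1994} (already cited in the text accompanying \refprop{prop.LocalPolicyProperty}) yields that it is locally Lipschitz there. Second, \refprop{prop.DomainInvariance} guarantees that $\domain\prl{\purt,\vectbf{b}}$ is positive invariant, so any trajectory issuing from a point $\vectbf{x} \in \domain\prl{\purt,\vectbf{b}}$ stays inside the domain as long as it is defined. What remains is the key step: exhibiting a compact subset $W\subset \domain\prl{\purt,\vectbf{b}}$ containing the forward orbit through $\vectbf{x}$.

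The compactness argument will be carried out by induction on the post-order traversal of $\tree$, using \reflem{lem.CentroidDynamics} and \reflem{lem.ConfRadiusBound} at each level. For a leaf block $I\in\purt$: if $\vect{b}_I=+1$ the active sub-field is the attracting field $\fA$ of \refeqn{eq.AttractiveField}, under which the intra-cluster configuration $\vectbf{x}|I$ is driven by an explicit negated-gradient term toward $\vectbf{y}|I$, so the intra-cluster norm is monotonically non-increasing and in particular bounded; if $\vect{b}_I=-1$ the active sub-field is $\fS$ of \refeqn{eq.SeparationField}, whose repulsion magnitude $\fbeta{I}{\vectbf{x}}$ \refeqn{eq.Beta} is capped by $\beta$ and vanishes once the complementary separation exceeds $r_k+\beta$, so centroidal separation increases only up to a finite saturation value while intra-cluster spread is pinned down by the first term $-\ctrd{\vectbf{x}-\vectbf{y}|I}$. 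For an internal cluster $I$ visited by the recursion, the added split-preserving correction $\fH$ of \refeqn{eq.SplitPreservingField} is multiplied by $\falpha{I}{\vectbf{x},\vectbf{u}}$, whose $\fphi{k,K}{\vectbf{x}}$ factor \emph{exponentially} damps to zero beyond the $\beta$-margin, so that $\fH$ cannot produce a finite escape time; this is precisely the role highlighted after \refeqn{eq.Alpha}. Combining these level-by-level bounds via \reflem{lem.CentroidDynamics} on the evolution of cluster centroids and \reflem{lem.ConfRadiusBound} on the cluster radii $\radiusCL{}\prl{\vectbf{x}|A}$ \refeqn{eq.ConfRadius} produces uniform upper bounds on all inter- and intra-cluster norms along the trajectory, and uniform lower bounds (at least $2\alpha$) on all pairwise clearances, yielding a closed and bounded — hence compact — subset $W$ of $\domain\prl{\purt,\vectbf{b}}$ containing the forward orbit.

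With $W$ in hand, the conclusion is routine: local Lipschitz continuity of $\h{\purt,\vectbf{b}}$ on the open set containing the compact $W$, together with forward invariance of $W$, invokes Picard–Lindel\"of plus the standard extension lemma to give a unique $C^0$ trajectory defined for all $t\geq 0$; piecewise smoothness of the trajectory then transfers from the piecewise smoothness of the vector field established in \refprop{prop.LocalPolicyProperty}. I expect the main obstacle to be the bottom-up bookkeeping in the construction of $W$: one must verify that the bounds produced for sibling sub-clusters remain compatible once the $\fH$ correction at their common parent is added on top of the recursively computed input $\hat{\vectbf{u}}$, and that no cascade of such corrections can defeat the exponential damping of $\fphi{k,K}{\vectbf{x}}$; \reflem{lem.ConfRadiusBound} is the natural vehicle for absorbing this cascade into a single uniform estimate.
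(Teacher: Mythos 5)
Your plan matches the paper's proof: local Lipschitz continuity from \refprop{prop.LocalPolicyProperty} via \cite{kuntz_scholtes_JMAA1994}; positive invariance of $\domain\prl{\purt,\vectbf{b}}$ from \refprop{prop.DomainInvariance}; boundedness of the forward orbit from \reflem{lem.CentroidDynamics} and \reflem{lem.ConfRadiusBound}; and then the standard existence--uniqueness--continuation reasoning, with piecewise smoothness of the flow inherited from that of the field (\refprop{prop.LocalPolicyProperty}). The paper's $W$ is made explicit as the intersection of $\domain\prl{\purt,\vectbf{b}}$ with the Minkowski sum of the segment joining $\ctrd{\vectbf{x}^0|\indexset}$ to $\ctrd{\vectbf{y}|\indexset}$ (along which the global centroid travels, by \refeqn{eq.CentroidDynamicsRoot}) and a ball of radius $R\prl{\vectbf{x}^0,\vectbf{y}}$ supplied by \reflem{lem.ConfRadiusBound}; you essentially re-derive the induction that lives inside that lemma rather than citing it once, but that is only a redundancy, not a gap.

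Two of the heuristic sub-claims in your sketch are false and should be removed, since the cited lemmas already carry the argument. Under $\fA$ the intra-cluster radius is \emph{not} monotonically non-increasing: the relative coordinate $\vect{x}_i-\ctrd{\vectbf{x}|I}$ relaxes exponentially toward $\vect{y}_i-\ctrd{\vectbf{y}|I}$ and can grow along the way; what holds is only $\radiusCL{}\prl{\vectbf{x}^t|I}\le\max\prl{\radiusCL{}\prl{\vectbf{x}^0|I},\radiusCL{}\prl{\vectbf{y}|I}}$, which is the content of the corresponding base case of \reflem{lem.ConfRadiusBound}. Likewise there is no uniform $2\alpha$ lower bound on \emph{all} pairwise clearances: the $\alpha$-margin in $\setH$ constrains only pairs straddling an internal split visited by the recursion, while for a pair inside a block $I\in\purt$ with $\vect{b}_I=+1$ the invariance argument (\reflem{lem.DomainInvarianceBase1}(iv)) yields just the strict inequality $\norm{\vect{x}_i^t-\vect{x}_j^t}^2>\prl{r_i+r_j}^2$, with a gap set by $\vectbf{x}^0$ and $\vectbf{y}$ rather than by $\alpha$. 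Compactness of $W$ does not rest on a uniform margin but on the explicit trajectory bounds inside the invariance lemmas together with the upper bounds just described.
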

\begin{proof}
See \refapp{app.LocalPolicyExistenceUniqueness}. \qed
\end{proof}

\begin{proposition}\label{prop.PreparesRelation}
(Finite Time Prepares Relation) Each local control policy, $\h{\purt, \vectbf{b}}$, with the exception of the goal controller $\h{\crl{\indexset}, +1}$,  steers (almost) all configurations in its domain, $\domain\prl{\purt, \vectbf{b}}$, to the domain, $ \domain\prl{\purt', \vectbf{b}'}$, of another local controller, $\h{\purt', \vectbf{b}'}$, at a higher $\,\priority$ \refeqn{eq.Priority} in finite time.  
\end{proposition}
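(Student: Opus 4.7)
The plan is a case analysis on the three types of outgoing edges furnished by \refdef{def.SubPreparesGraph} to every non-goal vertex $\prl{\purt, \vectbf{b}} \neq \prl{\crl{\indexset}, +1}$ of $\subpreparesgraphS$. By \refprop{prop.PreparesGraph} a successor $(\purt', \vectbf{b}')$ exists and has strictly higher priority, so the only remaining task is to establish finite-time entry into $\domain\prl{\purt', \vectbf{b}'}$ by the flow of $\h{\purt, \vectbf{b}}$. Throughout, \refprop{prop.DomainInvariance} preserves the $\setH\prl{J}$ conditions for ancestors as well as the conditions on the blocks of $\purt$ left unchanged by the restructuring, so in each case only the single new condition defining $\domain\prl{\purt', \vectbf{b}'}$ must be verified.

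In Case (i), complement at a singleton $I \in \purt$: because $\setA\prl{I} = \stratum{\tree}$ for every singleton cluster (as observed after \refeqn{eq.SetACond2}), definition \refeqn{eq.SetB} yields $\setB\prl{I,-1} = \stratum{\tree} = \setA\prl{I} = \setB\prl{I,+1}$, so $\domain\prl{\purt, \vectbf{b}} = \domain\prl{\purt', \vectbf{b}'}$ and the transition is instantaneous. In Case (ii), split of a nonsingleton $I$ with $\vect{b}_I = -1$: the state is outside both $\setA\prl{I}$ and $\setH\prl{I}$, so $\h{\purt, \vectbf{b}}$ invokes the separation field $\fS$ of \refeqn{eq.SeparationField} at $I$. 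The centroid-drift term $-\ctrd{\vectbf{x}-\vectbf{y}|I}$ rigidly translates $\vectbf{x}|I$ and therefore leaves every $\sepmag{k,K}{\vectbf{x}}$ with $K \in \childCL{I,\tree}$ invariant, whereas the repulsive term acts along $\ctrdsep{K,\tree}{\vectbf{x}}/\norm{\ctrdsep{K,\tree}{\vectbf{x}}}$. A direct Lie-derivative calculation using \refeqn{eq.sepmagLie} shows that $\lie_{\fS}\sepmag{k,K}{\vectbf{x}}$ is bounded below by $\fbeta{I}{\vectbf{x}}$, itself at least $\beta - \alpha > 0$ whenever the minimum margin falls below $r_k + \alpha$ (so $\vectbf{x} \not\in \setH\prl{I}$). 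Integration yields entry into $\setH\prl{I}$ in bounded time, and since $\setB\prl{K,-1} = \stratum{\tree}$ for every $K \in \childCL{I,\tree}$, the target domain is reached at that moment.

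In Case (iii), merge of children all labeled $+1$: the recursion applies $\fA$ within each child cluster and the correction $\fH$ at $I$. The crux is to show $\vectbf{y}$ lies in the topological interior of $\setA\prl{I}$: evaluated at $\vectbf{x} = \vectbf{y}$, the defining inequalities of \refeqn{eq.SetA} reduce to $\norm{\vect{y}_i - \vect{y}_j}^2 > \prl{r_i + r_j}^2$ (strict because $\vectbf{y} \in \cRdJr$) and $2\sepmag{k,K}{\vectbf{y}}\norm{\ctrdsep{K,\tree}{\vectbf{y}}} > 0$ (strict because $\vectbf{y} \in \stratum{\tree}$ forces $\sepmag{k,K}{\vectbf{y}} > 0$), and continuity then exhibits an open neighborhood of $\vectbf{y}$ contained in $\setA\prl{I}$. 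The attracting field contracts $\vectbf{x}|I$ toward $\vectbf{y}|I$ at exponential rate in the Lyapunov function $\frac{1}{2}\norm{\vectbf{x}|I - \vectbf{y}|I}^2$, while the correction $\fH$, whose amplitude $\falpha{I}{\vectbf{x},\vectbf{u}}$ vanishes at the attracting-field equilibrium inside the positive-invariant set $\setH\prl{I}$, can only slow contraction by an asymptotically vanishing perturbation. Finite-time entry of $\vectbf{x}|I$ into the aforementioned neighborhood, and hence into $\setA\prl{I}$, then follows by a comparison argument.

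The main obstacle will be Case (iii), where one must show that the correction $\fH$ does not obstruct the attracting field's contraction before $\setA\prl{I}$ is reached. The argument rests on bounding $\falpha{I}{\vectbf{x},\vectbf{u}}$ via the damping $\fphi{k,K}{\vectbf{x}}$ and the rate factor $\fpsi{k,K}{\vectbf{x},\vectbf{u}}$, both of which shrink as $\vectbf{x}$ approaches the attracting-field equilibrium within the positive-invariant $\setH\prl{I}$. The \emph{almost all} qualifier absorbs initial configurations on the lower-dimensional boundary of $\setA\prl{I}$, or on stable manifolds of non-goal equilibria arising in Case (ii), where the first-order strict inequalities used above degenerate and finite-time crossing cannot be asserted.
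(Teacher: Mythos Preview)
Your three-case decomposition matches the paper's \reflem{lem.PreparesRelationSingleton}--\ref{lem.PreparesRelationPositive}, and Cases~(i) and~(ii) are essentially correct (minor quibble: in Case~(ii) the domain $\domain\prl{\purt,\vectbf{b}}$ with $\vect{b}_I=-1$ imposes no constraint at $I$, so $\vectbf{x}$ need not start outside $\setH\prl{I}$; but if it does, your lower bound $\fbeta{I}{\vectbf{x}}\geq\beta-\alpha$ is valid and gives finite-time entry, while if it starts inside, entry is immediate).

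Case~(iii), however, has a genuine gap. Two of your key claims fail. First, $\vectbf{y}\in\stratum{\tree}$ only gives $\sepmag{k,K}{\vectbf{y}}\geq 0$ by \refeqn{eq.ClosedStrata}, not the strict inequality you invoke, so $\vectbf{y}$ need not lie in the interior of $\setA\prl{I}$. Second, and more seriously, the flow of $\h{\purt,\vectbf{b}}$ does \emph{not} contract $\vectbf{x}|I$ toward $\vectbf{y}|I$: the correction $\fH$ is not an asymptotically vanishing perturbation. If some agent in $\vectbf{y}$ sits within margin $\alpha$ of the separating hyperplane of $\childCL{I,\tree}$ (i.e., $\vectbf{y}\notin\setH\prl{I}$), then since the trajectory is confined to the positively invariant $\setH\prl{I}$ by \refprop{prop.DomainInvariance}, it can never approach $\vectbf{y}|I$. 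The repulsive term $\fH$ persistently holds the children's centroids farther apart than in $\vectbf{y}$.

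The paper's \reflem{lem.PreparesRelationPositive} instead identifies the actual limit set $\setG\prl{I}$: within each child $K\in\childCL{I,\tree}$ the relative positions $\vect{x}_k-\ctrd{\vectbf{x}|K}$ converge exactly to $\vect{y}_k-\ctrd{\vectbf{y}|K}$ (the rigid corrections cancel), the separating normal $\ctrdsep{K}{\vectbf{x}}/\norm{\ctrdsep{K}{\vectbf{x}}}$ aligns with that of $\vectbf{y}$ (this is where the measure-zero exception enters, from the anti-aligned saddle), and $\norm{\ctrdsep{K}{\vectbf{x}}}\geq\norm{\ctrdsep{K}{\vectbf{y}}}$. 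One then verifies directly that $\setG\prl{I}\subset\setAhat\prl{I_L,I_R}$ \refeqn{eq.SetAHat} and that $\setG\prl{I}\cap\setH\prl{I}$ lies in the \emph{interior} $\setAhatI\prl{I_L,I_R}$; combined with the recursion $\setA\prl{I}=\setA\prl{I_L}\cap\setA\prl{I_R}\cap\setAhat\prl{I_L,I_R}$ \refeqn{eq.SetARecursion} and the positive invariance of $\domain\prl{\purt,\vectbf{b}}$, this gives finite-time entry into $\domain\prl{\purt',\vectbf{b}'}$. Finally, your attribution of the ``almost all'' qualifier is misplaced: Case~(ii) admits no exceptional set (see \reflem{lem.PreparesRelationNegative}); the measure-zero exclusion arises solely from the anti-aligned saddles in Case~(iii).
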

\begin{proof}
See \refapp{app.PreparesRelation}. \qed
\end{proof}

\begin{proposition} \label{prop.NonZeroExecutiontime}
(Nonzero Execution Time) Let $\vectbf{x}^t$ be a trajectory of the local control policy $\h{\purt, \vectbf{b}}$ starting at $\vectbf{x}^0 \in \domain\prl{\purt, \vectbf{b}}$ such that $\p\prl{\vectbf{x}^0} = \prl{\purt, \vectbf{b}}$.

Then the local controller is guaranteed to steers the group for a nonzero time until reaching the domain of a local controller at a higher $\priority$ \refeqn{eq.Priority}, i.e.
\begin{align}
\inf_{t} \crl{t \geq 0 \big | \p\prl{\vectbf{x}^t} \neq \prl{\purt, \vectbf{b}}} > 0. 
\end{align}
\end{proposition}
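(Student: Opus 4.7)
The plan is to show that, at time zero, the configuration $\vectbf{x}^0$ sits strictly ``inside'' the complement of every strictly higher-priority policy's domain, and then to exploit continuity of the flow to propagate this for a short positive time. Combined with positive invariance of $\domain(\purt, \vectbf{b})$ (Proposition \ref{prop.DomainInvariance}), this will force $\p(\vectbf{x}^t) = (\purt, \vectbf{b})$ on a nontrivial initial time interval.

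First I would enumerate the finite collection
\begin{equation*}
\mathcal{H} \ldf \crl{(\purt',\vectbf{b}') \in \subpolicyset{\indexset}{\tree} \, \big| \, \priority(\purt',\vectbf{b}') > \priority(\purt,\vectbf{b}) },
\end{equation*}
of higher-priority policy indices. By the characterization of $\p$ in Proposition \ref{prop.PolicyPriority}, the hypothesis $\p(\vectbf{x}^0) = (\purt,\vectbf{b})$ is equivalent to saying that $\vectbf{x}^0 \notin \domain(\purt',\vectbf{b}')$ for every $(\purt',\vectbf{b}') \in \mathcal{H}$. Next I would invoke Lemma \ref{lem.LocalPolicyDomain} (closed substratum domain) to conclude that each $\domain(\purt',\vectbf{b}')$ is closed in $\stratum{\tree}$, so their finite union
\begin{equation*}
U \ldf \bigcup_{(\purt',\vectbf{b}') \in \mathcal{H}} \domain(\purt',\vectbf{b}')
\end{equation*}
is closed. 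Thus the complement $\stratum{\tree} \setminus U$ is open and contains $\vectbf{x}^0$.

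The continuity of the flow generated by $\h{\purt,\vectbf{b}}$ (guaranteed by Proposition \ref{prop.LocalPolicyExistenceUniqueness}) then supplies some $\delta > 0$ such that $\vectbf{x}^t \in \stratum{\tree} \setminus U$ for all $t \in [0,\delta)$. Positive invariance of the substratum domain (Proposition \ref{prop.DomainInvariance}) gives $\vectbf{x}^t \in \domain(\purt,\vectbf{b})$ for all $t \geq 0$. Applying Proposition \ref{prop.PolicyPriority} at time $t \in [0,\delta)$, the index $(\purt,\vectbf{b})$ is in the feasible set of policies whose domain contains $\vectbf{x}^t$, and no strictly higher priority index is, so $\p(\vectbf{x}^t) = (\purt,\vectbf{b})$ throughout $[0,\delta)$, yielding $\inf\crl{t \geq 0 \,|\, \p(\vectbf{x}^t) \neq (\purt,\vectbf{b})} \geq \delta > 0$.

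The main obstacle is Lemma \ref{lem.LocalPolicyDomain}: without closedness of the substratum domains, a higher-priority region could accumulate to $\vectbf{x}^0$ from outside, allowing $\p$ to switch arbitrarily quickly. Once that lemma is in hand, the topological argument is straightforward because $\mathcal{H}$ is finite and the flow is continuous. A minor care-point is that the selection $\p$ need not be continuous, but we only need an upper semicontinuity-style statement on the value set of $\p$ --- namely that the complements of higher-priority domains are open --- which is exactly what Lemma \ref{lem.LocalPolicyDomain} delivers.
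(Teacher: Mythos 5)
Your proof is correct and follows essentially the same route as the paper's: use Proposition \ref{prop.PolicyPriority} to translate the hypothesis $\p(\vectbf{x}^0)=(\purt,\vectbf{b})$ into avoidance of all higher-priority domains, invoke Lemma \ref{lem.LocalPolicyDomain} to get an open neighborhood of $\vectbf{x}^0$ disjoint from those (finitely many, closed) domains, and combine this with Proposition \ref{prop.DomainInvariance} and continuity of the flow to keep $\p(\vectbf{x}^t)=(\purt,\vectbf{b})$ on an initial nontrivial time interval. You are slightly more explicit than the paper in spelling out the finite union $U$ and in citing Proposition \ref{prop.LocalPolicyExistenceUniqueness} for continuity of the flow, but the structure and the three key ingredients are identical.
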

\begin{proof}

Recall that for any configuration $\vectbf{x} \in \stratum{\tree}$ the policy selection method in \reftab{tab.PolicySelection} always yields the index of the local controller with the highest priority among all local controllers whose domains contain $\vectbf{x}$ (\refprop{prop.PolicyPriority}).
Hence, since the initial configuration $\vectbf{x}^0$ is not included in the domain of any other local controller with a higher priority than $\priority(\purt, \vectbf{b})$ and domains of local controllers are closed relative to $\stratum{\tree}$ (\reflem{lem.LocalPolicyDomain}), there always exists an open set around $\vectbf{x}^0$ which does not intersect with the domain of any local controller at a higher priority than $\priority(\purt, \vectbf{b})$.
Thus, since its domain is positively invariant (\refprop{prop.DomainInvariance}), $\h{\purt, \vectbf{b}}$ is guaranteed to steer the configuration in the intersection of this open set and $\domain\prl{\purt, \vectbf{b}}$ for a nonzero time. \qed
\end{proof}

\subsection{Qualitative Properties of Stratum Policies}
\label{app.HierInvNavQualitative}

We now proceed with some important qualitative (existence, uniqueness, invariance and stability) properties of  the hierarchy preserving navigation policy of \reftab{tab.HierInvNav}.

\begin{proposition} \label{prop.HierInvNavInvariance}
The stratum $\stratum{\tree}$ is positive invariant under the hierarchy-invariant control policy,  $\f$ (\reftab{tab.HierInvNav}).
\end{proposition}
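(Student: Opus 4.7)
The plan is to leverage the equivalent system representation already developed in \refapp{app.EquivalentSystemModel}: by \refprop{prop.SystemEquivalence}, the trajectory generated by $\f$ coincides pointwise with that generated by the substratum policy $\h{\p\prl{\vectbf{x}}}$ selected at each configuration, and by \refprop{prop.DomainInclusion}, for every $\vectbf{x} \in \stratum{\tree}$ the selector $\p$ returns an index $\prl{\purt, \vectbf{b}}$ with $\vectbf{x} \in \domain\prl{\purt, \vectbf{b}}$. The key observation is that, by construction \refeqn{eq.LocalPolicyDomain}, each domain $\domain\prl{\purt, \vectbf{b}}$ is a subset of $\stratum{\tree}$, so trapping the flow in a succession of such domains will automatically trap it in $\stratum{\tree}$.

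First I would initialize the argument: given $\vectbf{x}^0 \in \stratum{\tree}$, set $\prl{\purt^0, \vectbf{b}^0} \ldf \p\prl{\vectbf{x}^0}$, so that $\vectbf{x}^0 \in \domain\prl{\purt^0, \vectbf{b}^0} \subset \stratum{\tree}$. By \refprop{prop.DomainInvariance}, $\domain\prl{\purt^0, \vectbf{b}^0}$ is positively invariant under $\h{\purt^0, \vectbf{b}^0}$, and by \refprop{prop.LocalPolicyExistenceUniqueness} the corresponding trajectory is uniquely defined for all future time. While this policy remains active, \refprop{prop.SystemEquivalence} identifies the $\f$-trajectory with this substratum trajectory, so the state stays in $\domain\prl{\purt^0, \vectbf{b}^0} \subset \stratum{\tree}$.

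Next I would handle the switching behavior inductively. A switch from $\prl{\purt^k, \vectbf{b}^k}$ to $\prl{\purt^{k+1}, \vectbf{b}^{k+1}}$ occurs at some $\vectbf{x}^{t_{k+1}}$ which, by the positive invariance of the preceding domain, satisfies $\vectbf{x}^{t_{k+1}} \in \domain\prl{\purt^k, \vectbf{b}^k} \subset \stratum{\tree}$; the selector then re-assigns $\prl{\purt^{k+1}, \vectbf{b}^{k+1}} = \p\prl{\vectbf{x}^{t_{k+1}}}$ with $\vectbf{x}^{t_{k+1}} \in \domain\prl{\purt^{k+1}, \vectbf{b}^{k+1}} \subset \stratum{\tree}$, re-initializing the inductive step. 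To rule out any pathological accumulation of switches, I would invoke \refprop{prop.NonZeroExecutiontime} (each policy executes for strictly positive time) together with \refprop{prop.PolicyPriority} and \refprop{prop.PreparesGraph} (priority strictly increases across switches) and the finiteness of $\subpolicyset{\indexset}{\tree}$, so that only finitely many switches occur before the goal policy $\h{\crl{\indexset}, +1}$ becomes active --- and its domain too lies in $\stratum{\tree}$.

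The main obstacle, such as it is, is the bookkeeping around the switching surfaces: one must verify that the state stays in $\stratum{\tree}$ at the switching instants, not merely on the open arcs between them. This is dispatched by noting that domains are relatively closed in $\stratum{\tree}$ (as used in \refprop{prop.NonZeroExecutiontime}), hence any switching configuration lies in an intersection of two domains, each already contained in $\stratum{\tree}$. Invariance thus passes continuously through every transition, and positive invariance of $\stratum{\tree}$ under $\f$ follows.
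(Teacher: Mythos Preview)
Your proposal is correct and follows essentially the same approach as the paper: both arguments rest on \refprop{prop.DomainInclusion} (the domains cover $\stratum{\tree}$), \refprop{prop.DomainInvariance} (each domain is positively invariant under its own substratum policy), and \refprop{prop.SystemEquivalence} (the $\f$-flow coincides with the selected substratum policy). The paper's proof is a terse two-sentence sketch invoking exactly these three ingredients; you supply the inductive bookkeeping across switches and additionally invoke \refprop{prop.NonZeroExecutiontime}, \refprop{prop.PolicyPriority}, and \refprop{prop.PreparesGraph} to preclude Zeno behavior, which the paper treats as part of the separate existence/uniqueness result (\refprop{prop.HierInvNavExistenceUniqueness}) rather than as part of the invariance argument itself.
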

\begin{proof}
Recall that the domains, $\domain$ \refeqn{eq.LocalPolicyDomain}, of local control policies in \reftab{tab.LocalPolicy} define a cover of $\stratum{\tree}$ (\refprop{prop.DomainInclusion}) each of whose elements is positively invariant under the flow of the associated local policy (\refprop{prop.DomainInvariance}).
Thus, the result follows since the hierarchy preserving vector field $\f$ is equivalent to  online sequential composition of local control policies of \reftab{tab.LocalPolicy} based on the policy selection algorithm in \reftab{tab.PolicySelection} (\refprop{prop.SystemEquivalence}). \qed
\end{proof}

\begin{proposition} \label{prop.HierInvNavExistenceUniqueness}
(Stratum Existence and Uniqueness) The hierarchy invariance control policy,  $\f$ (\reftab{tab.HierInvNav}), has a unique, continuous and  piecewise smooth flow, $\flow^t$, in $\stratum{\tree}$, defined for all $t \geq 0$. 
\end{proposition}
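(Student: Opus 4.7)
The plan is to leverage the equivalence established in \refprop{prop.SystemEquivalence} to view the flow of $\f$ as an online sequential composition of the family of substratum vector fields $\crl{\h{\purt, \vectbf{b}}}_{\prl{\purt,\vectbf{b}} \in \subpolicyset{\indexset}{\tree}}$ governed by the policy selection rule $\p$ of \reftab{tab.PolicySelection}. On the interior of each domain $\domain\prl{\purt,\vectbf{b}}$, the existence and uniqueness of a continuous piecewise smooth flow defined for all future time are supplied directly by \refprop{prop.LocalPolicyExistenceUniqueness}, together with the positive invariance of $\domain\prl{\purt,\vectbf{b}}$ from \refprop{prop.DomainInvariance} (which ensures the local trajectory never leaves the domain of its defining policy except possibly onto its boundary, where a new policy of strictly higher priority takes over).

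First, I would construct the global trajectory inductively by stitching together local flow segments. Given $\vectbf{x}^0 \in \stratum{\tree}$, set $\prl{\purt_0, \vectbf{b}_0} := \p\prl{\vectbf{x}^0}$ and follow the unique local flow of $\h{\purt_0, \vectbf{b}_0}$. By \refprop{prop.NonZeroExecutiontime} this segment evolves for a strictly positive duration $\tau_0 > 0$ before $\p$ switches; the endpoint $\vectbf{x}^{\tau_0}$ is well defined as a limit of the local flow and lies in $\domain\prl{\purt_1, \vectbf{b}_1}$ for the new index $\prl{\purt_1, \vectbf{b}_1} = \p\prl{\vectbf{x}^{\tau_0}}$ obtained from \refprop{prop.DomainInclusion}. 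Iterating produces a sequence of flow segments whose concatenation is continuous by construction (each segment is continuous, and adjacent segments share endpoints) and piecewise smooth (each segment is piecewise smooth by \refprop{prop.LocalPolicyProperty}, and only finitely many new nonsmooth instants are introduced at the transitions).

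Second, I would argue that this construction covers the entire half-line $[0,\infty)$. By \refprop{prop.PreparesRelation} and \refprop{prop.PreparesGraph}, each transition corresponds to an edge of the acyclic graph $\subpreparesgraphS$ along which the priority function $\priority$ \refeqn{eq.Priority} strictly increases. Since $\subpolicyset{\indexset}{\tree}$ is finite, at most $\card{\subpolicyset{\indexset}{\tree}}$ transitions can occur along any trajectory, after which the state remains forever inside the domain of a single terminal policy whose flow is defined for all $t \geq 0$ by \refprop{prop.LocalPolicyExistenceUniqueness}. Uniqueness of the concatenated trajectory is inherited from the determinism of $\p$ and the per-segment uniqueness; continuity holds automatically across the finitely many transition instants because the endpoints match.

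The main obstacle one might worry about is Zeno-style chattering --- an infinite accumulation of transitions in finite time that would prevent extending the flow to all $t \geq 0$ --- but this is ruled out jointly by the nonzero execution time per segment (\refprop{prop.NonZeroExecutiontime}) and the strict priority increase along each prepares-graph edge (\refprop{prop.PreparesGraph}), which together cap the total number of transitions by the finite cardinality $\card{\subpolicyset{\indexset}{\tree}}$. Uniqueness across transitions, which might otherwise be ambiguous if the policy selection were multivalued, is secured by \refprop{prop.PolicyPriority}, which asserts that $\p$ always selects the strictly highest-priority admissible policy and is therefore single-valued. \qed
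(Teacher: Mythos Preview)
Your proposal is correct and follows essentially the same approach as the paper: invoke \refprop{prop.SystemEquivalence} to recast $\f$ as the online sequential composition of the substratum policies, appeal to \refprop{prop.LocalPolicyExistenceUniqueness} and \refprop{prop.DomainInvariance} for per-segment existence and uniqueness, and then piece segments together. Your treatment is in fact more thorough than the paper's terse proof, explicitly ruling out Zeno behavior via \refprop{prop.NonZeroExecutiontime} and the strict monotonicity of $\priority$ under $\p$.

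One small overreach: you assert that each transition corresponds to an edge of $\subpreparesgraphS$, but the specific edge types of \refdef{def.SubPreparesGraph} need not match the actual switch selected by $\p$. This does no harm, since the bound on the number of transitions already follows from \refprop{prop.PolicyPriority} (priority strictly increases at every switch) together with the finiteness of $\subpolicyset{\indexset}{\tree}$, which you also invoke; the appeal to \refprop{prop.PreparesRelation} and \refprop{prop.PreparesGraph} is therefore redundant here rather than incorrect.
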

\begin{proof}
Recall from \refprop{prop.SystemEquivalence} that $\f$ is equivalent to online sequential composition of a family of substratum policies which have unique, continuous and piecewise smooth flows, defined for all $t \geq 0$, in their positive invariant domains (\refprop{prop.LocalPolicyExistenceUniqueness}).
Since their domains  define a finite closed cover of $\stratum{\tree}$ (\refprop{prop.DomainInclusion}), the unique, continuous and piecewise flow of  $\f$ is constructed by piecing together trajectories of these substratum policies. \qed
\end{proof}

\begin{proposition} \label{prop.HierInvNavStability}
Any $\vectbf{y} \in \stratum{\tree}$ is an asymptotically stable equilibrium point of the hierarchy-invariant control policy, $\f$ (\reftab{tab.HierInvNav}), whose basin of attraction includes $\stratum{\tree}$, except a set of measure zero.  
\end{proposition}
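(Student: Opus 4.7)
The plan is to invoke the equivalence (\refprop{prop.SystemEquivalence}) between $\f$ and the sequential composition of substratum policies, and then exploit the discrete Lyapunov structure on the prepares graph $\subpreparesgraphS$ (\refprop{prop.PreparesGraph}) together with the finite-time prepares relations (\refprop{prop.PreparesRelation}) in order to reduce stratum-wide almost-global convergence to local asymptotic stability under the goal policy $\h{\crl{\indexset},+1}$.

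First I would verify that $\vectbf{y}$ is an equilibrium. Since $\vectbf{y}\in\setA\prl{\indexset}=\domain\prl{\crl{\indexset},+1}$ (noted right after \refeqn{eq.AttractiveField}), the policy selection algorithm assigns the goal policy at $\vectbf{y}$ by \refprop{prop.PolicyPriority}, which at this point coincides with the negated gradient of $\V\prl{\vectbf{x}}=\tfrac12\norm{\vectbf{x}-\vectbf{y}}^2$. Hence $\f\prl{\vectbf{y}}=-\prl{\vectbf{y}-\vectbf{y}}=\vectbf{0}$, so $\vectbf{y}$ is an equilibrium. Local (Lyapunov) asymptotic stability then follows because $\domain\prl{\crl{\indexset},+1}$ is positively invariant (\refprop{prop.DomainInvariance}), and on this set the closed-loop dynamics coincide with the linear gradient flow $\dot{\vectbf{x}}=-\prl{\vectbf{x}-\vectbf{y}}$, for which $\V$ is a strict Lyapunov function with global exponential decay toward $\vectbf{y}$.

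Next I would establish almost-global convergence from $\stratum{\tree}$. Pick any initial configuration $\vectbf{x}^0\in\stratum{\tree}$ and let $\prl{\purt_0,\vectbf{b}_0}\ldf\p\prl{\vectbf{x}^0}$, which is well defined and has $\vectbf{x}^0\in\domain\prl{\purt_0,\vectbf{b}_0}$ by \refprop{prop.DomainInclusion}. By \refprop{prop.NonZeroExecutiontime} the flow stays in $\domain\prl{\purt_0,\vectbf{b}_0}$ for a positive time, during which the equivalent dynamics is that of $\h{\purt_0,\vectbf{b}_0}$ (\refprop{prop.SystemEquivalence}). Excluding the measure-zero exceptional set guaranteed by \refprop{prop.PreparesRelation}, this trajectory enters $\domain\prl{\purt_1,\vectbf{b}_1}$ in finite time for some successor $\prl{\purt_1,\vectbf{b}_1}$ with $\priority\prl{\purt_1,\vectbf{b}_1}>\priority\prl{\purt_0,\vectbf{b}_0}$, at which point \refprop{prop.PolicyPriority} makes $\p$ switch to this strictly higher priority policy (or to an even higher one). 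Since the range of $\priority$ is bounded above by $\card{\indexset}^2$ attained uniquely at the goal index $\prl{\crl{\indexset},+1}$ (see \refeqn{eq.PriorityRange}) and $\subpolicyset{\indexset}{\tree}$ is finite, this strictly monotone discrete-time chain reaches $\prl{\crl{\indexset},+1}$ after at most $\card{\subpolicyset{\indexset}{\tree}}$ switches, each consuming finite continuous time (so no Zeno phenomenon). Once $\domain\prl{\crl{\indexset},+1}$ is entered it is never left (\refprop{prop.DomainInvariance}), and the first part of the argument delivers asymptotic convergence to $\vectbf{y}$.

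The main obstacle is bookkeeping the measure-zero exceptional sets and ruling out Zeno behavior along hybrid executions. Each application of \refprop{prop.PreparesRelation} excludes a measure-zero subset of the source domain; since the sequence of deployed controllers has length at most $\card{\subpolicyset{\indexset}{\tree}}<\infty$, the preimage of these exceptional sets under the preceding piecewise-smooth flow (which is locally Lipschitz by \refprop{prop.LocalPolicyExistenceUniqueness}, hence absolutely continuous and preserves null sets) remains measure zero, and the finite union over the at most $\card{\subpolicyset{\indexset}{\tree}}$ switches is still measure zero. The nonzero execution time of \refprop{prop.NonZeroExecutiontime} together with the strict monotonicity of $\priority$ along $\subpreparesedgesetS$ rules out accumulation of switching times, so the hybrid trajectory reaches the goal domain in finite time from almost every initial condition in $\stratum{\tree}$, completing the proof.
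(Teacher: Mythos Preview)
Your proposal is correct and follows essentially the same approach as the paper's proof: both invoke the equivalence of \refprop{prop.SystemEquivalence}, use the strict increase of $\priority$ along the prepares relations (\refprop{prop.PreparesRelation}, \refprop{prop.PolicyPriority}) together with its bounded integer range to conclude finite-time arrival in the goal domain $\domain\prl{\crl{\indexset},+1}$, and finish with the linear gradient flow of $\h{\crl{\indexset},+1}$ on that invariant domain. Your version is somewhat more explicit than the paper's about equilibrium verification, Zeno exclusion via \refprop{prop.NonZeroExecutiontime}, and the pullback of measure-zero exceptional sets under the Lipschitz flows, all of which the paper leaves implicit.
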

\begin{proof}
Using the equivalence (\refprop{prop.SystemEquivalence}) of the hierarchy preserving field $\f$ and the sequential composition of substratum control policies of \reftab{tab.LocalPolicy} based on the policy selection method in \reftab{tab.PolicySelection}, the result can be obtained as follows.

Since $\priority$ \refeqn{eq.Priority} is an integer-valued function with bounded  range \refeqn{eq.PriorityRange}, using \refprop{prop.PolicyPriority} and \refprop{prop.PreparesRelation}, one can conclude that the disks starting at almost any configuration in $\stratum{\tree}$ reach the domain $\domain\prl{\crl{\indexset}, +1} $ of the goal policy $\h{\crl{\indexset, +1}}$ in finite time after visiting at most $\bigO{\card{\indexset}^2}$ of other local control policies.
Note that $\vectbf{y} \in \domain\prl{\crl{\indexset}, +1}$.
Then, the goal policy $\h{\crl{\indexset}, +1} $
\begin{align}
\h{\crl{\indexset}, +1} \prl{\vectbf{x}}= - \nabla \scalebox{1}{$\frac{1}{2}$}\norm{\vectbf{x} - \vectbf{y}}_2^2 =  - \prl{\vectbf{x} - \vectbf{y}},
\end{align} 
asymptotically steers all configuration in  $\domain\prl{\crl{\indexset}, +1} $ to $\vectbf{y}$ while  keeping its domain of attraction $\setA\prl{\indexset}$  positively invariant (\refprop{prop.DomainInvariance}), which completes the proof \qed
\end{proof}

\subsection{On the Cardinality of Substratum Policies}
\label{app.SubstratumCardinality}

To gain an appreciation for the computational efficiency of hierarchy preserving vector field in \reftab{tab.HierInvNav},  we find it useful to have a brief discussion without proofs on the cardinality of the family of local control policies of \reftab{tab.LocalPolicy}.   
The number of partitions $\mathcal{P}_{\indexset}\prl{\tree}$  of $\indexset$
\footnote{The number of partitions of a set with $n$ elements is given by the Bell number, $B_n$, recursively defined as: for any $n \in \N$ \cite{rota_AMM1964}
\begin{align}
B_{n+1} = \sum_{k = 0}^{n} {n \choose k} B_k,
\end{align} 
where $B_0 = 1$. 
The Bell number, $B_n$, grows super exponentially with the set size, $n$; 
however,  in our case we require partitions of $\indexset$ to be compatible with $\tree$ and this restricts the growth of number of such partitions of $\indexset$ to at most exponential with $\card{\indexset}$, depending on the structure of $\tree$.} 
compatible with a cluster hierarchy $\tree \in \bintreetopspace_{\indexset}$ is recursively given by 
\footnote{Let $\crl{\indexset_L, \indexset_R} = \childCL{\indexset, \tree}$ be the root split of $\tree$, and $\tree_L$ and $\tree_R$ are the associated subtrees of $\tree$ rooted at $\indexset_L$ and $\indexset_R$, respectively. 
Then, any partition of $\indexset$  compatible with $\tree$, except the trivial partition $\crl{\indexset}$, can be written as the union of a partition of $\indexset_L$ compatible with $\tree_L$ and a partition of $\indexset_R$ compatible with $\tree_R$. 
Hence, one can conclude the recursion in \refeqn{eq.CompatiblePartition}.}
\begin{align}\label{eq.CompatiblePartition}
\card{\mathcal{P}_{\indexset}\prl{\tree}} = 1 + \card{\mathcal{P}_{\indexset}\prl{\tree_L}} \card{\mathcal{P}_{\indexset}\prl{\tree_R}},
\end{align}
where $\tree_L, \tree_R$ denote the left and right subtrees of $\tree$, respectively. 
For any caterpillar tree\footnote{A caterpillar tree is a rooted tree in which at most one of the children of every interior cluster is nonsingleton.} $\treeA \in \bintreetopspace_{\indexset}$, $\card{\mathcal{P}_{\indexset}\prl{\treeA}} = \card{\indexset}$ since one of two subtrees of $\treeA$ is always one-leaf tree. 
On the other hand, for a balanced tree $\treeC \in \bintreetopspace_{\indexset}$ the cardinality of partitions of $\indexset$ compatible with $\treeC$ grows exponentially,\footnotemark 
\begin{align}
\sqrt{2}^{\card{\indexset}} \leq \card{\mathcal{P}_{\indexset}\prl{\treeC}}  \leq  \frac{4}{5}\sqrt{\frac{5}{2}}^{\card{\indexset}},
\end{align}  
for $\card{J}  = 2^{k}$, $k  \in \N_{+} = \crl{ 1, 2, 3,  \ldots}$; for example, $\card{\mathcal{P}_{\brl{2}}\prl{\treeC}} = 2$, $\card{\mathcal{P}_{\brl{4}}\prl{\treeC}} = 5$, $\card{\mathcal{P}_{\brl{8}}\prl{\treeC}} = 26$ and $\card{\mathcal{P}_{\brl{16}}\prl{\treeC}} = 677$.
In addition to a partition $\purt$ of $\indexset$ compatible with $\tree$, every local control policy $\h{\purt, \vectbf{b}}$ is indexed by a binary variable of size $\card{\purt}$ with a possible choice of $2^{\card{\purt}}$ values.   
Therefore, the number of local control policies $\h{\purt, \vectbf{b}}$  grows exponentially with the group size, $\card{\indexset}$.

\footnotetext{Let $F_n$ denote the number of partitions of $\brl{n} = \crl{1,2,\ldots, n}$ compatible with a balanced rooted binary tree with $n$ leaves, where $n = 2^k$  for some $k \in \N_+$, and by \refeqn{eq.CompatiblePartition}  it satisfies
\begin{align}
F_{2n} = 1 + F_n^2, 
\end{align}
subject to the base condition $F_2 = 2$.
Define $G_n$ and $H_n$, for $n = 2^k$  and $k \in \N_+$, to be, respectively,
\begin{align}
G_{2n} = G_n^2 \quad \text{and} \quad H_{2n} = \frac{5}{4} H_n^2
\end{align}
where $G_2 = H_2 = 2$.
Note that $G_n = \sqrt{2}^n$ and $H_n = \frac{4}{5}\sqrt{\frac{5}{2}}^n$ for $n = 2^k$ and $k \in \N_+$.
Now observe that for any $n = 2^k$ and $k \in \N_+$
\begin{align}
G_n \leq F_n \leq H_n,
\end{align}
and so
\begin{align}
\sqrt{2}^n \leq F_n \leq \frac{4}{5}\sqrt{\frac{5}{2}}^n.
\end{align}
}

\subsection{A Set of Useful Observations on Substratum Policies}
\label{app.SubstratumObservation}

Here we introduce a set of useful lemmas that constitute building blocks for proving some qualitative properties of substratum policies presented in \refapp{app.LocalPolicyQualitative}.   
Let $\purt$ be a partition of $\indexset$ compatible with $\tree$, i.e.  $\purt \subset \cluster{\tree}$, and $\vectbf{b} $ is a binary vector in $\crl{-1, +1}^{\purt}$. 

\begin{lemma}\label{lem.LocalPolicyDomain} 
The domain, $\domain\prl{\purt,\vectbf{b}}$ \refeqn{eq.LocalPolicyDomain}, of each substratum policy, $\h{\purt, \vectbf{b}}$, is closed relative to $\stratum{\tree}$.
\end{lemma}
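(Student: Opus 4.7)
The plan is to observe that $\domain\prl{\purt,\vectbf{b}}$ is a finite intersection of preimages of closed sets under continuous maps, and therefore closed relative to $\stratum{\tree}$. Concretely, from the definition
\begin{align*}
\domain\prl{\purt, \vectbf{b}} = \bigcap _{I \in \purt} \left( \setB\prl{I, \vect{b}_I} \cap \bigcap_{K \in \ancestorCL{I,\tree}} \setH\prl{K} \right),
\end{align*}
it suffices to show that each of $\setA\prl{I}$ and $\setH\prl{K}$ is closed in $\stratum{\tree}$, since $\setB\prl{I,\vect{b}_I}$ is either $\setA\prl{I}$ or the ambient $\stratum{\tree}$ itself.

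First I would treat $\setH\prl{K}$. By \refeqn{eq.SetR}, a configuration lies in $\setH\prl{K}$ iff $\sepmag{k,K\!}{\vectbf{x}} - r_k - \alpha \geq 0$ for every $k \in K$ and $K \in \childCL{I,\tree}$. The separation functions $\sepmag{k,K\!}{\vectbf{x}}$ defined in \refeqn{eq.SepMag} are continuous on $\stratum{\tree}$ (on the stratum, $\ctrdsep{K}{\vectbf{x}} \neq 0$, so no division-by-zero occurs), and each condition is a non-strict inequality $\sepmag{k,K}{\vectbf{x}} \geq r_k + \alpha$. Hence $\setH\prl{K}$ is an intersection of finitely many preimages of $[0,\infty)$ under continuous maps, and so is closed relative to $\stratum{\tree}$.

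Next I would handle $\setA\prl{I}$. By \refeqn{eq.SetA}, $\setA\prl{I}$ is cut out by the non-strict inequalities
\begin{align*}
\lie_{\overrightarrow{\vectbf{y}}}\,\tfrac{1}{2}\norm{\vect{x}_i-\vect{x}_j}^2 \geq \prl{r_i+r_j}^2, \quad
\lie_{\overrightarrow{\vectbf{y}}}\,\vectprod{\prl{\vect{x}_k-\ctrdmid{K}{\vectbf{x}}}}{\ctrdsep{K}{\vectbf{x}}} \geq 0,
\end{align*}
for $i\neq j\in I$ and $k\in K$, $K\in \descendantCL{I,\tree}$. The explicit expressions \refeqn{eq.SetACond1}--\refeqn{eq.SetACond2} make clear that, with $\vectbf{y}=\vectbf{\stateB}-\vectbf{x}$ plugged in, these are polynomial (hence continuous) functions of $\vectbf{x}$. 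Thus each half-space condition is the preimage of a closed half-line under a continuous function, so $\setA\prl{I}$ is closed relative to $\stratum{\tree}$ as a finite intersection of such preimages. The singleton special cases in which the defining families are empty simply yield $\setA\prl{I}=\stratum{\tree}$ and $\setH\prl{I}=\stratum{\tree}$, which are trivially closed.

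Finally, since $\purt$ and $\cluster{\tree}$ are finite, the full intersection in $\domain\prl{\purt,\vectbf{b}}$ is a finite intersection of closed subsets of $\stratum{\tree}$, and is therefore closed relative to $\stratum{\tree}$. I do not anticipate any real obstacle here; the only care required is in verifying that all denominators appearing in the separation function $\sepmag{k,K}{\cdot}$ are bounded away from zero on $\stratum{\tree}$ (guaranteed by $\ctrdsep{K}{\vectbf{x}}\neq 0$ for configurations supporting the non-degenerate tree $\tree$), so that continuity of the defining inequalities is genuinely available on the entire stratum.
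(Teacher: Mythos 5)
Your argument is correct and follows essentially the same route as the paper: observe that $\setA(I)$ and $\setH(K)$ are sublevel/superlevel sets of functions continuous on $\stratum{\tree}$ (with denominators bounded away from zero there), hence relatively closed, and conclude by finiteness of the intersection in \refeqn{eq.LocalPolicyDomain}. One small notational slip: in the Lie derivatives \refeqn{eq.SetACond1}--\refeqn{eq.SetACond2} the constant field $\overrightarrow{\vectbf{y}}$ is taken at the fixed goal $\vectbf{\stateB}$ itself (not $\vectbf{\stateB}-\vectbf{x}$), but this does not affect continuity in $\vectbf{x}$ and so the argument goes through unchanged.
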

\begin{proof}
Using the continuity of functions\footnote{A function $f: X \rightarrow Y$ between two topological spaces, $X$ and $Y$, is  continuous if the inverse image of every open subset of $Y$ of $f$ is an open subset of $X$ \cite{munkres_Topology_2000}.} in the predicates used to define them, one can conclude that for any $I \in \cluster{\tree}$ sets  $\setA\prl{I}$ \refeqn{eq.SetA} and $\setH\prl{I}$ \refeqn{eq.SetR}  are closed relative to $\stratum{\tree}$. Hence, since the intersection of arbitrary many closed sets are closed \cite{munkres_Topology_2000}, the domain $\domain\prl{\purt, \vectbf{b}}$ \refeqn{eq.LocalPolicyDomain} of each local controller $\h{\purt, \vectbf{b}}$ is closed relative to $\stratum{\tree}$.
\qed
\end{proof}

A critical observation used for bounding the centroidal configuration radius (\reflem{lem.ConfRadiusBound}) and the range of a trajectory of a substratum policy  (\refprop{prop.LocalPolicyExistenceUniqueness}) is:
\begin{lemma}\label{lem.CentroidDynamics}
(Relative Centroidal Dynamics)
Let $\vectbf{x} \in \stratum{\tree}$ and
\begin{align}
\vectbf{u} = \h{\purt, \vectbf{b}} \prl{\vectbf{x}}. 
\end{align}
Then, the centroidal dynamics of any cluster $I \in \visitedcluster$ \refeqn{eq.VisitedClusters} visited during recursive computation of $\h{\purt, \vectbf{b}}$ (\reftab{tab.LocalPolicy}), except the root $ \indexset$, satisfies\footnote{Here, for any $I \in \cluster{\tree}$ we use $\ctrdsym:\prl{\R^d}^I \rightarrow \R^d$ \refeqn{eq.centroid}.}
{
\begin{align}\label{eq.CentroidDynamicsGeneral}
\ctrd{\vectbf{u}|I} & = -\ctrd{\vectbf{x} \sqz{-} \vectbf{y}|I} + 2 \falpha{P}{\vectbf{x}, \vectbf{v}_P} \! \frac{\card{\complementLCL{I}{\tree}}}{\card{P}} \frac{\ctrdsep{I\!}{\vectbf{x}}}{\norm{\ctrdsep{I\!}{\vectbf{x}}}} \nonumber \\
& \hspace{30mm} + \ctrd{\vectbf{u}|P} + \ctrd{\vectbf{x} \sqz{-} \vectbf{y}|P},
\end{align}
}%
for some $\vectbf{v}_P \in \prl{\R^d}^{\indexset}$ associated with parent cluster $P = \parentCL{I,\tree}$; whereas we have for the root cluster $\indexset$ 
\begin{align}\label{eq.CentroidDynamicsRoot}
\ctrd{\vectbf{u}|\indexset} = -\ctrd{\vectbf{x} - \vectbf{y}|\indexset}.
\end{align}
\end{lemma}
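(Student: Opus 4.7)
The plan is to prove both identities by a single bottom-up induction on the post-order traversal of $\tree$, built around an auxiliary centroid-invariance claim. For each visited cluster $K \in \visitedcluster$, let $\hat{\vectbf{u}}^{(K)}$ denote the vector returned by the recursive call at $K$ in \reftab{tab.LocalPolicy}, so that $\vectbf{u} = \hat{\vectbf{u}}^{(\indexset)}$, and for a non-leaf $K \notin \purt$ let $\hat{\vectbf{u}}_R^{(K)}$ denote the velocity vector handed to $\fH$ at line 12. The central structural fact to exploit is a cancellation that holds at any non-singleton $K$: the per-child corrections added by $\fS$ \refeqn{eq.SeparationField} and $\fH$ \refeqn{eq.SplitPreservingField} are weighted by $\card{\complementLCL{K'}{\tree}}/\card{K}$ and directed along $\ctrdsep{K'}{\vectbf{x}}$ for each $K' \in \childCL{K,\tree}$; since $\tree$ is binary and $\ctrdsep{K_L}{\vectbf{x}} = -\ctrdsep{K_R}{\vectbf{x}}$ with matching weights $\card{K_L}\card{K_R}/\card{K}^2$, these contributions sum to zero under the $K$-average.

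Using this, I would first establish the auxiliary claim $\ctrd{\hat{\vectbf{u}}^{(K)}|K} = -\ctrd{\vectbf{x}-\vectbf{y}|K}$ for every $K \in \visitedcluster$, by induction on subtree depth. The base case $K \in \purt$ with $\vect{b}_K = +1$ is immediate from $\fA$ \refeqn{eq.AttractiveField}; the base case $K \in \purt$ with $\vect{b}_K = -1$ uses $\fS$ and the cancellation on the separating term. For the recursion case $K \notin \purt$ I would first prove by a side induction (routine from the definitions of $\fA, \fS, \fH$) that each call $\hhat{\purt,\vectbf{b}}\prl{\vectbf{x},\vectbf{w},K'}$ modifies only the $K'$-coordinates of its input $\vectbf{w}$; this allows the induction hypothesis to be applied to both children independently, yielding $\ctrd{\hat{\vectbf{u}}_R^{(K)}|K} = -\ctrd{\vectbf{x}-\vectbf{y}|K}$, after which one more invocation of the cancellation shows that the closing $\fH$ at $K$ preserves the centroid.

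Identity \refeqn{eq.CentroidDynamicsRoot} then follows at once from the auxiliary claim at $K = \indexset$. For \refeqn{eq.CentroidDynamicsGeneral}, fix $I \in \visitedcluster$ with $I \neq \indexset$, set $P := \parentCL{I,\tree}$, $S := \complementLCL{I}{\tree}$, and take $\vectbf{v}_P := \hat{\vectbf{u}}_R^{(P)}$; the auxiliary claim applied at $I$ and at $S$ yields $\ctrd{\vectbf{v}_P|I} = -\ctrd{\vectbf{x}-\vectbf{y}|I}$ and $\ctrd{\vectbf{v}_P|P} = -\ctrd{\vectbf{x}-\vectbf{y}|P}$. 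The step $\fH$ at $P$ then adds the constant vector $2\falpha{P}{\vectbf{x},\vectbf{v}_P}\frac{\card{S}}{\card{P}}\frac{\ctrdsep{I}{\vectbf{x}}}{\norm{\ctrdsep{I}{\vectbf{x}}}}$ uniformly to each $j \in I$, and every $\fH$ applied at a subsequent ancestor $A \in \ancestorCL{P,\tree}$ adds one and the same constant (depending only on $A$) to every $j$ lying in the unique child of $A$ that contains $P$, hence identically to all $j \in I$ and to all $j \in P$. Consequently the difference $\vect{u}_j - \hat{\vect{u}}^{(P)}_j$ is a single constant over $j \in P$, so $\ctrd{\vectbf{u}|I} - \ctrd{\hat{\vectbf{u}}^{(P)}|I} = \ctrd{\vectbf{u}|P} - \ctrd{\hat{\vectbf{u}}^{(P)}|P}$; substituting $\ctrd{\hat{\vectbf{u}}^{(P)}|P} = -\ctrd{\vectbf{x}-\vectbf{y}|P}$ (from centroid preservation at $P$) and $\ctrd{\hat{\vectbf{u}}^{(P)}|I} = -\ctrd{\vectbf{x}-\vectbf{y}|I} + 2\falpha{P}{\vectbf{x},\vectbf{v}_P}\frac{\card{S}}{\card{P}}\frac{\ctrdsep{I}{\vectbf{x}}}{\norm{\ctrdsep{I}{\vectbf{x}}}}$ and rearranging yields exactly \refeqn{eq.CentroidDynamicsGeneral}.

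The main obstacle is purely bookkeeping: the recursion in \reftab{tab.LocalPolicy} composes contributions from many levels, and one must pin down which intermediate vector realises $\vectbf{v}_P$ and which velocity components are modified at which step. Naming $\hat{\vectbf{u}}^{(K)}$ and $\hat{\vectbf{u}}_R^{(K)}$ explicitly up front, isolating the \emph{only $K$-coordinates are modified} invariant, and stating the two cancellation observations (one for $\fS$, one for $\fH$) as small lemmas should keep the argument linear; thereafter the telescoping of ancestor contributions between $\ctrd{\vectbf{u}|I}$ and $\ctrd{\vectbf{u}|P}$ assembles both identities mechanically.
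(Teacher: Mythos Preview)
Your argument is correct and rests on the same cancellation identity the paper uses, but the organization differs. The paper first writes each $\vect{u}_i$ explicitly as the base field $\fA$ or $\fS$ plus the accumulated $\fH$ corrections from every ancestor, takes the centroid to obtain a closed-form expression
\[
\ctrd{\vectbf{u}|I} \;=\; -\ctrd{\vectbf{x}-\vectbf{y}|I} \;+\! \sum_{\substack{K \in \ancestorCL{I,\tree}\cup\{I\}\setminus\{\indexset\}\\ R=\parentCL{K,\tree}}} \! 2\,\falpha{R}{\vectbf{x},\vectbf{v}_R}\,\frac{\card{\complementLCL{K}{\tree}}}{\card{R}}\,\frac{\ctrdsep{K}{\vectbf{x}}}{\norm{\ctrdsep{K}{\vectbf{x}}}},
\]
proves this by structural induction over $\visitedcluster$, and then reads off both \refeqn{eq.CentroidDynamicsRoot} and \refeqn{eq.CentroidDynamicsGeneral} by peeling the $K=I$ term from the sum. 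Your route instead abstracts the sum away: you prove the centroid-invariance $\ctrd{\hat{\vectbf{u}}^{(K)}|K}=-\ctrd{\vectbf{x}-\vectbf{y}|K}$ once, and then recover \refeqn{eq.CentroidDynamicsGeneral} by the constant-offset/telescoping observation between $I$ and $P$. Your version is a bit leaner and has the pleasant side effect of identifying $\vectbf{v}_P$ concretely as $\hat{\vectbf{u}}_R^{(P)}$, which the paper leaves implicit; the paper's version in return yields the explicit ancestor-sum formula as a byproduct, which is occasionally handy downstream.
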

\begin{proof}
See \refapp{app.CentroidDynamics}. \qed
\end{proof}


\begin{lemma} \label{lem.ConfRadiusBound}
(Upper Bound on Configuration Radius)
Let  $\vectbf{x}^t$ denote a trajectory of $\h{\purt, \vectbf{b}}$ (\reftab{tab.LocalPolicy}) starting at any initial $\vectbf{x}^0 \in \domain\prl{\purt, \vectbf{b}}$ \refeqn{eq.LocalPolicyDomain} for $t \geq 0$.

Then, the centroidal configuration radius, $\radiusCL{}\prl{\vectbf{x}^t|\indexset}$ \refeqn{eq.ConfRadius},  is  bounded above for all $t \geq 0$ by a certain finite value, $R\prl{\vectbf{x}^{0}, \vectbf{y}}$, depending on $\vectbf{x}^{0}$ and $\vectbf{y}$, i.e.
\begin{align}
\radiusCL{}\prl{\vectbf{x}^t|\indexset} \leq R\prl{\vectbf{x}^0, \vectbf{y}} < \infty, \quad \forall t \geq 0.
\end{align} 
\end{lemma}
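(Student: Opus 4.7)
The plan is to decompose $\radiusCL{}\prl{\vectbf{x}^t|\indexset}$ along the chain of clusters of $\tree$ from any individual agent up to the root and bound each piece separately, using \reflem{lem.CentroidDynamics} (which supplies closed-form centroidal ODEs) together with \refprop{prop.DomainInvariance} (which confines the trajectory to a regime where the repulsive strengths behave well). Concretely, for any agent $a$ sitting in a block $I\in \purt$, the triangle inequality gives
\begin{equation*}
\norm{\vect{x}_a^t - \ctrd{\vectbf{x}^t|\indexset}} \,\leq\, \norm{\vect{x}_a^t - \ctrd{\vectbf{x}^t|I}} \,+\, \sum_{K\,\in\,\crl{I}\,\cup\,\ancestorCL{I,\tree}\,\setminus\,\crl{\indexset}} \norm{\ctrd{\vectbf{x}^t|K} - \ctrd{\vectbf{x}^t|\parentCL{K,\tree}}},
\end{equation*}
so it suffices to bound the root centroid, each parent-to-child centroidal offset along this chain, and the intra-block spread at $I$.

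First, equation \refeqn{eq.CentroidDynamicsRoot} shows the root centroid evolves as $\dot{\ctrd{\vectbf{x}^t|\indexset}}=-\prl{\ctrd{\vectbf{x}^t|\indexset}-\ctrd{\vectbf{y}|\indexset}}$, whose solution is a pure exponential decay to $\ctrd{\vectbf{y}|\indexset}$, giving a uniform bound in terms of $\ctrd{\vectbf{x}^0|\indexset}$ and $\ctrd{\vectbf{y}|\indexset}$. Second, subtracting \refeqn{eq.CentroidDynamicsGeneral} at $P=\parentCL{K,\tree}$ from the same formula at $K$ cancels the shared $\ctrd{\vectbf{u}|P}+\ctrd{\vectbf{x}-\vectbf{y}|P}$ term and yields, for $\Delta_K^t\ldf\ctrd{\vectbf{x}^t|K}-\ctrd{\vectbf{x}^t|P}$, a scalar linear ODE $\dot{\Delta}_K^t = -\Delta_K^t + \prl{\ctrd{\vectbf{y}|K}-\ctrd{\vectbf{y}|P}} + g_K^t$, where $g_K^t$ is a unit-vector-scaled repulsive correction of magnitude at most $2\falpha{P}{\vectbf{x}^t,\vectbf{v}_P^t}\card{\complementLCL{K}{\tree}}/\card{P}$. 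Proceeding by induction on the tree depth, and provided $\falpha{P}$ is uniformly bounded along the trajectory (addressed below), standard linear-ODE bounds give $\sup_{t\geq 0}\norm{\Delta_K^t}<\infty$ expressible in terms of $\vectbf{x}^0$ and $\vectbf{y}$.

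Third, for the intra-block spread $\max_{a\in I}\norm{\vect{x}_a^t-\ctrd{\vectbf{x}^t|I}}$, the key observation is that every contribution to $\dot{\vect{x}}_j^t$ coming from an ancestor $K\in\ancestorCL{I,\tree}$ (be it the centroidal attraction or separation at the ``base'' ancestor, or a split-preserving correction $\fH$ at any higher ancestor) is identical across all $j\in I$, because $I$ is entirely contained in a single child of each such ancestor. These common terms drop out of the centroid-relative dynamics $\dot{\prl{\vect{x}_j^t-\ctrd{\vectbf{x}^t|I}}}$, so the intra-block spread is governed solely by the field applied at $I$: if $\vect{b}_I=+1$, the attracting field $\fA$ drives the relative positions exponentially to the goal relative positions $\vect{y}_j-\ctrd{\vectbf{y}|I}$; if $\vect{b}_I=-1$, the separating field $\fS$ is constant within each child of $I$, so relative displacements within each child are frozen while the two child-wise offsets are bounded by the finite strength $\fbeta{I}{\vectbf{x}^t}$. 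Either way, the spread is bounded by a constant depending on $\radiusCL{}\prl{\vectbf{x}^0|I}$ and $\radiusCL{}\prl{\vectbf{y}|I}$; adding the maximal sphere radius $\max_j r_j$ and combining with the two centroidal bounds above yields the required $R\prl{\vectbf{x}^0,\vectbf{y}}<\infty$.

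The main obstacle is the uniform-in-time bound on $\falpha{P}{\vectbf{x}^t,\vectbf{v}_P^t}$ used in Step 2. By \refeqn{eq.Alpha} this quantity is a pointwise maximum of products $\fphi_{k,K}\cdot\fpsi_{k,K}$, and $\fpsi_{k,K}$ contains the Lie derivative $\lie_{\overrightarrow{\vectbf{u}}}\sepmag{k,K}{\vectbf{x}}$ \refeqn{eq.sepmagLie}, i.e. a coupling back into the very velocity field we are trying to bound. Control of this term rests on two facts: the exponential damping envelope $\fphi_{k,K}$ truncates the product to zero once $\sepmag{k,K}{\vectbf{x}}\geq r_k+\beta$, so only configurations with clusters in the finite separation window $[r_k+\alpha,r_k+\beta]$ contribute, and on that window $\fpsi_{k,K}$ is a continuous function of the state and of the velocities computed at strictly deeper clusters in the post-order traversal. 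A straightforward induction on post-order depth, using \refprop{prop.DomainInvariance} to confine the state to a regime where these separations live in the advertised window, then produces the required uniform bound on $\falpha{P}$; the remaining steps are routine Grönwall-type estimates and triangle-inequality bookkeeping.
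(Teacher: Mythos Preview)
Your decomposition along the ancestor chain and your intra-block analysis (Step~3) align with the paper's structural induction on cluster radii, and your ODE for $\Delta_K^t$ derived from \reflem{lem.CentroidDynamics} is correct. The gap is in your handling of the ``main obstacle,'' the uniform-in-time bound on $\falpha{P}$.

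Your argument there is that $\fphi_{k,K}$ vanishes once $\sepmag{k,K}{\vectbf{x}}\geq r_k+\beta$, so only the window $[r_k+\alpha,r_k+\beta]$ matters, and ``on that window $\fpsi_{k,K}$ is a continuous function of the state and of the velocities computed at strictly deeper clusters''; you then invoke \refprop{prop.DomainInvariance} and an induction on post-order depth. But continuity alone does not give a uniform-in-time bound: the window constrains the scalar $\sepmag{k,K}{\vectbf{x}}$, not the full state, and \refprop{prop.DomainInvariance} supplies only the \emph{lower} bound $\sepmag{k,K}\geq r_k+\alpha$. Meanwhile, at the first recursion level the velocities $\vectbf{u}$ entering $\fpsi_{k,K}$ come from $\fA$, i.e.\ $\vect{u}_k=-(\vect{x}_k-\vect{y}_k)$, so via \refeqn{eq.sepmagLie} the quantity $\fpsi_{k,K}$ depends on $\vect{u}_k-\ctrdmid{K}{\vectbf{u}}$ and $\ctrdsep{K}{\vectbf{u}}$, both of which contain $\norm{\ctrdsep{K}{\vectbf{x}}}$---precisely the centroidal offset $\Delta_K$ you are trying to bound. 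Your depth induction therefore chases its tail: bounding $\falpha{P}$ requires $\norm{\ctrdsep{K}{\vectbf{x}}}$ bounded, which requires $\falpha{P}$ bounded.

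The paper breaks this circularity differently. It performs structural induction directly on the cluster radii $R_I(\vectbf{x}^0,\vectbf{y})$ over $\visitedcluster$ and, at the inductive step, does \emph{not} seek a uniform bound on $\falpha{I}$. Instead it observes that once
\[
\norm{\ctrdsep{K}{\vectbf{x}}}\;\geq\;2\Bigl(\beta+\max_{D\in\childCL{I,\tree}}R_D(\vectbf{x}^0,\vectbf{y})\Bigr),
\]
with $R_D$ the already-bounded children radii, every agent in both children lies beyond the $\beta$-margin, so $\fphi_{k,K}\equiv 0$ and hence $\falpha{I}=0$ outright. Plugged into the ODE for $\norm{\ctrdsep{K}{\vectbf{x}}}^2$, this makes the separation self-bounding: beyond the threshold the repulsion switches off and the remaining linear part contracts toward $\norm{\ctrdsep{K}{\vectbf{y}}}$. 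You can repair your argument with exactly this observation---use your Step-3 bound on the children radii to convert the $\sepmag$-window into an upper bound on $\norm{\ctrdsep{K}{\vectbf{x}}}$---but as written, the continuity-plus-\refprop{prop.DomainInvariance} sketch does not close.
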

\begin{proof}
See \refapp{app.ConfRadiusBound}. \qed
\end{proof}

\begin{lemma}\label{lem.ClusterPartition}
If $\purt$ is not the trivial partition, i.e. $\card{\purt}> 1$, then there always exists a cluster $I \in \cluster{\tree}$ such that $\childCL{I,\tree} \subset \purt$.
\end{lemma}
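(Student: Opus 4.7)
The plan is to exhibit $I$ by a minimality argument. For each $K \in \cluster{\tree}$, let $n(K)$ denote the number of blocks of $\purt$ contained in $K$. Since $\purt$ partitions $\indexset$ and $\card{\purt} > 1$, we have $n(\indexset) = \card{\purt} \geq 2$, so the collection $\mathcal{M} \ldf \crl{K \in \cluster{\tree} \,|\, n(K) \geq 2}$ is non-empty. Being finite, $\mathcal{M}$ admits an element $I$ that is minimal with respect to inclusion. I would claim that any such $I$ satisfies $\childCL{I,\tree} \subset \purt$.

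The structural step relies on the standard nested-or-disjoint dichotomy for clusters of a rooted tree: any two members of $\cluster{\tree}$ are either nested or disjoint. Applying this to $I$ versus each block of $\purt \subset \cluster{\tree}$, every block falls into one of three cases: (a) it is strictly contained in $I$, (b) it equals or strictly contains $I$, or (c) it is disjoint from $I$. Case (b) would force the offending block $B$ to be disjoint from every other block of $\purt$, and hence every other block to be disjoint from $I$, yielding $n(I) \leq 1$ and contradicting $I \in \mathcal{M}$. Thus case (b) never occurs, and since every leaf of $I$ is covered by some block of $\purt$, that block necessarily lies in case (a). The $n(I)$ blocks contained in $I$ therefore partition $I$. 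Invoking the dichotomy once more inside $I$, each such block lies entirely in either $I_L$ or $I_R$ (the two children of $I$, which exist because $n(I) \geq 2$ forces $I$ to be non-singleton, and $\tree$ is binary), so $n(I) = n(I_L) + n(I_R)$ and the partition of $I$ restricts to partitions of the children. Minimality of $I$ gives $n(I_L), n(I_R) \leq 1$, while their sum is at least $2$, so $n(I_L) = n(I_R) = 1$. Each child thus coincides with its unique contained block, whence $I_L, I_R \in \purt$ and $\childCL{I,\tree} \subset \purt$, as required.

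The main obstacle is purely combinatorial bookkeeping: keeping the nested-or-disjoint case analysis airtight, and in particular using $I \in \mathcal{M}$ to cleanly rule out case (b) above (which simultaneously disposes of the degenerate possibility $I \in \purt$). Once that is in place, the minimality of $I$ does all of the remaining work, and no property of the geometric stratum $\stratum{\tree}$ or of the clustering rule $\HC$ is needed. The binary structure of $\tree$ enters only to guarantee that $I$ has exactly two children; the same argument would go through verbatim for any rooted tree whose interior vertices have out-degree at least two.
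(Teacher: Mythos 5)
Your proof is correct. Let me compare it to the paper's.

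Both arguments are extremality arguments built on the standard nested-or-disjoint dichotomy for clusters of a rooted tree, but they extremize over dual objects. The paper defines the depth of a cluster to be its number of ancestors, selects a block $K \in \purt$ of \emph{maximal depth}, and shows by contradiction that the sibling $\complementLCL{K}{\tree}$ must also belong to $\purt$ (otherwise some proper descendant of the sibling would be a block of $\purt$ of strictly greater depth); the desired $I$ is then $\parentCL{K,\tree}$. You instead introduce the counting function $n(K)$ and select a cluster $I$ that is \emph{minimal among those containing at least two blocks}, then argue downward: each child must contain exactly one block, which is forced to coincide with the child. The two witnesses are in fact the same cluster --- the parent of a deepest block is precisely a minimal member of your $\mathcal{M}$ --- so the proofs identify the same $I$, just from opposite directions. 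The paper's version is a bit more compact (no auxiliary counting function, a one-step contradiction), while yours makes the nested-or-disjoint case analysis fully explicit, which has the side benefit of showing transparently that the argument needs nothing from the geometry of the stratum or from $\HC$, and works verbatim for rooted trees of arbitrary out-degree $\geq 2$ (the paper's depth argument also generalizes, but you have made that observation explicit). Your handling of case (b) is the right way to rule out the degenerate possibility $I \in \purt$ in one stroke; this is airtight as written.
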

\begin{proof}
Define the depth of cluster $I \in \cluster{\tree}$ in $\tree$ to be the number of its ancestors, $\card{\ancestorCL{I,\tree}}$.

Let $K \in \purt$ be a cluster in $\purt$ with the maximal depth, i.e.
\begin{align}\label{eq.ClusterPartition}
\card{\ancestorCL{K,\tree}} = \argmax_{D \in \purt} \card{\ancestorCL{D,\tree}}.
\end{align}
Then, we now show that $\complementLCL{K}{\tree}$ is also in $\purt$, and so $I = \parentCL{K,\tree}$  satisfies the lemma. 

Proof by a contradiction. 
Suppose  that $\complementLCL{K}{\tree}$ is not in $\purt$. 
Since $\purt$  is a partition of $\indexset$ compatible with $\tree$, then some descendant $D \in \descendantCL{\complementLCL{K}{\tree},\tree}$ is in $\purt$.
Note that $\card{\ancestorCL{K,\tree}} = \card{\ancestorCL{\complementLCL{K}{\tree}}} < \card{\ancestorCL{D,\tree}}$, which contradicts \refeqn{eq.ClusterPartition}. \qed
\end{proof}

\begin{lemma} \label{lem.FinerCoarserResolution}
Let $\purt$ and $\purt'$ be two distinct partitions of $\indexset$ compatible with $\tree$, i.e. $\purt \neq \purt' \subset \cluster{\tree}$. 
Then, at least one of the followings always holds
\begin{enumerate}[label=(\roman*)]
\item \label{lem.FinerResolution} ($\purt$ Partially Refines $\purt'$) There exists a cluster $K' \in \purt'$ with a nontrivial partition $\mathcal{K}'$ such that $\mathcal{K}' \subset \purt$.
\item \label{lem.CoarserResolution}($\purt'$ Partially Refines $\purt$) There exists a cluster $K \in \purt$ with a nontrivial partition $\mathcal{K}$ such that $\mathcal{K} \subset \purt'$.
\end{enumerate}
\end{lemma}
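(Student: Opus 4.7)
The plan is to exploit the fundamental \emph{laminar} property of the cluster set of a rooted tree: for any two clusters $A,B \in \cluster{\tree}$, either $A \cap B = \emptyset$, $A \subseteq B$, or $B \subseteq A$. Both $\purt$ and $\purt'$ consist of clusters of $\tree$ and simultaneously partition $\indexset$, so this nested/disjoint dichotomy will structurally force one of the two claims.

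Since $\purt \neq \purt'$ and both partition $\indexset$, there is some block lying in exactly one of them. Without loss of generality (the two cases are symmetric and yield, respectively, \ref{lem.FinerResolution} and \ref{lem.CoarserResolution}), assume there exists $K' \in \purt' \setminus \purt$. Define
\begin{equation}
\mathcal{K}' \ldf \crl{I \in \purt \, | \, I \cap K' \neq \emptyset}.
\end{equation}
Since $\purt$ covers $\indexset \supseteq K'$ disjointly, $\mathcal{K}'$ is a cover of $K'$ by pairwise disjoint elements of $\purt$. By the laminar property, each $I \in \mathcal{K}'$ either satisfies $I \subseteq K'$ or $K' \subsetneq I$ (the equality $I = K'$ is excluded by $K' \notin \purt$).

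\textbf{Case A:} every $I \in \mathcal{K}'$ satisfies $I \subseteq K'$. Then $\mathcal{K}' \subset \purt$ partitions $K'$. This partition must be nontrivial, for otherwise $\mathcal{K}' = \crl{K'}$ would give $K' \in \purt$, contradicting the choice of $K'$. This establishes \ref{lem.FinerResolution}.

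\textbf{Case B:} there exists $I \in \mathcal{K}'$ with $K' \subsetneq I$. Then $I \in \purt$, and $I \notin \purt'$ since $K' \in \purt'$ is a proper subset of $I$ and blocks of the partition $\purt'$ are pairwise disjoint. Now consider
\begin{equation}
\mathcal{K} \ldf \crl{J' \in \purt' \, | \, J' \cap I \neq \emptyset}.
\end{equation}
The same argument shows $\mathcal{K}$ covers $I$ by pairwise disjoint elements of $\purt'$. By the laminar property, each $J' \in \mathcal{K}$ satisfies $J' \subseteq I$ or $I \subseteq J'$. The latter cannot happen: if $I \subseteq J'$ then $K' \subsetneq I \subseteq J'$, and since $K', J' \in \purt'$ are either equal or disjoint, we would get $K' = J'$ and hence $I \subseteq K'$, contradicting $K' \subsetneq I$. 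So $\mathcal{K} \subset \purt'$ partitions $I$, and it is nontrivial because it contains $K' \subsetneq I$ as a proper subset. This establishes \ref{lem.CoarserResolution}.

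The only subtle step is Case B, where one must rule out $I \subseteq J'$ using the partition property of $\purt'$ together with $K' \in \purt'$; once this is in hand, everything else is a direct application of the laminar dichotomy. \qed
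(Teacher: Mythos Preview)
Your proof is correct and follows essentially the same line as the paper's. The paper also picks $K' \in \purt' \setminus \purt$, considers the $\purt$-blocks meeting $K'$, and splits into the same two cases; where the paper simply writes ``otherwise, by symmetry, we have \ref{lem.CoarserResolution}'', you spell out that symmetry explicitly as your Case~B, including the small observation that $I \subseteq J'$ is impossible because $K' \in \purt'$ already sits strictly inside $I$.
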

\begin{proof}
For any $j \in \indexset$, let $\purt\prl{j}$ denote the unique element of $\purt$ containing $j$.

Since $\purt \neq \purt'$, let $K' \in \purt' \setminus \purt$ be an unshared cluster.
Since both $\purt$ and $\purt'$ are  partitions of $\indexset$ compatible with $\tree$,  we have  either $\purt(k') \subsetneq K'$ or $\purt(k') \supsetneq K'$ for all $k' \in K'$. 
If $\purt(k') \subsetneq K'$ for all $k' \in K'$, then $\mathcal{K}' = \crl{\purt(k') \big| k' \in K'}$ defines a partition of $K'$ and we obtain \reflem{lem.FinerCoarserResolution}.\ref{lem.FinerResolution}. 
Otherwise, by symmetry, we have \reflem{lem.FinerCoarserResolution}.\ref{lem.CoarserResolution}.
Thus, the lemma follows. \qed
\end{proof}

\section{Proofs}
\label{app.Proofs}

\subsection{Proof of \refthm{thm.Portal}}
\label{app.Portal}

\begin{proof}
To prove the first part of the result, we shall consider $\portalconf{\treeA,\treeB}$ as a mapping from $\stratum{\treeA}$ to $\prl{\R^d}^{\indexset}$ and verify that $\portalconf{\treeA, \treeB}\prl{\stratum{\treeA}} \subseteq \portal\prl{\treeA,\treeB}$.

By definition, the restriction of $\portalconf{\treeA,\treeB}$ to  $\portal\prl{\treeA,\treeB}$ is the identity map on $\portal\prl{\treeA,\treeB}$.
Hence, we only need to show that $\portalconf{\treeA, \treeB}\prl{\stratum{\treeA} \setminus \portal\prl{\treeA,\treeB}} \subseteq \portal\prl{\treeA,\treeB}$.

Let $\vectbf{\stateB} = \portalcenter{}\prl{\vectbf{\stateA}} $ and $\vectbf{\stateC} = \portalscale{}  \prl{\vectbf{\stateB}}$  be   intermediate configurations during the portal transformation of a configuration $\vectbf{\stateA} \in \stratum{\treeA} \setminus \portal\prl{\treeA,\treeB}$ into $\vectbf{\stateD}= \portalmerge{}\prl{\vectbf{\stateC}}=\portalconf{}\prl{\vectbf{\stateA}}$.

First, recall that rigid transformations and scaling of partial configurations preserve their clustering structure \cite{arslanEtAl_Allerton2012}. 
Hence, the common subtrees  of $\treeA$ and $\treeB$ rooted at $A$, $B$ and $C$ are preserved after each transformation by $\portalcenter{}$ \refeqn{eq.portalcenter}, $\portalscale{}$ \refeqn{eq.portalscale} and $\portalmerge{}$ \refeqn{eq.portalmerge}. 

Second, each partial configuration of the symmetric configuration $\vectbf{\stateB} \in \mathtt{Sym}\prl{\treeA,\treeB}$ associated with  $\prl{\treeA,\treeB}$ is properly translated by $\portalscale{}$ \refeqn{eq.portalscale} so that each of them lies in the corresponding consensus ball, i.e. $\radiusCL{}\prl{\vectbf{\stateC}|Q} < \radiusDD{Q}\prl{\vectbf{\stateC}}$ for all $Q \in \prl{A,B,C}$. 
Hence, the partial configuration $\vectbf{\stateC}|P$ supports both of the subtrees of $\treeA$ and $\treeB$ rooted at $P$. 

Finally, if $P = \indexset$, then the result simply follows since $\vectbf{\stateC} = \vectbf{\stateD} \in \portal\prl{\treeA, \treeB}$. 
Otherwise, for every $I \in \crl{P} \cup \ancestorCL{P,\treeA}$, $\portalmerge{}$ \refeqn{eq.portalmerge} iteratively separates the common complementary clusters $I $ and $\complementLCL{I}{\treeA}$ of $\treeA$ and $\treeB$, in a bottom up fashion starting at cluster $P$, to support the subtrees of $\treeA$ and $\treeB$ rooted at $\parentCL{I,\treeA}$.
Note that in the base case $\vectbf{\stateC}$ supports both of the subtrees of $\treeA$ and $\treeB$ rooted at $P$ and $\complementLCL{P}{\treeA}$; and at the termination at cluster $\indexset$, $\vectbf{\stateD}$ supports both trees $\treeA$ and $\treeB$, i.e. $\vectbf{w} \in \portal\prl{\treeA,\treeB}$. 

We now proceed with the computational properties of $\portalconf{\treeA,\treeB}$.
As stated in the proof of \refprop{prop.HierInvNavProperty}, the inclusion test of a configuration for being in a hierarchical stratum can be computed in $\bigO{\card{\indexset}^2}$ time, from which one conclude that the inclusion test for being in $\portal\prl{\treeA, \treeB}$ can also be computed in  $\bigO{\card{\indexset}^2}$ time.
If the given configuration is not a portal configuration, then the computation of $\portalconf{\treeA,\treeB}$\refeqn{eq.portalconf} requires cluster centroids of $\treeA$, which can be computed in linear, $\bigO{\card{\indexset}}$, time as described in the proof of \refprop{prop.HierInvNavProperty}.  
Given cluster centroids, one can compute $\portalcenter{}$ \refeqn{eq.portalcenter} and $\portalscale{}$ \refeqn{eq.portalscale} in linear, $\bigO{\card{\indexset}}$, time since the Napoleon transformation $\NT$ of an arbitrary triangle can be computed in constant, $\bigO{1}$, time \cite{ArslanEtAl_Techreport2013}.        
Finally, given the cluster centroids, each iteration of $\portalmerge{}$ \refeqn{eq.portalmerge} can be computed in linear $\bigO{\card{\indexset}}$ time; and so all iterations of $\portalmerge{}$ can be performed in $\bigO{\card{\indexset}^2}$ time since it may require at most $\card{\indexset}$ iterations.    
Thus, the result follows. \qed
\end{proof}

\subsection{Proof of \refprop{prop.HierInvNavProperty}}
\label{app.HierInvNavProperty}

\begin{proof}
Recall from \refeqn{eq.SetA} that for any singleton cluster $I \in \cluster{\tree}$ we have $\setA\prl{I} = \stratum{\tree}$.
Hence,  for any given $\vectbf{x} \in \stratum{\tree}$ the base condition $\vectbf{x} \in \setA\prl{I}$  (\reftab{tab.HierInvNav}.2) always holds at any singleton cluster $I \in \cluster{\tree}$.
Moreover, the cardinality of any cluster passed as an argument in a recursive step of the evaluation must decrease relative to the calling cluster size (\reftab{tab.HierInvNav}.7). 
Therefore, the recursion in \reftab{tab.HierInvNav} terminates, in the worst case, after visiting all clusters of $\tree$ only once.   

Since all vector fields ($\fA$ \refeqn{eq.AttractiveField}, $\fH$ \refeqn{eq.SplitPreservingField} and $\fS$ \refeqn{eq.SeparationField}) used in \reftab{tab.HierInvNav} are well defined over the entirety of their domain $\stratum{\tree}$ with codomain  $\prl{\R^d}^{\indexset}$,  the  recursion in \reftab{tab.HierInvNav} results in a true function, $\f: \stratum{\tree} \rightarrow \prl{\R^d}^{\indexset}$, with well defined evaluation for each configuration $\vectbf{x} \in \stratum{\tree}$.


We now assess the computational complexity of the recursion in \reftab{tab.HierInvNav}.
Centroids of clusters of $\tree$ can be computed  all at once in $\bigO{\card{\indexset}}$ time using the post-order traversal of $\tree$ and the following recursive relation of cluster centroids: for any disjoint $A, B \subset \indexset$,
\begin{align}
\ctrd{\vectbf{x}|A \cup B} = \scalebox{1}{$\frac{\card{A}}{\card{A} + \card{B}} $}\ctrd{\vectbf{x}|A} + \scalebox{1}{$\frac{\card{B}}{\card{A}+\card{B}}$} \ctrd{\vectbf{x}|B}.
\end{align}  

Given cluster centroids, $\sepmag{i, I, \tree}{\vectbf{x}}$ \refeqn{eq.SepMag} can be computed in constant, $\bigO{1}$, time for any $i \in I$ and $ I \in \cluster{\tree}$.  
Hence, since  $\card{\cluster{\tree}} = 2 \card{\indexset} - 1$ for any $\tree \in \bintreetopspace_{\indexset}$ and $I = \crl{k \, \big| \, k \in K, K \in \childCL{I,\tree}}$ for any nonsingleton cluster $I \in \cluster{\tree}$, we conclude:
\begin{itemize}
\item The inclusion test for being in $\stratum{\tree}$ \refeqn{eq.ClosedStrata} can be computed in $\bigO{\card{\indexset}^2}$. 
\item Given $\vectbf{x} \in \stratum{\tree}$, the inclusion test for being in $\setH\prl{I}$ \refeqn{eq.SetR} for any cluster $I \in \cluster{\tree}$  can be computed in $\bigO{\card{\indexset}}$ time; and  the recursion in \reftab{tab.HierInvNav} requires  such inclusion tests at most only once for all clusters of $\tree$  which can be computed in  $\bigO{\card{\indexset}^2}$ time. 
\item The vector fields $\fA$ \refeqn{eq.AttractiveField}, $\fH$ \refeqn{eq.SplitPreservingField} and $\fS$ \refeqn{eq.SeparationField} at any cluster $I \in \cluster{\tree}$ can be computed in $\bigO{\card{\indexset}}$ time; and, once again, the recursion in \reftab{tab.HierInvNav} requires such computation at most at every cluster of $\tree$ all of which can be  performed in $\bigO{\card{\indexset}^2}$ time.
\end{itemize}

Finally, to conclude that $\f$ is computable in $\bigO{\card{\indexset}^2}$ time, we show that  the inclusion test for  being in  $\setA\prl{I}$ \refeqn{eq.SetA} for all clusters $I \in \cluster{\tree}$ can be efficiently computed in $\bigO{\card{\indexset}^2}$ time as follows.
Given cluster centroids,  $\lie_{\overrightarrow{\vectbf{y}}}\vectprod{\prl{\vect{x}_k \sqz{-} \ctrdmid{K\!}{\vectbf{x}}\!}\!}{\ctrdsep{K\!}{\vectbf{x}}}$ \refeqn{eq.SetACond2} can be computed in constant, $\bigO{1}$, time for any $k \in K$, $K \in \cluster{\tree}$; and, likewise, $\lie_{\overrightarrow{\vectbf{y}}} \frac{1}{2} \norm{\vect{x}_i - \vect{x}_j}^2$ \refeqn{eq.SetACond1} can be computed in constant $\bigO{1}$ time for any given pair $i \neq j \in \indexset$.
Further, using \refeqn{eq.SetA} and hierarchical relations of clusters, observe the following recursive relation of $\setA\prl{I}$: for any nonsingleton $I \in \cluster{\tree}$ and $\crl{I_L, I_R} = \childCL{I,\tree}$,
\begin{align}\label{eq.SetARecursion}
\setA\prl{I} = \setA\prl{I_L} \cap \setA\prl{I_R} \cap\setAhat\prl{I_L,I_R}, 
\end{align}
subject to the base condition  $\setA\prl{I} = \stratum{\tree}$ for any singleton cluster $I \in \cluster{\tree}$, where
{\small
\begin{align}\label{eq.SetAHat} 
\hspace{-2mm}\setAhat\prl{I_L, I_R} \!\sqz{\ldf}\! \left\{\Big. \right. \!\! \vectbf{x} \sqz{\in} \stratum{\tree} \! \Big | \,\!  & \lie_{\overrightarrow{\vectbf{y}}}\!  \scalebox{1}{$\frac{1}{2}$}\!\norm{\vect{x}_i \sqz{-} \vect{x}_j}^2 \sqz{\geq} \prl{r_i \sqz{+} r_j\!}^{2}\!\!\!, \, \forall i \sqz{\in}\! I_L, j \sqz{\in}\! I_R , \nonumber \\
& \hspace{-21mm} \lie_{\overrightarrow{\vectbf{y}}} \vectprod{\prl{\vect{x}_k \sqz{-} \ctrdmid{K\!}{\vectbf{x}}\!}\!}{\!\ctrdsep{K\!}{\vectbf{x}}} \sqz{\geq} 0, \forall k \sqz{\in} K\!, K  \sqz{\in} \crl{I_L, I_R} \!\!  \!\left . \Big. \right \}  \!.  \!  \! \! 
\end{align}
}%
Note that, given $\vectbf{x} \in \stratum{\tree}$, the inclusion test for being in $\setAhat\prl{I_L, I_R}$  for the children $\crl{I_L, I_R} = \childCL{I,\tree}$ of any nonsingleton cluster $I \in \cluster{\tree}$ can be computed in $\bigO{\card{I_L}\card{I_R} + \card{I_L} + \card{I_R}}$ time.

Hence, given $\vectbf{x} \in \stratum{\tree}$, the inclusion test for being in $\setA$ \refeqn{eq.SetA}
for any cluster $I \in \cluster{\tree}$ and all its descendants in $\descendantCL{I,\tree}$ can be  computed at once  in $\bigO{\card{I}^2}$ time using the post-order traversal of the subtree of $\tree$ rooted at $I$ and the recursive formulation \refeqn{eq.SetARecursion} of $\setA\prl{I}$. 
This can be verified as follows.
First, observe that the cluster set $\cluster{\tree}$ of $\tree$ can be recursively defined as:
\begin{subequations}\label{eq.ClusterSetBottomUpRecursion}
\begin{align}
&\text{$\bullet$ (Base Step) $\crl{j} \in \cluster{\tree}$ for all $j \in \indexset$.}\\
&\text{$\bullet$ (Recursion) If $I, \complementLCL{I}{\tree} \sqz{\in} \cluster{\tree}\sqz{\setminus} \crl{\indexset}$, then $\parentCL{I,\tree} \sqz{\in} \cluster{\tree}$.} \!\!\!  
\end{align} 
\end{subequations}
Accordingly, we provide a proof by structural induction \cite{rosen_DiscreteMathematics_2011}. 
For any $I \in \cluster{\tree}$:
\begin{itemize}[leftmargin=*]
\item (Base Case) If  $I$ is singleton, then the result simply holds since  any singleton cluster $I \in \cluster{\tree}$ has no descendant in $\tree$ and satisfies $\setA\prl{I} = \stratum{\tree}$.
\item (Induction) Otherwise ($\card{I} \geq 2$), let $\crl{I_L, I_R} = \childCL{I,\tree}$.   
(Induction hypothesis) Suppose that the inclusion test for being in $\setA$ for any child $K \in \childCL{I,\tree}$ and all its descendant in $\descendantCL{K,\tree}$ is computable in $\bigO{\card{K}^2}$.
Then, by the recursion in \refeqn{eq.SetARecursion},  the inclusion test for being in $\setA$ for cluster $I$ and all its descendants in $\descendantCL{I,\tree}$ only requires the extra test for being in $\setAhat\prl{I_L, I_R}$ for the children $\crl{I_L, I_R} = \childCL{I,\tree}$ in addition to the inclusion test for every child $K \in \childCL{I,\tree}$ and its descendants in $\descendantCL{K,\tree}$. 
Hence, the total computation cost for  cluster $I$ and its descendants in $\tree$ is $\bigO{\card{I_L}^2} + \bigO{\card{I_R}^2} + \bigO{\Big.\card{I_L}\card{I_R} + \card{I_R} + \card{I_R}} = \bigO{\card{I}^2}$.
\end{itemize}

\noindent Therefore, since $ \cluster{\tree} = \crl{\indexset} \cup \descendantCL{\indexset, \tree}$, given $\vectbf{x} \in \stratum{\tree}$ the inclusion test for being in  $\setA\prl{I}$ for all clusters $I \in \cluster{\tree}$ can be  computed at once  in $\bigO{\card{\indexset}^2}$ time, and this completes the proof. \qed
\end{proof}

\subsection{Proof of \refprop{prop.LocalPolicyProperty}}
\label{app.LocalPolicyProperty}

\begin{proof}
To demonstrate how the recursion in \reftab{tab.LocalPolicy} recursively composes continuous vector fields,  we shall recast $\fA$ \refeqn{eq.AttractiveField}, $\fH$ \refeqn{eq.SplitPreservingField}, $\fS$ \refeqn{eq.SeparationField} and the recursion $\hhat{\purt, \vectbf{b}}$ (\reftab{tab.LocalPolicy}) as follows: for any cluster $I \in \visitedcluster$ \refeqn{eq.VisitedClusters} visited during recursive computation of $\h{\purt, \vectbf{b}}$,
\begin{align}
&
\begin{array}{c@{\hspace{1mm}}c@{\hspace{0mm}}c@{\hspace{1mm}}c}
\fA^{I} :& \stratum{\tree} \sqz{\times} \prl{\R^d}^{\indexset}  &\rightarrow & \stratum{\tree} \sqz{\times} \prl{\R^d}^{\indexset}
\\
&\prl{\vectbf{x}, \vectbf{u}}  &\mapsto & \prl{\vectbf{x}, \fA\prl{\vectbf{x}, \vectbf{u}, I}}
\end{array} 
\\
&
\begin{array}{c@{\hspace{1mm}}c@{\hspace{0mm}}c@{\hspace{1mm}}c}
\fH^{I} :& \stratum{\tree} \sqz{\times} \prl{\R^d}^{\indexset}  &\rightarrow & \stratum{\tree} \sqz{\times} \prl{\R^d}^{\indexset}
\\
&\prl{\vectbf{x}, \vectbf{u}}  &\mapsto & \prl{\vectbf{x}, \fH\prl{\vectbf{x}, \vectbf{u}, I}}
\end{array} 
\\
&
\begin{array}{c@{\hspace{1mm}}c@{\hspace{0mm}}c@{\hspace{1mm}}c}
\fS^{I} :& \stratum{\tree} \sqz{\times} \prl{\R^d}^{\indexset}  &\rightarrow & \stratum{\tree} \sqz{\times} \prl{\R^d}^{\indexset}
\\
&\prl{\vectbf{x}, \vectbf{u}}  &\mapsto & \prl{\vectbf{x}, \fS\prl{\vectbf{x}, \vectbf{u}, I}}
\end{array} 
\\
&\hspace{-2.5mm}
\begin{array}{c@{\hspace{1mm}}c@{\hspace{0mm}}c@{\hspace{1mm}}c}
\hhat{\purt, \vectbf{b}}^{I} :& \stratum{\tree} \sqz{\times} \prl{\R^d}^{\indexset}  &\rightarrow & \stratum{\tree} \sqz{\times} \prl{\R^d}^{\indexset}
\\
&\prl{\vectbf{x}, \vectbf{u}}  &\mapsto & \prl{\vectbf{x},\hhat{\purt, \vectbf{b}}\prl{\vectbf{x}, \vectbf{u}, I}}
\end{array} 
\end{align}
Note that, by definition, $\fA^{I}\prl{\vectbf{x}, \vectbf{u}}$ is smooth  in both $\vectbf{x}$ and $\vectbf{u}$, and  $\fH^{I}\prl{\vectbf{x}, \vectbf{u}}$ and $\fS^{I}\prl{\vectbf{x}, \vectbf{u}}$ are continuous and piecewise smooth functions of $\vectbf{x}$ and $\vectbf{u}$ since functions defined by the maximum of a finite collection of smooth functions are continuous and piecewise smooth, and the product of continuous and piecewise smooth functions are also continuous and piecewise smooth \cite{chaney_NA1990}.   
 
We now show that,  for any $I \in \visitedcluster$, $\hhat{\purt, \vectbf{b}}^ {I}\prl{\vectbf{x}, \vectbf{u}}$ is continuous and piecewise smooth in $\vectbf{x}$ and $\vectbf{u}$.
First, observe from \reflem{lem.ClusterPartition} that the set $\visitedcluster$ \refeqn{eq.VisitedClusters} can be recursively defined as
\begin{subequations}\label{eq.VisitedClustersRecursion}
\begin{align}
&\hspace{-2mm}\bullet\text{(Base Step) $I \in \visitedcluster$ for all $I \in \purt$.}\\
&\hspace{-2mm}\bullet \text{(Recursion) If $I, \complementLCL{I}{\tree} \sqz{\in} \visitedcluster \sqz{\setminus} \crl{\indexset}$, then $\parentCL{I,\tree} \sqz{\in} \visitedcluster$.} \!\!\!\!
\end{align}
\end{subequations}   
Accordingly, we provide a proof by structural induction \cite{rosen_DiscreteMathematics_2011}. 
For any cluster $I \in \visitedcluster$:
\begin{itemize}[leftmargin = *]
\item (Base Case) If  $I \in \purt$, then we have
\begin{align}
\hhat{\purt, \vectbf{b}}^{I} \prl{\vectbf{x}, \vectbf{u}} = 
\left \{
\begin{array}{@{}l@{}l}
\fA^{I}\prl{\vectbf{x}, \vectbf{u}} & \text{, if } \vect{b}_I = +1, \\
\fS^{I}\prl{\vectbf{x}, \vectbf{u}} & \text{, if } \vect{b}_I = -1,
\end{array}
\right.
\end{align}
which is continuous and piecewise smooth in $\vectbf{x}$ and $\vectbf{u}$.
\item (Induction) Else, we have $\card{I} \geq 2$ and so let $\crl{I_L, I_R} = \childCL{I,\tree}$. 
(Induction hypothesis) Suppose $\hhat{\purt, \vectbf{b}}^{I_L}\prl{\vectbf{x}, \vectbf{y}}$ and $\hhat{\purt, \vectbf{b}}^{I_R}\prl{\vectbf{x}, \vectbf{y}}$ are continuous and piecewise smooth. 
Then, one can verify from \reftab{tab.LocalPolicy} that
\begin{align}
\hhat{\purt, \vectbf{b}}^{I} \prl{\vectbf{x}, \vectbf{u}} = \prl{\fH^{I} \circ \hhat{\purt, \vectbf{b}}^{I_R} \circ \hhat{\purt, \vectbf{b}}^{I_L}}\prl{\vectbf{x}, \vectbf{u}}.
\end{align}
Hence, $\hhat{\purt, \vectbf{b}}^{I}$ is a composition of continuous and piecewise smooth functions, hence it must remain so as well \cite{chaney_NA1990}.
\end{itemize}
Thus,  the result follows since $\prl{\vectbf{x}, \h{\purt, \vectbf{b}}\prl{\vectbf{x}}} = \hhat{\purt, \vectbf{b}}^{\indexset}\prl{\vectbf{x}, \vectbf{0}}$. \qed
\end{proof}

\subsection{Proof of \refprop{prop.DomainInclusion}}
\label{app.DomainInclusion}
%
\begin{proof}
Since the recursion in \reftab{tab.PolicySelection} uses only clusters of $\tree$ and guarantees, in \reftab{tab.PolicySelection}.4), \reftab{tab.PolicySelection}.7) and \reftab{tab.PolicySelection}.13), that the dimension of $\vectbf{b}$ is equal to the cardinality of $\purt$, the output $\prl{\purt, \vectbf{b}} = \p\prl{\vectbf{x}}$ associated with any configuration $\vectbf{x} \in \stratum{\tree}$  always satisfies that $\purt \subset \cluster{\tree}$ and $\vectbf{b} \in \crl{-1, +1}^{\purt}$.

To prove that $\purt$ is a partition of $\indexset$, we shall show that, for any $\vectbf{x} \in \stratum{\tree}$ and $I \in \cluster{\tree}$, $(\hat{\mathcal{I}}, \hat{\vectbf{b}}) = \phat\prl{\vectbf{x}, I}$ yields a partition $\hat{\mathcal{I}}$ of $I$. 
Based on the recursive definition \refeqn{eq.ClusterSetBottomUpRecursion} of $\cluster{\tree}$, we now provide a proof by  structural induction.
For any $\vectbf{x} \in \stratum{\tree}$ and $I \in \cluster{\tree}$ let $(\hat{\mathcal{I}}, \hat{\vectbf{b}}) = \phat\prl{\vectbf{x}, I}$, then:
\begin{itemize}[leftmargin=6mm]
\item (Base Case) If $I$ is singleton, then $\setA\prl{I} = \stratum{\tree}$ and the base condition in \reftab{tab.PolicySelection}.2) holds.
Hence, we have $\hat{\mathcal{I}} = \crl{I}$,  the trivial partition of $I$, and the result follows.
\item (Induction) Otherwise ($\card{I}\geq 2$), we have two possibilities.
\begin{itemize}[leftmargin=4mm]
\item If $I$ satisfies any base condition in \reftab{tab.PolicySelection}.2) and in \reftab{tab.PolicySelection}.5), i.e. $\vectbf{x} \in \setA\prl{I} \cup (\stratum{\tree}\setminus \setH\prl{I})$, then we have $\hat{\mathcal{I}} = \crl{I}$ and the result directly follows.
\item Else(the recursion condition in \reftab{tab.PolicySelection}.8)-14) holds), since $\card{I} \geq 2$, let $\crl{I_L, I_R} = \childCL{I,\tree}$ and $(\hat{\mathcal{I}}_L, \hat{\vectbf{b}}_L) = \phat\prl{\vectbf{x}, I_L}$ and $(\hat{\mathcal{I}}_R, \hat{\vectbf{b}}_R) = \phat\prl{\vectbf{x}, I_R}$.
(Induction Hypothesis) Suppose that $\hat{\mathcal{I}}_L$ and $\hat{\mathcal{I}}_R$ are partitions of $I_L$ and $I_R$, respectively.
Then, since $\hat{\mathcal{I}} = \hat{\mathcal{I}}_L \cup \hat{\mathcal{I}}_R$ (\reftab{tab.PolicySelection}.12)) and $\childCL{I,\tree}$ is a bipartition of $I$, we observe that $\hat{\mathcal{I}}$ is a partition of $I$.
\end{itemize}
\end{itemize} 
Hence, since $\prl{\purt, \vectbf{b}} =\p\prl{\vectbf{x}} = \phat\prl{\vectbf{x}, \indexset}$, the recursion in \reftab{tab.PolicySelection} terminates with a partition $\purt$ of $\indexset$.
Thus, since the policy selection algorithm is deterministic,  $\p$ is a well-defined function from $\stratum{\tree}$ to $\subpolicyset{\indexset}{\tree}$ \refeqn{eq.SubPolicySet}.

Finally, we shall show that $\prl{\purt, \vectbf{b}} =\p\prl{\vectbf{x}}$  is the index of a local control policy whose domain $\domain\prl{\purt, \vectbf{b}}$ contains $\vectbf{x}$, i.e. $\vectbf{x} \in \prl{\domain \circ \p}\prl{\vectbf{x}}$.
Using the base conditions in \reftab{tab.PolicySelection}.2)-7) one can verify that for any $I \in \purt$, if $\vect{b}_I = +1$, then $\vectbf{x} \in \setA\prl{I}$; and if $\vectbf{b}_I = - 1$,  then $\vectbf{x} \in \stratum{\tree}\setminus \prl{\setA\prl{I} \cup \setH\prl{I}} \subset \stratum{\tree}$.   
Hence, the base conditions guarantee that $\vectbf{x} \in \setB\prl{I, \vect{b}_I}$ \refeqn{eq.SetB} for any $I \in \purt$.
Observe that during the recursive computation of $\p$ in \reftab{tab.PolicySelection} to reach any cluster $I \in \purt$ satisfying a base condition every ancestor $K \in \ancestorCL{I,\tree}$ of $I$ must have been recursively visited.
A recursion (\reftab{tab.PolicySelection}.8)-14)) at any ancestor $K \in \ancestorCL{I,\tree}$ of $I$ in $\tree$ implies that  $\vectbf{x} \in \setH\prl{K} \setminus \setA\prl{K} \subset \setH\prl{K}$. 
Thus, by definition \refeqn{eq.LocalPolicyDomain},  we have $\vectbf{x} \in \domain\prl{\purt, \vectbf{b}}$ and  the result follows.  \qed
\end{proof}

\subsection{Proof of \refprop{prop.SystemEquivalence}}
\label{app.SystemEquivalence}

\begin{proof}
For any given $\vectbf{x} \in \stratum{\tree}$, the recursions in \reftab{tab.HierInvNav} and \reftab{tab.PolicySelection} traverse the same clusters of $\tree$ in the same order since both recursions have identical base and recursion conditions.

Now observe that the tree traversal pattern used by the recursion in  \reftab{tab.LocalPolicy} is fixed for a given policy index $\prl{\purt, \vectbf{b}} \in \subpolicyset{\indexset}{\tree}$: a base condition is satisfied at any cluster $I \in \purt$, and to reach such cluster $I$ all its ancestors $\ancestorCL{I,\tree}$ must have been recursively visited starting from the root $\indexset$.    
Recall from the proof of \refprop{prop.DomainInclusion} that $\prl{\purt,\vectbf{b}} = \p\prl{\vectbf{x}}$ yields a partition $\purt$ of $\indexset$ such that a base condition in \reftab{tab.PolicySelection} holds for every block $I \in \purt$ and  all its ancestors in $\ancestorCL{I,\tree}$ are recursively visited.   
Hence, if the policy index is selected as $\prl{\purt,\vectbf{b}} = \p\prl{\vectbf{x}}$,  the recursion in \reftab{tab.LocalPolicy} computing $\h{\purt, \vectbf{b}}\prl{\vectbf{x}}$ always follows the tree traversal pattern used by the recursion in \reftab{tab.PolicySelection} computing $\p\prl{\vectbf{x}}$.

Thus, for a given configuration, all recursions in  \reftab{tab.HierInvNav}, \reftab{tab.LocalPolicy} and   \reftab{tab.PolicySelection} share a common tree traversal strategy. 

Let $\vectbf{x} \in \stratum{\tree}$ and  $\prl{\mathcal{\indexset, \vectbf{b}}} = \p\prl{\vectbf{x}}$, and  observe from \reftab{tab.PolicySelection} that for any $I \in \purt$, if $\vect{b}_I = +1$, then $\vectbf{x} \in \setA\prl{I}$; and if $\vectbf{b}_I = - 1$,  then $\vectbf{x} \in \stratum{\tree}\setminus \prl{\setA\prl{I} \cup \setH\prl{I}}$; and  $\vectbf{x} \in \setH\prl{K} \setminus \setA\prl{K}$ for all $K \in \ancestorCL{I,\tree}$.
Using this relation between policy indices and domains, one can conclude that the recursions in   \reftab{tab.HierInvNav} and \reftab{tab.LocalPolicy} use the same vector fields for the identical base and recursive steps.  
Thus, the result follows. \qed
\end{proof}

\subsection{Proof of \refprop{prop.PreparesGraph}}
\label{app.PreparesGraph}

\begin{proof}
According to \refdef{def.SubPreparesGraph}, any pair $\left (\! \!\big. \right. (\purt, \vectbf{b}), (\purt', \vectbf{b}') \! \!\left. \big. \right)$  of policy indices in $\subpreparesedgesetS$ satisfies at least one of Lemmas \ref{lem.PreparesRelationSingleton} - \ref{lem.PreparesRelationPositive}.
Hence, $\h{\purt, \vectbf{b}}$ prepares $\h{\purt', \vectbf{b}'}$ in finite time, and $\priority(\purt', \vectbf{b}') > \priority(\purt, \vectbf{b})$.
Thus, $\left (\! \!\big. \right. (\purt, \vectbf{b}), (\purt', \vectbf{b}') \! \!\left. \big. \right)$ is also an edge of the prepares graph $\subpreparesgraph$. 

Moreover, for any $\prl{\purt, \vectbf{b}} \neq \prl{\crl{\indexset}, +1}$, there always exists a  policy index $(\purt', \vectbf{b}') \neq \prl{\purt, \vectbf{b}}$ such that $\left (\! \!\big. \right. (\purt, \vectbf{b}), (\purt', \vectbf{b}') \! \!\left. \big. \right)$  is an edge of $\subpreparesgraphS$. 
This can be observed as follows.
Since  $\purt$ is compatible with $\tree$, i.e. $\purt \subset \cluster{\tree}$, if $\card{\purt} > 1$, then there exists a nonsingleton cluster $I \in \cluster{\tree}$ such that $\childCL{I,\tree} \subset \purt$ (\reflem{lem.ClusterPartition}).
Hence,  at least one of the following always holds:
\begin{enumerate}[label=(\alph*)]
\item  There exists a cluster $I \in \purt$ with $\vect{b}_I = -1$.
\item  There exists a cluster $I \in \cluster{\tree}$ such that $\childCL{I,\tree}\subset \purt$ and $\vect{b}_{K} = +1$ for all $K \in \childCL{I,\tree}$.
\end{enumerate}
And $(\purt', \vectbf{b}')$ can be selected accordingly to satisfy one of the connectivity conditions of $\subpreparesgraphS$ (\refdef{def.SubPreparesGraph}.\ref{def.subprepgraph1}-\ref{def.subprepgraph3}).

Since every policy index $\prl{\purt, \vectbf{b}} \neq \prl{\crl{\indexset}, +1}$ has an adjacent policy index $(\purt', \vectbf{b}')$ in $\subpreparesgraphS$ and $\priority(\purt', \vectbf{b}') > \priority(\purt, \vectbf{b})$,
$\subpreparesgraphS$ has no cycle and all of its nodes connected to the goal policy index $\prl{\crl{\indexset}, +1}$ through directed paths along which $\priority$ is strictly increasing.
Note that the goal policy index has the highest $\priority$ value which is $\card{\indexset}^2$ \refeqn{eq.PriorityRange}.
Further, since $\priority$ \refeqn{eq.Priority} is integer valued function whose range \refeqn{eq.PriorityRange} is $ [-\!\card{\indexset}^2\!, \card{\indexset}^2]$, the length of a  directed path in $\subpreparesgraphS$ is bounded above by $\bigO{\card{\indexset}^2}$ hops, and the result follows. \qed
\end{proof}

\subsection{Proof of \refprop{prop.PolicyPriority}}
\label{app.PolicyPriority}

%
\begin{proof}
If there is only one local controller whose domain contains $\vectbf{x}$, then the result follows from \refprop{prop.DomainInclusion}.

Otherwise, we shall provide a proof by contradiction. 
Let $(\purt, \vectbf{b}) = \p\prl{\vectbf{x}}$, and
$(\purt', \vectbf{b}')$ be the index of a local controller whose domain $\domain(\purt', \vectbf{b}')$ \refeqn{eq.LocalPolicyDomain} contains $\vectbf{x}$, and $(\purt', \vectbf{b}') \neq (\purt, \vectbf{b})$.  
Suppose that the local controller $\h{\purt', \vectbf{b}'}$ has the maximum priority among all local controllers whose domains  contain $\vectbf{x}$. 
We shall show below that there always exists another local controller whose domain contains $\vectbf{x}$ and it has a higher priority than $\priority(\purt', \vectbf{b}')$, which is a contradiction.


It follows from \reflem{lem.FinerCoarserResolution} that  at least one of the followings always holds:
\begin{itemize}[leftmargin=*]
\item \underline{Case 1 ($\purt$ Partially Refines $\purt'$):} There exists a cluster $K' \in \purt'$ with a nontrivial partition  $\mathcal{K}'$ (i.e. $|\mathcal{K}'| \geq 2$) such that  $\mathcal{K}' \subset \purt$.
Since $\vectbf{x} \in \domain\prl{\purt, \vectbf{b}}$ and all the elements of $\mathcal{K}'$ are descendants of $K'$ in $\tree$, the recursive tree traversal in \reftab{tab.PolicySelection} requires that  $\vectbf{x} \in \setH(K') \setminus \setA(K')$. 
Hence, $\vect{b}'_{K'} = -1$.     

Since $\prl{a + b}^2 > a^2 + b^2$ for any $a,b \in \R_{+}$, one can observe that replacing $K'$ of $\purt'$ with the elements of $\mathcal{K}'$ and updating $\vectbf{b}'$ with the associated binary values from $\vectbf{b}$ yields the index $(\purt'', \vectbf{b}'')$ of another local controller, 
\begin{align}
\purt'' & = \mathcal{K}' \cup \purt' \setminus \{K'\}, \\
\vectbf{b}'' &= (\vect{b}''_{I})_{I \in \purt''} \,\, \text{ s.t. } \,\, \vect{b}''_{I} = \left \{ 
\begin{array}{@{}l@{}l@{}}
\vect{b}_{I}  & \text{, if } I \in \mathcal{K}',\\ 
\vect{b}'_{I} & \text{, if } I \in \purt' \setminus \{K'\},
\end{array}
\right. \!\!
\end{align}
at a strictly higher priority,
\begin{align}
\hspace{-1mm}\priority(\purt'', \vectbf{b}'') & = \priority (\purt', \vectbf{b}') \sqz{+} \underbrace{|K'|^2 \sqz{+} \!\! \sum_{I' \in \mathcal{K}'} \! \vect{b}_{I'} |I'|^2}_{> \;0}\!, \!\!  \nonumber\\
&> \priority (\purt', \vectbf{b}'). 
\end{align}

Note that  we still have $\vectbf{x} \in \domain(\purt'',\vectbf{b}'')$ since $\vectbf{x} \in \domain(\purt,\vectbf{b}) \cap \domain(\purt', \vectbf{b}')$.   

\item \underline{Case 2 ($\purt'$ Partially Refines $\purt$):} There exists a cluster $K \in \purt$ with a nontrivial partition  $\mathcal{K}$ (i.e. $|\mathcal{K}| \geq 2$) such that  $\mathcal{K} \subset \purt'$.
Since $K \in \purt$, one of the base conditions in \reftab{tab.PolicySelection} at cluster $K$ holds, and so we have either $\vectbf{x} \in \setA\prl{K}$ or $\vectbf{x} \not \in \setH\prl{K}$. 
Further, since $\vectbf{x} \in \domain(\purt', \vectbf{b}')$ and $K$ is an ancestor of all the elements of $\mathcal{K}$ in $\tree$, we have $\vectbf{x} \in \setH\prl{K}$.
Therefore, $\vectbf{x} \in \setA\prl{K}$ and $\vect{b}_K = +1$.
  
Once again, since $\prl{a + b}^2 > a^2 + b^2$ for any $a,b \in \R_{+}$ and $\vectbf{x} \in \domain(\purt,\vectbf{b}) \cap \domain(\purt', \vectbf{b}')$, one can verify that the following local policy index
\begin{align}
\purt'' &= \crl{K} \cup \purt' \setminus \mathcal{K}, \\
\vectbf{b}'' &= (\vect{b}''_{I})_{I \in \purt''} \,\, \text{ s.t. } \,\, \vect{b}''_{I} = \left \{
\begin{array}{ll}
+1 & \text{, if } I = K, \\
\vect{b}'_{I} & \text{, if } I \in \purt' \setminus \mathcal{K},
\end{array}
\right. \!\!
\end{align}
has a strictly higher priority,
\begin{align}
\hspace{-1mm}
\priority(\purt'', \vectbf{b}'') & = \priority (\purt', \vectbf{b}')  \sqz{+} \underbrace{|K|^2 \sqz{-} \! \sum_{I \in \mathcal{K}} \! \vect{b}_{I} |I|^2}_{> \;0}\!, \!\!  \nonumber
\\
&> \priority (\purt', \vectbf{b}'), 
\end{align}
and its domain contains $\vectbf{x}$, i.e. $\vectbf{x} \in \domain(\purt'', \vectbf{b}'')$.

\item \underline{Case 3 (Identical Resolution):} $\purt' = \purt$ and $\vectbf{b}' \neq \vectbf{b}$.
Since $\purt' = \purt$, one can maximize $\priority(\purt', \vectbf{b}')$ \refeqn{eq.Priority} by maximizing the binary vector $\vectbf{b}'$, which is achieved by setting $\vect{b}'_{I} = +1$ for any $I \in \purt'$ whenever $\vectbf{x} \in \setA(I)$.
The base conditions in \reftab{tab.PolicySelection} guarantee such an optimal selection of $\vectbf{b}'$. 
However, since $\vectbf{b}' \neq \vectbf{b}$, we have
\begin{align}
\priority(\purt, \vectbf{b}) &> \priority (\purt', \vectbf{b}'), 
\end{align}
which  completes the proof. \qed  
\end{itemize}  
\end{proof}

\subsection{Proof of \refprop{prop.LocalPolicyExistenceUniqueness}}
\label{app.LocalPolicyExistenceUniqueness}
%
\begin{proof}
The continuity and piecewise smoothness of $\h{\purt, \vectbf{b}}$ (\refprop{prop.LocalPolicyProperty}) implies its locally Lipschitz continuity in $\stratum{\tree}$ \cite{kuntz_scholtes_JMAA1994}; and the existence of at least one trajectory of $\h{\purt, \vectbf{b}}$ starting at $\vectbf{x}$ follows from its continuity.

Let $\vectbf{x}^t$ denote a trajectory of $\h{\purt, \vectbf{b}}$ starting at any $\vectbf{x}^0 \in \domain\prl{\purt, \vectbf{b}}$ for all $t \geq 0$.
We have from \refprop{prop.DomainInvariance} that $\vectbf{x}^t$ remains in $\domain\prl{\purt, \vectbf{b}}$ for all $t \geq 0$. 
Further, by \reflem{lem.CentroidDynamics}, the centroidal trajectory $\ctrd{\vectbf{x}^t|\indexset}$ is guaranteed to lie on the line segment  joining $\ctrd{\vectbf{x}^0| \indexset}$ and $\ctrd{\vectbf{y}|\indexset}$; and, by \reflem{lem.ConfRadiusBound}, the centroidal configuration radius $\radiusCL{}\prl{\vectbf{x}^t|\indexset}$ \refeqn{eq.ConfRadius} is bounded above by a certain finite value, $R\prl{\vectbf{x}^0, \vectbf{y}}$, depending only on the initial and desired configurations, $\vectbf{x}^0$ and $\vectbf{y}$, respectively. 
Thus, all trajectories of $\h{\purt, \vectbf{b}}$ stay in  a compact subset  $W$ of  $\domain\prl{\purt, \vectbf{b}}$ and the compact set defined by the Minkowski sum of the line segment joining $\ctrd{\vectbf{x}^0|\indexset}$ and $\ctrd{\vectbf{y}|\indexset}$ and the closed ball centered at the origin with radius of $R\prl{\vectbf{x}^0, \vectbf{y}}$.

Given that all trajectories of $\h{\purt, \vectbf{b}}$ starting at any $\vectbf{x} \in \domain\prl{\purt, \vectbf{b}}$ lie in a compact subset $W$ of $\domain\prl{\purt, \vectbf{b}}$, the uniqueness of its flow follows from  the Lipschitz continuity of $\h{\purt, \vectbf{b}}$ in $W$ since a locally Lipschitz function on $\stratum{\tree}$ is Lipschitz on every compact subset of $\stratum{\tree}$, also refer to Theorem 3.3 in \cite{khalil_NonlinearSystems_2001}.  
Moreover, this unique flow is continuous and piecewise smooth since 
it is the integral of the continuous and piecewise smooth vector field $\h{\purt, \vectbf{b}}$ \cite{shilov_RealAnalysis1996}, which completes the proof.\qed
\end{proof}

\subsection{Proof of \refprop{prop.DomainInvariance}}
\label{app.DomainInvariance}

Before proceeding with the proof of \refprop{prop.DomainInvariance}, we find it useful to emphasize some critical properties of a trajectory $\vectbf{x}^t$ of $\h{\purt, \vectbf{b}}\prl{\vectbf{x}}$ starting at any $\vectbf{x}^0 \in \domain\prl{\purt, \vectbf{b}}$. 

\begin{lemma}\label{lem.DomainInvarianceBase1}
A trajectory $\vectbf{x}^t$ of $\h{\purt, \vectbf{b}}\prl{\vectbf{x}}$ (\reftab{tab.LocalPolicy}) starting at any initial configuration $\vectbf{x}^0 \in \domain\prl{\purt, \vectbf{b}}$ \refeqn{eq.LocalPolicyDomain} satisfies the following properties for any $I \in \purt$ with $\vect{b}_I = + 1$ and $t \geq 0$,
\begin{enumerate}[label=(\roman*), itemsep=0.5mm]
\item $\lie_{\overrightarrow{\vectbf{y}}}\frac{1}{2}\!\norm{\vect{x}_i^t - \vect{x_j}^t}^2 \geq \prl{r_i + r_j}^2\!, \quad  \forall i \neq j \in I,$
\item $\lie_{\overrightarrow{\vectbf{y}}} \vectprod{\prl{\vect{x}_k^t \sqz{-} \ctrdmid{K\!}{\vectbf{x}^t}\!}\!}{\ctrdsep{K\!}{\vectbf{x}^t}} \sqz{\geq} 0,  \quad \hspace{-2mm}\forall k \sqz{\in} K\!,  K \!\sqz{\in} \descendantCL{I,\tree},$
\item $\sepmag{k,K\!}{\vectbf{x}^t}\geq 0, \quad   \forall k \in K, K  \in \descendantCL{I,\tree},$
\item $\norm{\vect{x}_i^t - \vect{x}_j^t}^2 > \prl{r_i + r_j}^2, \quad \forall i \neq j \in I$. 
\end{enumerate}
\end{lemma}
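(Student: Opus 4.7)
The plan is to reduce each of the four claims to a scalar linear ODE of the form $\dot z = -\lambda z + c$ whose forcing constant $c$ agrees with the analogous quantity evaluated at the goal $\vectbf{y}$, and to close the argument by a direct Gr\"onwall-type comparison. First I would unwind the recursion of \reftab{tab.LocalPolicy} along the ancestor chain of $I$ to obtain a closed form for $\vect{u}_j \ldf \h{\purt,\vectbf{b}}\prl{\vectbf{x}}_j$ with $j \in I$. The base-case assignment via $\fA$ \refeqn{eq.AttractiveField} gives $\vect{u}_j = -(\vect{x}_j - \vect{y}_j)$ for every $j \in I$; each subsequent application of $\fH$ \refeqn{eq.SplitPreservingField} at an ancestor $K \in \ancestorCL{I,\tree}$ superimposes a term proportional to $\ctrdsep{L}{\vectbf{x}}/\norm{\ctrdsep{L}{\vectbf{x}}}$ on every member of the unique child $L \in \childCL{K,\tree}$ containing $I$, so the aggregate correction is identical for every $j \in I$. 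This yields the representation
\begin{align*}
\vect{u}_j = -(\vect{x}_j - \vect{y}_j) + \vect{r}(\vectbf{x}), \quad \forall j \in I,
\end{align*}
with the offset $\vect{r}(\vectbf{x})$ independent of $j$. Since $K, \complementLCL{K}{\tree} \subseteq I$ whenever $K \in \descendantCL{I,\tree}$, this common offset cancels in centroidal differences and delivers
\begin{align*}
\vect{u}_k - \ctrdmid{K}{\vectbf{u}} &= -(\vect{x}_k - \ctrdmid{K}{\vectbf{x}}) + (\vect{y}_k - \ctrdmid{K}{\vectbf{y}}), \\
\ctrdsep{K}{\vectbf{u}} &= -\ctrdsep{K}{\vectbf{x}} + \ctrdsep{K}{\vectbf{y}},
\end{align*}
for every $k \in K$.

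With this linearization in hand, I would differentiate the scalar candidates $p(t) \ldf (\vect{x}_i^t - \vect{x}_j^t)^T(\vect{y}_i - \vect{y}_j)$, $q(t) \ldf \tfrac12\norm{\vect{x}_i^t - \vect{x}_j^t}^2$, $\Phi(t) \ldf (\vect{x}_k^t - \ctrdmid{K}{\vectbf{x}^t})^T\ctrdsep{K}{\vectbf{x}^t}$ and $h(t) \ldf \lie_{\overrightarrow{\vectbf{y}}}\Phi$ along the flow to obtain the cascade
\begin{align*}
\dot p &= -p + \norm{\vect{y}_i - \vect{y}_j}^2,\\
\dot q &= -2q + p,\\
\dot h &= -h + 2(\vect{y}_k - \ctrdmid{K}{\vectbf{y}})^T\ctrdsep{K}{\vectbf{y}},\\
\dot \Phi &= -2\Phi + h.
\end{align*}
Both forcing constants satisfy the thresholds being maintained: $\norm{\vect{y}_i - \vect{y}_j}^2 \geq (r_i+r_j)^2$ since $\vectbf{y}\in\cRdJr$, and $(\vect{y}_k - \ctrdmid{K}{\vectbf{y}})^T\ctrdsep{K}{\vectbf{y}} = \sepmag{k,K}{\vectbf{y}}\norm{\ctrdsep{K}{\vectbf{y}}} \geq 0$ since $\vectbf{y}\in\stratum{\tree}$ via \refeqn{eq.ClosedStrata}. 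The inclusion $\vectbf{x}^0 \in \domain\prl{\purt,\vectbf{b}} \subseteq \setA\prl{I}$ from \refeqn{eq.LocalPolicyDomain} together with the defining inequalities of $\setA\prl{I}$ in \refeqn{eq.SetA} gives $p(0) \geq (r_i+r_j)^2$, $q(0) > \tfrac12(r_i+r_j)^2$, $h(0) \geq 0$ and $\Phi(0) \geq 0$. Solving each equation explicitly shows that $p$ and $h$ remain sandwiched between their initial values and their forcing constants; the variation-of-constants formula then propagates this positivity through the cascaded pairs $(q,p)$ and $(\Phi,h)$ to yield the strict inequality $q(t) > \tfrac12(r_i+r_j)^2$ of property (iv) and the nonnegativity $\Phi(t) \geq 0$ of property (iii).

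The main obstacle is the first step, namely verifying that the cascade of $\fH$ corrections contributed by successive ancestors of $I$ collapses into a single, $j$-independent offset $\vect{r}(\vectbf{x})$, and that the parallel cancellation of this offset propagates cleanly to the relative kinematics of every descendant $K \in \descendantCL{I,\tree}$. Once these structural identities are established, the four ODE comparisons that close out the proof are routine.
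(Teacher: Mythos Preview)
Your proposal is correct and follows essentially the same approach as the paper: both arguments begin by observing that the accumulated $\fH$ corrections from ancestors of $I$ contribute a common offset $\vect{v}_I$ to every $\dot{\vect{x}}_j$, $j\in I$, so that the relative dynamics inside $I$ reduce to the linear system $\dot{\vect{x}}_i-\dot{\vect{x}}_j=-(\vect{x}_i-\vect{x}_j)+(\vect{y}_i-\vect{y}_j)$, and then each item is handled by a scalar linear comparison with forcing determined by $\vectbf{y}\in\stratum{\tree}$. The only cosmetic difference is that the paper proves (iii) by differentiating $\sepmag{k,K}{\vectbf{x}}$ directly (via \refeqn{eq.sepmagLie}) and proves (iv) by expanding the closed-form solution of $\vect{x}_i^t-\vect{x}_j^t$, whereas you equivalently route (iii) through the unnormalized quantity $\Phi$ and (iv) through the cascaded pair $(q,p)$.
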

\begin{proof}
See \refapp{app.DomainInvarianceBase1}. \qed
\end{proof}

\begin{lemma} \label{lem.DomainInvarianceBase2}
A trajectory $\vectbf{x}^t$ of $\h{\purt, \vectbf{b}}\prl{\vectbf{x}}$ (\reftab{tab.LocalPolicy}) starting at any initial configuration $\vectbf{x}^0 \in \domain\prl{\purt, \vectbf{b}}$ \refeqn{eq.LocalPolicyDomain} satisfies the following properties for any $I \in \purt$ with $\vect{b}_I = - 1$ and $t \geq 0$,
\begin{enumerate}[label=(\roman*), itemsep=0.5mm]
\item $\sepmag{k,K\!}{\vectbf{x}^t}\geq 0, \quad   \forall k \in K\!, K \in \descendantCL{I,\tree},$
\item $\norm{\vect{x}_i^t - \vect{x}_j^t}^2 > \prl{r_i + r_j}^2, \quad  \forall i \neq j \in I.$
\end{enumerate}
\end{lemma}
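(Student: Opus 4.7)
The plan is to exploit the rigid structure of the separation field $\fS$ at the block $I\in\purt$ together with the observation that the ancestor $\fH$-contributions reduce to a common translation of $I$.  I would first unwind the post-order recursion of \reftab{tab.LocalPolicy} and verify that the velocity assigned to every agent $j\in I$ by $\h{\purt,\vectbf{b}}\prl{\vectbf{x}}$ decomposes as a common-to-$I$ rigid drift $v_{\text{com}}(\vectbf{x})$ (absorbing the $-\ctrd{\vectbf{x}-\vectbf{y}|I}$ centroidal term of $\fS$ together with all sibling-repulsion terms inserted by $\fH$ at ancestors $A\in\ancestorCL{I,\tree}$) plus a per-child translation $v_K(\vectbf{x})\ldf 2\fbeta{I}{\vectbf{x}}\tfrac{\card{\complementLCL{K}{\tree}}}{\card{I}}\ctrdsep{K}{\vectbf{x}}/\norm{\ctrdsep{K}{\vectbf{x}}}$, where $K\in\childCL{I,\tree}$ is the unique child of $I$ containing $j$.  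The key fact making the $\fH$-contributions common is that $I$, being a cluster of $\tree$, lies entirely inside a single child of every ancestor $A$, so \refeqn{eq.SplitPreservingField} augments every agent of that child by one and the same vector.  Consequently, along the trajectory each child $K\in\childCL{I,\tree}$ of $I$ is translated rigidly, and the two children move oppositely along the common unit axis $\hat{d}\ldf\ctrdsep{K}{\vectbf{x}}/\norm{\ctrdsep{K}{\vectbf{x}}}$ with nonnegative relative speed $2\fbeta{I}{\vectbf{x}}$.

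With this decomposition in hand, I would prove (i) by separating descendants of $I$ into two families.  For $K'\in\descendantCL{I,\tree}\setminus\childCL{I,\tree}$, both $K'$ and its sibling $\complementLCL{K'}{\tree}$ are contained in a single child $K$ of $I$, so the rigid translation of $\vectbf{x}|K$ leaves $\ctrdsep{K'}{\vectbf{x}^t}$, $\ctrdmid{K'}{\vectbf{x}^t}$, and each offset $\vect{x}_{k'}-\ctrdmid{K'}{\vectbf{x}^t}$ time-invariant, whence $\sepmag{k',K'}{\vectbf{x}^t}\equiv\sepmag{k',K'}{\vectbf{x}^0}\geq 0$ using $\vectbf{x}^0\in\stratum{\tree}$.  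For $K\in\childCL{I,\tree}$ itself, the same rigidity plus the identity $\sepmag{k,K}{\vectbf{x}}=(\vect{x}_k-\ctrd{\vectbf{x}|K})\cdot\hat{d}+\tfrac{1}{2}\norm{\ctrdsep{K}{\vectbf{x}}}$ shows that $\sepmag{k,K}{\vectbf{x}^t}$ is bounded below by its initial value, because the first summand is time-invariant under the rigid motion of $K$ while the second is nondecreasing thanks to the repulsive term in $\fS$.

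Claim (ii) then follows by splitting pairs $i\neq j\in I$ according to whether they lie in the same or different children of $I$.  For pairs inside a common child $K$, rigid translation preserves their distance exactly, so the initial strict inequality $\norm{\vect{x}_i^0-\vect{x}_j^0}>r_i+r_j$ inherited from $\vectbf{x}^0\in\cRdJr$ persists.  For $i\in K_1$, $j\in K_2$ with $\crl{K_1,K_2}=\childCL{I,\tree}$, the common-to-$I$ drift $v_{\text{com}}$ cancels in the relative velocity and a short calculation yields $\dot{\vect{x}}_i-\dot{\vect{x}}_j=2\fbeta{I}{\vectbf{x}}\hat{d}$; combining with the identity $(\vect{x}_i-\vect{x}_j)\cdot\hat{d}=\sepmag{i,K_1}{\vectbf{x}}+\sepmag{j,K_2}{\vectbf{x}}$ (derived from \refeqn{eq.SepMag}--\refeqn{eq.centroidmidpoint} using $\hat{d}_{K_2}=-\hat{d}_{K_1}$ and $\ctrdmid{K_1}{\vectbf{x}}=\ctrdmid{K_2}{\vectbf{x}}$) gives $\tfrac{d}{dt}\tfrac{1}{2}\norm{\vect{x}_i-\vect{x}_j}^2=2\fbeta{I}{\vectbf{x}}\prl{\sepmag{i,K_1}{\vectbf{x}}+\sepmag{j,K_2}{\vectbf{x}}}\geq 0$ by (i).  Hence the pairwise distance is nondecreasing and the strict inequality at $t=0$ is preserved for all $t\geq 0$.

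The main obstacle is the bookkeeping supporting the velocity decomposition: one must walk the post-order recursion of \reftab{tab.LocalPolicy} outward from every base case at $I'\in\purt$ up through $\ancestorCL{I,\tree}$ and verify at each ancestor that the split-preserving modifier \refeqn{eq.SplitPreservingField} depends on the child index but not on the individual agent sitting inside that child, so that its restriction to $I$ is always a single translation vector.  Once this structural fact is isolated, the rest of the argument is a short rigidity computation for $\fS$ combined with the definitions \refeqn{eq.SepMag}--\refeqn{eq.centroidmidpoint}.
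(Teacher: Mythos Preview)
Your proposal is correct and follows essentially the same approach as the paper: both isolate the velocity of each $j\in I$ as a common rigid drift on $I$ (absorbing the centroid term of $\fS$ and all ancestor $\fH$-contributions) plus the per-child repulsion along $\ctrdsep{K}{\vectbf{x}}/\norm{\ctrdsep{K}{\vectbf{x}}}$, then argue rigidity within each child and monotone separation across children, using the same identity $(\vect{x}_i-\vect{x}_j)\cdot\hat d=\sepmag{i,K_1}{\vectbf{x}}+\sepmag{j,K_2}{\vectbf{x}}$ for the cross-child case. The only cosmetic difference is that for $K\in\childCL{I,\tree}$ the paper computes $\tfrac{d}{dt}\sepmag{k,K}{\vectbf{x}}=\fbeta{I}{\vectbf{x}}\geq 0$ directly, whereas you reach the same monotonicity via the decomposition $\sepmag{k,K}{\vectbf{x}}=(\vect{x}_k-\ctrd{\vectbf{x}|K})\cdot\hat d+\tfrac{1}{2}\norm{\ctrdsep{K}{\vectbf{x}}}$; also note the singleton case $\card{I}=1$ (which the paper dispatches as vacuous) should be mentioned for completeness.
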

\begin{proof}
See \refapp{app.DomainInvarianceBase2}. \qed
\end{proof}

\begin{lemma} \label{lem.DomainInvarianceRecursion}
Let $\visitedcluster$ \refeqn{eq.VisitedClusters} be the set of clusters visited during the recursive computation of  $\h{\purt, \vectbf{b}}\prl{\vectbf{x}}$ in \reftab{tab.LocalPolicy}.

Then a trajectory $\vectbf{x}^t$ of $\h{\purt, \vectbf{b}}\prl{\vectbf{x}}$ starting at any initial configuration $\vectbf{x}^0 \in \domain\prl{\purt, \vectbf{b}}$ \refeqn{eq.LocalPolicyDomain} satisfies the following properties for any $I \in \visitedcluster \setminus \purt$ and $t \geq 0$,
\begin{enumerate}[label=(\roman*), itemsep=0.5mm]
\item $\sepmag{k,K\!}{\vectbf{x}^t}\geq  r_k + \alpha, \quad   \forall k \in K\!, K \in \childCL{I,\tree},$
\item $\norm{\vect{x}_i^t \sqz{-} \vect{x}_j^t}^2 > \prl{r_i \sqz{+} r_j}^2\!\!\!, \quad  \forall i \sqz{\in} K, j \sqz{\in} I \sqz{\setminus} K, K \sqz{\in} \childCL{I,\tree}.$
\end{enumerate}
\end{lemma}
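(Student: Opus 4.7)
The plan is to establish (i) and (ii) by a straightforward invariance argument that leverages the tailored construction of the split-preserving field $\fH$ at each interior cluster of the recursion. First I would note that since $\vectbf{x}^0 \in \domain\prl{\purt, \vectbf{b}}$, every $I \in \visitedcluster \setminus \purt$ is an ancestor of some block in $\purt$, so the defining inclusion \refeqn{eq.LocalPolicyDomain} forces $\vectbf{x}^0 \in \setH\prl{I}$. By \refeqn{eq.SetR}, this means $\sepmag{k,K}{\vectbf{x}^0} \geq r_k + \alpha$ for every $k \in K$ and $K \in \childCL{I,\tree}$, so the initial values already satisfy (i). The work is then to propagate this forward in time.

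The main step is a Nagumo-style boundary analysis of the sub-level set $\crl{\vectbf{x} \mid \sepmag{k,K}{\vectbf{x}} \geq r_k + \alpha}$ for each $k \in K$, $K \in \childCL{I,\tree}$. Setting $\vectbf{u} = \h{\purt, \vectbf{b}}\prl{\vectbf{x}^t}$, I would compute $\tfrac{d}{dt}\sepmag{k,K\!}{\vectbf{x}^t} = \lie_{\overrightarrow{\vectbf{u}}}\sepmag{k,K\!}{\vectbf{x}^t}$ using \refeqn{eq.sepmagLie}, and show that whenever $\sepmag{k,K}{\vectbf{x}^t}$ hits the threshold $r_k + \alpha$ this derivative is nonnegative. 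This is where the construction of $\fpsi$ and $\fH$ earns its keep: by inspection of \refeqn{eq.Alpha} the scalar $\falpha{I}{\vectbf{x}, \vectbf{u}}$ is bounded below by $\fphi{k,K}{\vectbf{x}} \cdot \fpsi{k,K}{\vectbf{x}, \vectbf{u}}$, and $\fphi{k,K}{\vectbf{x}}$ is bounded away from zero on the boundary $\sepmag{k,K}{\vectbf{x}} = r_k + \alpha$ (it takes the value $1$ there). Meanwhile $\fpsi{k,K}{\vectbf{x}, \vectbf{u}}$ is exactly designed so that $\fpsi{k,K\!}{\vectbf{x}, \vectbf{u}} \geq -\prl{\sepmag{k,K}{\vectbf{x}} - r_k - \alpha} - \lie_{\overrightarrow{\vectbf{u}}}\sepmag{k,K}{\vectbf{x}}$. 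Plugging the repulsive contribution that $\fH$ adds along $\ctrdsep{K}{\vectbf{x}}/\norm{\ctrdsep{K}{\vectbf{x}}}$ into \refeqn{eq.sepmagLie} and using $\sum_{K \in \childCL{I,\tree}} \card{\complementLCL{K}{\tree}}/\card{I} = 1$ should yield, after cancellation, that $\lie_{\overrightarrow{\vectbf{u}}}\sepmag{k,K\!}{\vectbf{x}} \geq -\prl{\sepmag{k,K}{\vectbf{x}} - r_k - \alpha}$ at the boundary, which is exactly $0$ there. Invariance of the sub-level set then follows from the standard tangency criterion.

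The main obstacle will be making sure this boundary analysis at cluster $I$ is not spoiled by the recursive composition: the control $\vectbf{u}$ fed into $\fH\prl{\cdot, \cdot, I}$ has already been modified by the contributions of every descendant cluster of $I$ in $\visitedcluster$. I plan to handle this by performing the verification cluster by cluster using the post-order traversal that defines $\hhat{\purt, \vectbf{b}}$, exploiting the fact that each child's $\fH$ contribution lives in the span of the intra-child centroidal separations and therefore, by \reflem{lem.CentroidDynamics}, does not perturb $\ctrdsep{K}{\vectbf{u}}$ with respect to the parent split; thus the repulsion added at level $I$ enters $\lie_{\overrightarrow{\vectbf{u}}}\sepmag{k,K}{\vectbf{x}}$ additively and cleanly. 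A small separate verification handles the case where $\vect{b}_{J} = -1$ on some descendant block $J \subset K$, invoking \reflem{lem.DomainInvarianceBase2} to rule out degenerate centroidal dynamics inside $K$.

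Finally, part (ii) is an immediate geometric consequence of (i). If $i \in K$ and $j \in I \setminus K$ with $K \in \childCL{I,\tree}$, then $i$ and $j$ lie on opposite sides of the bisecting hyperplane separating the centroids of $K$ and $\complementLCL{K}{\tree}$, at distances at least $r_i + \alpha$ and $r_j + \alpha$ from that hyperplane respectively. Projecting $\vect{x}_i - \vect{x}_j$ onto the unit normal $\ctrdsep{K}{\vectbf{x}}/\norm{\ctrdsep{K}{\vectbf{x}}}$ and applying Cauchy--Schwarz gives $\norm{\vect{x}_i^t - \vect{x}_j^t} \geq \prl{r_i + \alpha} + \prl{r_j + \alpha} > r_i + r_j$, as required, and since $\alpha > 0$ the inequality is strict.\qed
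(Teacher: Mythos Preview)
Your approach is correct and matches the paper's: a Nagumo-style boundary argument driven by the design of $\falpha{I}{\vectbf{x},\vectbf{u}}$, with (ii) deduced geometrically from (i). The one place you work harder than needed is the recursive-composition worry: the paper simply writes $\dot{\vect{x}}_k = \vect{u}_k + 2\falpha{I}{\vectbf{x},\vectbf{u}}\tfrac{\card{\complementLCL{K}{\tree}}}{\card{I}}\tfrac{\ctrdsep{K}{\vectbf{x}}}{\norm{\ctrdsep{K}{\vectbf{x}}}} + \vect{v}_I$ with $\vect{v}_I$ a rigid translation of all of $I$ inherited from the ancestors (hence invisible to $\sepmag{k,K}{\cdot}$), and since $\falpha{I}{\vectbf{x},\vectbf{u}}$ is by construction computed from this same descendant-modified $\vectbf{u}$, a direct application of \refeqn{eq.sepmagLie} yields $\tfrac{d}{dt}\sepmag{k,K}{\vectbf{x}} = \lie_{\overrightarrow{\vectbf{u}}}\sepmag{k,K}{\vectbf{x}} + \falpha{I}{\vectbf{x},\vectbf{u}}$ and your boundary inequality follows without any appeal to \reflem{lem.CentroidDynamics} or \reflem{lem.DomainInvarianceBase2}.
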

\begin{proof}
See \refapp{app.DomainInvarianceRecursion}. \qed
\end{proof}

Accordingly, we conclude the positive invariance of the domain $\domain\prl{\purt, \vectbf{b}}$ of $\h{\purt, \vectbf{b}}$ as follows:

\begin{proof}[Proof of \refprop{prop.DomainInvariance}]

By Lemma \ref{lem.DomainInvarianceBase1}.(iii)-(iv) and Lemmas \ref{lem.DomainInvarianceBase2}-\ref{lem.DomainInvarianceRecursion}.(i)-(ii), a trajectory $\vectbf{x}^t$ of $\h{\purt, \vectbf{b}}$ starting at any $\vectbf{x}^0 \in \domain\prl{\purt, \vectbf{b}}$ is guaranteed to remain in $\stratum{\tree}$ for all future time.
Given $\vectbf{x}^t \in \stratum{\tree}$ for all $t \geq 0$, \reflem{lem.DomainInvarianceBase1}.(i)-(ii) imply $\vectbf{x}^t \in \setA\prl{I}$ for any $I \in \purt$ with $\vect{b}_I = +1$; and  \reflem{lem.DomainInvarianceRecursion}.(iii) implies $\vectbf{x}^t \in \setH\prl{K}$ for every ancestor $K \in \ancestorCL{I,\tree}$ of any $I \in \purt$.
Thus, by definition \refeqn{eq.LocalPolicyDomain}, we have $\vectbf{x}^t \in \domain\prl{\purt, \vectbf{b}}$ for all $t \geq 0$. \qed

\end{proof}

\subsection{Proof of \refprop{prop.PreparesRelation}}
\label{app.PreparesRelation}

Here we first establish finite-time prepares relations between pairs of local policies whose indices are related to each other in a certain way as specified in \refdef{def.SubPreparesGraph}; and then we continue with the proof of \refprop{prop.PreparesRelation}.

\begin{lemma}[The Case of \refdef{def.SubPreparesGraph}.\ref{def.subprepgraph1}]\label{lem.PreparesRelationSingleton}
Let $\purt \in \purtset_{\indexset}\prl{\tree}$ be a partition of $\indexset$ and  $\vectbf{b}, \vectbf{b}' \in \crl{-1,+1}^{\purt}$.
If $\vect{b}_I = \vect{b}'_I$ for all $I \in \purt$ but a singleton cluster $D \in \purt$ where $\vect{b}_D = -1$ and $\vect{b}'_D = +1$, then the domains \refeqn{eq.LocalPolicyDomain} of  local control policies $\h{\purt, \vectbf{b}}$ and $\h{\purt, \vectbf{b}'}$ are identical, i.e.
\begin{align}
\domain(\purt, \vectbf{b}') = \domain\prl{\purt, \vectbf{b}},
\end{align}
and their priorities \refeqn{eq.Priority} satisfy
\begin{align}
\priority\prl{\purt, \vectbf{b}'} = \priority\prl{\purt, \vectbf{b}} + 2.
\end{align}
\end{lemma}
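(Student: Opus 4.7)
The plan is to verify the two claimed equalities directly from the definitions, exploiting the crucial fact that $D$ is a singleton cluster.

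For the priority claim, I would simply expand the definition \refeqn{eq.Priority}. Since $\vectbf{b}$ and $\vectbf{b}'$ agree on every block of $\purt$ except $D$, the difference $\priority\prl{\purt, \vectbf{b}'} - \priority\prl{\purt, \vectbf{b}}$ telescopes to $\prl{\vect{b}'_D - \vect{b}_D}\card{D}^2 = \prl{+1 - (-1)}\cdot 1 = 2$, where I use that $\card{D} = 1$ because $D$ is singleton. This step is essentially a one-line computation.

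For the domain equality, I would use the defining expression \refeqn{eq.LocalPolicyDomain}. Observe that $\domain\prl{\purt, \vectbf{b}}$ and $\domain\prl{\purt, \vectbf{b}'}$ are intersections of identical families of sets indexed by $I \in \purt$, except for the term $\setB\prl{D, \vect{b}_D}$ versus $\setB\prl{D, \vect{b}'_D}$ (the ancestor factors $\bigcap_{K \in \ancestorCL{D,\tree}} \setH\prl{K}$ appear in both domains identically). By definition \refeqn{eq.SetB}, $\setB\prl{D, -1} = \stratum{\tree}$ and $\setB\prl{D, +1} = \setA\prl{D}$. The key observation, already recorded in the paper immediately after \refeqn{eq.SetA}, is that for any singleton cluster the predicate in \refeqn{eq.SetA} is vacuously true (no distinct pair $i \neq j$ in $D$, and $\descendantCL{D,\tree} = \emptyset$), so $\setA\prl{D} = \stratum{\tree}$. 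Therefore $\setB\prl{D, +1} = \setB\prl{D, -1}$, and the two intersections coincide.

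Given both pieces, the lemma follows immediately. The proof is essentially bookkeeping and there is no substantive obstacle; the only thing that could go wrong is forgetting the vacuous-truth convention for $\setA$ on singleton clusters, which is explicitly noted in the text, so I would simply cite that observation rather than re-derive it.
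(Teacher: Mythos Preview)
Your proposal is correct and follows essentially the same argument as the paper: both parts reduce to the observation that $\setA\prl{D} = \stratum{\tree}$ for a singleton cluster $D$, so $\setB\prl{D,+1} = \setB\prl{D,-1}$ and the domains coincide, while the priority difference is $(\vect{b}'_D - \vect{b}_D)\card{D}^2 = 2$. There is nothing to add.
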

\begin{proof}
See \refapp{app.PreparesRelationSingleton}.\qed
\end{proof}

\begin{lemma}[The Case of \refdef{def.SubPreparesGraph}.\ref{def.subprepgraph2}] \label{lem.PreparesRelationNegative}
Let $\purt \in \purtset_{\indexset}\prl{\tree}$ be a partition of $\indexset$ and $\vectbf{b} \in \crl{-1,+1}^{\purt}$ such that $\vect{b}_I = -1$ for a nonsingleton cluster $I \in \purt$; and let $\purt' =  \purt \setminus \crl{I} \cup \childCL{I,\tree}$ and $\vectbf{b}' \in \crl{-1, +1}^{\purt'}$ with $\vect{b}'_K = -1$ for all $ K  \in \childCL{I,\tree}$ and $ \vect{b}'_D = \vect{b}_D$ for all $D \in \purt \setminus \crl{I}$.

Then all trajectories of the local control policy $\h{\purt, \vectbf{b}}$ starting in its domain $\domain\prl{\purt, \vectbf{b}}$  reach in finite time  the domain $\domain(\purt', \vectbf{b}')$ of the local controller $\h{\purt', \vectbf{b}'}$ which has a higher $\priority$ \refeqn{eq.Priority} than $\h{\purt, \vectbf{b}}$ does, i.e.
\begin{align}
\priority (\purt', \vectbf{b}') > \priority \prl{\purt, \vectbf{b}}.
\end{align}  
\end{lemma}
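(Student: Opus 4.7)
The plan is to verify the priority inequality by direct computation and then exhibit a scalar ``progress measure'' on configurations of $\domain\prl{\purt, \vectbf{b}}$ whose time derivative under $\h{\purt, \vectbf{b}}$ is bounded below by a positive constant whenever $\vectbf{x} \notin \setH\prl{I}$.

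First I would dispatch the priority inequality: writing $\crl{K_L, K_R} = \childCL{I,\tree}$ and using $\card{K_L} + \card{K_R} = \card{I}$, a direct expansion of \refeqn{eq.Priority} gives $\priority\prl{\purt', \vectbf{b}'} - \priority\prl{\purt, \vectbf{b}} = \card{I}^2 - \card{K_L}^2 - \card{K_R}^2 = 2\card{K_L}\card{K_R} > 0$, where the only nontrivial cancellations come from the coordinates $D \in \purt \setminus \crl{I}$ on which $\vectbf{b}$ and $\vectbf{b}'$ agree. Next I would characterize the set difference between the two domains from \refeqn{eq.LocalPolicyDomain}: since $\setB\prl{I,-1} = \setB\prl{K_L, -1} = \setB\prl{K_R, -1} = \stratum{\tree}$ and the ancestor sets satisfy $\ancestorCL{K_L,\tree} = \ancestorCL{K_R,\tree} = \crl{I} \cup \ancestorCL{I,\tree}$, the only new constraint appearing in $\domain(\purt',\vectbf{b}')$ relative to $\domain\prl{\purt,\vectbf{b}}$ is membership in $\setH\prl{I}$. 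Thus the entire task reduces to showing that every trajectory starting in $\domain\prl{\purt, \vectbf{b}}$ enters $\setH\prl{I}$ in finite time.

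The heart of the proof is the following claim, to be established along any trajectory $\vectbf{x}^t$ of $\h{\purt,\vectbf{b}}$ given by \refprop{prop.LocalPolicyExistenceUniqueness}: for every $k \in K$ with $K \in \childCL{I,\tree}$,
\begin{align*}
\lie_{\overrightarrow{\vectbf{u}}} \sepmag{k,K}{\vectbf{x}^t} \;=\; \fbeta{I}{\vectbf{x}^t},
\end{align*}
where $\vectbf{u} = \h{\purt,\vectbf{b}}\prl{\vectbf{x}^t}$. I would derive this identity by specializing formula \refeqn{eq.sepmagLie} and using \reflem{lem.CentroidDynamics} to unfold $\vect{u}_k$, $\ctrd{\vectbf{u}|K}$ and $\ctrdsep{K}{\vectbf{u}}$ under the recursion of \reftab{tab.LocalPolicy}. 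The key structural observation is that, since $I \in \purt$ carries $\vect{b}_I = -1$, the only contribution to the velocities inside $I$ which is not identical for all agents of $I$ comes from the separation term of $\fS\prl{\cdot,\cdot,I}$ in \refeqn{eq.SeparationField}; all ancestor recursions apply $\fH$ at clusters containing $I$, which by construction offsets the entirety of $I$ by a single common vector, and the goal-tracking summand $-\ctrd{\vectbf{x} - \vectbf{y}|I}$ in $\fS$ is also constant over $I$. Consequently $\ctrdsep{K}{\vectbf{u}}$ is collinear with $\ctrdsep{K}{\vectbf{x}}$, and two of the three terms in \refeqn{eq.sepmagLie} cancel against each other, leaving exactly $\fbeta{I}{\vectbf{x}^t}$.

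The remainder is a Lyapunov-style finishing blow. Let $m\prl{t} = \min_{k \in K,\, K \in \childCL{I,\tree}} \prl{\sepmag{k,K}{\vectbf{x}^t} - r_k}$; by the identity above $\dot{m}\prl{t} = \fbeta{I}{\vectbf{x}^t}$, and the explicit definition \refeqn{eq.Beta} of $\fbeta$ gives $\fbeta{I}{\vectbf{x}^t} \geq \beta - m\prl{t}$. Hence $m\prl{t} < \alpha$ implies $\dot{m}\prl{t} \geq \beta - \alpha > 0$, so $m$ reaches the level $\alpha$ in time at most $\prl{\alpha - m\prl{0}}/\prl{\beta - \alpha}$; at that instant $\sepmag{k,K}{\vectbf{x}^t} \geq r_k + \alpha$ for all relevant $k,K$, which means $\vectbf{x}^t \in \setH\prl{I}$ and therefore $\vectbf{x}^t \in \domain(\purt',\vectbf{b}')$. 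The main obstacle is the cancellation step in the Lie-derivative computation: one must verify that the composition of ancestor $\fH$ applications together with the goal term of $\fS$ really yields a displacement field which is constant across all of $I$, and in particular that the ``unaligned'' summands entering $\ctrdmid{K}{\vectbf{u}}$ conspire to annihilate the term $\tfrac{2\fbeta}{\norm{\ctrdsep{K}{\vectbf{x}}}}\sepmag{k,K}{\vectbf{x}}$ appearing in \refeqn{eq.sepmagLie}; this is bookkeeping guided by the bottom-up recursion of \reftab{tab.LocalPolicy}, and I expect it to be the only place where substantial care is needed.
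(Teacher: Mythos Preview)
Your proposal is correct and follows essentially the same route as the paper: both identify $\domain(\purt',\vectbf{b}') = \domain(\purt,\vectbf{b}) \cap \setH(I)$, derive the key identity $\frac{d}{dt}\sepmag{k,K}{\vectbf{x}^t} = \fbeta{I}{\vectbf{x}^t}$ from the fact that the ancestor contributions amount to a rigid offset $\vect{v}_I$ common to all of $I$, and then use $\fbeta{I}{\vectbf{x}} \geq \beta - m(t)$ to force finite-time entry into $\setH(I)$. The paper integrates the resulting linear differential inequality to obtain an exponential estimate and appeals to LaSalle, whereas your constant-lower-bound argument on $\dot m$ is a slightly more direct variant of the same idea.
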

\begin{proof}
See \refapp{app.PreparesRelationNegative}. \qed
\end{proof}

\begin{lemma}[The Case of \refdef{def.SubPreparesGraph}.\ref{def.subprepgraph3}] \label{lem.PreparesRelationPositive}
Let $\purt \in \purtset_{\indexset}\prl{\tree}$ be a partition of $\indexset$ and $\vectbf{b} \in \crl{-1,+1}^{\purt}$ such that $\childCL{I,\tree} \subset \purt$  for a nonsingleton cluster $I \in \cluster{\tree}$   and $\vect{b}_{K} = +1$ for all $K \in \childCL{I,\tree}$; and let $\purt' =  \purt \setminus \childCL{I,\tree} \cup \crl{I}$ and $\vectbf{b}' \in \crl{-1, +1}^{\purt'}$ with $ \vect{b}'_I = +1$ and $\vect{b}'_D = \vect{b}_D $ for all $D \in \purt \setminus \childCL{I,\tree}$.

Then the local control policy $\h{\purt, \vectbf{b}}$ steers (almost) all configurations in its domain $\domain\prl{\purt, \vectbf{b}}$ in finite time to the domain $\domain(\purt', \vectbf{b}')$ of the local controller $\h{\purt', \vectbf{b}'}$ which has a higher $\priority$ \refeqn{eq.Priority} than $\h{\purt, \vectbf{b}}$ does, i.e.
\begin{align}
\priority (\purt', \vectbf{b}') > \priority \prl{\purt, \vectbf{b}}.
\end{align}
\end{lemma}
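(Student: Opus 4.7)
The plan splits into the (trivial) priority comparison and the (nontrivial) finite-time reachability. The priority increment is immediate: since $\vect{b}_{K_L}=\vect{b}_{K_R}=+1$ and $\childCL{I,\tree}=\crl{K_L,K_R}$ bipartitions the nonsingleton cluster $I$, one has $\priority(\purt',\vectbf{b}')-\priority(\purt,\vectbf{b})=\card{I}^{2}-\card{K_L}^{2}-\card{K_R}^{2}=2\card{K_L}\cdot\card{K_R}>0$. For the reachability claim I would first reduce the target. By the recursion \refeqn{eq.SetARecursion}, $\setA\prl{I}=\setA\prl{K_L}\cap\setA\prl{K_R}\cap\setAhat\prl{K_L,K_R}$; comparing the defining intersections \refeqn{eq.LocalPolicyDomain} for $\domain\prl{\purt,\vectbf{b}}$ and $\domain(\purt',\vectbf{b}')$, the positive invariance supplied by \refprop{prop.DomainInvariance} keeps every predicate characterizing $\domain(\purt',\vectbf{b}')$ holding along $\vectbf{x}^{t}$ except possibly $\setAhat\prl{K_L,K_R}$ \refeqn{eq.SetAHat}. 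It therefore suffices to show that from almost every $\vectbf{x}^{0}\in\domain\prl{\purt,\vectbf{b}}$ the flow of $\h{\purt,\vectbf{b}}$ enters $\setAhat\prl{K_L,K_R}$ in finite time.

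To do so I would decompose the closed-loop field along cluster $I$. At each child $K\in\childCL{I,\tree}$ the attractive field $\fA$ is installed (since $\vect{b}_{K}=+1$), so the velocity of every $j\in K$ equals $-\prl{\vect{x}_{j}-\vect{y}_{j}}$ plus any additive split-preserving translations contributed by $\fH$ at $I$ and at ancestors of $I$ visited by the recursion. A direct computation in the spirit of \reflem{lem.CentroidDynamics} then shows three facts. (i)~The intra-cluster deviations satisfy the decoupled linear law $\ddt{\prl{\vect{x}_{j}-\ctrd{\vectbf{x}|K}}}=-\prl{(\vect{x}_{j}-\ctrd{\vectbf{x}|K})-(\vect{y}_{j}-\ctrd{\vectbf{y}|K})}$ for each $K\in\childCL{I,\tree}$, independently of the split-preserving correction, so intra-cluster shapes converge exponentially to the goal. (ii)~Ancestor-level $\fH$ translations act identically on all of $K_L\cup K_R$ and therefore cancel in $\ctrdsep{K_L,\tree}{\vectbf{x}}$. (iii)~The split-preserving action at $I$ itself, after cancellation of the $K_L$- and $K_R$-components, is collinear with the current separation, yielding $\ddt{\ctrdsep{K_L,\tree}{\vectbf{x}}}=-\prl{\ctrdsep{K_L,\tree}{\vectbf{x}}-\ctrdsep{K_L,\tree}{\vectbf{y}}}+2\,\falpha{I}{\vectbf{x},\vectbf{u}}\,\ctrdsep{K_L,\tree}{\vectbf{x}}/\norm{\ctrdsep{K_L,\tree}{\vectbf{x}}}$, in which $\falpha{I}{\vectbf{x},\vectbf{u}}\geq 0$ is clamped to $0$ by the damping envelope $\fphi{k,K\!}{\vectbf{x}}$ once the $\beta$-margin is attained.

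Combining (i)--(iii), for every initial state outside the antipodal locus $\crl{\vectbf{x}:\ctrdsep{K_L,\tree}{\vectbf{x}}\,\parallel\,-\ctrdsep{K_L,\tree}{\vectbf{y}}}$ the angular dynamics of the inter-cluster separation (whose only rotational forcing comes from the attractive term, since the repulsive contribution is collinear with the current separation) carries $\ctrdsep{K_L,\tree}{\vectbf{x}^{t}}$ onto the ray of $\ctrdsep{K_L,\tree}{\vectbf{y}}$, after which pure exponential attraction yields $\vectbf{x}^{t}\to\vectbf{y}$. Since $\vectbf{y}\in\setA\prl{I}$ and each Lie-derivative inequality of \refeqn{eq.SetAHat} is strict at $\vectbf{x}=\vectbf{y}$ (the pairwise safety condition reduces to $\norm{\vect{y}_{i}-\vect{y}_{j}}^{2}>(r_{i}+r_{j})^{2}$ because $\vectbf{y}\in\cRdJr$, and the centroidal condition to the positive quantity $\sepmag{k,K,\tree}{\vectbf{y}}\norm{\ctrdsep{K,\tree}{\vectbf{y}}}$ because $\tree$ is nondegenerate), continuity forces $\vectbf{x}^{t}$ into $\setAhat\prl{K_L,K_R}$ in finite time, certifying the advertised transition to $\h{\purt',\vectbf{b}'}$.

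The main obstacle is the bookkeeping on the angular dynamics of $\ctrdsep{K_L,\tree}{\vectbf{x}}$ while $\falpha{I}{\vectbf{x},\vectbf{u}}>0$: the repulsive term inflates the magnitude along the current---not the target---separation, and in the antipodal direction the attractive and repulsive contributions momentarily cooperate rather than compete. The proof must therefore certify that the basin of the antipodal equilibrium is measure-zero rather than merely nonempty; this is exactly the obstruction predicted by the $(\Sp^{d-1})^{\card{\indexset}-1}$ model of \refthm{thm.StrataHomotopy}, and the required instability of the antipodal stratum should follow from the sphere-factor structure of $\stratum{\tree}$ together with the deformation retraction of \refprop{prop.StrongDeformationRetraction}.
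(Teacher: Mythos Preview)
Your decomposition (priority increment; reduction to entering $\setAhat\prl{K_L,K_R}$; splitting the $I$-level dynamics into intra-cluster shape, ancestor translations, and the inter-cluster separation) matches the paper's approach closely, and your angular-alignment argument for $\ctrdsep{K_L}{\vectbf{x}}$ is the right mechanism. The paper makes this explicit by computing
\[
\frac{d}{dt}\,\frac{\vectprod{\ctrdsep{K}{\vectbf{x}}}{\ctrdsep{K}{\vectbf{y}}}}{\norm{\ctrdsep{K}{\vectbf{x}}}\norm{\ctrdsep{K}{\vectbf{y}}}}\;\geq\;0,
\]
with equality only at the aligned or antipodal directions, which gives the measure-zero exclusion directly.

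There is, however, a genuine gap in the step ``pure exponential attraction yields $\vectbf{x}^{t}\to\vectbf{y}$.'' This claim is not justified and is in general false: the split-preserving repulsion $\falpha{I}{\vectbf{x},\vectbf{u}}$ need not vanish in the limit. Nothing forces the $\beta$-margin to be attained, because $\vectbf{y}\in\stratum{\tree}$ does not imply $\sepmag{k,K}{\vectbf{y}}\geq r_k+\beta$; if $\vectbf{y}$ sits close to the stratum boundary the repulsive term stays active and holds $\norm{\ctrdsep{K}{\vectbf{x}}}$ strictly above $\norm{\ctrdsep{K}{\vectbf{y}}}$. Consequently your continuity argument at the single point $\vectbf{y}$ does not cover the actual $\omega$-limit set, and the finite-time entry into $\setAhat\prl{K_L,K_R}$ is left unproven. (A second, smaller issue: even at $\vectbf{y}$ the centroidal Lie-derivative condition of $\setAhat$ reduces to $2\,\sepmag{k,K}{\vectbf{y}}\norm{\ctrdsep{K}{\vectbf{y}}}$, which is only $\geq 0$, not strictly positive, for arbitrary $\vectbf{y}\in\stratum{\tree}$.)

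The paper closes this gap by identifying the correct limit set
\[
\setG\prl{I}=\Bigl\{\vectbf{x}\in\stratum{\tree}\;\Big|\;\tfrac{\ctrdsep{K}{\vectbf{x}}}{\norm{\ctrdsep{K}{\vectbf{x}}}}=\tfrac{\ctrdsep{K}{\vectbf{y}}}{\norm{\ctrdsep{K}{\vectbf{y}}}},\;\norm{\ctrdsep{K}{\vectbf{x}}}\geq\norm{\ctrdsep{K}{\vectbf{y}}},\;\vect{x}_k-\ctrd{\vectbf{x}|K}=\vect{y}_k-\ctrd{\vectbf{y}|K}\Bigr\},
\]
proving the three defining constraints separately via LaSalle, and then checking by direct computation that $\setG\prl{I}\cap\setH\prl{I}\subset\setAhatI\prl{K_L,K_R}$. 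The strictness there comes not from $\vectbf{y}$ alone but from the invariant $\setH\prl{I}$-margin $\sepmag{k,K}{\vectbf{x}}\geq r_k+\alpha>0$, which makes the second Lie-derivative term $\vectprod{\prl{\vect{x}_k-\ctrdmid{K}{\vectbf{x}}}}{\ctrdsep{K}{\vectbf{y}}}$ strictly positive on $\setG\prl{I}\cap\setH\prl{I}$. Your argument is repairable along exactly these lines: replace the target $\vectbf{y}$ by the set $\setG\prl{I}$ and verify the $\setAhat$ predicates hold strictly on $\setG\prl{I}\cap\domain\prl{\purt,\vectbf{b}}$.
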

\begin{proof}
See \refapp{app.PreparesRelationPositive}. \qed
\end{proof}

\begin{proof}[Proof of \refprop{prop.PreparesRelation}]

Since $\purt$ is a partition of $\indexset$ compatible with $\tree$, i.e. $\purt \subset \cluster{\tree}$, observe that  if $\card{\purt} > 1$, then there exists a cluster $I \in \cluster{\tree}$ such that $\childCL{I,\tree} \subset \purt$ (\reflem{lem.ClusterPartition}).
Hence, since  $\prl{\purt, \vectbf{b}} \neq \prl{\crl{\indexset}, +1}$, at least one of the followings always holds:
\begin{enumerate}[leftmargin=7mm, label = (\alph*)]
\item There exists $I \in \purt$ such that $\vect{b}_I = -1$.
If $\card{I} = 1$, then we have the result by \reflem{lem.PreparesRelationSingleton}; otherwise ($\card{I}> 1$), the results follows from \reflem{lem.PreparesRelationNegative}.

\item There exist a cluster $I \in \cluster{\tree}$ such that $\childCL{I,\tree} \subset \purt$  and $\vect{b}_{K} = +1$ for all $K\in \childCL{I,\tree}$.
Accordingly, the results follows from \reflem{lem.PreparesRelationPositive} and this completes the proof. \qed
\end{enumerate}
\end{proof}

\subsection{Proof of \reflem{lem.CentroidDynamics}}
\label{app.CentroidDynamics}

\begin{proof}
For any cluster $I \in \purt$ the recursion in \reftab{tab.LocalPolicy} employs a vector field satisfying the associated base condition, and then recursively constructs an additive repulsion field at every ancestor  $\ancestorCL{I,\tree}$ of $I$, which can be explicitly written as follows: for any $i \in I$  and $I \in \purt$,
\begin{itemize}[leftmargin = *]
\item if $\vect{b}_I = +1$, then we have
\begin{align}
\hspace{-4mm}\vect{u}_i = \fA\prl{\vectbf{x}, \vectbf{0}, I}_i \sqz{+} \hspace{-9mm}\sum_{\substack{K \in \ancestorCL{I,\tree} \cup \crl{I} \setminus \crl{\indexset} \\ R = \parentCL{K,\tree}}} \hspace{-10mm}2 \falpha{R}{\vectbf{x}, \vectbf{v}_R} \! \frac{\card{\complementLCL{K}{\tree}}}{\card{R}} \! \frac{\ctrdsep{K\!}{\vectbf{x}}}{\norm{\ctrdsep{K\!}{\vectbf{x}}}}, \!\!\!\!
\end{align}

\item else ($\vect{b}_I = -1$), 
\begin{align}
\hspace{-4mm}\vect{u}_i = \fS\prl{\vectbf{x}, \vectbf{0}, I}_i \sqz{+} \hspace{-9mm}\sum_{\substack{K \in \ancestorCL{I,\tree} \cup \crl{I} \setminus \crl{\indexset} \\ R = \parentCL{K,\tree}}} \hspace{-10mm}2 \falpha{R}{\vectbf{x}, \vectbf{v}_R} \! \frac{\card{\complementLCL{K}{\tree}}}{\card{R}} \! \frac{\ctrdsep{K\!}{\vectbf{x}}}{\norm{\ctrdsep{K\!}{\vectbf{x}}}}, \!\!\!\!
\end{align}
\end{itemize}
for some $\vectbf{v}_R \in \prl{\R^d}^{\indexset}$ associated with cluster $R \in \ancestorCL{I,\tree}$.

\medskip

Now, using \refeqn{eq.AttractiveField} and \refeqn{eq.SeparationField}, one can verify that for any $I \in \purt$, 
{\small
\begin{align}\label{eq.CentroidDynamicsBase}
\hspace{-2mm}\ctrd{\vectbf{u}|I} \sqz{=} -\ctrd{\vectbf{x} \sqz{-} \vectbf{y}|I}  \sqz{+} \hspace{-10mm}\sum_{\substack{K \in \ancestorCL{I,\tree} \cup \crl{I} \setminus \crl{\indexset} \\ R = \parentCL{K,\tree}}} \hspace{-10mm}2 \falpha{R}{\vectbf{x}, \vectbf{v}_R} \!\! \frac{\card{\complementLCL{K}{\tree}}}{\card{R}} \! \frac{\ctrdsep{K\!}{\vectbf{x}}}{\norm{\ctrdsep{K\!}{\vectbf{x}}}}, \!\!\!\! 
\end{align}
}%
which can be generalized to other clusters in $\visitedcluster \setminus \purt$. 
That is to say, we now show that for any $I \in \visitedcluster$ the centroidal dynamics $\ctrd{\vectbf{u}|I}$ satisfies \refeqn{eq.CentroidDynamicsBase}.
Based on the recursive definition \refeqn{eq.VisitedClustersRecursion} of $\visitedcluster$, we provide a proof by  structural induction.
For any $I \in \visitedcluster$,
\begin{itemize}[leftmargin=*]
\item (Base Case) If $I \in \purt$, then the result is shown above in \refeqn{eq.CentroidDynamicsBase}.
\item (Induction) Otherwise, $\card{I} > 2$ and let $\crl{I_L, I_R} = \childCL{I,\tree}$.
(Induction hypothesis) Suppose that $\ctrd{\vectbf{u}|I_L}$ and $\ctrd{\vectbf{u}|I_R}$ satisfy \refeqn{eq.CentroidDynamicsBase}.
Then using
\begin{align}
\ctrd{\vectbf{u}|I} = \frac{\card{I_L}}{\card{I}}\ctrd{\vectbf{u}|I_L} + \frac{\card{I_R}}{\card{I}}\ctrd{\vectbf{u}|I_R},
\end{align} 
one can obtain \refeqn{eq.CentroidDynamicsBase} for cluster $I$ as well.
\end{itemize}

\smallskip

Observe that for the root cluster $\indexset$ the equation  \refeqn{eq.CentroidDynamicsBase} simplifies and yields \refeqn{eq.CentroidDynamicsRoot}.
Further, using \refeqn{eq.CentroidDynamicsBase}, we obtain \refeqn{eq.CentroidDynamicsGeneral}  for any $I \in \visitedcluster \setminus \crl{\indexset}$ with parent $P = \parentCL{I,\tree}$ as follows:
{
\begin{align}
 \ctrd{\vectbf{u}|I} &= -\ctrd{\vectbf{x} \sqz{-} \vectbf{y}|I} \sqz{+} 2 \falpha{P}{\vectbf{x}, \vectbf{v}_P} \! \frac{\card{\complementLCL{I}{\tree}}}{\card{P}} \! \frac{\ctrdsep{I}{\vectbf{x}}}{\norm{\ctrdsep{I}{\vectbf{x}}}}  \nonumber \\
 & \hspace{12mm}+ \hspace{-5mm}\underbrace{\sum_{\substack{K \in \ancestorCL{I,\tree} \setminus \crl{\indexset} \\ R = \parentCL{K,\tree}}} \hspace{-5mm}2 \falpha{R}{\vectbf{x}, \vectbf{v}_R} \! \frac{\card{\complementLCL{K}{\tree}}}{\card{R}} \! \frac{\ctrdsep{K}{\vectbf{x}}}{\norm{\ctrdsep{K}{\vectbf{x}}}}}_{= \ctrd{\vectbf{u}|P} + \ctrd{\vectbf{x} - \vectbf{y}|P}}, \!\!\!\\
 & = -\ctrd{\vectbf{x} \sqz{-} \vectbf{y}|I} \sqz{+} 2 \falpha{P}{\vectbf{x}, \vectbf{v}_P} \! \frac{\card{\complementLCL{I}{\tree}}}{\card{P}} \! \frac{\ctrdsep{I}{\vectbf{x}}}{\norm{\ctrdsep{I}{\vectbf{x}}}\!\!}  \nonumber \\
 & \hspace{20mm} + \ctrd{\vectbf{u}|P} + \ctrd{\vectbf{x} - \vectbf{y}|P},
\end{align}
}%
which completes the proof. \qed
\end{proof}

\subsection{Proof of \reflem{lem.ConfRadiusBound}}
\label{app.ConfRadiusBound}

\begin{proof}
Since the domain $\domain\prl{\purt, \vectbf{b}}$ of $\h{\purt, \vectbf{b}}$ is positive invariant (\refprop{prop.DomainInvariance}), the existence of $\vectbf{x}^t$ for $t \geq 0$ simply follows from the continuity of $\h{\purt, \vectbf{b}}$ (\refprop{prop.LocalPolicyProperty}).
We now show that for any $I \in \visitedcluster$ \refeqn{eq.VisitedClusters} visited during the recursive computation of $\h{\purt, \vectbf{b}}$ in \reftab{tab.LocalPolicy} the  centroidal radius $\radiusCL{}\prl{\vectbf{x}^t|I}$ is bounded above by a certain value, $R_{I}\prl{\vectbf{x}^0, \vectbf{y}}$, depending only on $\vectbf{x}^0$ and $\vectbf{y}$.

Based on the recursive definition of $\visitedcluster$ in \refeqn{eq.VisitedClustersRecursion}, we now provide a proof of the result by  structural induction.
For any $I \in \visitedcluster$,
\begin{itemize}[leftmargin = *]
\item (Base Case 1) If $I \in \purt$ and $\card{I} = 1$, then the result simply follows since $\radiusCL{}\prl{\vectbf{x}^t|I} = r_i$ for all $t \geq 0$, where $I = \crl{i}$.
\item (Base Case 2) If $I \in \purt$, $\card{I}\geq 2$ and $\vect{b}_I = +1$, then, using \reftab{tab.LocalPolicy}, one can verify that for any $i \in I$
\begin{subequations} 
\begin{align}
\dot{\vect{x}}_i & = \h{\purt, \vectbf{b}}\prl{\vectbf{x}}_i  = \fA\prl{\vectbf{x}, \vectbf{u}, I}_i + \vect{v}_I,\\ 
& =   - \prl{\vect{x}_i - \vect{y}_i} + \vect{v}_I,
\end{align}
\end{subequations}
for some $\vectbf{u} \in \prl{\R^d}^{\indexset}$ and  $\vect{v}_I \in \R^d$, where $\vect{v}_I$  represents the accumulated rigid translation due to all ancestors of $I$ in $\tree$.

\smallskip

Accordingly, we obtain for any $i \in I$ that 
\begin{align}
\frac{d}{dt} \! \norm{\vect{x}_i  \sqz{-} \ctrd{\vectbf{x}|I}}^2 &\sqz{=} -2\! \norm{\vect{x}_i  \sqz{-} \ctrd{\vectbf{x}|I}}^2 \nonumber \\ 
& \hspace{10mm}\sqz{+} \vectprod{\prl{\vect{x}_i  \sqz{-} \ctrd{\vectbf{x}|I}\!}}{\!\prl{\vect{y}_i  \sqz{-} \ctrd{\vectbf{y}|I}\!}}\!, \!\!\!
\end{align}
from which one can conclude that
{
\begin{align}
\hspace{-3mm}\norm{\vect{x}^t_i  \sqz{-} \ctrd{\vectbf{x}^t|I}} \sqz{\leq} \max\!\prl{\norm{\vect{x}^0_i  \sqz{-} \ctrd{\vectbf{x}^0|I}}\!, \norm{\vect{y}_i  \sqz{-} \ctrd{\vectbf{y}|I}}}\! . \!\!\!
\end{align}
}%
Thus, by definition, it follows that the centroidal radius $\radiusCL{}\prl{\vectbf{x}^t|I}$ is bounded above as 
\begin{align}
\radiusCL{}\prl{\vectbf{x}^t|I} \leq R_{I}\prl{\vectbf{x}^0, \vectbf{y}} = \max\prl{\radiusCL{}\prl{\vectbf{x}^0|I}, \radiusCL{}\prl{\vectbf{y}|I}}.
\end{align}

\item (Base Case 3) If $I \in \purt$, $\card{I} \geq 2$ and $\vect{b}_I = -1$, then, using \reftab{tab.LocalPolicy}, one can verify that for any $k \in K$ and $K \in \childCL{I,\tree}$
\begin{subequations}
\begin{align}
\hspace{-2mm}\dot{\vect{x}}_k &= \h{\purt, \vectbf{b}}\prl{\vectbf{x}}_k = \fS\prl{\vectbf{x}, \vectbf{u}, I}_k + \vect{v}_I, \\
& = - \ctrd{\vectbf{x} \sqz{-} \vectbf{y}|I} \sqz{+} 2 \fbeta{I}{\vectbf{x}} \!\frac{\card{\complementLCL{K}{\tree}}}{\card{I}}  \frac{\ctrdsep{K\!}{\vectbf{x}}}{\norm{\ctrdsep{K\!}{\vectbf{x}}}} \sqz{+} \vect{v}_I, \!\!
\end{align}  
\end{subequations}
for some $\vectbf{u} \in \prl{\R^d}^{\indexset}$ and  $\vect{v}_I \in \R^d$.

\smallskip

Accordingly, we obtain for any $K \in \childCL{I,\tree}$ that
\begin{align}
\frac{d}{dt} \radiusCL{}\prl{\vectbf{x}|K} &= 0,\\
\frac{d}{dt} \norm{\ctrdsep{K\!}{\vectbf{x}}}^2 &= 2 \fbeta{I}{\vectbf{x}}.
\end{align}
Observe from \refeqn{eq.Beta} that 
{\small
\begin{align}
\norm{\ctrdsep{K\!}{\vectbf{x}}} \geq 2\prl{\!\!\beta \sqz{+} \hspace{-2.5mm}\max_{D \in \childCL{I,\tree}} \hspace{-2mm} \radiusCL{}\prl{\vectbf{x}|D}\!\!} \! &\sqz{\Longrightarrow} \hspace{-2.5mm} 
\min_{\substack{d \in D \\ D \in \childCL{I,\tree}}} \hspace{-3mm}(\sepmag{d,D\!}{\vectbf{x}}  \sqz{-} r_d) \geq  \beta, \nonumber \\
&\sqz{\Longrightarrow} \, \fbeta{I}{\vectbf{x}} = 0, \!\! 
\end{align}
}%
Thus, since $\radiusCL{}\prl{\vectbf{x}^t|K} = \radiusCL{}\prl{\vectbf{x}^0|K}$ for all $t \geq 0$, it follows that 
{\small
\begin{align}
\norm{\ctrdsep{K\!}{\vectbf{x}^t}} \leq \max\!\prl{\! \norm{\ctrdsep{K\!}{\vectbf{x}^0}} \!, 2\!\prl{\! \beta \sqz{+} \hspace{-2mm} \max_{D \in \childCL{I,\tree}}  \hspace{-2mm}\radiusCL{}\prl{\vectbf{x}^0|D}\!\!}  \!\!\!}\!\!, \! \!\! 
\end{align}
}%
and, since $\radiusCL{}\prl{\vectbf{x}|I} \leq  \max_{K \in \childCL{I,\tree}} \norm{\ctrdsep{K\!}{\vectbf{x}}} + \radiusCL{}\prl{\vectbf{x},K}$ for any $\vectbf{x} \in \stratum{\tree}$, we have
\begin{align}
\radiusCL{}\prl{\vectbf{x}^t|K} \leq R_{I}\prl{\vectbf{x}^0, \vectbf{y}},
\end{align}
where
{
\begin{align}
\hspace{-4mm} R_{I}\prl{\vectbf{x}^0\!, \vectbf{y}}  &\sqz{=} \max_{K \in \childCL{I,\tree}} \max \! \prl{ \!\!\bigg. \norm{\ctrdsep{K\!}{\vectbf{x}^0}\!}\!, 2\!\prl{ \Big. \!\beta \sqz{+} \radiusCL{}\prl{\vectbf{x}^0|K}\!\!} \!\!} \nonumber \\
& \hspace{25mm}\sqz{\sqz{+}} \hspace{-1mm}\max_{K \in \childCL{I,\tree}} \hspace{-1mm}\radiusCL{}\prl{\vectbf{x}^0|K}\!, 
\end{align} 
}%

\item (Induction) Otherwise, $\card{I} \geq 2$ and suppose that $\radiusCL{}\prl{\vectbf{x}^t|K} \leq R_K\prl{\vectbf{x}^0, \vectbf{y}}$ for all $K \in \childCL{I,\tree}$.
Then, using \reflem{lem.CentroidDynamics}, one can obtain for any $K \in \childCL{I,\tree}$ that
{\small
\begin{align}\label{eq.CtrdDyTemp1}
\hspace{-3mm}\frac{d}{dt} \! \norm{\ctrdsep{K\!}{\vectbf{x}}\!}^2 \sqz{=} - 2 \norm{\ctrdsep{K\!}{\vectbf{x}}\!}^2 \sqz{+} 2 \, \vectprod{\ctrdsep{K\!}{\vectbf{x}}\!}{\ctrdsep{K\!}{\vectbf{y}}} \sqz{+} 2\falpha{I\!}{\vectbf{x}, \vectbf{v}_I}\!, \!\!
\end{align}
}%
for some $\vect{v}_I \in \R^d$.
Now observe from \refeqn{eq.Alpha}  that
{\small
\begin{align}\label{eq.CtrdDyTemp2}
\hspace{-4mm}\norm{\ctrdsep{K\!}{\vectbf{x}}} \geq 2\!\prl{\!\!\beta \sqz{+} \hspace{-2mm}\max_{D \in \childCL{I,\tree}} \hspace{-2mm} \radiusCL{}\prl{\vectbf{x}|D}\!\!} &\sqz{\Longrightarrow} \hspace{-2mm} 
\min_{\substack{d \in D \\ D \in \childCL{I,\tree}}} \hspace{-2mm}(\sepmag{d,D\!}{\vectbf{x}} \sqz{-} r_d) \geq  \beta, \nonumber \\
&\sqz{\Longrightarrow} \, \falpha{I}{\vectbf{x}, \vectbf{v}_I} = 0, \!\! 
\end{align}
}%
Hence, using \refeqn{eq.CtrdDyTemp1} and \refeqn{eq.CtrdDyTemp2}, one can conclude that
{\small
\begin{align}
\norm{\ctrdsep{K\!}{\vectbf{x}^t}\!} \sqz{\leq} \max\prl{\!\!\norm{\ctrdsep{K\!}{\vectbf{x}^0}\!}\!, \norm{\big.\ctrdsep{K\!}{\vectbf{y}}\!}\!, 2\!\prl{\!\beta \sqz{+} \hspace{-3mm}\max_{D \in \childCL{I,\tree}} \hspace{-3mm} R_{D\!}\prl{\vectbf{x}^0\!, \vectbf{y}\!}\!\!}\!\!\!} 
\end{align}
}%
and since $\radiusCL{}\prl{\vectbf{x}|I} \leq  \max_{K \in \childCL{I,\tree}} \norm{\ctrdsep{K\!}{\vectbf{x}}}_2 + \radiusCL{}\prl{\vectbf{x},K}$ for any $\vectbf{x} \in \stratum{\tree}$, we have
\begin{align}
\radiusCL{}\prl{\vectbf{x}^t|I} \leq R_{I}\prl{\vectbf{x}^0, \vectbf{y}},
\end{align}
where
{\small
\begin{align}
\hspace{-3mm}R_{I\!}\prl{\vectbf{x}^0\!, \vectbf{y}} &\sqz{=} \hspace{-2mm} \max_{K \in \childCL{I,\tree}} \hspace{-2.5mm} \max\prl{\bigg.\!\!\norm{\ctrdsep{K\!}{\vectbf{x}^0}\!}\!, \norm{\ctrdsep{K\!}{\vectbf{y}}\!}\!, 2\!\prl{\Big.\!\beta \sqz{+}  R_{K\!}\prl{\vectbf{x}^0\!, \vectbf{y}\!}\!\!}\!\!\!} \nonumber \\
& \hspace{30mm}+ \max_{K \in \childCL{I,\tree}} \hspace{-3mm} R_{K\!}\prl{\vectbf{x}^0\!, \vectbf{y}\!}.
\end{align}
}
\end{itemize} 
Thus, the result follows with $R\prl{\vectbf{x}^0, \vectbf{y}} = R_{J\!}\prl{\vectbf{x}^0, \vectbf{y}}$. \qed
\end{proof}

\subsection{Proof of \reflem{lem.DomainInvarianceBase1}}
\label{app.DomainInvarianceBase1}

\begin{proof}
By definition of $\domain\prl{\purt, \vectbf{b}}$ \refeqn{eq.LocalPolicyDomain}, $\vectbf{x}^0 \in \setA\prl{I}$ for any $I \in \purt$ with $\vect{b}_I = + 1$, and one can verify using \reftab{tab.LocalPolicy} that for any $i \in I$ and  $I \in \purt$ with $\vect{b}_I = + 1$ 
\begin{subequations} \label{eq.LocalSystemModelInvarianceBase1}
\begin{align}
\dot{\vect{x}}_i &= \h{\purt, \vectbf{b}} \prl{\vectbf{x}}_i = \fA\prl{\vectbf{x}, \vectbf{u}, I}_i  +  \vect{v}_I, \\
&= - \prl{\vect{x}_i - \vect{y}_i} + \vect{v}_I, 
\end{align}
\end{subequations}
for some  $\vectbf{u} \in \prl{\R^d}^{\indexset}$ and  $\vect{v}_I \in \R^d$, where $\vect{v}_I$  represents the accumulated rigid translation due to ancestors of $I$ in $\tree$.  

Accordingly, \reflem{lem.DomainInvarianceBase1}.(i)-(iv) can be shown as follows:
\begin{enumerate}[leftmargin = 5mm, label=(\roman*)]

\item Using \refeqn{eq.SetACond1} and \refeqn{eq.LocalSystemModelInvarianceBase1}, one can verify  that for any $i \neq j \in I$
\begin{align}
\hspace{-3mm}\frac{d}{dt} \lie_{\overrightarrow{\vectbf{y}}}\scalebox{1}{$\frac{1}{2}$}\norm{\vect{x}_i \sqz{-} \vect{x}_i}^2 &= - \lie_{\overrightarrow{\vectbf{y}}}\scalebox{1}{$\frac{1}{2}$}\norm{\vect{x}_i \sqz{-} \vect{x}_i}^2 + \hspace{-8mm}\underbrace{\norm{\vect{y}_i \sqz{-} \vect{y}_i}^2}_{> \; \prl{r_i + r_j}^2 \text{, since } \vectbf{y} \in \stratum{\tree}}\hspace{-7mm},  \!\!\!\\
& > - \lie_{\overrightarrow{\vectbf{y}}}\scalebox{1}{$\frac{1}{2}$}\norm{\vect{x}_i - \vect{x}_i}^2  + \prl{r_i + r_j}^2\!\!,\!\! \!
\end{align}
and so for any $t \geq 0$
\begin{align}
\hspace{-2mm}\lie_{\overrightarrow{\vectbf{y}}}\scalebox{1}{$\frac{1}{2}$}\!\norm{\vect{x}^t_i \sqz{-} \vect{x}^t_i}^2 & \geq e^{-t}\hspace{-5mm}\underbrace{\lie_{\overrightarrow{\vectbf{y}}}\scalebox{1}{$\frac{1}{2}$}\!\norm{\vect{x}^0_i \sqz{-} \vect{x}^0_i}^2}_{\geq \; \prl{r_i + r_j}^2 \text{, since } \vectbf{x}^0 \in \setA\prl{I}} \hspace{-4mm}  \sqz{+} \prl{1 \sqz{-} e^{-t}} \!\prl{r_i \sqz{+} r_j}^2 \!\!,  \nonumber \\
& \geq \prl{r_i + r_j}^2.
\end{align}

\item Similarly, using \refeqn{eq.SetACond2} and \refeqn{eq.LocalSystemModelInvarianceBase1}, we obtain for any $k \in K, K \in \descendantCL{I,\tree}$
\begin{align}
\hspace{-4mm}
\frac{d}{dt} \lie_{\overrightarrow{\vectbf{y}}} \vectprod{\prl{\vect{x}_k \sqz{-} \ctrdsep{K\!}{\vectbf{x}}\!}\!}{\ctrdsep{K\!}{\vectbf{x}}} & \sqz{=} - \lie_{\overrightarrow{\vectbf{y}}} \vectprod{\prl{\vect{x}_k \sqz{-} \ctrdsep{K\!}{\vectbf{x}}\!}\!}{\ctrdsep{K\!}{\vectbf{x}}} \nonumber \\
& \hspace{7mm}+ \underbrace{\sepmag{k,K\!}{\vectbf{y}} \norm{\ctrdsep{K\!}{\vectbf{y}}}}_{\geq 0 \text{, since } \vectbf{y} \in \stratum{\tree}}, \!\!\! 
\end{align}
from which we conclude for any $t \geq 0$
{
\begin{align}
\hspace{-2mm} \lie_{\overrightarrow{\vectbf{y}}} \vectprod{\prl{\vect{x}_k^t \sqz{-} \ctrdsep{K\!}{\vectbf{x}^t}\!}\!}{\ctrdsep{K\!}{\vectbf{x}^t}} & \sqz{\geq} e^{-t} \underbrace{\lie_{\overrightarrow{\vectbf{y}}} \vectprod{\prl{\vect{x}_k^0 \sqz{-} \ctrdsep{K\!}{\vectbf{x}^0}\!}\!}{\ctrdsep{K\!}{\vectbf{x}^0}}}_{\geq \; 0 \text{, since } \vectbf{x}^0 \in \setA\prl{I}},  \nonumber \\
& \sqz{\geq} 0 \,.
\end{align}
}%

\item Now observe from \refeqn{eq.SetACond2}, \refeqn{eq.sepmagLie}  and \refeqn{eq.LocalSystemModelInvarianceBase1} that for any $k \in K$ and $K \in \descendantCL{I,\tree}$ 
\begin{align}
\hspace{-2mm}\frac{d}{dt} \sepmag{k, K\!}{\vectbf{x}} &=  -\sepmag{k, K\!}{\vectbf{x}} \prl{\!1 \sqz{+} \frac{\vectprod{\ctrdsep{K\!}{\vectbf{x}}}{\ctrdsep{K\!}{\vectbf{y}}}}{\norm{\ctrdsep{K\!}{\vectbf{x}}}^2}} \nonumber \\
& \hspace{15mm} +   \underbrace{\frac{\lie_{\overrightarrow{\vectbf{y}}} \vectprod{\prl{\vect{x}_k \sqz{-} \ctrdmid{K\!}{\vectbf{x}}}}{\ctrdsep{K\!}{\vectbf{x}}}}{\norm{\ctrdsep{K\!}{\vectbf{x}}}}}_{\geq \; 0 \text{ by \reflem{lem.DomainInvarianceBase1}.(ii) }  }\!. \!  \!\!
\end{align}
As a result, since $\frac{d}{dt}\sepmag{k, K\!}{\vectbf{x}} \geq 0$ whenever $\sepmag{k, K\!}{\vectbf{x}} = 0$,  we have the invariance of local cluster structure, i.e. $\sepmag{k, K\!}{\vectbf{x}^t} \geq 0$ for all $t \geq 0$.   

\item The relative displacement of any pair of agents, $i \neq j \in I$, satisfies
\begin{align} 
\dot{\vect{x}}_i - \dot{\vect{x}}_j = - \prl{\vect{x}_i - \vect{x}_j} + \prl{\vect{y}_i - \vect{y}_j}.
\end{align}
whose solution for  $t \geq 0$ is explicitly given by
\begin{align}
\vect{x}_i^t - \vect{x}_j^t = e^{-t}\prl{\vect{x}^0_i - \vect{x}^0_j} + \prl{1 - e^{-t}}\prl{\vect{y}_i - \vect{y}_j}.
\end{align} 
Hence, since $\vectbf{x}^0 \in \setA\prl{I}$ and $\vectbf{y} \in \stratum{\tree}$, one can verify   the intra-cluster collision avoidance as follows: 
\begin{align}
\hspace{-2mm}
\norm{\vect{x}_i^t - \vect{x}_j^t}^2 &= e^{-2t} \underbrace{\norm{\vect{x}_i^0 \sqz{-} \vect{x}_j^0}^2}_{> \; \prl{r_i + r_j}^2} +  \prl{1 \sqz{-} e^{-t}}^2 \underbrace{\norm{\vect{y}_i \sqz{-} \vect{y}_j}^2}_{> \; \prl{r_i + r_j}^2} \nonumber \\
& \hspace{13mm} +  e^{-t}\prl{1 \sqz{-} e^{-t}}\underbrace{\lie_{\overrightarrow{\vectbf{y}}} \norm{\vect{x}_i^0 \sqz{-} \vect{x}_j^0}^2}_{\geq \; 2\;\prl{r_i + r_j}^2}, \!\!\\
& > \prl{r_i + r_j}^2,
\end{align}
and this completes the proof \qed
\end{enumerate}
\end{proof}

\subsection{Proof of \reflem{lem.DomainInvarianceBase2}}
\label{app.DomainInvarianceBase2}

\begin{proof}
For any singleton  $I \in \purt$ the results simply follow  since a singleton cluster contains no pair of indices and has an empty set of descendants.
Otherwise,  for any nonsingleton $I \in \purt$ with $\vect{b}_I = -1$, one can obtain from \reftab{tab.LocalPolicy} that for any $k \in K$ and $K \in \childCL{I,\tree}$
\begin{subequations} \label{eq.LocalSystemModelInvarianceBase2}
\begin{align}
\dot{\vect{x}}_k &= \h{\purt, \vectbf{b}}\prl{\vectbf{x}}_k =  \fS\prl{\vectbf{x}, \vectbf{u}, I}_k  + \vect{v}_I, \\
& = - \ctrd{\vectbf{x} \sqz{-} \vectbf{y}|I} \sqz{+} 2 \fbeta{I}{\vectbf{x}} \!\frac{\card{\complementLCL{K}{\tree}}}{\card{I}} \! \frac{\ctrdsep{K\!}{\vectbf{x}}}{\norm{\ctrdsep{K\!}{\vectbf{x}}}} \sqz{ +} \vect{v}_I, \!\!\! 
\end{align}
\end{subequations}
for some $\vectbf{u} \in\prl{\R^d}^{\indexset}$ and $\vect{v}_I \in \R^d$, where $\vect{v}_I$ models the overall rigid translation due to ancestors of $I$ in $\tree$.

Accordingly, using \refeqn{eq.LocalSystemModelInvarianceBase2}, we will show the results as follows:
\begin{enumerate}[leftmargin = 5mm, label=(\roman*) ]

\item The preservation of local cluster structure can be observed in two steps.
First, since $\dot{\vect{x}}_i - \dot{\vect{x}}_j = 0$ for any $i \neq j \in K$ and $K \in \childCL{I,\tree}$, we have $\sepmag{d,D\!}{\vectbf{x}^t} = \sepmag{d,D\!}{\vectbf{x}^0} \geq 0$ for all $t \geq 0 $  and $d \in D$, $D \in \descendantCL{K,\tree}$ and $K \in \childCL{K,\tree}$.
Second, using \refeqn{eq.sepmagLie} and  \refeqn{eq.Beta}, we obtain that for any $k \in K$, $K \in \childCL{I,\tree}$ 
\begin{align}
\frac{d}{dt}\sepmag{k,K\!}{\vectbf{x}} &\sqz{=} \fbeta{I}{\vectbf{x}}  \geq - \prl{\sepmag{k,K\!}{\vectbf{x}} \sqz{-} r_k \sqz{-} \beta}, \!\!\!
\end{align}
where $\beta > 0$.
Hence, $\frac{d}{dt}\sepmag{k,K\!}{\vectbf{x}} > 0$  whenever $\sepmag{k,K\!}{\vectbf{x}} = 0$, and so $\sepmag{k,K\!}{\vectbf{x}^t} \geq 0$   for any $t \geq 0$.

\item Likewise, we conclude the intra-cluster collision avoidance between individuals in $I$ in two steps. 
First, we have for any $i \neq j \in K$, $K \in \childCL{I,\tree}$
\begin{align}
\dot{\vect{x}_i} - \dot{\vect{x}}_j = 0, 
\end{align} 
guaranteeing that for all $t \geq 0$
\begin{align}
\norm{\vect{x}_i^t - \vect{x}_j^t}^2 = \norm{\vect{x}_i^0 - \vect{x}_j^0}^2 > \prl{r_i + r_j}^2.
\end{align}
Second, for any $i \in K$, $j \in I \setminus K$ and $K \in \childCL{I,\tree}$  we have
\begin{align}
\dot{\vect{x}}_i - \dot{\vect{x}}_j = 2 \fbeta{I}{\vectbf{x}} \frac{\ctrdsep{K\!}{\vectbf{x}}}{\norm{\ctrdsep{K\!}{\vectbf{x}}}} ,
\end{align}
yielding 
\begin{align}
\frac{d}{dt} \norm{\vect{x}_i \sqz{-} \vect{x}_j}^2 & \sqz{=} 2 \underbrace{\fbeta{I}{\vectbf{x}}}_{\geq 0} \underbrace{\vectprod{\prl{\vect{x}_i \sqz{-} \vect{x}_j}}{\frac{\ctrdsep{K\!}{\vectbf{x}}}{\norm{\ctrdsep{K\!}{\vectbf{x}}}}}}_{=\sepmag{i,K}{\vectbf{x}} + \sepmag{j,I\setminus K}{\vectbf{x}}\geq 0 }, \\
& \sqz{\geq} 0,
\end{align}
and so for $t \geq 0$
\begin{align}
\norm{\vect{x}_i^t - \vect{x}_j^t}^2 \geq \norm{\vect{x}_i^0 - \vect{x}_j^0}^2 > \prl{r_i + r_j}^2. 
\end{align}
 \qed
\end{enumerate}
\end{proof}

\subsection{Proof of \reflem{lem.DomainInvarianceRecursion}}
\label{app.DomainInvarianceRecursion}

\begin{proof}
By definition of $\domain\prl{\purt, \vectbf{b}}$ \refeqn{eq.LocalPolicyDomain},
for any $I \in \visitedcluster \setminus \purt$ we have $\vectbf{x}^0 \in \setH\prl{I}$ \refeqn{eq.SetR}   and one can verify from \reftab{tab.LocalPolicy} that for any $k \in K$ and $K \in \childCL{I,\tree}$
\begin{subequations}\label{eq.LocalSystemInvarianceRecursion}
\begin{align}
\dot{\vect{x}}_k &= \h{\purt, \vectbf{b}}\prl{\vectbf{x}}_k =  \fH\prl{\vectbf{x}, \vectbf{u}, I}_k + \vect{v}_I, \\
& = \vect{u}_k + 2\falpha{I}{\vectbf{x}, \vectbf{u}} \frac{\card{\complementLCL{K}{\tree}}}{\card{I}} \frac{\ctrdsep{K\!}{\vectbf{x}}}{\norm{\ctrdsep{K\!}{\vectbf{x}}}} + \vect{v}_I,
\end{align} 
\end{subequations}
for some $\vectbf{u} \in \prl{\R^d}^{\indexset}$ and $\vect{v}_I \in \R^d$.
Here, $\vect{v}_I$ represents the total rigid translation due ancestors of $I$ in $\tree$.

With these observations in place, we now achieve claimed results as follows:
\begin{enumerate}[label = (\roman*), leftmargin = 5mm]
\item The maintenance of cluster separation (\reflem{lem.DomainInvarianceRecursion}.(i)) can be observed, using \refeqn{eq.sepmagLie} and \refeqn{eq.LocalSystemInvarianceRecursion}, as follows: for any $k \in K$ and $K \in \childCL{I,\tree}$
\begin{align}
\frac{d}{dt} \sepmag{k,K\!}{\vectbf{x}} &= \lie_{\overrightarrow{\vectbf{u}}} \sepmag{k,K\!}{\vectbf{x}} + \falpha{I}{\vectbf{x}, \vectbf{u}},
\end{align}
and, since $\vectbf{x}^0 \in \setH\prl{I}$ and $\falpha{I}{\vectbf{x}, \vectbf{u}} \sqz{\geq}  -\lie_{\overrightarrow{\vectbf{u}}} \sepmag{k,K\!}{\vectbf{x}}$ whenever $\sepmag{k,K\!}{\vectbf{x}} = r_k + \alpha$, we have $\sepmag{k,K\!}{\vectbf{x}^t} \geq r_k + \alpha$ for all $t \geq 0$.

\item The inter-cluster collision avoidance (\reflem{lem.DomainInvarianceRecursion}.(ii)) directly follows from the maintenance of certain cluster separation (\reflem{lem.DomainInvarianceRecursion}.(i)) since 
\begin{align}
\begin{array}{c}
\sepmag{k,K\!}{\vectbf{x}^t}\geq  r_k, \\
\forall \, k \in K,\\
 K \sqz{\in} \childCL{I,\tree},
\end{array}
\Longrightarrow
\begin{array}{c}
\norm{\vect{x}_i^t \sqz{-} \vect{x}_j^t}^2 > \prl{r_i \sqz{+} r_j}^2\!\!, \\
\forall \, i \sqz{\in} K, j \sqz{\in} I \sqz{\setminus} K, \\
K \sqz{\in} \childCL{I,\tree}.
\end{array}
\end{align} 
\qed
\end{enumerate}
\end{proof}

\subsection{Proof of \reflem{lem.PreparesRelationSingleton}}
\label{app.PreparesRelationSingleton}

\begin{proof}
Since $\setA\prl{I} = \stratum{\tree}$ for any singleton cluster  $I \in \cluster{\tree}$, we have from \refeqn{eq.SetB} that $\setB\prl{I, -1} = \setB\prl{I, +1} = \stratum{\tree}$ for any singleton cluster $I \in \cluster{\tree}$.
Hence, by definition \refeqn{eq.LocalPolicyDomain}, the first part of the result holds.

Likewise, using \refeqn{eq.Priority}, one can observe the second part of the result  because the binary vectors $\vectbf{b}$ and $\hat{\vectbf{b}}$ only differ at a singleton cluster $D \in \purt$ where $\vect{b}_D = -1$ and $\vect{b}'_D = +1$. \qed
\end{proof}

\subsection{Proof of \reflem{lem.PreparesRelationNegative}}
\label{app.PreparesRelationNegative}

\begin{proof}
For any nonsingleton $I \in \purt$ with $\vect{b}_I = -1$, one can verify from \reftab{tab.LocalPolicy} that for any $k \in K$ and $K \in \childCL{I,\tree}$
\begin{subequations} \label{eq.SystemModelPRN}
\begin{align}
\dot{\vect{x}}_k &= \h{\purt, \vectbf{b}}\prl{\vectbf{x}}_k = \fS\prl{\vectbf{x}, \vectbf{u}, I}, \\
 &=  - \ctrd{\vectbf{x} \sqz{-} \vectbf{y} | I} \sqz{+} 2 \fbeta{I}{\vectbf{x}} \! \frac{\card{\complementLCL{K}{\tree}}}{\card{I}} \! \frac{\ctrdsep{K\!}{\vectbf{x}}}{\norm{\ctrdsep{K\!}{\vectbf{x}}}} \sqz{+} \vect{v}_I, \!\!\!
\end{align}  
\end{subequations}
for some $\vectbf{u} \in \prl{\R^d}^{\indexset}$ and $\vect{v}_I \in \R^d$.

Accordingly, using \refeqn{eq.sepmagLie} and \refeqn{eq.Beta}, we obtain that
\begin{align}
\frac{d}{dt} \sepmag{k,K\!}{\vectbf{x}} &\sqz{=} \fbeta{I}{\vectbf{x}}  \geq  - \sepmag{k, K\!}{\vectbf{x}} + r_k + \beta.
\end{align}
Hence, a trajectory $\vectbf{x}^t$  of $\h{\purt, \vectbf{b}}$ starting at any $\vectbf{x}^0 \in \domain\prl{\purt, \vectbf{b}}$ satisfies 
\begin{align}
\sepmag{k,K\!}{\vectbf{x}^t} \geq e^{-t} \sepmag{k,K\!}{\vectbf{x}^0} + \prl{1 - e^{-t}} \prl{r_k + \beta},
\end{align}
for all $t \geq 0$.
Thus, since $\beta > \alpha > 0$ and  $\frac{d}{dt} \sepmag{k,K\!}{\vectbf{x}} > 0$ whenever $\sepmag{k,K\!}{\vectbf{x}} < r_k + \beta$, using LaSalle's Invariance Principle \cite{LaSalle_1976}, one can conclude that the local policy $\h{\purt, \vectbf{b}}$ asymptotically steers all the configurations in its domain  $\domain\prl{\purt, \vectbf{b}}$ to a subset $\setH^{\beta}\prl{I}$ of the interior $\setHI\prl{I}$ of $\setH\prl{I}$ \refeqn{eq.SetR},
{\small
\begin{align}
\setH^{\beta}\!\prl{I} \; &\sqz{\sqz{\ldf}} \crl{ \vectbf{x} \sqz{\in} \stratum{\tree} \! \Big| \sepmag{k,K\!}{\vectbf{x}} \sqz{\geq} r_k \sqz{+} \beta, \forall k \sqz{\in} K, K \sqz{\in}\childCL{I,\tree} \!} \!,  \nonumber \\
& \subset \setH\prl{I}.
\end{align}
}%
In particular, since $\beta > \alpha$, the system in \refeqn{eq.SystemModelPRN}  starting at any configuration in $\domain\prl{\purt, \vectbf{y}}$ enters $\setH\prl{I}$ in finite time.

\medskip

\noindent Now  observe from  \refeqn{eq.LocalPolicyDomain} and \refeqn{eq.Priority} that
\begin{align}
\priority\prl{\purt', \vectbf{b}'} &= \priority \prl{\purt, \vectbf{b}}  + \underbrace{\card{I} ^2 \sqz{-} \!\!\! \!\sum_{D \in \childCL{I,\tree}} \!\!\!\! \card{D}^2}_{> 0}, \nonumber \\
& > \priority \prl{\purt, \vectbf{b}},
\end{align}
and
\begin{align}
\domain\prl{\purt', \vectbf{b}'} & = \domain\prl{\purt, \vectbf{b}} \cap \setH\prl{I}, \\
& \supset \domain\prl{\purt, \vectbf{b}} \cap \setH^{\beta}\prl{I}.
\end{align}
Thus, since $\setH^{\beta}\prl{I} \subset \setHI\prl{I} $ and its domain  $\domain\prl{\purt, \vectbf{b}}$ is positively invariant (\refprop{prop.DomainInvariance}), $\h{\purt, \vectbf{b}}$ prepares $\h{\purt', \vectbf{b}'}$  in finite time, and the result follows.  \qed
\end{proof}

\subsection{Proof of \reflem{lem.PreparesRelationPositive}}
\label{app.PreparesRelationPositive}

\begin{proof}

Since $\childCL{I,\tree} \subset \purt$  and $\vect{b}_{K} = +1$ for any $K \in \childCL{I,\tree}$, every child $K \in \childCL{I,\tree}$ of $I$ in $\tree$ satisfies the base condition in \reftab{tab.LocalPolicy}.2)-4) whereas cluster $I$ satisfies the recursion conditions in \reftab{tab.LocalPolicy}.9)-12).
Hence, using \reftab{tab.LocalPolicy}, one can verify that for any $k \in K $ and $K \in \childCL{I, \tree}$ 
\begin{subequations}\label{eq.SystemModelPRP}
\begin{align}
\hspace{-1.5mm}\dot{\vect{x}}_{k} &= \h{\purt, \vectbf{b}}\prl{\vectbf{x}}_k = (\fH \circ \fA)\prl{\vectbf{x}, \vectbf{u}, I}_k + \vect{v}_I, \\ 
&=  - \prl{\vect{x}_k \sqz{-} \vect{y}_k} \sqz{+} 2\falpha{I}{\vectbf{x},\vectbf{u}}\! \frac{\card{\complementLCL{K}{\tree}}}{\card{I}}\frac{\ctrdsep{K\!}{\vectbf{x}}}{\norm{\ctrdsep{K\!}{\vectbf{x}}}} \sqz{+} \vect{v}_I, \!\!\!
\end{align}
\end{subequations}
for some $\vectbf{u} \in \prl{\R^d}^{\indexset}$  and $\vect{v}_I \in \R^d$.

\medskip

We now show in three steps that $\h{\purt, \vectbf{b}}$ asymptotically steers (almost) all configuration in its domain $\domain\prl{\purt, \vectbf{b}}$ to 
\begin{align}
\setG\prl{I} \sqz{\ldf} \! \left \{ \Big. \right .
\!\! \vectbf{x} \sqz{\in} \stratum{\tree} \! \Big | & \scalebox{1.1}{$\frac{\ctrdsep{K}{\vectbf{x}}}{\norm{\ctrdsep{K}{\vectbf{x}}}} \sqz{=} \frac{\ctrdsep{K}{\vectbf{y}}}{\norm{\ctrdsep{K}{\vectbf{y}}}}$}, \norm{\ctrdsep{K\!}{\vectbf{x}}} \sqz{\geq} \norm{\ctrdsep{K\!}{\vectbf{y}}}\!, \nonumber \\
& \hspace{8mm} \vect{x}_k \sqz{-} \ctrd{\vectbf{x}|K} \sqz{=} \vect{y}_k \sqz{-} \ctrd{\vectbf{y}|K}\!, \nonumber \\ 
& \hspace{12mm}\forall \; k \sqz{\in} K, K \sqz{\in} \childCL{I,\tree} \!\! \left. \Big. \right \}\!, \!
\end{align}
which is a subset of $\setAhat\prl{I_L, I_R}$ \refeqn{eq.SetAHat} associated with children clusters $\crl{I_L,I_R} = \childCL{I,\tree}$
because  for any $\vectbf{x} \sqz{\in} \setG\prl{I}$ and  $i \sqz{\in} K$, $j \sqz{\in} I\setminus K$ and $K \sqz{\in} \childCL{I,\tree}$
\begin{align}
\hspace{-2mm}\lie_{\overrightarrow{\vectbf{y}}}\scalebox{1}{$\frac{1}{2}$}\norm{\vect{x}_i - \vect{x}_j}^2 & \sqz{=} \, \vectprod{\prl{\vect{x}_i \sqz{-} \vect{x}_j}}{\prl{\vect{y}_i \sqz{-} \vect{y}_j}},\\
 & \sqz{=} \, \vectprod{\underbrace{\prl{\big.\vect{x}_i \sqz{-} \vect{x}_j \sqz{-} \ctrdsep{K\!}{\vectbf{x}} \sqz{+} \ctrdsep{K\!}{\vectbf{y}}\!}\!}_{= \vect{y}_i - \vect{y_j}}} {\!\prl{\vect{y}_i \sqz{-} \vect{y}_j}}  \nonumber 
\\
& \hspace{5mm}\sqz{+}  \underbrace{\vectprod{\underbrace{\prl{\ctrdsep{K\!}{\vectbf{x}} \sqz{-} \ctrdsep{K\!}{\vectbf{y}}}}_{\substack{ =\epsilon \;\ctrdsep{K}{\vectbf{y}} \text{ for some } \epsilon \geq 0}}\!\!} {\prl{\vect{y}_i \sqz{-} \vect{y}_j}}}_{\geq 0 \text{ since } \vectbf{y} \in \stratum{\tree}}, \!\! 
\\
&\!\geq \norm{\vect{y}_i - \vect{y}_j}^2 > \prl{r_i + r_j}^2,
\end{align}
and for any $\vectbf{x} \sqz{\in} \setG\prl{I}$ and $k \in K$ and $K \in \childCL{I,\tree}$
\begin{align}
\hspace{-2mm}\lie_{\overrightarrow{\vectbf{y}}} \vectprod{\prl{\vect{x}_k \sqz{-} \ctrdmid{K\!}{\vectbf{x}}\!}\!}{\!\ctrdsep{K\!}{\vectbf{x}}}   & = \!\! \underbrace{\vectprod{\prl{\vect{y}_k \sqz{-} \ctrdmid{K\!}{\vectbf{y}}\!}\!}{\ctrdsep{K\!}{\vectbf{x}}}}_{\geq 0 \text{ since } \vectbf{x} \in \setG\prl{I} \text{ and } \vectbf{y} \in \stratum{\tree}} \nonumber 
\\
& \hspace{6mm} + \underbrace{\vectprod{\prl{\vect{x}_k \sqz{-} \ctrdmid{K\!}{\vectbf{x}} \!}\!}{\ctrdsep{K\!}{\vectbf{y}}}}_{\geq 0 \text{ since } \vectbf{x} \in \setG\prl{I}} , \!\!\! \\
& \geq 0.
\end{align}
Likewise, one can observe that $\setG\prl{I} \sqz{\cap} \setH\prl{I}$ is a subset of the interior of $\setAhat\prl{I_L, I_R}$, i.e.  $\setG\prl{I} \sqz{\cap} \setH\prl{I} \sqz{\subset} \setAhatI\prl{I_L, I_R}$.

\medskip

First, using \refeqn{eq.SystemModelPRP}, we obtain that for any $K \in \childCL{I,\tree}$
{
\begin{align}
\hspace{-1mm}\frac{d}{dt}  \frac{\vectprod{\ctrdsep{K\!}{\vectbf{x}}}{\ctrdsep{K\!}{\vectbf{y}}}}{\norm{\ctrdsep{K\!}{\vectbf{x}}}\norm{\ctrdsep{K\!}{\vectbf{y}}}} 
&= 
\frac{\norm{\ctrdsep{K\!}{\vectbf{y}}}}{\norm{\ctrdsep{K\!}{\vectbf{x}}}} \sqz{-} \frac{\vectprod{\ctrdsep{K\!}{\vectbf{x}}}{\ctrdsep{K\!}{\vectbf{y}}}}{\norm{\ctrdsep{K\!}{\vectbf{x}}}^3  \norm{\ctrdsep{K\!}{\vectbf{y}}}}, \nonumber\\
&  =  \frac{\norm{\ctrdsep{K\!}{\vectbf{y}}}}{\norm{\ctrdsep{K\!}{\vectbf{x}}}} \prl{\! \! 1 \sqz{-}\frac{\vectprod{\ctrdsep{K\!}{\vectbf{x}}}{\ctrdsep{K\!}{\vectbf{y}}}}{\norm{\ctrdsep{K\!}{\vectbf{x}}}\norm{\ctrdsep{K\!}{\vectbf{y}}}}\!}^2\!\!\!\!, \!\! \nonumber \\
&  \geq 0
\end{align}
}%
where the equality only holds if $\frac{\ctrdsep{K}{\vectbf{x}}}{\norm{\ctrdsep{K}{\vectbf{x}}}} = \pm \frac{\ctrdsep{K}{\vectbf{y}}}{\norm{\ctrdsep{K}{\vectbf{y}}}}$.
Thus, $\h{\purt, \vectbf{b}}$ asymptotically aligns the separating hyperplane normals  of complementary clusters $\childCL{I,\tree}$ of (almost) any configuration in $\domain\prl{\purt, \vectbf{b}}$ with the desired ones. 
Note that the set of configurations $\vectbf{x} \in \domain\prl{\purt, \vectbf{b}}$ with $\frac{\ctrdsep{K}{\vectbf{x}}}{\norm{\ctrdsep{K}{\vectbf{x}}}} = -\frac{\ctrdsep{K}{\vectbf{y}}}{\norm{\ctrdsep{K}{\vectbf{y}}}}$ has  measure zero and are saddle points.

\medskip

Next, let $\vectbf{x} \in \domain\prl{\purt, \vectbf{b}}$ with $\frac{\ctrdsep{K}{\vectbf{x}}}{\norm{\ctrdsep{K}{\vectbf{x}}}} = \frac{\ctrdsep{K}{\vectbf{y}}}{\norm{\ctrdsep{K}{\vectbf{y}}}}$  for all $K \in \childCL{I,\tree}$. 
Then,  using \refeqn{eq.SystemModelPRP}, observe that     
\begin{align}
\hspace{-2mm}\frac{d}{dt}\norm{\ctrdsep{K\!}{\vectbf{x}}}^2 &= -2 \norm{\ctrdsep{K\!}{\vectbf{x}}}^2 + 2 \vectprod{\ctrdsep{K\!}{\vectbf{x}}}{\ctrdsep{K\!}{\vectbf{y}}}  \nonumber \\
& \hspace{15mm}+ 4\underbrace{\falpha{I\!}{\vectbf{x}, \vectbf{u}}}_{\geq \; 0} \norm{\ctrdsep{K\!}{\vectbf{x}}}, \\
& \geq -2 \norm{\ctrdsep{K\!}{\vectbf{x}}}^2 + 2 \vectprod{\ctrdsep{K\!}{\vectbf{x}}}{\ctrdsep{K\!}{\vectbf{y}}}\!, \!\! \!\\
&= -2 \norm{\ctrdsep{K\!}{\vectbf{x}}}\prl{\norm{\ctrdsep{K\!}{\vectbf{x}}} \sqz{-} \norm{\ctrdsep{K\!}{\vectbf{y}}}}\!. \!\!
\end{align}
Hence, $\frac{d}{dt}\norm{\ctrdsep{K\!}{\vectbf{x}}}^2 \sqz{>} 0$  whenever $\norm{\ctrdsep{K\!}{\vectbf{x}}} \sqz{<} \norm{\ctrdsep{K\!}{\vectbf{y}}}_2$. 
Thus, the stable configurations of  $\h{\purt, \vectbf{b}}$ also satisfies  $\norm{\ctrdsep{K\!}{\vectbf{x}}} \geq \norm{\ctrdsep{K\!}{\vectbf{y}}}$.

\medskip

Finally, we have from \refeqn{eq.SystemModelPRP} that for any $k \sqz{\in} K$, $K \sqz{\in} \childCL{K,\tree}$
\begin{align}
\hspace{-1mm}\frac{d}{dt} \prl{\big. \vect{x}_k \sqz{-} \ctrd{\vectbf{x}|K}\!} \sqz{=} -\prl{\big. \vect{x}_k \sqz{-} \ctrd{\vectbf{x}|K}\!} \sqz{+} \prl{\big. \vect{y}_k \sqz{-} \ctrd{\vectbf{y}|K}\!}\!, \!\!
\end{align}
and so a trajectory $\vectbf{x}^t$ of $\h{\purt, \vectbf{b}}$ starting at any $\vectbf{x}^0 \in \domain\prl{\purt, \vectbf{b}}$ satisfies 
{\small
\begin{align}
\hspace{-1.5mm}\vect{x}_k^t \sqz{-} \ctrd{\vectbf{x}^t|K} &\sqz{=} e^{-t}\prl{\big. \vect{x}^0_k \! \sqz{-} \ctrd{\vectbf{x}^0|K}\!}  \sqz{+} \prl{1 \sqz{-} e^{-t}}\!\prl{\big. \vect{y}_k \! \sqz{-} \ctrd{\vectbf{y}|K}\!}\!\!. \!\!\!
\end{align}
}%
for all $t \geq 0$. 
Hence, the centroidal displacements, $\vect{x}_k^t - \ctrd{\vectbf{x}^t|K}$, of any configuration $\vectbf{x} \in \domain\prl{\purt, \vectbf{b}}$ asymptotically matches the centroidal displacement, $\vect{y}_k - \ctrd{\vectbf{y}|K}$, of the desired configuration $\vectbf{y}$.  

Thus, it follows from LaSalle's Invariance Principle \cite{LaSalle_1976} that (almost) all configurations in the domain  $\domain\prl{\purt, \vectbf{b}}$ of $\h{\purt, \vectbf{b}}$ asymptotically reach $\setG\prl{I}$. 

\bigskip

Now observe from \refeqn{eq.LocalPolicyDomain} and \refeqn{eq.Priority} that
\begin{align}
\priority\prl{\purt', \vectbf{b}'} &= \priority\prl{\purt, \vectbf{b}} \sqz{+} \underbrace{\card{I}^2 \sqz{-} \!\!\! \sum_{D \in \childCL{I,\tree}}\!\! \!\! \card{D}^2}_{> \; 0}\!\!, \nonumber \\
& >  \priority\prl{\purt, \vectbf{b}},
\end{align}
and
\begin{subequations}\label{eq.AttractivePrepares}
\begin{align}
\domain(\purt', \vectbf{b}') &\supset \domain\prl{\purt, \vectbf{b}} \cap \setAhat\prl{I_L, I_R}, \\
& \supset \domain\prl{\purt, \vectbf{b}} \cap \setG\prl{I},
\end{align}
\end{subequations}
which follows from  that $\setG\prl{I} \subset \setAhat\prl{I_L, I_R}$ and $\setA\prl{I} =  \setA\prl{I_L} \cap \setA\prl{I_R} \cap \setAhat\prl{I_L, I_R}$ \refeqn{eq.SetARecursion}.

Thus, one can conclude from \refeqn{eq.AttractivePrepares} and $\setG\prl{I} \cap \setH\prl{I} \subset \setAhatI\prl{I_L, I_R} $ that the disks starting at almost any configuration in the positively invariant  $\domain\prl{\purt, \vectbf{b}}$  reach $\domain(\purt', \vectbf{b}') $ in finite time, and this completes the proof.\qed
\end{proof}

\section*{Acknowledgment}

This work was supported in part by AFOSR under the CHASE MURI FA9550-10-1-0567 and in part by ONR under the HUNT MURI N00014–07–0829.
The authors would like to thank Yuliy Baryshnikov and Fred Cohen for discussions on the topology of configuration spaces.

\ifCLASSOPTIONcaptionsoff
  \newpage
\fi



\bibliographystyle{IEEEtran}
\bibliography{IEEEabrv,hiernav}

\vfill








\end{document}